\documentclass{article}



\usepackage[preprint]{neurips_2025}


\usepackage[utf8]{inputenc}
\usepackage[T1]{fontenc}

\usepackage{microtype}
\usepackage{graphicx}
\usepackage{subcaption}
\usepackage{booktabs}
\usepackage{hyperref}
\usepackage{url}
\usepackage{algorithm}
\usepackage{algorithmic}

\usepackage{amsmath}
\usepackage{amssymb}
\usepackage{mathtools}
\usepackage{amsthm}

\usepackage{amsfonts}
\usepackage{bm}
\usepackage{xfrac}
\usepackage{dsfont}
\usepackage{bbm}
\usepackage{xspace}
\usepackage{thmtools}
\usepackage{thm-restate}

\usepackage{xcolor}
\usepackage{nicefrac}
\usepackage{multirow}
\usepackage{multicol}
\usepackage{colortbl}
\usepackage{tablefootnote}
\usepackage[framemethod=TikZ]{mdframed}
\usepackage[inline]{enumitem}
\usepackage{notation}
\usepackage{etoolbox}
\usepackage{wrapfig} 
\usepackage{booktabs} 
\usepackage{siunitx}  

\newif\ifshowtodos
\showtodostrue
\ifshowtodos
  \usepackage[textsize=tiny]{todonotes}
\else
  \usepackage[disable]{todonotes}
\fi


\mdfdefinestyle{propFrame}{
  linecolor=white,
  outerlinewidth=1pt,
  roundcorner=6pt,
  innertopmargin=5.5pt,
  innerbottommargin=2.5pt,
  innerrightmargin=3.5pt,
  innerleftmargin=3.5pt,
  backgroundcolor=black!3!white
}

\theoremstyle{plain}
\newtheorem{lemma}{Lemma}
\newtheorem{theorem}{Theorem}
\newtheorem*{theorem*}{Theorem}
\newtheorem{corollary}{Corollary}
\newtheorem*{corollary*}{Corollary}

\theoremstyle{definition}

\newtheorem{proposition}{Proposition}

\newtheorem{assumption}{Assumption}
\newtheorem{remark}{Remark}

\title{Proximal Point Nash Learning from Human Feedback}

%

\author{%
Daniil Tiapkin$^{1,2}$ \quad Daniele Calandriello$^3$ \quad Denis Belomestny$^{4,5}$ \quad \'Eric Moulines$^{6,7}$ \\
\textbf{Alexey Naumov}$^5$ \quad \textbf{Kashif Rasul}$^8$ \quad \textbf{Michal Valko}$^{9}$ \quad \textbf{Pierre M\'enard}$^{10}$ \\
$^1$CMAP, CNRS, \'Ecole Polytechnique, IPP \quad
$^2$LMO, Université Paris-Saclay \quad
$^3$Google DeepMind \\
$^4$Duisburg-Essen University \quad
$^5$HSE University \quad
$^6$Mohamed Bin Zayed University of AI \\
$^7$LRE EPITA \quad
$^8$Hugging Face \quad
$^9$Isara Labs \quad
$^{10}$ENS Lyon \\
\texttt{daniil.tiapkin@polytechnique.edu} \quad
\texttt{dcalandriello@google.com} \\
\texttt{denis.belomestny@uni-due.de} \quad \texttt{eric.moulines@mbzuai.ac.ae} \\
\texttt{anaumov@hse.ru} \quad
\texttt{kashif.rasul@gmail.com} \\
\texttt{michal@isara.io} \quad
\texttt{pierre.menard@ens-lyon.fr}
}

\begin{document}

\maketitle

\begin{abstract}
Traditional Reinforcement Learning from Human Feedback (RLHF) often relies on reward models, frequently assuming preference structures like the Bradley--Terry model, which may not accurately capture the complexities of real human preferences (e.g., intransitivity).
Nash Learning from Human Feedback (NLHF) offers a more direct alternative by framing the problem as finding a Nash equilibrium of a game defined by these preferences. While many works study the Nash learning problem directly in the policy space, we instead consider it under a more realistic policy parametrization setting. We first analyze a simple self-play policy gradient method, which is equivalent to Online IPO. We establish high-probability last-iterate convergence guarantees for this method, but our analysis also reveals a possible stability limitation of the underlying dynamics. Motivated by this, we embed the self-play updates into a proximal point framework, yielding a stabilized algorithm. For this combined method, we prove high-probability last-iterate convergence and discuss its more practical version, which we call Nash Prox. Finally, we apply this method to post-training of large language models and validate its empirical performance.

\end{abstract}

\section{Introduction}\label{sec:intro}

Aligning powerful pre-trained Large Language Models (LLMs) with complex and often subjective human preferences and values is a central challenge for safe and beneficial AI. Reinforcement Learning from Human Feedback (RLHF)~\citep{NIPS2017_d5e2c0ad} addresses this by learning from human preference signals rather than sparse or hand-engineered reward functions. RLHF has been successfully used to fine-tune LLMs for tasks such as summarization~\citep{stiennon2020learning}, dialogue, and question answering~\citep{ziegler2019finetuning,ouyang2022training,bai2022training}.

A common approach within RLHF, rooted in contextual dueling bandits~\citep{yue2012karmed,zoghi2014relative,bengs2021preference}, is to posit an underlying reward model. The most prevalent choice is the Bradley--Terry (BT) model~\citep{zermelo1929berechnung,bradley1952rank}, which assigns each action a scalar reward and models pairwise preferences as a function of reward differences. Under BT, the goal reduces to learning the reward and choosing an action that is preferred, on average, to alternatives. In Social Choice Theory, this corresponds to a Condorcet winner.

However, reward-model approaches can be misspecified~\citep{dudik2015contextual,munos2023nash}. In particular, BT imposes transitivity: if $a$ is preferred to $b$ and $b$ to $c$, then $a$ must be preferred to $c$. Real human judgments often violate this property, exhibiting intransitivity~\citep{gardner1970paradox,tversky1969intransitivity,klimenko2015intransitivity}. Moreover, even when individual preferences are transitive, group aggregation can create cycles~\citep{may1954intransitivity,kreweras1965aggregation}. Such non-transitive preferences can preclude a Condorcet winner and, more broadly, a consistent scalar reward function matching all comparisons.

\textbf{Nash Learning from Human Feedback (NLHF).}
To avoid assuming a consistent reward or a Condorcet winner, \citet{dudik2015contextual} proposed a preference-based approach for dueling bandits, later termed Nash Learning from Human Feedback (NLHF) by \citet{munos2023nash}. NLHF models pairwise preferences via a symmetric two-player game in which each player proposes an action. The objective is to find a symmetric Nash equilibrium (NE,~\citealt{neumann1928zur,nash1950equilibrium}), called a von Neumann winner (VNW) in the dueling bandits literature~\citep{dudik2015contextual}. Unlike a Condorcet winner, a VNW is a mixed policy and remains well-defined under intransitive preferences.

\textbf{Regularization and the parameterized setting.}
In LLM post-training, we want to align to preferences while staying close to a fixed reference policy (for example, an instruction-following model). We therefore consider the NE of a \emph{regularized} preference game, adding a penalty proportional to the Kullback--Leibler (KL) divergence from the current policy to the reference policy. While many works analyze algorithms for regularized NLHF by exploiting convexity in the policy space~\citep{sokota2022unified,munos2023nash,cen2024fast}, less is known when the policy is \emph{parameterized} (as in neural networks) and trained via stochastic gradient methods.

\textbf{Our approach.}
We close this gap by analyzing the simplest self-play policy gradient (SPG) under parameterization. Under a leave-one-out advantage estimator, SPG iterations coincide with Online IPO~\citep{calandriello2024human}. We prove high-probability last-iterate convergence and identify a stability condition on regularization; beyond it, the dynamics may be unstable.

Motivated by this observation, we wrap self-play updates in an (inexact) proximal-point outer loop~\citep{martinet1970breve}. The resulting method uses two anchors: the reference policy (to limit drift from the base model) and the previous outer iterate (to stabilize dynamics). This yields high-probability last-iterate convergence without a stability restriction on the regularization parameter. We also propose a practical variant, \texttt{Nash Prox}, which approximates the proximal anchor via an exponential moving average (EMA), and show competitive performance on synthetic preference games and LLM post-training.

\textbf{Contributions.}
We summarize our contributions as follows.
\begin{itemize}[leftmargin=*, itemsep=0.5pt,topsep=0pt]
    \item We analyze SPG (equivalently, Online IPO under leave-one-out advantage estimator) for regularized NLHF under general policy parameterization, which includes softmax parametrization, proving high-probability last-iterate convergence under a stability condition on the regularized game.
    \item We embed SPG into an inexact proximal-point framework (PP-SPG), achieving high-probability last-iterate convergence in the parameterized setting without extra stability conditions.
    \item We develop a practical deep-learning implementation of PP-SPG called \texttt{Nash Prox}, based on an EMA target anchor, and show competitive results on synthetic games and LLM post-training.
\end{itemize}

\section{Related work}

\textbf{Nash Learning from Human Feedback.} The NLHF framework was introduced by \citet{munos2023nash}, building on the formulation of contextual dueling bandits as a symmetric two-player game by \citet{dudik2015contextual}. They proposed the $\NashMD$ algorithm and showed that it enjoys last-iterate convergence to the VNW of a regularized preference game at a polynomial rate.

Subsequent work studied how to approximate a VNW in the \emph{unregularized} preference game, e.g., via regret-minimization tools~\citep{swamy2024minimaximalist,wu2024self}, optimistic mirror descent variants~\citep{zhang2025improving,wu2025multi}, and meta-algorithms that ensure asymptotic convergence~\citep{liu2024comal}. A separate line of work improved guarantees in the regularized setting~\citep{zhang2024iterative,wang2024magnetic,tang2025rspo}, but these methods typically operate in the policy space via direct distributional updates. A notable exception is the work by \citet{zhou2025extragradient}, which provides a policy-gradient interpretation but requires samples from a uniform distribution over the action space, which is impractical in the LLM setting where actions are sentences.

Separately, \citet{calandriello2024human} showed that the online version of IPO~\citep{azar2024general} converges to the VNW, but without explicit rates. In this work, we give high-probability last-iterate convergence guarantees for Online IPO in the parameterized setting. The result follows from its equivalence to SPG under a leave-one-out advantage estimator and holds under a condition on the regularization coefficient. We also introduce a stabilized proximal-point variant that uses Online IPO as a building block.

Most of the above work focuses on single-step preference feedback. Addressing multi-turn decision making, \citet{shani2025multi} studied preference feedback in settings requiring planning, proposes a self-play mirror descent based algorithm, and proves convergence to a Nash equilibrium. Another line of work extends the NLHF to a Stackelberg principal-agent formulation, breaking the symmetry between max and min-players \citep{choi2025selfimproving,pasztor2025stackelberg}.

\textbf{Policy gradient methods.}
A large body of work studies policy gradient (PG) methods for unregularized reinforcement learning with general parameterized policy classes.
A widely used set of sufficient conditions combines regularity of the score function with Fisher non-degeneracy and (approximate) compatibility~\citep{sutton1999policy}, enabling global convergence and finite-sample guarantees beyond the tabular setting
\citep{papini2018stochastic,huang2020momentum,liu2020improved,agarwal2021theory,yuan2022general,ding2022global,fatkhullin2023stochastic,lu2024towards}.
These analyses typically consider a \emph{fixed} objective and quantify progress using smoothness and gradient-dominance conditions on the induced objective in parameter space.

\textbf{Regularized policy gradients.}
Less is known about \emph{Kullback-Leibler (KL)- or entropy-regularized} objectives under parameterization, despite their prominence in RLHF and preference optimization.
For entropy-regularized MDPs, \citet{mei2020global} proved global convergence of softmax PG and showed gradient dominance is non-uniform without control of the minimum action probability.
Subsequent work analyzes regularized gradient flows and stability~\citep{leahy2022convergence}, and uses projection/truncation to maintain a minimum action probability~\citep{zhang2021sample,labbi2025global}; see also \citet{liu2024elementary} for convergence over a wider range of step sizes.
For two-player zero-sum Markov games, \citet{zeng2022regularized} studied regularized gradient descent--ascent with softmax policies under partially decoupled updates (alternating steps and unequal step sizes).
In contrast, our self-play policy gradient uses a single timescale with simultaneous updates and a shared learning rate, so the opponent evolves at the same rate each step, making the analysis more challenging.

\section{Setting}\label{sec:setting}

 We consider a contextual dueling bandit setting $(\cX, \cY, \cP)$, where $\cX$ is a context space, $\cY$ is a finite action space, and $\cP(y \succ y' \mid x)\in[0,1]$ is the probability that action $y \in \cY$ is preferred to $y' \in \cY$ given context $x \in \cX$, which satisfies the symmetry condition: $\cP(y \succ y' \mid x) = 1 -\cP(y' \succ y \mid x)$ for all $x, y, y'$. A policy $\policy \colon \cX \to \simplex_{\cY}$ maps contexts to probability distributions over actions, where $\simplex_{\cY}$ is the probability simplex over $\cY$. Let $\policies$ be the space of all such policies. For a context $x \in \cX$, we define the expected preference of a policy $\policy \in \policies$ over $\policy'$, as:
\[
    \cP(\policy \succ \policy' | x) \triangleq \E_{y \sim \policy(\cdot|x), y' \sim \policy'(\cdot|x)}\left[ \cP(y \succ y' | x) \right]\,.
\]
For a context distribution $\rho \in \simplex_{\cX}$ we define the expected preference and Kullback-Leibler (KL) divergence as $\cP(\policy \succ \policy') = \E_{x \sim \rho}[\cP(\policy \succ \policy'|x)]$ and $\KL_\rho(\policy \Vert \policy') = \E_{x \sim \rho}[\KL(\policy(x) \Vert \policy'(x)]$. 

\textbf{Regularized preference game.}
For a fixed reference policy $\piref \in \policies$ we define a \emph{$\beta$-regularized preference} as $\cP_\beta(\pi \succ \pi') \triangleq \cP(\pi \succ \pi') - \beta \KL(\pi \Vert \piref) + \beta \KL(\pi' \Vert \piref)$, where $\beta > 0$ is a regularization parameter. The $\beta$-regularized preference function induces a $\beta$-regularized preference game \citep{munos2023nash}, which is symmetric, so it admits a symmetric Nash Equilibrium (NE) $(\pistar_\beta, \pistar_\beta)$ \citep{neumann1928zur,nash1950equilibrium}. We call a policy $\pistar_\beta$ a $\beta$-regularized von Neumann winner (VNW, \citealt{dudik2015contextual}) and in particular it satisfies 
\[
    \textstyle \pistar_\beta \in \argmax_{\pi \in \policies} \min_{\pi' \in \policies} \cP_\beta( \pi \succ \pi')\,.
\]
The $\beta$-regularized suboptimality (also known as exploitability gap) of $\policy$ against $\policy'$ is $\subopt_{\beta}(\policy, \policy') \triangleq \frac{1}{2} - \cP_{\beta}(\policy \succ \policy')$.
The worst-case $\beta$-regularized suboptimality of $\policy$ is:
\begin{equation}\label{eq:def_reg_subopt}
    \textstyle \subopt_{\beta}(\policy) \triangleq \frac{1}{2} - \min_{\policy' \in \policies} \cP_\beta(\policy \succ \policy')\,.
\end{equation}
A policy $\policy$ is an $\epsilon$-VNW in the $\beta$-regularized game if $\subopt_\beta(\policy)\le \epsilon$. Our goal is to learn an $\varepsilon$-VNW given (i) sampling access to the context distribution $\rho$, (ii) the ability to sample from context-conditioned policies, and (iii) a stochastic estimate of $\cP(y \succ y' \mid x)$ (e.g., from pairwise comparisons). We measure efficiency via the iteration complexity $N_{\mathrm{iter}}(\varepsilon)$, i.e., the number of algorithmic updates, and the sample complexity $N_{\mathrm{sample}}(\varepsilon)$, i.e., the number of queries to a comparison oracle.

\textbf{Value and best-response.} Due to the symmetry of the game, it is enough to consider only the point of view of the min-player, the value of its policy  $\pi \in \policies$ against a competitor policy $\mu \in \policies$ is
\begin{equation}\label{eq:value_definition}
    V_{\beta}(\pi; \mu) \triangleq \cP(\mu \succ \pi) + \beta \KL_\rho(\pi \Vert \piref)\,.
\end{equation}
Thus, we can define a best-response policy $\nu^\star_\beta(\mu) \in \argmin_{\pi \in \policies} V_\beta(\pi; \mu)$ and its value as $V^\star_\beta(\mu)$. We can rewrite the worst-case suboptimality in terms of value as $\subopt_\beta(\pi) = V_\beta(\pi;\pi) - V^\star_\beta(\pi)$.

\textbf{Additional notations.} For a vector $x \in \R^d$ we define a span seminorm of $x$ as $\norm{x}_{\spann} = \inf_{c \in \R} \norm{x + c\bOne}_{\infty}$, where $\bOne = (1,\ldots,1)^\top$ is a vector of all ones. For $M > 0$ define $\clip_{[-M,M]}(x) = \max\{ -M, \min\{x, M\}\}$. For a distribution $\rho \in \simplex_{\cX}$ and two functions $f \colon \cX \to \R^d$, define $\norm{f}_{1,\rho}^2 \triangleq \E_{x \sim \rho}[\norm{f(x)}_1^2]$ and $\norm{f}_{\spann,\rho}^2 \triangleq \E_{x \sim \rho}[\norm{f(x)}^2_{\spann}]$\,.

\section{Self-Play Policy Gradients}\label{sec:spg_main}

In this section, we consider a simple algorithm for finding a VNW of a $\beta$-regularized preference game with a general parameterized policy class and a finite action space. We call this algorithm the Self-Play Policy Gradient (SPG) method.

For a parameter $\theta \in \Theta = \R^d$, we define $\theta \mapsto \pi_\theta \in \policies$ as a differentiable policy parameterization. Define a parametrized value as $J_\beta(\theta; \mu) \triangleq V_\beta(\pi_\theta;\mu)$ where $\mu$ is a competitor policy. Given this definition and an initial parameter $\theta_0 \in \Theta$, we define the iterates of SPG as follows
\begin{equation}\label{eq:spg_update_main}
  \theta_{t+1}=\cT(\theta_t-\gamma_t g_t),\qquad \pi_{t+1}=\pi_{\theta_{t+1}}\,,
\end{equation}
where $g_t$ is a stochastic gradient estimator of $\nabla J_\beta(\theta_t; \pi_t)$, and $\cT$ is an optional projection-like policy improvement operator (e.g., clipping). As a main example, we consider the following mini-batch pairwise REINFORCE gradient estimator \citep{williams1992simple}, defined as $g_t = (1/B_t) \sum_{j=1}^{B_t} G_j(\theta_t)$ for
\begin{equation}\label{eq:pairwise_reinforce_main}
    G_j(\theta) \triangleq (\nabla_\theta \log \pi_\theta(y_j | x_j) - \nabla_\theta \log \pi_\theta(y_j' | x_j)) \cdot \clip_{[-M,M]}(A_j)\,,
\end{equation}
where $M$ is a clipping threshold and $A_j$ is an advantage estimate defined as
\begin{equation}
     \textstyle A_j \triangleq  \tfrac{1}{2} - p_j + \beta \left(\log\frac{\pi_\theta(y|x)}{\piref(y|x)} - \log\frac{\pi_\theta(y'|x)}{\piref(y'|x)}\right) \,,
\end{equation}
and where $x_j \sim \rho$, $y_j,y'_j \sim \pi_t(x_j)$ and $p_j$ is a sample from a Bernoulli distribution of parameter $\cP(y_j \succ y'_j | x_j)$. This gradient estimator can be considered as a leave-one-out (LOO) advantage estimator \citep{kool2019buy,ahmadian2024back} with a group of size 2, which was also applied to estimate the KL-divergence.

\textbf{Connection with Online IPO.} Recall the Online IPO loss
\begin{equation}\label{eq:online_ipo_loss}
    \textstyle \cL_{\IPO}(\theta) \triangleq \E_{x \sim \rho, y,y' \sim \mathtt{sg}(\pi_\theta(\cdot|x)),p \sim \Ber(\cP(y \succ y' | x))}\left[ \left( \log\bigg(\frac{\pi_\theta(y|x)}{\pi_\theta(y'|x)} \frac{\piref(y'|x)}{\piref(y|x)}\bigg) - \frac{p}{2\beta} \right)^2 \right]\,,
\end{equation}
where $\mathtt{sg}$ as a stop-gradient operation.
It is known \citep[Proposition 4.2]{calandriello2024human} that expected gradients of Online IPO loss are the same of the self-play updates. However, it is easy to check the sample-based version of~\eqref{eq:online_ipo_loss} have exactly the same (stochastic) gradients as a pairwise REINFORCE estimator~\eqref{eq:pairwise_reinforce_main} used in our analysis, up to a constant scaling and clipping. In particular, it allows us to extend all our theoretical results to a parametrized version of online IPO, and also directly interpret Online IPO as a LOO-variance-reduced self-play policy gradient update with a group size of 2.

\subsection{Theoretical Guarantees}

We emphasize that the primary challenge in the parametrized setting lies in the intrinsic non-convexity of the value $J_\beta(\theta;\mu)$ with respect to its first argument. Consequently, in the absence of additional structural assumptions, global convergence of the method cannot be guaranteed.

\begin{assumption}[Parametrization regularity, informal] A pair $(\theta \mapsto \pi_\theta, \cT)$ satisfies the following properties:
\textbf{(A1)} Lipschitz parametrisation
$\|\pi_\theta-\pi_{\theta'}\|_{1,\rho}\le G\|\theta-\theta'\|_2$;
\textbf{(A2)} $L$-smoothness of $\theta\mapsto J_\beta(\theta;\mu)$;
\textbf{(A3)} (approx.) Polyak–Łojasiewicz (PL) inequality:
$\|\nabla J_\beta(\theta;\pi_\theta)\|_2^2+\varepsilon_{\rm PL}\ge 2\pl\subopt_\beta(\pi_\theta)$
for $\theta$ in the range of $\cT$;
\textbf{(A4)} $\cT$ non-increase suboptimality;
\textbf{(A5)} $g_t$ has a bias $\le \varepsilon_{\grad}$ and is subgaussian with a variance-proxy $\propto M^2/B_t$, where $M$ is a clipping threshold and $B_t$ is a batch size.
\end{assumption}

These conditions hold for context-free softmax policies $\pi_\theta(y)\propto \exp(\theta_y)$ with a suitable choice of $\cT$, and more generally for Fisher-nondegenerate compatible parameterizations~\citep{yuan2022general}, where $\epspl$ reflects function-approximation error. Appendix~\ref{app:self_play_pg} provides the full assumptions and verification. The constants may depend on $\beta$ and the reference policy $\piref$; in particular, the clipping level $M$ depends on the bias $\varepsilon_{\grad}$ and regularity of $\piref$.

\begin{theorem}[Convergence guarantees of SPG, informal] Assume \textbf{(A1)}--\textbf{(A5)} and $\beta\pl \geq G^2$. Let $\kappa\triangleq L/\pl$. For the iterates
of self-play policy gradients (SPG,~\ref{eq:spg_update_main}):

\vspace{-4pt}
\textbf{Deterministic:} with exact gradients and $\gamma_t \equiv 1/(2L)$,
\[
  \subopt_\beta(\pi_t) \leq (1-\gamma \pl/2)^t\,\subopt_\beta(\pi_0)+\cO(\epspl/\pl).
\]

\vspace{-9pt}
\textbf{Stochastic:} with $\gamma_t=\Theta(1/(\pl t))$ and a growing batch size $B_t=\Theta(t/\pl)$, with probability at least $1-\delta$,
\[
  \textstyle \subopt_\beta(\pi_t) = \tcO\left( \kappa^2/t^2 + \kappa M^2/t^2 + G^2M^2/(\beta \pl \cdot t) + (\varepsilon_{\rm PL}+\varepsilon_{\grad}^2)/\pl\right)\,.
\]
\end{theorem}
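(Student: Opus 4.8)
The plan is to reduce both statements to a single one-step ``almost-contraction'' recursion for the suboptimality $\subopt_\beta(\pi_t)$, and then unroll it. The starting point is the identity $\subopt_\beta(\pi_t)=J_\beta(\theta_t;\pi_t)-V^\star_\beta(\pi_t)$, where I abbreviate $f_t(\theta):=J_\beta(\theta;\pi_t)$ for the \emph{fixed-competitor} objective and $\nu_t:=\nu^\star_\beta(\pi_t)$ for the best response to $\pi_t$. I would write the decomposition
\begin{equation*}
\subopt_\beta(\pi_{t+1})=\big[f_t(\theta_{t+1})-V^\star_\beta(\pi_t)\big]+\big[J_\beta(\theta_{t+1};\pi_{t+1})-f_t(\theta_{t+1})\big]+\big[V^\star_\beta(\pi_t)-V^\star_\beta(\pi_{t+1})\big],
\end{equation*}
isolating a descent term on the frozen objective $f_t$ from two competitor-drift terms caused by advancing $\pi_t\mapsto\pi_{t+1}$ in both $J_\beta$ and $V^\star_\beta$.

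For the descent term I would use $L$-smoothness (A2) at $\theta_{t+1}=\cT(\theta_t-\gamma_t g_t)$ together with (A4) to get $f_t(\theta_{t+1})-V^\star_\beta(\pi_t)\le \subopt_\beta(\pi_t)-\gamma_t(1-L\gamma_t/2)\|\nabla f_t(\theta_t)\|^2+(\text{bias/noise})$, keeping a fraction of $-\gamma_t\|\nabla f_t(\theta_t)\|^2$ in reserve and converting the rest via the PL inequality (A3), $\|\nabla f_t(\theta_t)\|^2\ge 2\pl\,\subopt_\beta(\pi_t)-\varepsilon_{\mathrm{PL}}$. The crux is the two drift terms. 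Using the KL cancellation, the first equals $\tfrac12-\cP(\pi_t\succ\pi_{t+1})$; combining it with the second via concavity-and-smoothness of $V^\star_\beta$ (whose supergradient at $\pi_t$ is $\cP(\cdot\succ\nu_t)$ by the envelope theorem) and the symmetry of $\cP$, the first-order parts collapse to a single bilinear form, giving (II)+(III)$\le \|\nu_t-\pi_t\|_{1,\rho}\,\|\pi_{t+1}-\pi_t\|_{1,\rho}+\tfrac{L'}{2}\|\pi_{t+1}-\pi_t\|_{1,\rho}^2$. I would then control the three factors separately: $\|\nu_t-\pi_t\|_{1,\rho}\le\sqrt{2\,\subopt_\beta(\pi_t)/\beta}$ from $\beta$-strong convexity of $V_\beta(\cdot;\pi_t)$ in policy space (KL plus Pinsker); $L'=\cO(1/\beta)$ from Lipschitzness of the Gibbs best-response map $\mu\mapsto\nu^\star_\beta(\mu)$; and $\|\pi_{t+1}-\pi_t\|_{1,\rho}\le G\gamma_t\|g_t\|$ from (A1).

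This is precisely where the stability condition enters. Substituting the last bound, the quadratic drift term is $\cO(\tfrac{G^2}{\beta}\gamma_t^2\|g_t\|^2)$ and is absorbed into the reserved descent once $\tfrac{G^2\gamma_t}{\beta}\lesssim1$, while the cross term, after Young against the reserved descent, contributes $\cO(\tfrac{G^2}{\beta}\gamma_t\,\subopt_\beta(\pi_t))$, which is dominated by the PL contraction $\pl\gamma_t\,\subopt_\beta(\pi_t)$ exactly when $G^2\le\beta\pl$. This is what prevents a spurious $\cO(G^2/\pl)$ error floor (a naive first-order bound $\text{(II)+(III)}=\cO(G\gamma_t\|g_t\|)$ would produce one) and leaves only $\varepsilon_{\mathrm{PL}}$. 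The net one-step bound is $\subopt_\beta(\pi_{t+1})\le(1-c\pl\gamma_t)\subopt_\beta(\pi_t)+c'\gamma_t\varepsilon_{\mathrm{PL}}+(\text{bias/noise})$; in the deterministic case the bias/noise vanishes and $\gamma_t\equiv1/(2L)$, so unrolling the geometric recursion gives $(1-\gamma\pl/2)^t\subopt_\beta(\pi_0)+\cO(\varepsilon_{\mathrm{PL}}/\pl)$.

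For the stochastic statement I would split $g_t=\nabla f_t(\theta_t)+b_t+\xi_t$ (bias $\|b_t\|\le\varepsilon_{\grad}$, conditionally subgaussian $\xi_t$ with proxy $\propto M^2/B_t$). The bias, via Young, adds the irreducible $\cO(\varepsilon_{\grad}^2/\pl)$ floor; the zero-mean inner-product terms $-\gamma_t\langle\nabla f_t,\xi_t\rangle$ (and their drift analogues) form a martingale that I would control in high probability by a Freedman/subgaussian concentration bound, using the smoothness consequence $\|\nabla f_t(\theta_t)\|^2\le2L\,\subopt_\beta(\pi_t)$ to make the predictable quadratic variation proportional to the quantity being bounded, and closing the estimate by induction. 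Solving the resulting time-varying recursion with $\gamma_t=\Theta(1/(\pl t))$ and $B_t=\Theta(t/\pl)$ then yields the stated rate: the initial condition and higher-order variance pieces decay like $\kappa^2/t^2$ and $\kappa M^2/t^2$, the leading SGD-type statistical term (effective variance $\propto G^2M^2/\beta$ over the PL rate $\pl$) decays like $G^2M^2/(\beta\pl\,t)$, and the remaining errors saturate at $\cO((\varepsilon_{\mathrm{PL}}+\varepsilon_{\grad}^2)/\pl)$. The main obstacle throughout is the competitor-drift term: getting its dependence on $\subopt_\beta(\pi_t)$ to the correct (square-root, then quadratic) order so that the stability condition—rather than a crude first-order bound—governs the analysis; in the stochastic part the secondary difficulty is the self-referential high-probability bootstrapping forced by the dependence of the martingale's variance proxy on $\subopt_\beta(\pi_t)$ itself.
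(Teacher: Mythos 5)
Your proposal is correct and follows the paper's overall architecture (frozen-objective descent plus two competitor-drift terms, PL conversion, Young absorption under $\beta\pl\ge G^2$, then a self-normalized martingale recursion with $\gamma_t=\Theta(1/(\pl t))$ and $B_t=\Theta(t/\pl)$), but it handles the drift term by a genuinely different route. The paper (Lemmas~\ref{lem:descent_lemma_I} and~\ref{lem:descent_lemma_II}) bounds the combined drift by $\cP(\pi_t^+-\pi_t\succ\pi_t^+-\nu_t^+)$ using only the one-sided conjugate inequality of Lemma~\ref{lem:br_value_smoothness_context}, where $\nu_t^+$ is the best response to the \emph{new} iterate; the resulting cross term couples to $\sqrt{\subopt_\beta(\pi_{t+1})}$, which is what forces the completing-the-square step and gives Descent Lemma~I its square-root form. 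You instead anchor at the \emph{old} iterate: envelope supergradient $\cP(\cdot\succ\nu_t)$ plus $\cO(1/\beta)$-smoothness of the concave map $\mu\mapsto V^\star_\beta(\mu)$ (equivalently, $\cO(1/\beta)$-Lipschitzness of the Gibbs best-response map), so your cross term couples to $\sqrt{\subopt_\beta(\pi_t)}$ and Young's inequality closes the recursion directly, with no quadratic inequality to solve. Your variant needs one ingredient the paper never uses --- second-order smoothness of $V^\star_\beta$, which does hold with constant $\cO(1/\beta)$ since the KL conjugate is $1/\beta$-smooth and $\mu\mapsto \bP^\top\mu$ is $\tfrac12$-bounded from $\ell_1$ to the $\ell_\infty$-span seminorm --- and in exchange yields a structurally cleaner one-step bound; the place where $\beta\pl\ge G^2$ enters (keeping the coefficient of $\|\nabla f_t(\theta_t)\|_2^2$ nonnegative after absorbing the drift) matches the paper's Descent Lemma~II exactly. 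Your stochastic treatment --- martingale increments with variance proxy proportional to $\|\nabla f_t(\theta_t)\|_2^2\le 2L\,\subopt_\beta(\pi_t)$, closed by induction --- is precisely what the paper delegates to Theorem~9 of \citet{madden2024high}, and your accounting of the linear-in-$\gamma_t$ noise term $\gamma_t\|\xi_t\|_2^2\cdot G^2/(\beta\pl)$ correctly identifies both the $G^2M^2/(\beta\pl\, t)$ rate and the reason growing batches are required.

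One detail to repair: you bound $\|\pi_{t+1}-\pi_t\|_{1,\rho}\le G\gamma_t\|g_t\|_2$ with $\theta_{t+1}=\cT(\theta_t-\gamma_t g_t)$, but $\cT$ is not assumed nonexpansive (the paper explicitly notes it is \emph{not} a Euclidean projection), so this inequality is unjustified as written. The fix is the paper's ordering: run the entire one-step argument at the pre-projection point $\theta_t^+=\theta_t-\gamma_t g_t$, where (A1) does give $\|\pi_{\theta_t^+}-\pi_t\|_{1,\rho}\le G\gamma_t\|g_t\|_2$, and invoke (A4) only once at the very end via $\subopt_\beta(\pi_{t+1})\le\subopt_\beta(\pi_{\theta_t^+})$; with that reordering your argument goes through verbatim.
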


\vspace{-9pt}
We refer to Propositions~\ref{prop:spg_selfplay_deterministic} and \ref{prop:spg_selfplay_stochastic} in Appendix~\ref{app:self_play_pg} for complete statements and proofs. In the bound above, for the stochastic case, the first term corresponds to deterministic convergence, and the next two terms correspond to different noise terms: the first comes from standard PL-SGD arguments~\citep{madden2024high}, whereas the second comes from a \emph{moving best-response}. The last term corresponds to an accumulated bias from an inexact PL inequality or biased gradient updates. Before discussing the assumptions, we demonstrate the source of the $1/(\beta m \cdot t)$-term (moving best-response) in the stochastic bound.

\begin{lemma}[Descent Lemma I]
Assume \textbf{(A1)},\textbf{(A2)}, and \textbf{(A4)}. Let $\tilde{L} \triangleq L + G^2/(4\beta)$. For the iterates of self-play policy gradients~\eqref{eq:spg_update_main},
for any sequence $(\gamma_t)_{t \geq 0}$ and for any $t \geq 0$, it holds that
\begin{align*}
   \subopt_\beta(\pi_{t+1})
  &\le
  \sqrt{
      \subopt_\beta(\pi_{t})
      -\gamma_t  \big\langle \nabla J_\beta(\theta_{t}; \pi_t), g_t \big\rangle
      +  
         \tfrac{\gamma_t^2 \tilde{L}}{2} \Vert  g_t \Vert^2_{2}
    } + \sqrt{\tfrac{\gamma_t^2 G^2\norm{g_t}_2^2}{8\beta}}\,.
\end{align*}
\end{lemma}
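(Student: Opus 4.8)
The plan is to track the exact value gap $\subopt_\beta(\pi_{t+1}) = V_\beta(\pi_{t+1};\pi_{t+1}) - V^\star_\beta(\pi_{t+1})$ and split it into a \emph{fixed-competitor descent} term plus a \emph{moving best-response} correction. Introduce the frozen-competitor gap $\Phi_t(\theta) \triangleq J_\beta(\theta;\pi_t) - V^\star_\beta(\pi_t)$, so that $\Phi_t(\theta_t) = \subopt_\beta(\pi_t)$, and write the telescoping identity
\[
\subopt_\beta(\pi_{t+1}) = \Phi_t(\theta_{t+1}) + \underbrace{\big(V^\star_\beta(\pi_t) - V^\star_\beta(\pi_{t+1})\big)}_{(\mathrm{I})} + \underbrace{\big(V_\beta(\pi_{t+1};\pi_{t+1}) - V_\beta(\pi_{t+1};\pi_t)\big)}_{(\mathrm{II})}.
\]
For $\Phi_t$ I would run the usual descent step: apply $L$-smoothness of $\theta\mapsto J_\beta(\theta;\pi_t)$ (\textbf{A2}) at the pre-projection point $\theta_t-\gamma_t g_t$, then use (\textbf{A4}) to pass the operator $\cT$ (which does not increase the gap against the frozen competitor $\pi_t$), obtaining $\Phi_t(\theta_{t+1}) \le \subopt_\beta(\pi_t) - \gamma_t\langle\nabla J_\beta(\theta_t;\pi_t),g_t\rangle + \tfrac{\gamma_t^2 L}{2}\|g_t\|_2^2 =: A_0$. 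This recovers the first square-root argument except for the $G^2/(4\beta)$ correction, which will come from the moving term.

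The heart of the proof is bounding $(\mathrm{I})+(\mathrm{II})$ and extracting the $1/\beta$ scaling. Writing the bilinear preference as $\cP(\pi\succ\mu)=\langle\pi,r_\mu\rangle_\rho$ with score $r_\mu(y\mid x)=\E_{y'\sim\mu}[\cP(y\succ y'\mid x)]$ and $\langle\cdot,\cdot\rangle_\rho = \E_{x\sim\rho}\sum_y$, and letting $\Delta_t \triangleq \pi_{t+1}-\pi_t$, the $\KL_\rho$ terms cancel in $(\mathrm{II})$, and skew-symmetry $\cP(y\succ y')+\cP(y'\succ y)=1$ together with $\sum_y\Delta_t(y\mid x)=0$ kills the self-interaction $\langle\Delta_t,r_{\Delta_t}\rangle_\rho=0$, giving the exact identity $(\mathrm{II})=\langle\Delta_t,r_{\pi_t}\rangle_\rho$. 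For $(\mathrm{I})$ I would use optimality of $\nu^\star_\beta(\pi_{t+1})$ against $\pi_t$, namely $V^\star_\beta(\pi_t)\le V_\beta(\nu^\star_\beta(\pi_{t+1});\pi_t)$, to get $(\mathrm{I})\le-\langle\Delta_t,r_{\nu^\star_\beta(\pi_{t+1})}\rangle_\rho$. Summing and applying the same skew cancellation once more collapses the first-order terms:
\[
(\mathrm{I})+(\mathrm{II}) \le \langle\Delta_t,\,r_{\pi_{t+1}-\nu^\star_\beta(\pi_{t+1})}\rangle_\rho \le \tfrac12\|\Delta_t\|_{1,\rho}\,\|\pi_{t+1}-\nu^\star_\beta(\pi_{t+1})\|_{1,\rho}.
\]
Now regularization enters: since $V_\beta(\cdot;\pi_{t+1})$ is $\beta$-strongly convex in $\|\cdot\|_{1,\rho}$ (Pinsker applied to $\beta\KL_\rho(\cdot\Vert\piref)$), quadratic growth at its minimizer gives $\|\pi_{t+1}-\nu^\star_\beta(\pi_{t+1})\|_{1,\rho}\le\sqrt{2\subopt_\beta(\pi_{t+1})/\beta}$, while (\textbf{A1}) with non-expansiveness of $\cT$ gives $\|\Delta_t\|_{1,\rho}\le G\gamma_t\|g_t\|_2$. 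Hence $(\mathrm{I})+(\mathrm{II})\le \tfrac{G\gamma_t\|g_t\|_2}{\sqrt{2\beta}}\sqrt{\subopt_\beta(\pi_{t+1})}$.

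The final step is algebraic. Collecting the two estimates yields the self-referential inequality $\subopt_\beta(\pi_{t+1}) \le A_0 + b\,\sqrt{\subopt_\beta(\pi_{t+1})}$ with $b = G\gamma_t\|g_t\|_2/\sqrt{2\beta}$. Viewing it as a quadratic $u^2-bu-A_0\le 0$ in $u=\sqrt{\subopt_\beta(\pi_{t+1})}$ and solving gives $u\le \tfrac{b}{2}+\sqrt{A_0+\tfrac{b^2}{4}}$; since $\tfrac{b}{2}=\sqrt{\gamma_t^2G^2\|g_t\|_2^2/(8\beta)}$ and $\tfrac{b^2}{4}=\gamma_t^2G^2\|g_t\|_2^2/(8\beta)$, the smoothness-type contributions combine as $\tfrac{\gamma_t^2L}{2}+\tfrac{\gamma_t^2G^2}{8\beta}=\tfrac{\gamma_t^2\tilde L}{2}$, producing exactly the stated two-term bound (the gap entering through $\sqrt{\subopt_\beta(\pi_{t+1})}$, which is the quantity propagated in the later recursion). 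I expect the main obstacle to be the moving-best-response estimate: arranging the two skew-symmetry cancellations so that only the remainder $\tfrac12\|\Delta_t\|_{1,\rho}\|\pi_{t+1}-\nu^\star_\beta(\pi_{t+1})\|_{1,\rho}$ survives, and then converting that remainder into $\sqrt{\subopt_\beta(\pi_{t+1})}$ via strong convexity, which is precisely what forces the quadratic resolution and the square-root form. A secondary technical point is checking that (\textbf{A4}) applies to the frozen-competitor gap $\Phi_t$ (not only to $\subopt_\beta$) and that $\cT$ is non-expansive so the parameter displacement is controlled by $\gamma_t\|g_t\|_2$.
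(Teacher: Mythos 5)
Your decomposition and estimates coincide, term by term, with the paper's proof of Lemma~\ref{lem:descent_lemma_I}: the identity splitting the gap into a frozen-competitor descent plus two moving terms, the bilinearity/skew-symmetry cancellation collapsing everything to $\cP(\Delta_t \succ \pi^+ - \nu^\star_\beta(\pi^+))$, the Lipschitz-preference bound (Lemma~\ref{lem:lipschitz_preference_context}), the Pinsker/strong-convexity conversion $\norm{\pi^+-\nu^\star_\beta(\pi^+)}_{1,\rho}\le\sqrt{2\subopt_\beta(\pi^+)/\beta}$ (Lemma~\ref{lem:value_strong_convexity_context}), and the quadratic resolution are all there, and your direct use of the suboptimality of $\nu^\star_\beta(\pi_{t+1})$ against $\pi_t$ is an equivalent one-line substitute for the conjugacy-based Lemma~\ref{lem:br_value_smoothness_context}. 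The genuine gap is where you place the operator $\cT$. You run the whole argument at the post-projection point $\theta_{t+1}=\cT(\theta_t-\gamma_t g_t)$, and this forces two properties that Assumption \textbf{(A4)} does not provide. First, you need $\Phi_t(\theta_{t+1})\le\Phi_t(\theta_t-\gamma_t g_t)$, i.e., that $\cT$ does not increase the value against the \emph{fixed} competitor $\pi_t$; \textbf{(A4)} only asserts non-increase of the worst-case exploitability $\subopt_\beta$, and a policy can be less exploitable in the worst case while scoring worse against the particular opponent $\pi_t$ --- you flagged this yourself but left it unresolved. Second, and more fatally, you invoke ``non-expansiveness of $\cT$'' to obtain $\norm{\pi_{t+1}-\pi_t}_{1,\rho}\le G\gamma_t\norm{g_t}_2$; no such property is assumed anywhere (the paper explicitly warns that $\cT$ is \emph{not} a Euclidean projection), and for the concrete softmax operator $\cT^{\nu}_{\tau}$ of Section~\ref{sec:improvement_projection} the parameter displacement induced by $\cT$ is not controlled by $\gamma_t\norm{g_t}_2$ at all, so this step has no support.

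The fix is precisely the paper's ordering: carry out your entire chain at the pre-projection iterate $\theta_t^+=\theta_t-\gamma_t g_t$, $\pi_t^+=\pi_{\theta_t^+}$. There smoothness \textbf{(A2)} needs no property of $\cT$; \textbf{(A1)} applies directly since $\norm{\theta_t^+-\theta_t}_2=\gamma_t\norm{g_t}_2$; all moving-best-response terms are evaluated at $\pi_t^+$, giving the stated two-term bound for $\sqrt{\subopt_\beta(\pi_t^+)}$. Only then apply \textbf{(A4)} once: since the right-hand side does not involve the post-projection policy and the square root is monotone, $\subopt_\beta(\pi_{t+1})\le\subopt_\beta(\pi_t^+)$ concludes the lemma, and every other step of your argument goes through verbatim. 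Two smaller remarks: your derivation correctly produces the square-root form of the left-hand side, matching the appendix statement of Lemma~\ref{lem:descent_lemma_I}; and for the specific operator the proof of Lemma~\ref{lem:improvement_projection_policy} in fact establishes the stronger per-competitor monotonicity $\cP^{\tpi}_\lambda(\cU^{\nu}_{\tau}(\pi)\succ\pi^{c})\ge\cP^{\tpi}_\lambda(\pi\succ\pi^{c})$, which would repair your first issue in that instance --- but nothing repairs the missing displacement bound, so the reorganization is unavoidable.
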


\vspace{-10pt}
We refer to Lemma~\ref{lem:descent_lemma_I} in Appendix~\ref{app:convergence_sppg} for a complete proof. In the following, we provide a proof sketch to demonstrate where the influence of moving baselines appears. In fact, the additional term $\sqrt{\gamma_t^2 \norm{g_t}_2^2/\beta}$ is exactly the reason to introduce assumptions on $\beta$ and a growing batch size.

\vspace{-9pt}
\begin{proof}[Sketch of the proof.]
For simplicity of exposition, assume $\cT \equiv \mathrm{Id}$. Then the smoothness condition \textbf{(A2)} implies for $f_{t}(\theta) \equiv J_\beta(\theta; \pi_t) \equiv V_\beta(\pi_\theta; \pi_t)$ and $\theta_{t+1} = \theta_t - \gamma_t g_t$
\[
    f_t(\theta_{t+1}) \leq f_t(\theta_t) - \gamma_t \langle \nabla f_t(\theta_t), g_t \rangle + \tfrac{\gamma^2_t L}{2} \norm{g_t}_2^2\,.
\]
Define $f^\star_t = V^\star_\beta(\pi_t)$, and using a relation $S_{t}^2 = f_t(\theta_t) - f^\star_t$ and simple algebraic manipulations
\begin{align*}
    S_{t+1}^2 &\leq S_t^2  - \gamma_t \langle \nabla f_t(\theta_t), g_t \rangle + \tfrac{\gamma_t^2 L}{2} \norm{g_t}_2^2  + \underbrace{[f_{t+1}(\theta_{t+1}) - f_t(\theta_{t+1})] + [f^\star_t- f^\star_{t+1}]}_{:=\mathcal{R}_t}\,.
\end{align*}
For the term $\cR_t$, we first notice that in the difference $ f_{t+1}(\theta_{t+1}) - f_t(\theta_{t+1})$ the KL-terms cancels out, thus we have $f_{t+1}(\theta_{t+1}) - f_t(\theta_{t+1}) = \cP(\pi_{t+1} \succ \pi_{t+1}) - \cP(\pi_{t}\succ \pi_{t+1})$, which we denote as $\cP(\pi_{t+1} - \pi_t \succ \pi_{t+1})$ using the linearity of expectation. For the final term, we use a connection between $f^\star_t \equiv V^\star_\beta(\pi_{t})$ and a convex conjugate of entropy, thus $f^\star_{t+1} - f^\star_t \geq \cP(\pi_{t+1} - \pi_t \succ \nu_{t+1})$, where $\nu_{t+1}$ is a best response against $\pi_{t+1}$. As a result, using bilinearity and bi-Lipschitzness of preferences
\begin{align*}
    \cR_t \leq \cP(\pi_{t+1} -\pi_t \succ \pi_{t+1} - \nu_{t+1}) \leq \tfrac{1}{2}\norm{\pi_{t+1} - \pi_t}_{1,\rho} \norm{\nu^\star_{t+1} - \pi_{t+1}}_{1,\rho}\,.
\end{align*}
Finally, using Assumption~\textbf{(A1)}, Pinsker's inequality, and the connection $\beta \KL_\rho(\pi_{t+1} \Vert \nu^\star_{t+1}) = S_{t+1}^2$, we have $\cR_t \leq (\gamma_t/2)\sqrt{ G^2 \norm{g_t}_2^2 \cdot S_{t+1}^2 / \beta}$. Plugging this inequality into the inequality connecting $S_{t+1}^2$ and $S_t^2$ and solving the resulting quadratic inequality, we conclude the statement.
\end{proof}

\vspace{-4pt}

\textbf{Why growing batches?}
Unlike standard analysis of SGD under the PL-assumption on a fixed objective, the exploitability
$\subopt_\beta(\pi_t)=V_\beta(\pi_t;\pi_t)-V_\beta^\star(\pi_t)$ uses a \emph{moving}
best-response baseline. Controlling its drift yields an intrinsic noise term of order
$\gamma_t\|\xi_t\|_2^2$, which motivates the need of a batch size $B_t$ increasing with $t$. 
We also note that an implicit assumption about growing batch sizes appears in the existing analysis of the extragradient method \citep{zhou2025extragradient}, where the noise term is scaled by $\sigma^2/\beta^2$, with $\sigma^2$ denoting the gradient variance. As remarked by \citet{madden2024high}, similar issues arise in the analysis of projected gradient descent in the non-convex setting, and are typically resolved by a similar growing mini-batch condition~\citep{ghadimi2016mini} or by variance-reduction techniques~\citep{reddi2016proximal}.

\textbf{Why $\beta m \geq G^2$? } This condition ensures the best-response map is well-contractive and one-step improvement is achievable; similar ``sufficient regularization'' assumptions also appear in Mirror-Prox-style analyses, see~\citet{nemirovski2004prox} for a discussion. We note that in the convex case \citep{sokota2022unified}, it was shown that a similar algorithm does not require such assumption; however, in a parametrized non-convex setting, it is unclear whether this is an artifact of the analysis or a necessity for the method's stability. We note that empirically the method may not achieves monotonic improvements in suboptimality over the course of training in the case of small $\beta$, making the last-iterate convergence non-achievable; see Appendix~\ref{app:rps_experiment} for numerical experiments. We leave the complete identification of the gap between these two settings as a direction for future work.

For the tabular softmax and Fisher-compatible parameterizations, the following corollary summarizes the resulting iteration and sample complexity. The rates depend on the regularity of the reference policy under $\rho$: it controls the tails of the stochastic gradient estimator and hence the clipping level $M$. This yields two regimes: the context-free and the contextual settings.

\begin{corollary}[SPG iteration and sample complexity, informal]\label{cor:spg_complexity_main}
Fix confidence level $\delta \in (0,1)$ and target accuracy $\varepsilon \in (0,1)$. Assume
$\E_{x\sim \rho}\!\left[\log^2\!\big(1/\piref_{\min}(x)\big)\right] < \infty$,
where $\piref_{\min}(x) \triangleq \min_{y\in \cY} \piref(y\mid x)$.
Then SPG returns an $\cO(\varepsilon+\epspl)$-VNW for the $\beta$-regularized game in the following settings:
\begin{itemize}[leftmargin=1.25em,nosep]
\item[(i)]
\emph{Context-free, tabular softmax}~\citep{mei2020global}. With clipping
$M=\Theta(\log(1/\varepsilon))$ and $\beta \gtrsim \beta_{\min}$ (where $\beta_{\min}$ depends on the minimum reference mass),
\[
  N_{\mathrm{iter}}(\varepsilon)=T=\tcO\!\left(\varepsilon^{-1}\right),
  \qquad
  N_{\mathrm{sample}}(\varepsilon)=\tcO\!\left(\varepsilon^{-2}\right).
\]

\item[(ii)]
\emph{Contextual, Fisher non-degenerate compatible parameterization}~\citep{yuan2022general}. With clipping
$M=\Theta(1/\sqrt{\varepsilon})$ and $\beta \gtrsim \beta_{\min}$ (where $\beta_{\min}$ depends on the Fisher conditioning),
\[
  N_{\mathrm{iter}}(\varepsilon)=T=\tcO\!\left(\varepsilon^{-2}\right),
  \qquad
  N_{\mathrm{sample}}(\varepsilon)=\tcO\!\left(\varepsilon^{-4}\right).
\]
\end{itemize}
\end{corollary}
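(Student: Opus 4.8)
The plan is to obtain the corollary by specializing the stochastic last-iterate bound of the main Theorem and then balancing its terms. Recall that bound reads
\[
  \subopt_\beta(\pi_T) = \tcO\!\left( \kappa^2/T^2 + \kappa M^2/T^2 + G^2M^2/(\beta \pl \cdot T) + (\varepsilon_{\rm PL}+\varepsilon_{\grad}^2)/\pl\right),
\]
valid with probability $1-\delta$ under \textbf{(A1)}--\textbf{(A5)} and $\beta \pl \ge G^2$. The argument has three ingredients: (a) instantiate the regularity constants $G, L, \pl$ for each parameterization and verify the stability condition $\beta \pl \ge G^2$, which is exactly what the hypothesis $\beta \gtrsim \beta_{\min}=G^2/\pl$ encodes; (b) pick the clipping level $M$ so that the bias term $\varepsilon_{\grad}^2/\pl$ drops to $\cO(\varepsilon)$, leaving a residual floor of $\cO(\varepsilon + \epspl)$ in which the irreducible $\varepsilon_{\rm PL}/\pl$ contribution accounts for the $\epspl$ part; and (c) set $T=N_{\mathrm{iter}}$ by forcing the slowest-decaying noise term below $\varepsilon$, then sum the growing batches for $N_{\mathrm{sample}}$.

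For step (a), in the context-free tabular softmax case I would cite the smoothness and non-uniform PL constants of \citet{mei2020global}, where $\pl$ scales with the minimum reference mass; the threshold $\beta \gtrsim \beta_{\min}$ then inherits precisely the stated dependence on that minimum mass. In the contextual Fisher non-degenerate compatible case I would invoke \citet{yuan2022general}, where $\pl$ is governed by the Fisher conditioning and $\epspl$ captures the compatible-function-approximation error, so $\beta_{\min}=G^2/\pl$ reproduces the stated dependence on the Fisher conditioning. With these constants fixed, $G$, $L$, $\pl$, and $\beta$ enter the bound only as problem-dependent constants absorbed into $\tcO(\cdot)$.

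Step (b) is where the two regimes diverge and is the main obstacle. The advantage estimate $A_j$ contains the term $\beta(\log(\pi_\theta(y|x)/\piref(y|x)) - \log(\pi_\theta(y'|x)/\piref(y'|x)))$, whose magnitude is controlled by $\log(1/\piref_{\min}(x))$, so (since the score factor is bounded for softmax and compatible parameterizations) the clipping bias $\varepsilon_{\grad}$ is governed by the tail mass of $A$ beyond the level $M$. In the tabular softmax setting the operator $\cT$ keeps the log-ratios bounded and $A$ has sub-exponential tails, so the truncation bias decays like $e^{-cM}$ and the choice $M=\Theta(\log(1/\varepsilon))$ already forces $\varepsilon_{\grad}=\cO(\varepsilon)$, hence $\varepsilon_{\grad}^2/\pl=\cO(\varepsilon)$. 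In the contextual Fisher setting only a second-moment bound is available, via $\E_{x\sim\rho}[\log^2(1/\piref_{\min}(x))]<\infty$; then Cauchy--Schwarz gives $\varepsilon_{\grad}\le \E[|A|\,\mathbbm{1}(|A|>M)]=\cO(1/M)$, so attaining $\varepsilon_{\grad}^2/\pl=\cO(\varepsilon)$ requires the larger truncation $M=\Theta(1/\sqrt\varepsilon)$. The delicate point is to convert the integrability hypothesis on $\log^2(1/\piref_{\min})$ into a quantitative tail bound on $A$ that holds uniformly over the iterates $\pi_t$; this is exactly where second-moment control replaces the exponential tail and explains the jump of $M$ from logarithmic to polynomial in $1/\varepsilon$.

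Finally, for step (c), among the decaying terms the $\Theta(1/T)$ contribution $G^2M^2/(\beta \pl\cdot T)$ dominates the two $\Theta(1/T^2)$ terms for large $T$, so requiring it below $\varepsilon$ gives $T=N_{\mathrm{iter}}=\tcO(M^2/\varepsilon)$. Substituting $M=\Theta(\log(1/\varepsilon))$ yields $T=\tcO(\varepsilon^{-1})$ in case (i), while $M=\Theta(\varepsilon^{-1/2})$ yields $T=\tcO(\varepsilon^{-2})$ in case (ii); one checks directly that the $\Theta(1/T^2)$ terms are then $o(\varepsilon)$. The sample complexity follows from the prescribed schedule $B_t=\Theta(t/\pl)$: summing, $N_{\mathrm{sample}}=\sum_{t=1}^{T}B_t=\Theta(T^2/\pl)=\tcO(T^2)$, which gives $\tcO(\varepsilon^{-2})$ in case (i) and $\tcO(\varepsilon^{-4})$ in case (ii). Outputting the last iterate $\pi_T$ preserves the $1-\delta$ guarantee of the Theorem, and all polylogarithmic and $\log(1/\delta)$ factors together with the constants $G,L,\pl,\beta$ are swept into $\tcO(\cdot)$, completing the proof.
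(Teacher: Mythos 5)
Your proposal is correct and follows essentially the same route as the paper's own proofs (Corollaries~\ref{cor:spg_complexity_softmax} and~\ref{cor:spg_complexity_fisher}): verify \textbf{(A1)}--\textbf{(A5)} via Propositions~\ref{prop:softmax_verification} and~\ref{prop:fisher_verification}, set the bias level $\varepsilon_{\grad}^2=\Theta(\pl\,\varepsilon/\log(\rme/\delta))$ so the bias floor is $\cO(\varepsilon)$, read off the clipping level $M$ from the tail behavior of the reference policy, choose $T$ so that the dominant $G^2M^2/(\beta\pl\, T)$ term drops below $\varepsilon$, and sum $B_t=\Theta(t/\pl)$ to get $N_{\mathrm{sample}}=\Theta(T^2/\pl)$; your resulting $M=\Theta(\log(1/\varepsilon))$ vs.\ $M=\Theta(1/\sqrt{\varepsilon})$ and the four complexity rates coincide exactly with the paper's. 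Two sub-steps deviate mildly, both benignly. First, in the softmax case you credit the improvement operator $\cT$ with bounding the log-ratios; this is a valid alternative (iterates lie in $\Theta_{\cT}$, so $\pi_t(y)\ge\tau_0\nu(y)$ and the advantage is almost surely bounded by an $\varepsilon$-independent constant), but the paper does not invoke $\cT$ for bias control: in the context-free setting the inverse-mass condition $\E[1/\tpi_{\min}]=1/\tpi_{\min}<\infty$ holds trivially, and Lemma~\ref{lem:noise_properties_general_initial} gets the $\rme^{-(M-1/2)/(2\lambda)}$ decay for \emph{every} $\theta$ by splitting the log-ratio into a positive part, bounded by $\log(1/\tpi_{\min})$ since $\pi_\theta\le 1$, and a negative part controlled by the change-of-measure bound $\E_{y\sim\pi_\theta}[\tpi(y)/\pi_\theta(y)]\le 1$. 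Second, the ``delicate point'' you flag in the Fisher case --- uniformity over the iterates of the tail bound on $A$ --- is the one step you leave unexecuted, but it dissolves under the same split: the positive part of $\log(\pi_\theta/\piref)$ is dominated by the $\theta$-free variable $\log(1/\piref_{\min}(x))$, whose second moment is exactly the hypothesis, while the negative part has a universal sub-exponential tail; Lemma~\ref{lem:noise_properties_bias} then obtains the $\cO(1/M)$ bias via Markov on the tail integral (your Cauchy--Schwarz route gives the same rate, provided you also bound $\E[A^2]$ uniformly in $\theta$ by this split). With those two points patched in, the argument is the paper's.
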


We refer to Corollary~\ref{cor:spg_complexity_softmax} and ~\ref{cor:spg_complexity_fisher} in Appendix~\ref{app:self_play_pg} for proofs and complete statements. We also notice that under a stronger condition $\E_{x\sim \rho}[1/\piref_{\min}(x)]<\infty$, one can use a milder clipping level and recover the same improved rates in both the contextual and context-free cases. For general context spaces, however, $\E[1/\piref_{\min}(x)]<\infty$ can be substantially more restrictive than the log-moment condition.

\section{Proximal Point Method with Self-Play Policy Gradients}\label{sec:algo}

In this section, we propose a way to overcome a theoretical limitation of the self-play policy-gradient (SPG) method by embedding it in the Proximal Point (PP) method.

\subsection{(Approximate) Proximal Point Method}

A well-known method for computing a NE is the Proximal Point (PP) method \citep{martinet1970breve,rockafellar1976monotone}. At each iteration, the PP method computes the NE of an auxiliary game that is additionally regularized toward the previous iterate. To formally define the iterates $(\pi_k)_{k \geq 0}$, we initialize $\pi_0\equiv \piref$, and for a PP step size $\eta > 0$ we define $\pi_{k+1}$ as a solution to the following game:
\begin{equation}\label{eq:pp_subgame_main}
\textstyle \max_{\pi\in\policies}\min_{\mu\in\policies}
  \Bigl\{
    \cP_\beta(\pi\succ\mu)
    - (\beta/\eta)\KL_\rho(\pi\Vert\pi_k)
    +(\beta/\eta)\KL_\rho(\mu\Vert\pi_k)
  \Bigr\}\,,
\end{equation}
which is equivalent to the original game, up to additional regularization toward the previous iterate $\pi_k$. Alternatively, we can view $\pi_{k+1}$ as a fixed point of the best-response operator:
\begin{equation}\label{eq:proximal_point_iterations}
    \textstyle \pi_{k+1} \in \argmin_{\pi \in \policies} V_k(\pi; \pi_{k+1})\,, \qquad V_k(\pi;\mu) \triangleq V_{\beta}(\pi;\mu) + (\beta/\eta) \KL_\rho(\pi \Vert \pi_k).
\end{equation}

This approach often yields accelerated convergence rates for solving games, but it is primarily a \emph{conceptual} method rather than a \emph{practical} one, since exactly computing the NE of the regularized game can be just as challenging as solving the original game. The key observation is that the PP subproblems in \eqref{eq:proximal_point_iterations} induce stronger effective regularization than the original game (when $\eta$ is small), which allows us to address the theoretical limitation of self-play policy gradients. This motivation is similar to the motivation behind the Mirror-Prox algorithm \citep{nemirovski2004prox}, which efficiently approximates a PP iteration by performing two best-response steps for a well-chosen value of $\eta$.

\textbf{PP residual and convergence guarantees.}
As discussed earlier, computing \eqref{eq:proximal_point_iterations} exactly is challenging, especially in the function-approximation setting. The exact iterates can equivalently be characterized by $\nabla_\pi V_k(\pi_{k+1},\pi_{k+1}) \propto \bOne$, i.e., the gradient vector becomes constant. This motivates the following definition: we call $\pi_{k+1}$ an $\varepsilon_{\pp}$-approximate PP update if
\begin{equation}\label{eq:pp_residual_main}
  \|\nabla_\pi V_k(\pi_{k+1},\pi_{k+1})\|_{\spann,\rho}^2 \le \varepsilon_{\pp}\,,
\end{equation}
where $\|\cdot\|_{\spann,\rho}^2$ is an expected (over contexts) span seminorm of a vector; by definition, it equals zero on constant vectors. Under this definition, it is possible to derive the following guarantees; for the full statement and proof, see Appendix~\ref{app:analysis_pp}.

\begin{proposition}[Approximate PP convergence]\label{prop:approx_pp_contraction_main}
Assume that \eqref{eq:pp_residual_main} holds for all $k\ge 0$. Then the
iterates of approximate PP converge toward $\pistar_\beta$ up to a residual floor:
\begin{equation}\label{eq:pp_kl_contraction_main}
  \textstyle \KL_\rho(\pistar_\beta\Vert\pi_k)
  \le
  (1+\eta/2)^{-k}\,\KL_\rho(\pistar_\beta\Vert\pi_0)
  + 2\varepsilon_{\pp} / \beta^2\,,
\end{equation}
and similarly $\subopt_\beta(\pi_k)$ and $\|\log\pi_k-\log\pistar_\beta\|_{\spann,\rho}$ decays at the rate $(1+\eta/2)^{-k}$ with an
additive $\cO(\varepsilon_{\pp})$ term (see
Proposition~\ref{prop:approx_pp_convergence_context} in the Appendix~\ref{app:analysis_pp} for explicit constants).
\end{proposition}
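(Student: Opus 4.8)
The plan is to prove a one-step contraction for $\KL_\rho(\pistar_\beta\Vert\pi_k)$ directly from the first-order stationarity conditions of the exact equilibrium and of the approximate proximal iterate, and then unroll the resulting geometric recursion. Since $\pi_0\equiv\piref$ and every subproblem is $\KL$-regularized toward full-support anchors, all iterates stay in the relative interior and the relevant per-context gradients are well defined. Writing $q_\mu(y\mid x)\triangleq\sum_{y'}\mu(y'\mid x)\cP(y'\succ y\mid x)$, the per-context gradient of the min-player in $V_k$ at the symmetric point $(\pi_{k+1},\pi_{k+1})$ is $g_{k+1}(\cdot\mid x)=q_{\pi_{k+1}}(\cdot\mid x)+\beta\log\tfrac{\pi_{k+1}}{\piref}(\cdot\mid x)+\tfrac{\beta}{\eta}\log\tfrac{\pi_{k+1}}{\pi_k}(\cdot\mid x)$ up to an additive constant in $y$, and the residual condition \eqref{eq:pp_residual_main} reads $\E_{x\sim\rho}\|g_{k+1}(\cdot\mid x)\|_{\spann}^2\le\varepsilon_{\pp}$. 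The equilibrium $\pistar_\beta$ satisfies the exact relation that $q_{\pistar_\beta}(\cdot\mid x)+\beta\log\tfrac{\pistar_\beta}{\piref}(\cdot\mid x)$ is constant in $y$.

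First I would subtract these two stationarity relations and test the difference against the mean-zero direction $\pistar_\beta(\cdot\mid x)-\pi_{k+1}(\cdot\mid x)$, then take $\E_{x\sim\rho}$; all additive constants drop since the test direction sums to zero per context, leaving $\E_{x}\langle g_{k+1},\pistar_\beta-\pi_{k+1}\rangle$ on the right. The bilinear preference contribution $\E_x\sum_y(q_{\pi_{k+1}}-q_{\pistar_\beta})(\pistar_\beta-\pi_{k+1})$ equals $\cP(\pi_{k+1}\succ\pistar_\beta)+\cP(\pistar_\beta\succ\pi_{k+1})-\cP(\pi_{k+1}\succ\pi_{k+1})-\cP(\pistar_\beta\succ\pistar_\beta)$, which vanishes exactly by the symmetry $\cP(\pi\succ\mu)+\cP(\mu\succ\pi)=1$ and $\cP(\pi\succ\pi)=\tfrac{1}{2}$; this skew-symmetry is the conceptual heart of the argument. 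The two logarithmic terms are handled by the three-point identity $\sum_y(\log b-\log c)(a-b)=\KL(a\Vert c)-\KL(a\Vert b)-\KL(b\Vert c)$: the $\beta$-term yields $-\beta[\KL_\rho(\pistar_\beta\Vert\pi_{k+1})+\KL_\rho(\pi_{k+1}\Vert\pistar_\beta)]$ and the $\tfrac{\beta}{\eta}$-term yields $\tfrac{\beta}{\eta}[\KL_\rho(\pistar_\beta\Vert\pi_k)-\KL_\rho(\pistar_\beta\Vert\pi_{k+1})-\KL_\rho(\pi_{k+1}\Vert\pi_k)]$. Dropping the two nonnegative ``backward'' divergences $\KL_\rho(\pi_{k+1}\Vert\pistar_\beta)$ and $\KL_\rho(\pi_{k+1}\Vert\pi_k)$ leaves
\[
\beta\big(1+\tfrac{1}{\eta}\big)\KL_\rho(\pistar_\beta\Vert\pi_{k+1})\le\tfrac{\beta}{\eta}\KL_\rho(\pistar_\beta\Vert\pi_k)+\big|\E_{x}\langle g_{k+1},\pistar_\beta-\pi_{k+1}\rangle\big|.
\]

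The remaining work is to absorb the residual. Because the test direction is mean-zero I can replace $g_{k+1}$ by its centering, so $|\langle g_{k+1}(\cdot\mid x),\pistar_\beta(\cdot\mid x)-\pi_{k+1}(\cdot\mid x)\rangle|\le\|g_{k+1}(\cdot\mid x)\|_{\spann}\,\|\pistar_\beta(\cdot\mid x)-\pi_{k+1}(\cdot\mid x)\|_1$; then Pinsker and Cauchy--Schwarz over $x$ give $|\E_x\langle g_{k+1},\pistar_\beta-\pi_{k+1}\rangle|\le\sqrt{2\varepsilon_{\pp}}\sqrt{\KL_\rho(\pistar_\beta\Vert\pi_{k+1})}$. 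Applying Young's inequality with weight $\beta$ splits this as $\tfrac{\beta}{2}\KL_\rho(\pistar_\beta\Vert\pi_{k+1})+\tfrac{\varepsilon_{\pp}}{\beta}$, and moving the first piece to the left replaces the coefficient $\beta(1+\tfrac{1}{\eta})$ by $\beta(\tfrac{1}{2}+\tfrac{1}{\eta})$. Dividing through yields the recursion $\KL_\rho(\pistar_\beta\Vert\pi_{k+1})\le(1+\tfrac{\eta}{2})^{-1}\KL_\rho(\pistar_\beta\Vert\pi_k)+\tfrac{2\eta}{\beta^2(\eta+2)}\varepsilon_{\pp}$. This is exactly where the advertised factor comes from: spending half the effective regularization margin on the residual degrades the ideal rate $(1+\eta)^{-1}$ to $(1+\eta/2)^{-1}$. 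Unrolling and summing the geometric series with ratio $1-\tfrac{2}{\eta+2}=\tfrac{\eta}{\eta+2}$ converts the per-step residual into the floor $\tfrac{2\varepsilon_{\pp}}{\beta^2}$, which is precisely \eqref{eq:pp_kl_contraction_main}.

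For the two companion statements, the decay of $\|\log\pi_k-\log\pistar_\beta\|_{\spann,\rho}$ would follow by reading the same fixed-point relation in the log-domain: since the preference term $q_\mu$ is uniformly bounded and linear (hence Lipschitz) in $\mu$, the map $\log\pi_k\mapsto\log\pi_{k+1}$ is a span-seminorm contraction with the same factor, and $\subopt_\beta(\pi_k)$ is then controlled by the log-ratio span together with $\KL_\rho(\pistar_\beta\Vert\pi_k)$ via Pinsker and boundedness of $\cP$; I defer the explicit constants to the appendix proposition. The step I expect to be most delicate is matching constants in the Young split so that the margin left after absorbing the residual is exactly $\beta(\tfrac{1}{2}+\tfrac{1}{\eta})$ and the geometric sum collapses to $2\varepsilon_{\pp}/\beta^2$; a second point requiring care is that the span-seminorm residual, being defined only modulo constants, must be paired solely with the mean-zero direction $\pistar_\beta-\pi_{k+1}$, so that the undetermined constants never enter the estimate.
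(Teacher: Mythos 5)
Your derivation of the KL recursion is correct and yields exactly the paper's constants, but it travels a genuinely different road. The paper (Proposition~\ref{prop:approx_pp_convergence_context}) works at the level of \emph{values}: it applies the Bregman/strong-convexity identity of Lemma~\ref{lem:value_strong_convexity_context} to $V_k$ at $(\pistar_\beta,\pi_{k+1},\pi_{k+1})$, uses the Nash inequality $V_\beta(\pistar_\beta,\pi_{k+1})-V_\beta(\pi_{k+1},\pi_{k+1})=-\subopt_\beta(\pi_{k+1},\pistar_\beta)\le 0$ to discard the game term, and then \emph{solves a quadratic inequality} in $A_{k+1}=\sqrt{\KL_\rho(\pistar_\beta\Vert\pi_{k+1})}$ followed by a tuned $(1+\alpha)$-splitting to obtain the contraction factor $(1+\eta/2)^{-1}$. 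You instead work at the level of \emph{first-order conditions}: subtracting the stationarity relations, testing against the mean-zero direction $\pistar_\beta-\pi_{k+1}$, killing the bilinear game term exactly by skew-symmetry ($\cP(\pi\succ\mu)+\cP(\mu\succ\pi)=1$, $\cP(\pi\succ\pi)=\tfrac12$), expanding the two log terms by the three-point identity, and absorbing the residual by Young's inequality with weight $\beta$. Both routes produce the identical one-step recursion $A_{k+1}^2\le(1+\eta/2)^{-1}A_k^2+\tfrac{\eta\,\varepsilon_{\pp}}{\beta^2(1+\eta/2)}$ and hence the floor $2\varepsilon_{\pp}/\beta^2$; your Young-inequality absorption is arguably cleaner than the quadratic-solve-then-square manipulation, and your skew-symmetry cancellation makes the ``monotone operator'' structure explicit, whereas the paper's value-level argument has the advantage of keeping $\subopt_\beta(\pi_{k+1},\pistar_\beta)$ visible in the intermediate inequality \eqref{eq:KL_recursion_raw}, which it then reuses verbatim to launch the suboptimality bound.

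Two caveats on the companion claims, which in your write-up are only sketched. First, since your skew-symmetry step annihilates the game term identically, no $\subopt_\beta(\pi_{k+1},\pistar_\beta)$ term survives to be rearranged; you would need to recover it, e.g., by \emph{not} dropping $\beta\KL_\rho(\pi_{k+1}\Vert\pistar_\beta)$ and observing (from the equilibrium stationarity relation) the identity $\subopt_\beta(\pi,\pistar_\beta)=\beta\KL_\rho(\pi\Vert\pistar_\beta)$, after which your retained term reproduces the paper's bound $\subopt_\beta(\pi_{k+1},\pistar_\beta)\le(\beta/\eta)A_k^2+\sqrt{2\varepsilon_{\pp}}\,A_{k+1}$; the remaining work (bounding the worst-case opponent via $\KL_\rho(\pi_{k+1}\Vert\mu^\star_{k+1})\le\subopt_\beta(\pi_{k+1})/\beta$ and a self-bounding quadratic) is nontrivial and not covered by ``Pinsker and boundedness of $\cP$.'' Second, your claim that the log-domain map is ``a span-seminorm contraction with the same factor'' is imprecise: the homogeneous contraction factor is $(1+\eta)^{-1}$, and the $(1+\eta/2)^{-k}$ rate in the proposition arises from the inhomogeneous forcing term $\propto A_{k+1}$ (decaying only like $(1+\eta/2)^{-k/2}$ in norm), which the paper's appendix handles via an explicit two-term recursion coupling the log-span bound to the already-established KL decay; for these two parts your plan, once fleshed out, essentially coincides with the paper's argument rather than offering an alternative.
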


\textbf{Choice of $\eta$.}
Larger $\eta$ improves the outer contraction factor in
\eqref{eq:pp_kl_contraction_main}, but it also weakens the proximal regularizer
$\beta/\eta$ and can make the inner games harder to solve. This is the same
trade-off that motivates the choice $\eta=\cO(\beta)$ in Mirror-Prox-style algorithms for this problem
\citep{cen2024fast}, which resolves the inner problem using two iterations of the best-response operator.

\subsection{PP--SPG Method}\label{sec:pp_spg_main}

We now describe how we implement the inexact PP step
\eqref{eq:pp_subgame_main} using SPG, and summarize the resulting convergence
guarantees.

\textbf{Outer--inner structure.}
We set $\pi_0=\piref$ and run $K$ outer PP steps. At outer iteration
$k$, we fix $\pi_k$ and (approximately) solve the proximal subgame
\eqref{eq:pp_subgame_main} by running SPG for $T_k$ inner steps. Denote

\vspace{-12pt}
\[
  \textstyle \beta_{\target}\triangleq \beta/\eta,
  \qquad
  \lambda\triangleq\beta+\beta_{\target}=\beta\bigl(1+\frac{1}{\eta}\bigr),
  \qquad
  \tilde{\pi}_k \propto [\piref]^{\beta/\lambda} \times [\pi_k]^{\beta_{\target}/\lambda}.
\]

\vspace{-6pt}
The inner problem is exactly a $\lambda$-regularized preference game anchored at $\tilde{\pi}_k$, i.e., it is
of the same form as the game analyzed for SPG in Section~\ref{sec:spg_main}.

\textbf{Inner objective for SPG.}
Let $\pi_\theta$ be a parametrized policy. The $\min$-player
objective against policy $\pi$ at the outer step $k$ is
\begin{equation}\label{eq:pp_spg_inner_obj_main}
  J_k(\theta;\pi)
  \triangleq
  \cP(\pi\succ\pi_\theta)
  +\beta\,\KL_\rho(\pi_\theta\Vert\piref)
  +\beta_{\target}\,\KL_\rho(\pi_\theta\Vert\pi_k).
\end{equation}
Self-play corresponds to using $\pi=\pi_\theta$ in $J_k$ (as in Section~\ref{sec:spg_main}), and updating parameters with stochastic gradients and an appropriately chosen improvement
operator $\cT_k \colon \Theta \to \Theta$ to ensure regularity:
\begin{equation}\label{eq:pp_spg_update_main}
  \theta_{k,t+1}=\cT_k\left(\theta_{k,t}-\gamma_{k,t}\,g_{k,t}\right),
  \qquad
  \pi_{k,t+1}=\pi_{\theta_{k,t+1}}.
\end{equation}
At the end of the inner loop, we set $\pi_{k+1}\triangleq \pi_{k,T_k}$. A
concrete example of $g_{k,t}$ is the clipped pairwise REINFORCE estimator as in Section~\ref{sec:spg_main}, see Algorithm~\ref{alg:pp_spg} in Appendix~\ref{app:pp-spg} for a formal definition and the pseudo-code.

\textbf{Connection to COMAL \citep{liu2024comal}.}
Our method is closely related in spirit to COMAL, which is motivated as a practical implementation of the conceptual prox method for \emph{unregularized} preference games. COMAL proceeds by repeatedly solving a KL-regularized subgame centered at a reference policy, and then updating the reference to the newly computed iterate. In contrast, our goal is to compute the Nash equilibrium of the \emph{fixed} $\beta$-regularized game (anchored at $\pi_{\mathrm{ref}}$), and we apply an (inexact) proximal-point outer loop \emph{on top of} this baseline regularization. Concretely, each PP outer step adds an \emph{additional} KL term toward the previous iterate, which strengthens the effective regularization of the inner problem and yields an improved contraction guarantee for the outer loop. This additional proximal regularizer allows us to overcome the limitation of plain self-play policy gradients in our setting and derive end-to-end convergence rates.

\subsection{Theoretical Guarantees}

Approximate PP requires a bound on the residual
\eqref{eq:pp_residual_main}, which is a gradient in \emph{policy space}.
Our inner solver operates in \emph{parameter space}. We bridge this gap via a
gradient-compatibility condition (Assumption~\ref{ass:gradient_compatibility} in Appendix~\ref{app:analysis_pp_spg})
showing that, up to constants,
\[
  \textstyle \|\nabla_\pi V_k(\pi_\theta,\pi_\theta)\|_{\spann,\rho}^2
  \ \lesssim\
  \|\nabla_\theta J_k(\theta;\pi_\theta)\|_2^2\,.
\]
Notably, this assumption holds under both tabular softmax and Fisher-compatible parameterizations. Combined with the smoothness of $J_k$, this yields the key implication
(Lemma~\ref{lem:inner_residual_vs_subopt}):
if the inner loop returns $\pi_{k+1}$ with
$\subopt^{\tilde{\pi}_k}_\lambda(\pi_{k+1})\le \varepsilon_{\rin}$, then
\begin{equation}\label{eq:inner_to_pp_main}
  \textstyle  \|\nabla_\pi V_k(\pi_{k+1},\pi_{k+1})\|_{\spann,\rho}^2
  \le
  \varepsilon_{\pp}
  \qquad\text{with}\qquad
  \varepsilon_{\pp}
  =
  \cO(\varepsilon_{\rin})\ \ (\text{explicit in Appendix~\ref{app:pp-spg}}).
\end{equation}
Therefore, controlling the \emph{inner} SPG suboptimality
suffices to control the \emph{outer} PP residual, which allows us to provide end-to-end guarantees for the combined method. The main caveat in the analysis is that, because our anchor sequence $(\tpi_k)_{k\geq 0}$ depends on the previous iterate $\pi_k$, we cannot uniformly control its behavior over all contexts, but we still can control the second moment thanks to established convergence in $\norm{\log \pi_k - \log \pistar_\beta}_{\spann,\rho}$.

\begin{theorem}[PP--SPG sample complexity, informal]
\label{thm:pp_spg_sample_complexity}
Fix arbitrary $\beta\le 1$, confidence $\delta\in(0,1)$, and target accuracy
$\varepsilon\in(0,1)$. Run PP--SPG for $K=\tcO(1)$ outer steps,
and set the inner target accuracy $\varepsilon_{\rin}=\Theta(\varepsilon)$.
Then, with probability at least $1-\delta$, the output $\pi_K$ satisfies
$\subopt_\beta(\pi_K)\le \cO(\varepsilon + \epspl)$. Moreover, the combined method achieves the same order of total iteration and sample complexity as in Corollary~\ref{cor:spg_complexity_main}.
\end{theorem}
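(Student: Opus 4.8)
The plan is to chain three ingredients already in hand: the outer geometric contraction of approximate PP (Proposition~\ref{prop:approx_pp_contraction_main}), the inner-to-outer residual bridge~\eqref{eq:inner_to_pp_main} (Lemma~\ref{lem:inner_residual_vs_subopt}), and the inner SPG complexity (Corollary~\ref{cor:spg_complexity_main}) applied to each proximal subgame. First I would fix the proximal step size $\eta$ so that the \emph{inner} stability condition is met. Each inner subgame~\eqref{eq:pp_spg_inner_obj_main} is a $\lambda$-regularized preference game with $\lambda=\beta(1+1/\eta)$ anchored at $\tpi_k$, so the SPG guarantee requires $\lambda\pl\ge G^2$ rather than the restrictive $\beta\pl\ge G^2$. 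Choosing $\eta=\Theta(\beta\pl/G^2)$ makes $\lambda=\Theta(G^2/\pl)$, so the inner condition holds for \emph{arbitrary} $\beta\le 1$; this is exactly how the proximal loop removes the stability restriction of plain SPG.

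Next I would count outer iterations. By Proposition~\ref{prop:approx_pp_contraction_main}, $\subopt_\beta(\pi_k)$ and $\KL_\rho(\pistar_\beta\Vert\pi_k)$ contract at rate $(1+\eta/2)^{-1}$ down to a floor of order $\varepsilon_{\pp}$. To drive the contracting part below $\varepsilon$ it suffices to take $K=\cO(\eta^{-1}\log(1/\varepsilon))=\tcO(1)$ outer steps, with $\beta$ (hence $\eta$) treated as a problem constant. To make the floor $\cO(\varepsilon)$ as well, I invoke the bridge~\eqref{eq:inner_to_pp_main}: since $\varepsilon_{\pp}=\cO(\varepsilon_{\rin})$, setting the inner target $\varepsilon_{\rin}=\Theta(\varepsilon)$ (absorbing any $\beta$-dependent constant such as the $1/\beta^2$ appearing in the KL floor) forces $\subopt_\beta(\pi_K)\le\cO(\varepsilon+\epspl)$, where the $\epspl$ term is the irreducible bias inherited from the approximate PL inequality in each inner run.

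Then I would assemble the complexity. Each inner run reaches $\subopt^{\tpi_k}_\lambda(\pi_{k+1})\le\varepsilon_{\rin}=\Theta(\varepsilon)$ at the per-run cost of Corollary~\ref{cor:spg_complexity_main}: $\tcO(\varepsilon^{-1})$ iterations and $\tcO(\varepsilon^{-2})$ samples in the tabular-softmax regime, and $\tcO(\varepsilon^{-2})$ iterations with $\tcO(\varepsilon^{-4})$ samples for Fisher-compatible parameterizations. Multiplying by $K=\tcO(1)$ outer steps leaves these orders unchanged, matching Corollary~\ref{cor:spg_complexity_main}. For the high-probability claim I would run each inner loop at confidence $1-\delta/K$ and union-bound over the $K=\tcO(1)$ runs, which only inflates logarithmic factors already hidden in $\tcO(\cdot)$.

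The main obstacle I expect is not this bookkeeping but the \emph{anchor regularity}. The inner SPG constants (clipping level $M$ and the sub-Gaussian variance proxy) depend on the log-regularity of the anchor, yet here $\tpi_k\propto[\piref]^{\beta/\lambda}\times[\pi_k]^{\beta_{\target}/\lambda}$ is itself an iterate, so one cannot invoke the fixed-reference moment assumption $\E_{x\sim\rho}[\log^2(1/\piref_{\min}(x))]<\infty$ directly. The key is that $\log\tpi_k$ is a convex combination of $\log\piref$ and $\log\pi_k$, and the outer contraction already controls $\norm{\log\pi_k-\log\pistar_\beta}_{\spann,\rho}$; combining these with the regularity of $\pistar_\beta$ yields a \emph{second-moment} (rather than uniform) bound on $\log(1/\tpi_{k,\min})$ that holds uniformly in $k$. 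This is precisely the substitute needed to keep $M$ and the variance proxy bounded across all outer steps, and it is where the span-seminorm convergence from Proposition~\ref{prop:approx_pp_contraction_main} does the real work, closing the loop between the outer and inner guarantees.
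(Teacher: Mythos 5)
Your proposal matches the paper's proof essentially step for step: the same choice of $\eta$ enforcing the uniform stability condition $\beta(1+1/\eta)\,\pl_{\beta,\eta}\ge G^2$ on the $\lambda$-regularized inner game (Assumption~\ref{ass:parametrization_regularity}, solved as a fixed-point equation in Corollaries~\ref{cor:final_rates_softmax} and~\ref{cor:final_rates_fisher}), the inner-to-outer bridge of Lemma~\ref{lem:inner_residual_vs_subopt}, the outer contraction of Proposition~\ref{prop:approx_pp_convergence_context} with $K=\tcO(1)$, a $\delta/K$ union bound over outer runs (Corollary~\ref{cor:pp_spg_stochastic}), and per-run inner costs from Corollary~\ref{cor:spg_complexity_main}. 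Your handling of the moving anchor is also the paper's: a uniform-in-$k$ second-moment bound on $\log(1/\tpi_{k,\min})$ derived from span-seminorm control of the iterates (Lemmas~\ref{lem:pp_iterate_uniform_regularity}, \ref{lem:log_min_prob_vs_span}, and~\ref{lem:v_pi_k_control} route this through $\|\log\pi_k-\log\piref\|_{\spann,\rho}$ rather than through $\pistar_\beta$, an equivalent triangle-inequality bookkeeping), with the circular dependence between inner constants and outer residuals resolved by the same induction over outer steps.
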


See Corollaries~\ref{cor:final_rates_softmax} and~\ref{cor:final_rates_fisher} in Appendix~\ref{app:analysis_pp_spg} for full statements and proofs. The contextual rate improves to $\tcO(\varepsilon^{-2})$ under stronger assumptions on $\cT$ and $\piref$. With PP alone, we control only a second log-moment, so gradients may be heavy-tailed. SVRG-style variance reduction could further reduce sampling complexity~\citep{reddi2016proximal}; we leave this direction to future work.

\subsection{Nash Prox: Practical Deep Learning Implementation} \label{sec:implementation}
Next, we provide a practical implementation, which we call $\algo$. We consider the following policies: an online policy $\pi_t$ with parameters $\theta_t$, a \emph{target policy} $\pi^\mathrm{target}_t$ with parameters $\theta^{\mathrm{target}}_t$, and a fixed reference policy $\piref$. Next, we define the following IPO-style loss function (cf.~\eqref{eq:online_ipo_loss})
\begin{equation}\label{eq:prox_ipo_loss}
    \textstyle \cL_{\mathtt{Prox}}(\theta; \pi^\mathrm{target},\piref) \triangleq \E_{x \sim \rho,\, y,y' \sim \mathtt{sg}(\pi_\theta(\cdot|x))}\left[ \left( \ell_\theta(x,y) - \ell_\theta(x,y')- \tfrac{p-1/2}{\beta + \beta_{\target}} \right)^2 \right]\,,
\end{equation}
where $\ell_\theta(x,y) \triangleq (\beta/\lambda) \cdot \log (\pi_\theta(y|x)/\piref(y|x)) + (\beta_\target/\lambda) \cdot \log (\pi_\theta(y|x)/\pi^{\target}(y|x))$ for $\lambda \triangleq \beta + \beta_{\target}$, and $p$ is an estimator of $\cP(y \succ y' \mid x)$.
As explained before, the stochastic gradient of this loss function corresponds to our method, but it is easier to implement in practice. To implement the algorithm exactly as we discussed in Section~\ref{sec:pp_spg_main}, we need to update the target policy with the weights of the online policy every $T_k$ steps, where $T_k$ defines the update schedule. However, following a very common strategy in deep reinforcement learning \citep{mnih2015humanlevel,lillicrap2016continuous}, we employ a more practical and elegant approach and update the target with an exponential moving average instead, resulting in the following updates:
\[
    \theta_{t+1} = \theta_t - \alpha_t \cdot \nabla_{\theta} \cL_{\mathtt{Prox}}(\theta_t; \pi^\mathrm{target}_t,\piref)\,, \qquad  \theta^{\mathrm{target}}_{t+1} = (1-\kappa) \cdot \theta^{\mathrm{target}}_t + \kappa \cdot \theta_{t+1}\,,
\]
where $\alpha_t$ is a learning rate and the parameter $\kappa\in[0,1]$ implicitly controls the number of steps for one proximal update. Thus, we approximate the solution of one proximal subproblem with $T_k \approx 1/\kappa$ gradient steps. Also, we found it empirically beneficial to anneal the value of $\kappa$ as $\kappa_t = 1/(c \cdot t + 1)$ for some value of $c$, where $K \approx 1/c$ corresponds to the approximate number of outer PP iterates.

\section{Experiments}

We use the simple contextual dueling bandit problem as an initial experiment to study the deep learning implementation of $\algo$. We refer to Appendix~\ref{app:rps_experiment} for an additional small-scale experiment on the Rock-Paper-Scissors game.

\textbf{Game definition.}
Let us fix a number of actions $Y \geq 2$ and a positive integer $r \geq 1$. We consider a dueling bandit game with a context space $\cX = \R^{r \times r}$ and an action space $\cY = \{1, \ldots, Y\}$. Preference probabilities are defined as follows
\[
    \cP(y \succ y' | x) \triangleq \sigma( A_{y,y'} - A_{y',y} )\,,
\]
for $A \triangleq U \Theta_x V^\top$, where $U \in \R^{Y \times r}$ and $V \in \R^{Y \times r}$ are fixed matrices, $\Theta_x \in \R^{r \times r}$ is a corresponding context matrix, and $\sigma(\cdot)$ is a sigmoid function. This type of dueling bandit instance is a generalization of a low-rank linear bandit problem. Notice that for any $r \geq 2$ this problem does not admit a Bradley-Terry model. The distribution over contexts $\rho$ is assumed to be a standard Gaussian random matrix (i.e., elements of $\Theta_x$ are i.i.d. with distribution $\cN(0,1)$). We aim to find a policy $\pi \colon \cX \to \simplex_{\cY}$ that approximates a $\beta$-regularized VNW. We refer to Appendix~\ref{app:experiments} for more details on the setup.

\begin{wrapfigure}{r}{0.55\textwidth}
  \centering
  \vspace{-24pt} 
  \includegraphics[width=\linewidth]{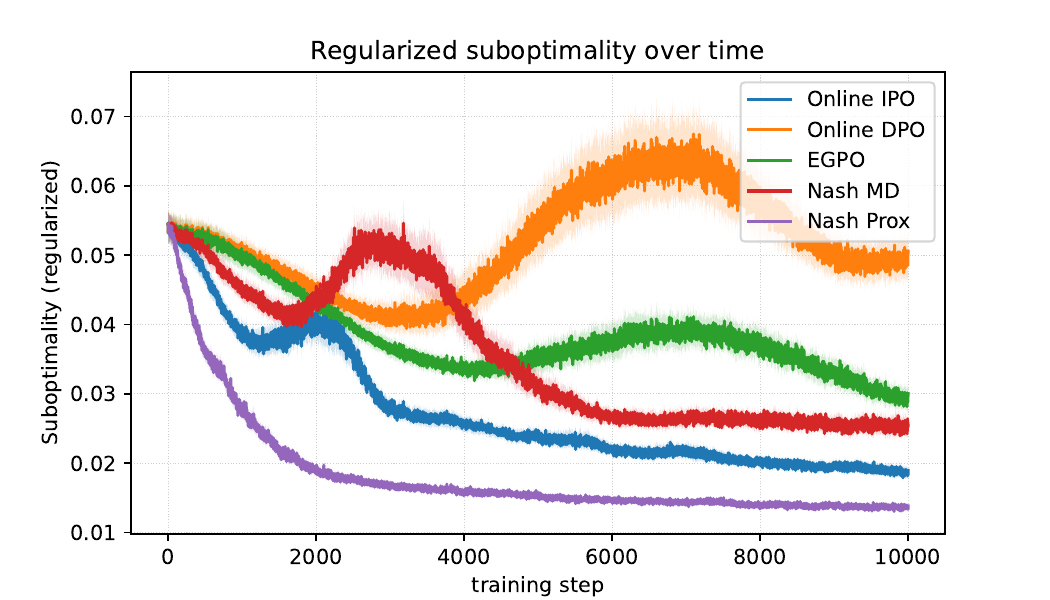}
  \caption{Comparison of $\algo$ (in \textcolor{violet}{violet}) with baseline methods. Suboptimality is averaged over 25 random seeds; shaded regions indicate standard error.}
  \label{fig:baselines_matrix_game}
  \vspace{-9pt} 
\end{wrapfigure}

\textbf{Results.}
We compare our method with the following baselines: Online IPO~\citep{calandriello2024human}, Online DPO~\citep{guo2024direct},  EGPO~\citep{zhou2025extragradient}, Nash MD~\citep{munos2023nash}, and $\algo$. The results are presented in Figure~\ref{fig:baselines_matrix_game}. We can observe that $\algo$ with an adaptive $\kappa_t=1/(0.3 \cdot t+1)$ and $\beta_{\target} = 10 \times \beta$ outperforms all the baselines. We also observe that the two-step stabilization procedure of EGPO is insufficient to stabilize training in the function approximation setting, while Online DPO diverges because it is not designed to solve the underlying preference game in the first place. Additionally, we observe non-monotonous behavior of suboptimality for all the baselines, including online IPO. \emph{This empirical fact is a direct demonstration of a necessity of the stability condition on $\beta$ for last-iterate convergence guarantees, which is typically relies on contraction principle.} In Appendix~\ref{app:experiments}, we provide an additional ablation study on the choice of $\kappa$ on the optimization procedure as well as a comparison between hard and soft updates.

\subsection{LLM Alignment}\label{subsec:llm_experiment}

\textbf{Experiment setup.} For our LLM-based experiments, we use the Gemma-3-4B \citep{team2025gemma} pretrained model checkpoints and train on the RLHFlow \citep{dong2024rlhf} datasets for all the analysis. In particular, we first perform SFT on the RLHFlow SFT dataset \citep{rlhflow_sft} and then all our NLHF experiments use LoRA adapters~\citep{hu2022lora} of rank 16 with $\alpha=32$ on the resulting checkpoint, using a subset of RLHFlow Prompt collection \citep{rlhflow_prompt}. The pairwise judge model is Gemma-2-2B trained via the Robust Reward Models \citep{liu2025rrm} method. All the experiments were performed using the TRL library \citep{vonwerra2022trl}.

\textbf{Baselines.} We compare the practical version of $\algo$ against Online DPO \citep{guo2024direct} and Online IPO \citep{calandriello2024human}. We also considered $\NashMD$ \citep{munos2023nash} and EGPO \citep{zhou2025extragradient}, but did not include them in our experiments because their current implementations are prohibitively slow in our setting (we estimate on the order of $10^3$ GPU-hours per configuration). For $\NashMD$, the main bottleneck is the need to sample from geometric mixtures, while for EGPO the reference implementation incurs substantial overhead due to repeated optimizer off-loading and reloading. Additional details on baselines and hyperparameters are provided in Appendix~\ref{app:experiments}.

\textbf{Results.} We report in Table~\ref{tab:winrates_sft_final_scriptstyle_ci_v2} the pairwise win-rates between the different methods on a held-out test set of prompts for the separate Gemma3-27B-IT judge model. We observe that $\algo$ outperforms all the baselines. From the implementation side, the main difference between $\algo$ and Online IPO is the use of additional regularization with respect to a target model, and our results show that this regularization can be valuable.

\begin{table*}[tbp]
\centering
\caption{Pairwise win rates (mean $\pm$ 99.9\%-confidence intervals). Statistically significant wins are in \textbf{bold}. Confidence intervals are in a smaller font size. Row/column for $\protect\algo$ is highlighted.}
\begin{tabular}{lcccc}
\hline
Win rate & SFT & Online DPO & Online IPO  &\gc{$\algo$} \\
\hline
SFT
& $-$
& $0.0674 {\scriptstyle \pm 0.011}$
& $0.0636 {\scriptstyle \pm 0.011}$
& \gc{$0.0567 {\scriptstyle \pm 0.010}$} \\

Online DPO
& $\boldsymbol{0.9326} {\scriptstyle \pm 0.011}$
& $-$
& $\boldsymbol{0.5283} {\scriptstyle \pm 0.017}$
& \gc{$0.4656 {\scriptstyle \pm 0.017}$} \\

Online IPO
& $\boldsymbol{0.9364} {\scriptstyle \pm 0.011}$
& $0.4717 {\scriptstyle \pm 0.017}$
& $-$
& \gc{$0.4373 {\scriptstyle \pm 0.017}$} \\

\gc{$\algo$}
& \gc{$\boldsymbol{0.9433} {\scriptstyle \pm 0.010}$}
& \gc{$\boldsymbol{0.5344} {\scriptstyle \pm 0.017}$}
& \gc{$\boldsymbol{0.5627} {\scriptstyle \pm 0.017}$}
& \gc{$-$} \\
\hline
\end{tabular}%
\label{tab:winrates_sft_final_scriptstyle_ci_v2}

\vspace{-0.3cm}
\end{table*}

\section{Conclusion}\label{sec:conclusion}

In this work, we study the problem of computing Nash equilibria in KL-regularized preference games that arise in NLHF, where the regularizer enforces proximity to a reference policy. We first analyze a simple self-play policy gradient method under general policy parameterizations and establish last-iterate, high-probability convergence guarantees. The analysis also highlights a structural difficulty of self-play, namely that the best-response baseline changes across iterations and may force a stringent effective-regularization condition. To remove this restriction, we embed self-play within a proximal point scheme, whose proximal subgames induce stronger regularization and yield last-iterate convergence without requiring a lower bound on the original regularization strength. Finally, we instantiate this framework with practical stochastic policy-gradient updates and EMA-based stabilization mechanisms, resulting in a concrete algorithm, $\algo$, that matches state-of-the-art empirical performance while retaining guarantees in the parameterized setting.

\section*{Impact Statement}
This paper aims to advance methods for aligning learning systems with human feedback without assuming transitive preferences.
Potential societal impacts include improved safety and robustness of interactive AI systems, and risks include mis-specifying human preferences or reinforcing annotator biases.

\section*{Acknowledgements}
The work of D.Tiapkin has been supported by the Paris Île-de-France Région in the framework of DIM AI4IDF.
The work of E. Moulines has been funded by the European Union (ERC-2022-SYG-OCEAN-101071601). Views and opinions expressed are however those of the author(s) only and do not necessarily reflect those of the European Union or the European Research Council Executive Agency. Neither the European Union nor the granting authority can be held responsible for them.
This project was provided with computing and storage resources by GENCI at IDRIS thanks to the grant 2025-AD011016276 on the supercomputer Jean Zay's A100 and H100 partitions.

\bibliography{ref}
\bibliographystyle{icml2026}

\newpage
\appendix
\section{Notation}\label{app:notation}

\begin{table}[!ht]
\centering
\small
\setlength{\tabcolsep}{6pt}
\renewcommand{\arraystretch}{1.15}
\begin{tabular}{p{0.22\linewidth} p{0.73\linewidth}}
\hline
\textbf{Symbol} & \textbf{Meaning} \\
\hline
$\cX,\cY$ & Context space and finite action set; $|\cY|=Y$. \\
$\rho$ & Distribution over contexts $x\in\cX$. \\
$\pi,\mu$ & Policies $\cX\to \simplex_{\cY}$; $\pi(y|x)$ is action-probability. \\
$\cP(y\succ y' \mid x)$ & Preference probability; satisfies $\cP(y\succ y'|x)+\cP(y'\succ y|x)=1$. \\
$\bP_x$ & Preference matrix at context $x$: $(\bP_x)_{y,y'}=\cP(y\succ y'|x)$. \\
$\cP(\pi\succ\mu\mid x)$ & Contextual preference: $\pi(x)^\top \bP_x \mu(x)$. \\
$\cP(\pi\succ\mu)$ & Overall preference: $\E_{x\sim\rho}[\pi(x)^\top \bP_x \mu(x)]$. \\
$\langle f,g\rangle_\rho$ & Inner product: $\E_{x\sim\rho}[\langle f(x),g(x)\rangle]$. \\
$\|\cdot\|_{1,\rho},\ \|\cdot\|_{\infty,\rho}$ & $\sqrt{\E_{x\sim\rho}[\|\cdot\|_1^2]}$ and $\sqrt{\E_{x\sim\rho}[\|\cdot\|_\infty^2]}$. \\
$\|\cdot\|_{\spann,\rho}$ & Span seminorm in context: $\sqrt{\E_{x\sim\rho}[\|\cdot\|_{\spann}^2]}$. \\
$\KL_\rho(\pi\Vert\mu)$ & Contextual KL: $\E_{x\sim\rho}[\KL(\pi(x)\Vert\mu(x))]$. \\
$\tpi$ & Anchor/reference policy (generic). \\
$\cP^{\tpi}_\lambda(\pi\succ\mu)$ & $\lambda$-regularized payoff:
$\cP(\pi\succ\mu)-\lambda\KL_\rho(\pi\Vert\tpi)+\lambda\KL_\rho(\mu\Vert\tpi)$. \\
$\subopt^{\tpi}_\lambda(\pi,\mu)$ & Regularized suboptimality: $\tfrac12-\cP^{\tpi}_\lambda(\pi\succ\mu)$. \\
$V^{\tpi}_\lambda(\pi,\mu)$ & Regularized value: $\cP(\mu\succ\pi)+\lambda\KL_\rho(\pi\Vert\tpi)$. \\
$\nu^{\tpi,\star}_\lambda(\mu)$ & Best response: $\argmin_{\pi} V^{\tpi}_\lambda(\pi,\mu)$. \\
$V^{\tpi,\star}_\lambda(\mu)$ & Best-response value: $\min_\pi V^{\tpi}_\lambda(\pi,\mu)$. \\
$\beta$ & Regularization strength of the ``outer/original'' preference game. \\
$\eta$ & Proximal-point (PP) step size; $\beta_{\target}=\beta/\eta$ is the proximal penalty. \\
$\pi_t$ & Outer PP iterate (policy at PP step $t$). \\
$V_t(\pi,\mu)$ & PP inner value: $V_\beta(\pi,\mu)+(\beta/\eta)\KL_\rho(\pi\Vert\pi_t)$. \\
$\zeta_{t+1}$ & PP residual: $\nabla_\pi V_t(\pi_{t+1},\pi_{t+1})$. \\
$\varepsilon$ & PP inner-solve accuracy target: $\|\zeta_{t+1}\|_{\spann,\rho}^2\le \varepsilon$. \\
$\theta,\pi_\theta$ & Parametric policy: parameter $\theta\in\R^d$ with induced policy $\pi_\theta$. \\
$J^{\tpi}(\theta;\pi)$ & Best-response objective: $\cP(\pi\succ \pi_\theta)+\lambda\KL_\rho(\pi_\theta\Vert\tpi)$. \\
$g_t,\xi_t,b_t,B_t$ & Stochastic gradient, noise, bias, and mini-batch size in SPG. \\
$\pl_{\tpi,\lambda},\epspl$ & (Approx.) PL constant and additive PL slack in Assumption~\ref{ass:pl}. \\
\hline\\
\end{tabular}
\caption{Notation used in Appendix~\ref{app:analysis_pp}--\ref{app:pp-spg}.}
\label{tab:appendix_notation}
\end{table}

The space of discrete probability measure of dimension $d$ is denoted by $\simplex_d = \{ p \in \R^d : p_i \geq 0, \sum_{i=1}^d p_i = 1\}$. In the same manner, we associate for any set $\cY$ of size $Y = |\cY|$ $\simplex_{\cY}$ with $\simplex_{Y}$. Given two discrete probability measure $p,q \in \simplex_d$, let us define the KL-divergence between them as $\KL(p \Vert q) \triangleq \sum_{i=1}^d p_i \log(p_i/q_i)$. For two vectors $x,y \in \R^d$, we define an inner product as $\langle x, y \rangle \triangleq \sum_{i=1} x_i y_i$.
\section{Analysis of Proximal Point Method}\label{app:analysis_pp}

In this section, we analyze the exact and approximate proximal point methods.

\subsection{Preference Properties}

We work in the contextual dueling bandit setting $(\cX,\cY,\cP,\rho)$ described in Section~\ref{sec:setting}. The action space $\cY$ is finite with $|\cY| = Y \ge 2$, and in each round, a context $x \in \cX$ is drawn from $\rho$. We assume that the context space is finite or countable for simplicity. We define a policy space $\policies$ as a functional space with a domain $\cX$ and a codomain $\simplex_{\cY}$.

For every context $x\in\cX$ and actions $y,y'\in\cY$, we are given preference probabilities $\cP(y \succ y' \mid x) \in [0,1]$ that satisfy the symmetry assumption
\[
    \cP(y \succ y' \mid x) + \cP(y' \succ y \mid x) = 1
    \qquad \forall x \in \cX,\, y,y' \in \cY.
\]

For each context $x$, we define the preference matrix $\bP_x \in [0,1]^{|\cY|\times|\cY|}$ with entries $(\bP_x)_{y,y'} \triangleq \cP(y \succ y' \mid x)$. For policies $\pi,\mu\colon \cX \to \simplex_{\cY}$, we define the contextual preference
\[
    \cP(\pi \succ \mu \mid x)
        \triangleq \E_{y \sim \pi(x),\,y' \sim \mu(x)}
        \big[ \cP(y \succ y' \mid x) \big]
        = \pi(x)^\top \bP_x \mu(x)\,,
\]
and the overall preference
\[
    \cP(\pi \succ \mu)
        \triangleq \E_{x\sim\rho}\big[\cP(\pi \succ \mu \mid x)\big]
        = \E_{x\sim\rho}\big[\pi(x)^\top \bP_x \mu(x)\big]\,.
\]

More generally, for measurable functions $v,u\colon \cX \to \R^{|\cY|}$ (not necessarily probability distributions), we extend the notation as
\begin{equation}\label{eq:context_bilinear_form}
    \cP(v \succ u \mid x) \triangleq v(x)^\top \bP_x u(x),
    \qquad
    \cP(v \succ u) \triangleq \E_{x\sim\rho}\big[ v(x)^\top \bP_x u(x) \big].
\end{equation}

On such functions, we use the following inner product and norms:
\[
    \langle f,g \rangle_\rho \triangleq \E_{x\sim\rho}[\langle f(x), g(x) \rangle]\,,
    \
    \|f\|_{1,\rho} \triangleq \sqrt{\E_{x\sim\rho}[\|f(x)\|_1^2]}\,,\ 
    \|f\|_{\infty,\rho} \triangleq \sqrt{\E_{x\sim\rho}[\|f(x)\|_\infty^2]}\,.
\]
For a function $g\colon \cX \to \R^{|\cY|}$, we extend the span seminorm by
\[
    \|g\|_{\spann,\rho}^2 \triangleq \E_{x\sim\rho}\big[ \|g(x)\|_{\spann}^2 \big],
\]
where $\|\cdot\|_{\spann}$ on $\R^{|\cY|}$ is the span seminorm introduced in Section~\ref{sec:setting}. 

For any functional $\cF$ defined on a policy space $\policies$, we define its derivative at $\pi$ as the function $\nabla \cF(\pi)\colon \cX \to \R^{|\cY|}$ satisfying the directional derivative identity
\[
    \frac{\rmd}{\rmd t} \cF(\pi + t h)\Big|_{t=0}
    = \langle \nabla \cF(\pi), h \rangle_\rho
    \qquad\forall h\colon \cX \to \R^{|\cY|}, \langle h(x), \bOne \rangle = 0 \text{ for all } x \in \cX\,.
\]
In particular, it can be characterized as $(\nabla \cF(\pi))(x,y) = \frac{\partial}{\partial \pi(y|x)} \cF(\pi)$ for $\rho$-almost all $x$ and all $y \in \cY$.

Finally, for policies $\pi,\mu$ we set
\[
    \KL_{\rho}(\pi \Vert \mu) \triangleq
      \E_{x\sim\rho}[\KL(\pi(x)\Vert \mu(x))],
\]
and we recall Pinsker's inequality applied pointwise in $x$:
\[
    \|\pi(x)-\mu(x)\|_1^2 \le 2\,\KL(\pi(x)\Vert \mu(x)),
\]
which implies
\begin{equation}\label{eq:pinsker_context}
    \|\pi - \mu\|_{1,\rho}^2 \le 2\,\KL_\rho(\pi\Vert\mu).
\end{equation}
We assume throughout that all policies have full support on $\cY$ for $\rho$-almost all $x$, so that the expressions involving $\log\pi(y|x)$ are well-defined.

\begin{lemma}[Lipschitz property of preferences]\label{lem:lipschitz_preference_context}
    Let $u,v\colon \cX \to \R^{|\cY|}$ be measurable functions such that
    $\langle v(x),\bOne\rangle = 0$ for all $x\in\cX$. Then
    \[
        |\cP(u \succ v)| \le \frac{1}{2}\,\|u\|_{1,\rho}\,\|v\|_{1,\rho},\qquad
        |\cP(v \succ u)| \le \frac{1}{2}\,\|u\|_{1,\rho}\,\|v\|_{1,\rho}.
    \]
\end{lemma}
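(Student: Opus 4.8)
The plan is to exploit the zero-sum structure of the preference matrix together with the constraint $\langle v(x),\bOne\rangle = 0$. First I would record the algebraic consequence of the symmetry condition $\cP(y\succ y'\mid x)+\cP(y'\succ y\mid x)=1$: writing $J\triangleq \bOne\bOne^\top$ for the all-ones matrix, this says precisely $\bP_x + \bP_x^\top = J$, so $\bP_x$ decomposes as $\bP_x = \tfrac12 J + S_x$ with $S_x \triangleq \bP_x - \tfrac12 J$ antisymmetric. Its entries are $(S_x)_{y,y'} = \cP(y\succ y'\mid x) - \tfrac12$, which lie in $[-\tfrac12,\tfrac12]$ since $\cP(y\succ y'\mid x)\in[0,1]$.

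The crucial observation is that the constant part $\tfrac12 J$ contributes nothing because $v$ is centered. Indeed $J v(x) = \bOne\,\langle\bOne, v(x)\rangle = 0$ by the hypothesis $\langle v(x),\bOne\rangle = 0$, and likewise $v(x)^\top J = \langle v(x),\bOne\rangle\,\bOne^\top = 0$. Hence for the first form $u(x)^\top \bP_x v(x) = u(x)^\top S_x v(x)$, and for the second form $v(x)^\top \bP_x u(x) = v(x)^\top S_x u(x)$. In both cases the matrix that survives is $S_x$, whose entries are bounded by $\tfrac12$ in absolute value.

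Next I would bound each form pointwise in $x$. Expanding entrywise and using $|(S_x)_{y,y'}|\le\tfrac12$,
\[
  |u(x)^\top S_x v(x)| \le \sum_{y,y'} |u(x)_y|\,\tfrac12\,|v(x)_{y'}| = \tfrac12\,\|u(x)\|_1\,\|v(x)\|_1,
\]
and identically for $v(x)^\top S_x u(x)$. Taking absolute values inside the expectation $\cP(u\succ v)=\E_{x\sim\rho}[u(x)^\top \bP_x v(x)]$ and inserting this pointwise bound gives $|\cP(u\succ v)|\le \tfrac12\,\E_{x\sim\rho}[\|u(x)\|_1\|v(x)\|_1]$. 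A final Cauchy--Schwarz step over $\rho$ turns the product of $L_1$ norms into the product of the $\|\cdot\|_{1,\rho}$ seminorms, yielding the claim; the second inequality follows by the same argument with the roles of the arguments swapped.

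There is no serious obstacle here — the entire content is the centering trick. The only point worth stating carefully is that the trivial entrywise bound $|(\bP_x)_{y,y'}|\le 1$ would give only the weaker constant $1$; the factor $\tfrac12$ is recovered precisely by subtracting $\tfrac12 J$ first, a step legitimate only because $v(x)$ sums to zero. I would therefore make the cancellation $J v(x)=0$ (and $v(x)^\top J = 0$) explicit before invoking the entrywise bound on $S_x$.
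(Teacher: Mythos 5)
Your proof is correct and follows essentially the same route as the paper's: both exploit the centering $\langle v(x),\bOne\rangle=0$ to replace $\bP_x$ by $\bP_x-\tfrac12\bOne\bOne^\top$ (the paper subtracts $\tfrac12$ entrywise inside the sum $(\bP_x v(x))_y=\sum_{y'}(\cP(y\succ y'\mid x)-\tfrac12)v(x,y')$, you phrase it as a matrix decomposition), then bound pointwise in $x$ and conclude with Cauchy--Schwarz over $\rho$. Note that the antisymmetry of $S_x$ you record is never actually used---only the entrywise bound $|(S_x)_{y,y'}|\le \tfrac12$ matters.
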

\begin{proof}
    Fix any $x \in \cX$. Using $\sum_{y'} v(x,y') = 0$, for any $y\in\cY$ we can write
    \[
        (\bP_x v(x))_y
        = \sum_{y'} \cP(y \succ y' \mid x)v(x,y')
        = \sum_{y'} (\cP(y \succ y' \mid x) - 1/2)v(x,y').
    \]
    Hence
    \[
        |(\bP_x v(x))_y|
        \le \max_{y'} |\cP(y \succ y' \mid x)-1/2|
            \sum_{y'} |v(x,y')|
        \le \frac{1}{2}\|v(x)\|_1,
    \]
    which implies $\|\bP_x v(x)\|_\infty \le \frac12\|v(x)\|_1$. Therefore,
    \[
        |\cP(u \succ v \mid x)|
        = |u(x)^\top \bP_x v(x)|
        \le \|u(x)\|_1\|\bP_x v(x)\|_\infty
        \le \frac{1}{2}\|u(x)\|_1\|v(x)\|_1.
    \]
    Taking expectation over $x\sim\rho$ and applying Cauchy--Schwarz,
    \[
        |\cP(u \succ v)|
        \le \frac12\,\E_{x\sim\rho}[\|u(x)\|_1\|v(x)\|_1]
        \le \frac12\,\|u\|_{1,\rho}\,\|v\|_{1,\rho}.
    \]
    The bound for $|\cP(v \succ u)|$ is obtained in the same way.
\end{proof}

\subsection{Regularized Preference and Value}

Fix a reference policy $\tpi \in \policies$ and $\lambda > 0$. We consider the contextual $\lambda$-regularized preference game
\begin{equation}\label{eq:reg_preference_game_context}
    \max_{\pi \in \policies} \min_{\mu \in \policies}
    \left\{
        \cP^{\tpi}_{\lambda}(\pi \succ \mu)
        \triangleq
        \cP(\pi \succ \mu)
        - \lambda \KL_{\rho}(\pi \Vert \tpi)
        + \lambda \KL_{\rho}(\mu \Vert \tpi)
    \right\}.
\end{equation}
The (contextual) regularized suboptimality is defined as
\[
    \subopt^{\tpi}_{\lambda}(\pi,\mu)
        \triangleq \frac{1}{2} - \cP^{\tpi}_{\lambda}(\pi \succ \mu), \qquad
    \subopt^{\tpi}_{\lambda}(\pi)
        \triangleq \max_{\mu\in\policies} \subopt^{\tpi}_{\lambda}(\pi,\mu).
\]

We introduce the regularized value function
\begin{equation}\label{eq:regularized_value_context}
    V_{\lambda}^{\tpi}(\pi,\mu)
        \triangleq \cP(\mu \succ \pi) + \lambda \KL_{\rho}(\pi \Vert \tpi)\,,
\end{equation}
and denote
\[
    \nu^{\tpi, \star}_{\lambda}(\mu)
        \in \argmin_{\pi \in \policies} V_{\lambda}^{\tpi}(\pi,\mu), \qquad
    V^{\tpi, \star}_{\lambda}(\mu)
        \triangleq \min_{\pi \in \policies} V_{\lambda}^{\tpi}(\pi,\mu)\,.
\]
Note that $\nu^{\tpi, \star}_{\lambda}(\mu)$ might be non-unique on $\rho$-zero measure sets, but $V^{\tpi, \star}_{\lambda}(\mu)$ is always well-defined.

\begin{lemma}[Exploitability gap via value function]\label{lem:exploitability_via_value_context}
    For any policy $\pi \in \policies$, it holds
    \[
        \subopt^{\tpi}_{\lambda}(\pi) = V_{\lambda}^{\tpi}(\pi,\pi) - V^{\tpi, \star}_{\lambda}(\pi)\,.
    \]
\end{lemma}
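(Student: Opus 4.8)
The plan is to prove the identity by directly unfolding both sides against their definitions; the single substantive ingredient is the symmetry of the game, which forces the self-preference $\cP(\pi\succ\pi)=1/2$. First I would record this fact. Writing $\cP(\pi\succ\pi\mid x)=\pi(x)^\top\bP_x\pi(x)=\sum_{y,y'}\pi(y|x)\pi(y'|x)\cP(y\succ y'\mid x)$ and relabeling the summation indices $y\leftrightarrow y'$, the symmetry assumption $\cP(y\succ y'\mid x)=1-\cP(y'\succ y\mid x)$ gives $\cP(\pi\succ\pi\mid x)=1-\cP(\pi\succ\pi\mid x)$, hence $\cP(\pi\succ\pi\mid x)=1/2$ for every $x$ and thus $\cP(\pi\succ\pi)=1/2$. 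Substituting into the definition of the regularized value immediately yields $V_{\lambda}^{\tpi}(\pi,\pi)=\tfrac12+\lambda\,\KL_\rho(\pi\Vert\tpi)$.

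Next I would expand the worst-case suboptimality side. By definition $\subopt^{\tpi}_{\lambda}(\pi)=\max_{\mu\in\policies}\bigl[\tfrac12-\cP^{\tpi}_{\lambda}(\pi\succ\mu)\bigr]=\tfrac12-\min_{\mu\in\policies}\cP^{\tpi}_{\lambda}(\pi\succ\mu)$. Unfolding $\cP^{\tpi}_{\lambda}(\pi\succ\mu)=\cP(\pi\succ\mu)-\lambda\,\KL_\rho(\pi\Vert\tpi)+\lambda\,\KL_\rho(\mu\Vert\tpi)$, the term $-\lambda\,\KL_\rho(\pi\Vert\tpi)$ is independent of $\mu$ and pulls out of the minimization, so $\min_{\mu}\cP^{\tpi}_{\lambda}(\pi\succ\mu)=-\lambda\,\KL_\rho(\pi\Vert\tpi)+\min_{\mu}\bigl[\cP(\pi\succ\mu)+\lambda\,\KL_\rho(\mu\Vert\tpi)\bigr]$. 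I would then identify the remaining inner minimization with the best-response value: since $V_{\lambda}^{\tpi}(\mu,\pi)=\cP(\pi\succ\mu)+\lambda\,\KL_\rho(\mu\Vert\tpi)$, minimizing over the first slot with the competitor fixed at $\pi$ gives exactly $\min_{\mu}V_{\lambda}^{\tpi}(\mu,\pi)=V^{\tpi,\star}_{\lambda}(\pi)$, attained at the best response $\nu^{\tpi,\star}_{\lambda}(\pi)$.

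Combining the two computations then gives $\subopt^{\tpi}_{\lambda}(\pi)=\tfrac12+\lambda\,\KL_\rho(\pi\Vert\tpi)-V^{\tpi,\star}_{\lambda}(\pi)=V_{\lambda}^{\tpi}(\pi,\pi)-V^{\tpi,\star}_{\lambda}(\pi)$, which is the claim. The argument is essentially bookkeeping, and the only place that warrants care is matching the two arguments of $V_{\lambda}^{\tpi}$ correctly: the dummy variable $\mu$ in $\cP(\pi\succ\mu)$ must occupy the penalized first slot of $V_{\lambda}^{\tpi}(\cdot,\pi)$, so that the inner minimum is identified with $V^{\tpi,\star}_{\lambda}(\pi)$ and not with the value at a transposed pair of arguments.
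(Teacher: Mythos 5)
Your proof is correct and follows essentially the same route as the paper's: both unfold the definitions, use the symmetry-induced identity $\cP(\pi\succ\pi)=\tfrac12$ to get $V_{\lambda}^{\tpi}(\pi,\pi)=\tfrac12+\lambda\KL_\rho(\pi\Vert\tpi)$, and identify the inner optimization over the competitor with the best-response value $V^{\tpi,\star}_{\lambda}(\pi)$. The only (harmless) difference is that you verify $\cP(\pi\succ\pi)=\tfrac12$ explicitly via index relabeling, a step the paper leaves implicit, and your care about which slot of $V_{\lambda}^{\tpi}$ the dummy variable occupies is exactly the right point to check.
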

\begin{proof}
    By definition of $\subopt^{\tpi}_{\lambda}(\pi,\mu)$ and using $\cP(\mu\succ\pi)=1-\cP(\pi\succ\mu)$,
    \[
        \subopt^{\tpi}_{\lambda}(\pi,\mu)
        = \frac{1}{2} - \cP(\pi\succ \mu)
          + \lambda \KL_{\rho}(\pi\Vert\tpi)
          - \lambda \KL_{\rho}(\mu\Vert\tpi) 
        = V^{\tpi}_{\lambda}(\pi,\mu) - \tfrac{1}{2}
          - \lambda \KL_{\rho}(\mu\Vert\tpi)\,.
    \]
    Maximizing over $\mu$ yields
    \[
        \subopt^{\tpi}_{\lambda}(\pi)
        = \max_{\mu} \subopt^{\tpi}_{\lambda}(\pi,\mu)
        = \max_{\mu}
           \bigl(
              V_{\lambda}^{\tpi}(\pi,\mu)
              - \tfrac{1}{2}
              - \lambda \KL_{\rho}(\mu\Vert\tpi)
           \bigr).
    \]
    In particular, evaluating at $\mu=\pi$ gives
    \[
        V_{\lambda}^{\tpi}(\pi,\pi)
        = \frac{1}{2} + \lambda \KL_{\rho}(\pi\Vert\tpi).
    \]
    On the other hand,
    \[
        V^{\tpi, \star}_{\lambda}(\pi)
        = \min_{\nu \in \policies}
           \bigl(
              \cP(\pi\succ\nu) + \lambda\KL_{\rho}(\nu\Vert\tpi)
           \bigr),
    \]
    so
    \[
        V^{\tpi}_{\lambda}(\pi,\pi) - V^{\tpi, \star}_{\lambda}(\pi)
        = \max_{\nu \in \policies}
           \bigl(
              \tfrac{1}{2} + \lambda \KL_{\rho}(\pi\Vert\tpi)
              - \cP(\pi\succ\nu)
              - \lambda\KL_{\rho}(\nu\Vert\tpi)
           \bigr),
    \]
    where the right-hand side exactly matches the definition of $\subopt^{\tpi}_{\lambda}(\pi)$.
\end{proof}
\begin{lemma}[Strong convexity in $\pi$ with respect to $\KL_\rho$]
\label{lem:value_strong_convexity_context}
Fix $\mu,\tpi \in \policies$. Then, for any $\pi,\pi' \in \policies$,
\begin{equation}\label{eq:V_strong_convexity_identity}
    V_{\lambda}^{\tpi}(\pi,\mu) - V_{\lambda}^{\tpi}(\pi',\mu)
    = \langle \nabla_{\pi} V_{\lambda}^{\tpi}(\pi',\mu), \pi - \pi' \rangle_{\rho}
      + \lambda\KL_{\rho}(\pi \Vert \pi')\,,
\end{equation}
where $\nabla_\pi V^{\tpi}_\lambda(\pi',\mu)$, defined as a Fréchet derivative of $V^{\tpi}_\lambda(\pi',\mu)$ with respect to the first argument, is given element-wise by
\[
    \big(\nabla_\pi V^{\tpi}_\lambda(\pi',\mu)\big)(x,y)
    \triangleq \big(\bP_x^\top \mu(x)\big)_y
      + \lambda\Big(1 + \log\frac{\pi'(y|x)}{\tpi(y|x)}\Big),
\]      
In particular, if $\pi' = \nu^{\tpi,\star}_{\lambda}(\mu) \in \argmin_{\pi} V_{\lambda}^{\tpi}(\pi,\mu)$, then
\[
    V_{\lambda}^{\tpi}(\pi,\mu) - V^{\tpi, \star}_{\lambda}(\mu)
    \geq \lambda\,\KL_{\rho}(\pi \Vert \nu^{\tpi,\star}_{\lambda}(\mu)).
\]
\end{lemma}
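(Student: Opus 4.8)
The plan is to exploit that $V_\lambda^{\tpi}(\cdot,\mu)$ splits into an affine part and a KL part, and that the claimed identity is exactly the statement that the Bregman divergence generated by $V_\lambda^{\tpi}(\cdot,\mu)$ equals $\lambda\KL_\rho(\pi\Vert\pi')$. First I would record the two derivatives. Since $\cP(\mu\succ\pi)=\E_{x\sim\rho}[\langle \bP_x^\top\mu(x),\pi(x)\rangle]$ is affine in $\pi$, its Fr\'echet derivative at $\pi'$ is the function $(x,y)\mapsto(\bP_x^\top\mu(x))_y$; and since $\frac{\partial}{\partial\pi(y|x)}\KL_\rho(\pi\Vert\tpi)=\log\frac{\pi(y|x)}{\tpi(y|x)}+1$, scaling by $\lambda$ and adding recovers the stated formula for $\nabla_\pi V_\lambda^{\tpi}(\pi',\mu)$. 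Because the preference term is affine, its Bregman divergence vanishes, so the whole content of \eqref{eq:V_strong_convexity_identity} reduces to the Bregman identity for the KL penalty.

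The key step is the pointwise Bregman computation for the KL term. Fixing a context $x$, I would expand
\[
    D(x)\triangleq \KL(\pi(x)\Vert\tpi(x))-\KL(\pi'(x)\Vert\tpi(x))-\big\langle \log\tfrac{\pi'(x)}{\tpi(x)}+\bOne,\ \pi(x)-\pi'(x)\big\rangle.
\]
Because $\pi(x)$ and $\pi'(x)$ are probability vectors, $\langle\bOne,\pi(x)-\pi'(x)\rangle=0$, so the constant $+1$ term is annihilated; the $\pi'(x)\log\frac{\pi'(x)}{\tpi(x)}$ contributions then cancel, $\tpi$ disappears, and the remaining terms collapse to $D(x)=\sum_y\pi(y|x)\log\frac{\pi(y|x)}{\pi'(y|x)}=\KL(\pi(x)\Vert\pi'(x))$. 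Taking $\E_{x\sim\rho}$ and scaling by $\lambda$ yields $\lambda\KL_\rho(\pi\Vert\pi')$, and adding back the (Bregman-free) affine term gives \eqref{eq:V_strong_convexity_identity} with the claimed gradient.

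For the ``in particular'' statement I would invoke first-order optimality at the constrained minimizer. The map $\pi\mapsto V_\lambda^{\tpi}(\pi,\mu)$ is convex (affine plus $\lambda\KL_\rho(\cdot\Vert\tpi)$), and the KL barrier keeps any minimizer $\pi'=\nu^{\tpi,\star}_\lambda(\mu)$ in the interior (full support, as assumed throughout), so the gradient is well-defined there. The variational inequality for a minimum over the convex set $\policies$ then gives $\langle\nabla_\pi V_\lambda^{\tpi}(\pi',\mu),\pi-\pi'\rangle_\rho\ge 0$ for every $\pi\in\policies$. Substituting this lower bound into \eqref{eq:V_strong_convexity_identity} and using $V_\lambda^{\tpi}(\pi',\mu)=V^{\tpi,\star}_\lambda(\mu)$ discards the inner-product term and produces the desired inequality.

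The main obstacle is bookkeeping rather than depth: one must track that the $+1$ in the KL gradient cancels precisely because $\pi-\pi'$ is a difference of probability distributions (zero total mass per context), and one must justify that the minimizer lies in the interior so that the gradient and the one-sided optimality condition are meaningful. This one-sided variational inequality (rather than a stationarity identity) is exactly what makes the final display an inequality, and it also sidesteps the $\rho$-null-set non-uniqueness of $\nu^{\tpi,\star}_\lambda(\mu)$ noted above.
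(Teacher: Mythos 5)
Your proof is correct and follows essentially the same route as the paper's: you compute the same Fr\'echet derivative, reduce the identity \eqref{eq:V_strong_convexity_identity} to the pointwise Bregman divergence of the KL term (the affine preference part contributing nothing), and derive the final inequality from the first-order variational inequality $\langle \nabla_\pi V_\lambda^{\tpi}(\pi',\mu), \pi-\pi'\rangle_\rho \ge 0$ at the minimizer. Your explicit remarks on the cancellation of the $+1$ via zero total mass of $\pi-\pi'$ and on interiority of the minimizer are just a slightly more detailed rendering of the same bookkeeping the paper performs.
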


\begin{proof}
Write
\[
    V^{\tpi}_\lambda(\pi,\mu)
    = \E_{x\sim\rho}\Big[
         \mu(x)^\top \bP_x \pi(x)
         + \lambda f_x(\pi(x))
      \Big],
    \qquad
    f_x(p) \triangleq \sum_{y\in\cY} p_y \log\frac{p_y}{\tpi(y|x)}.
\]
Fix $\mu,\tpi$ and $\pi'$. Let $h\colon\cX\to\R^{|\cY|}$ be any direction and consider
\[
    \Phi(t) \triangleq V^{\tpi}_\lambda(\pi' + t h,\mu).
\]
For each $x$ and $y$, the derivative of $p\mapsto p\log\frac{p}{\tau}$ at $p=p'$ is
\[
    \frac{\rmd}{\rmd p}\Big(p\log\frac{p}{\tpi(y|x)}\Big)\Big|_{p=p'}
    = \log\frac{p'}{\tpi(y|x)} + 1.
\]
Thus, pointwise in $x$,
\begin{align*}
    \frac{\rmd}{\rmd t}\Big[\mu(x)^\top \bP_x (\pi'(x)+t h(x))\Big]_{t=0}
    &= \mu(x)^\top \bP_x h(x), \\
    \frac{\rmd}{\rmd t}\Big[\lambda f_x(\pi'(x)+t h(x))\Big]_{t=0}
    &= \lambda \sum_{y} h(x,y)\Big( \log\frac{\pi'(y|x)}{\tpi(y|x)} + 1 \Big).
\end{align*}
Therefore
\[
    \Phi'(0)
    = \E_{x\sim\rho}\Big[
         \sum_{y} \Big(
            (\bP_x^\top\mu(x))_y + \lambda\big(1+\log\tfrac{\pi'(y|x)}{\tpi(y|x)}\big)
         \Big) h(x,y)
      \Big]
    = \langle \nabla_\pi V^{\tpi}_\lambda(\pi',\mu), h\rangle_\rho,
\]
with
\[
    \big(\nabla_\pi V^{\tpi}_\lambda(\pi',\mu)\big)(x,y)
    = (\bP_x^\top\mu(x))_y + \lambda\Big(1+\log\tfrac{\pi'(y|x)}{\tpi(y|x)}\Big).
\]

Now consider the difference
\[
    V^{\tpi}_\lambda(\pi,\mu) - V^{\tpi}_\lambda(\pi',\mu)
    = \E_{x\sim\rho}\Big[
         \mu(x)^\top \bP_x\big(\pi(x)-\pi'(x)\big)
         + \lambda\big(f_x(\pi(x)) - f_x(\pi'(x))\big)
      \Big].
\]
Similarly,
\[
    \langle \nabla_\pi V^{\tpi}_\lambda(\pi',\mu), \pi-\pi'\rangle_\rho
    = \E_{x\sim\rho}\Big[
         \mu(x)^\top \bP_x\big(\pi(x)-\pi'(x)\big)
         + \lambda \sum_y \big(1+\log\tfrac{\pi'(y|x)}{\tpi(y|x)}\big)
                         \big(\pi(y|x)-\pi'(y|x)\big)
      \Big].
\]
Subtracting, we isolate the Bregman term:
\begin{align*}
    &V^{\tpi}_\lambda(\pi,\mu) - V^{\tpi}_\lambda(\pi',\mu)
      - \langle \nabla_\pi V^{\tpi}_\lambda(\pi',\mu), \pi-\pi'\rangle_\rho \\
    &\quad = \lambda\,\E_{x\sim\rho}\Big[
          f_x(\pi(x)) - f_x(\pi'(x))
          - \sum_y \big(1+\log\tfrac{\pi'(y|x)}{\tpi(y|x)}\big)
                    \big(\pi(y|x)-\pi'(y|x)\big)
       \Big].
\end{align*}
Fix $x$ and abbreviate $p_y = \pi(y|x)$, $p'_y = \pi'(y|x)$, $\tau_y = \tpi(y|x)$. Then
\[
    f_x(p) = \sum_y p_y \log\frac{p_y}{\tau_y},
    \qquad
    \frac{\partial f_x}{\partial p_y}(p') = \log\frac{p'_y}{\tau_y} + 1.
\]
The inner expression is the Bregman divergence of $f_x$ at $p\Vert p'$:
\[
    f_x(p) - f_x(p') - \sum_y \frac{\partial f_x}{\partial p_y}(p')(p_y-p'_y).
\]
A standard calculation gives
\[
    f_x(p) - f_x(p') - \sum_y \frac{\partial f_x}{\partial p_y}(p')(p_y-p'_y)
    = \sum_y p_y \log\frac{p_y}{p'_y}
    = \KL(p\Vert p').
\]
Thus
\[
    V^{\tpi}_\lambda(\pi,\mu) - V^{\tpi}_\lambda(\pi',\mu)
      - \langle \nabla_\pi V^{\tpi}_\lambda(\pi',\mu), \pi-\pi'\rangle_\rho
    = \lambda\,\E_{x\sim\rho}[\KL(\pi(x)\Vert\pi'(x))]
    = \lambda\KL_\rho(\pi\Vert\pi'),
\]
which gives \eqref{eq:V_strong_convexity_identity}.

For the final claim, let $\pi' = \nu^{\tpi,\star}_\lambda(\mu)$ be a minimizer of $V^{\tpi}_\lambda(\cdot,\mu)$. First-order optimality implies
\[
    \langle \nabla_\pi V^{\tpi}_\lambda(\pi',\mu), \pi-\pi'\rangle_\rho \ge 0
    \qquad\forall \pi\in\policies.
\]
Plugging $\pi'$ into \eqref{eq:V_strong_convexity_identity} and using $V^{\tpi}_\lambda(\pi',\mu)=V^{\tpi,\star}_\lambda(\mu)$ yields
\[
    V^{\tpi}_\lambda(\pi,\mu) - V^{\tpi,\star}_\lambda(\mu)
    \ge \lambda\,\KL_\rho(\pi\Vert\pi')
    = \lambda\KL_\rho(\pi\Vert\nu^{\tpi,\star}_\lambda(\mu))\,,
\]
as claimed.
\end{proof}

\begin{lemma}[Smoothness of the best-response value]\label{lem:br_value_smoothness_context}
Fix a reference policy $\tpi \in \policies$ and $\lambda > 0$. Then, for any policies $\mu,\mu' \in \policies$,
\[
    \cP(\mu - \mu' \succ \nu^{\tpi,\star}_{\lambda}(\mu)) \leq V^{\tpi,\star}_{\lambda}(\mu) - V^{\tpi,\star}_{\lambda}(\mu') 
    \leq \cP(\mu - \mu' \succ \nu^{\tpi,\star}_{\lambda}(\mu'))\,.
\]
\end{lemma}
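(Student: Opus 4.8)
The plan is to recognize this as an envelope (Danskin-type) inequality. The key structural observation is that in $V^{\tpi}_\lambda(\pi,\mu)=\cP(\mu\succ\pi)+\lambda\KL_\rho(\pi\Vert\tpi)$ the competitor $\mu$ enters only through the linear term $\cP(\mu\succ\pi)$, while the regularizer $\lambda\KL_\rho(\pi\Vert\tpi)$ is independent of $\mu$. Hence for a fixed inner policy $\pi$, the map $\mu\mapsto V^{\tpi}_\lambda(\pi,\mu)$ is affine, and $V^{\tpi,\star}_\lambda(\mu)=\min_\pi V^{\tpi}_\lambda(\pi,\mu)$ is a pointwise minimum of affine functions of $\mu$, hence concave; the two inequalities are exactly the standard bracketing of a concave function's increment by its evaluations at the two minimizers.

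Concretely, abbreviate $\nu\triangleq\nu^{\tpi,\star}_\lambda(\mu)$ and $\nu'\triangleq\nu^{\tpi,\star}_\lambda(\mu')$. For the upper bound I would use that $\nu'$ is feasible but not necessarily optimal for the problem at $\mu$, so $V^{\tpi,\star}_\lambda(\mu)\le V^{\tpi}_\lambda(\nu',\mu)$, while it is optimal at $\mu'$, i.e. $V^{\tpi,\star}_\lambda(\mu')=V^{\tpi}_\lambda(\nu',\mu')$. Subtracting, the identical regularization terms $\lambda\KL_\rho(\nu'\Vert\tpi)$ cancel and, using linearity of the preference in its first slot (so $\cP(\mu\succ\nu')-\cP(\mu'\succ\nu')=\cP(\mu-\mu'\succ\nu')$), one gets $V^{\tpi,\star}_\lambda(\mu)-V^{\tpi,\star}_\lambda(\mu')\le\cP(\mu-\mu'\succ\nu')$, which is the claimed right-hand inequality.

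The lower bound follows by the symmetric argument with the roles of $\mu$ and $\mu'$ exchanged: since $\nu$ is feasible at $\mu'$ and optimal at $\mu$, we have $V^{\tpi,\star}_\lambda(\mu')\le V^{\tpi}_\lambda(\nu,\mu')$ and $V^{\tpi,\star}_\lambda(\mu)=V^{\tpi}_\lambda(\nu,\mu)$; the same cancellation gives $V^{\tpi,\star}_\lambda(\mu')-V^{\tpi,\star}_\lambda(\mu)\le\cP(\mu'-\mu\succ\nu)$, and negating both sides (using $\cP(\mu'-\mu\succ\nu)=-\cP(\mu-\mu'\succ\nu)$) yields the left-hand inequality.

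There is essentially no hard analytic step here; the only point that needs a word of care is the non-uniqueness of $\nu^{\tpi,\star}_\lambda(\cdot)$ on $\rho$-null sets flagged just after its definition. Since every quantity appearing — both $V^{\tpi,\star}_\lambda$ and the bilinear form $\cP(\cdot\succ\cdot)$ — is an expectation under $\rho$, the inequalities are insensitive to the choice of representative minimizer, so I would simply fix arbitrary measurable selections $\nu,\nu'$ and note the bounds hold for any such choice. I would also make explicit the decomposition $\cP(\mu-\mu'\succ\nu)=\cP(\mu\succ\nu)-\cP(\mu'\succ\nu)$ that licenses writing the differences compactly, which is immediate from the bilinearity recorded in \eqref{eq:context_bilinear_form}.
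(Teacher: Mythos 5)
Your proof is correct, and it takes a genuinely more elementary route than the paper's. The paper establishes the upper bound through Fenchel conjugacy: it introduces the per-context regularizer $\phi_x(q)=\lambda\KL(q\Vert\tpi(x))$, identifies $V^{\tpi,\star}_\lambda(\mu)=-\E_{x\sim\rho}[\phi_x^*(-\bP_x^\top\mu(x))]$ and $[\nu^{\tpi,\star}_\lambda(\mu)](x)=\nabla\phi_x^*(-\bP_x^\top\mu(x))$, and then applies the first-order convexity inequality $\phi_x^*(g)-\phi_x^*(h)-\langle\nabla\phi_x^*(h),g-h\rangle\ge 0$ pointwise before averaging over $x$ (and, like you, it derives the lower bound from the upper bound by swapping $\mu$ and $\mu'$). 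Your argument replaces all of this machinery with the direct feasibility/optimality comparison: evaluate the objective at the other point's minimizer, cancel the identical $\lambda\KL_\rho(\cdot\Vert\tpi)$ terms, and use bilinearity of $\cP$ in its first slot. These are the same inequality in two guises --- the subgradient inequality for a pointwise infimum of affine functions \emph{is} proved by evaluating at the competitor's minimizer --- but your version is shorter, requires neither the conjugate representation nor the identification of $\nu^{\tpi,\star}_\lambda$ with $\nabla\phi_x^*$ (hence no appeal to differentiability or uniqueness of the maximizer in the conjugate), and treats both inequalities symmetrically. The paper's conjugate viewpoint buys a cleaner conceptual link to the entropy-regularized structure (it is echoed in the main-text proof sketch of the descent lemma), but for this lemma in isolation your envelope argument is the more economical proof. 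Your handling of the non-uniqueness of $\nu^{\tpi,\star}_\lambda$ on $\rho$-null sets is also adequate: every quantity involved is a $\rho$-expectation, so any measurable selection works.
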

\begin{proof}
Note that the lower bound follows automatically from the upper bound by multiplying both sides by $-1$ and renaming $\mu$ and $\mu'$:
\[
V^{\tpi,\star}_{\lambda}(\mu) - V^{\tpi,\star}_{\lambda}(\mu') 
    \leq \cP(\mu - \mu' \succ \nu^{\tpi,\star}_{\lambda}(\mu')) \iff V^{\tpi,\star}_{\lambda}(\mu') - V^{\tpi,\star}_{\lambda}(\mu) 
    \geq \cP(\mu' - \mu \succ \nu^{\tpi,\star}_{\lambda}(\mu'))
\]
Thus, it is enough to prove only an upper bound.
Fix an arbitrary context $x \in \supp(\rho)$ and consider the regularizer $q \in \simplex_{\cY}$ as
\[
    \phi_x(q) \triangleq \lambda \KL(q \Vert \tpi(x))\,,
\]
and we define its convex conjugate $\phi_x^*$ on $g \in \R^{\cY}$ as
\[    \phi_x^*(g)
    \triangleq \sup_{q \in \simplex_{\cY}} \big\{ \langle g, q \rangle - \phi_x(q) \big\}
    = \sup_{q \in \simplex_{\cY}} \big\{ \langle g, q \rangle - \lambda \KL(q \Vert \tpi(x)) \big\}.
\]
We note that the convex conjugate is always a convex function. As a result, we have
\begin{equation}\label{eq:logsumexp_smoothness}
    \forall g,h \in \R^Y: \phi_x^*(g) - \phi_x^*(h) - \langle \nabla \phi_x^*(h), g - h \rangle
    \geq 0\,,
\end{equation}
where $\nabla \phi_x^*(g)$ is the unique maximizer in the definition of $\phi_x^*(g)$. 

Next, we relate $V^{\tpi,\star}_{\lambda}(\mu)$ to $\phi_x^*$. For any $\mu,\pi$,
\[
    V^{\tpi}_\lambda(\pi,\mu)
    = \cP(\mu\succ\pi) + \lambda\KL_\rho(\pi\Vert\tpi)
    = \E_{x \sim \rho}\left[ \langle \bP^\top_x \mu(x), \pi(x)\rangle + \lambda\KL(\pi(x)\Vert\tpi(x))\right]\,,
\]
Thus, using the fact that optimization is performed over all functions from $\cX$ to $\simplex_{\cY}$,
\begin{align*}
    V_\lambda^{\tpi,\star}(\mu)
    &= \min_{\pi\in\policies} V^{\tpi}_\lambda(\pi,\mu)
    = - \sup_{\pi\in\policies} \E_{x \sim \rho}\left[ \langle -\bP^\top_x \mu(x), \pi(x)\rangle - \lambda\KL(\pi(x)\Vert\tpi(x))\right]
    \\
    &= - \E_{x \sim \rho}\left[ \sup_{q \in \simplex_\cY} \langle -\bP^\top_x \mu(x), q\rangle - \lambda\KL(q\Vert\tpi(x))\right] = - \E_{x \sim \rho}\left[ \phi_x^*(-\bP^\top_x \mu(x)) \right]\,.
\end{align*}
Also, for any $x \in \supp(\rho)$, the unique minimizer of $V^{\tpi}_\lambda(\cdot,\mu)$ at context $x$ is given by
\[
    [\nu^{\tpi,\star}_\lambda(\mu)](x)
    = \argmin_{q \in \simplex_\cY} \langle \bP^\top_x \mu(x), q \rangle + \lambda\KL(q\Vert\tpi(x))
    = \nabla \phi_x^*(-\bP^\top_x \mu(x))\,.
\]
Using these relations, we can now prove the smoothness property of $V^{\tpi,\star}_\lambda$. Applying \eqref{eq:logsumexp_smoothness} pointwise at each $x \in \supp(\rho)$ with $g = -\bP^\top_x \mu(x)$, $h = -\bP^\top_x \mu'(x)$,
we get
\begin{align*}
    \phi_x^*(-\bP^\top_x \mu(x)) - \phi_x^*(-\bP^\top_x \mu'(x)) - \langle \nabla \phi_x^*(-\bP^\top_x \mu'(x)), -\bP^\top_x (\mu(x) - \mu'(x)) \rangle \geq 0\,.
\end{align*}
Taking expectation over $x \sim \rho$ and multiplying by $-1$, we obtain
\begin{align*}
    \E_{x \sim \rho}\left[ -\phi_x^*(-\bP^\top_x \mu(x)) \right] - \E_{x \sim \rho}\left[ -\phi_x^*(-\bP^\top_x \mu'(x)) \right] - \E_{x \sim \rho}\left[ \langle \nabla \phi_x^*(-\bP^\top_x \mu'(x)), \bP^\top_x (\mu(x) - \mu'(x)) \rangle \right] \leq 0\,.
\end{align*}
Rewriting using the relations to $V^{\tpi,\star}_\lambda$ and $\nu^{\tpi,\star}_\lambda$, we get
\begin{align*}
    V^{\tpi,\star}_\lambda(\mu) - V^{\tpi,\star}_\lambda(\mu') - \cP(\mu - \mu' \succ \nu^{\tpi,\star}_\lambda(\mu')) \leq 0 \,.
\end{align*}
\end{proof}

\begin{lemma}[Suboptimality decomposition]\label{lem:suboptimality_decomposition_context}
    Let $\pi^{\tpi,\star}_{\lambda}$ be a solution to \eqref{eq:reg_preference_game_context}. Then, for any policies $\pi, \mu \in \policies$, it holds
    \[
        \subopt^{\tpi}_\lambda(\pi, \mu) \leq  \subopt^{\tpi}_\lambda(\pi, \pi^{\tpi,\star}_{\lambda}) + \frac{1}{2}\,\|\pi - \pi^{\tpi,\star}_{\lambda}\|_{1,\rho} \cdot \|\mu - \pi^{\tpi,\star}_{\lambda}\|_{1,\rho}\,.
    \]
\end{lemma}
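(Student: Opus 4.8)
The plan is to rewrite the suboptimality as a ($\mu$-independent) constant minus the regularized value function in the competitor slot, and then exploit the strong convexity of that value together with the fixed-point characterization of the Nash equilibrium $\pi^{\tpi,\star}_\lambda$. Writing $\pi^\star \triangleq \pi^{\tpi,\star}_\lambda$ and using $\cP(\pi\succ\mu)=1-\cP(\mu\succ\pi)$, one checks directly from the definitions that
\[
  \subopt^{\tpi}_\lambda(\pi,\mu)=\tfrac12+\lambda\KL_\rho(\pi\Vert\tpi)-V^{\tpi}_\lambda(\mu,\pi),
\]
since $V^{\tpi}_\lambda(\mu,\pi)=\cP(\pi\succ\mu)+\lambda\KL_\rho(\mu\Vert\tpi)$. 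Hence, as a function of $\mu$, the left-hand side is a constant minus $V^{\tpi}_\lambda(\,\cdot\,,\pi)$, and
\[
  \subopt^{\tpi}_\lambda(\pi,\mu)-\subopt^{\tpi}_\lambda(\pi,\pi^\star)
  = V^{\tpi}_\lambda(\pi^\star,\pi)-V^{\tpi}_\lambda(\mu,\pi),
\]
so it remains to upper bound this difference.

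Next I would apply the strong-convexity identity of Lemma~\ref{lem:value_strong_convexity_context} to the map $\mu\mapsto V^{\tpi}_\lambda(\mu,\pi)$ (first slot varying, competitor fixed at $\pi$), expanded around $\pi^\star$:
\[
  V^{\tpi}_\lambda(\mu,\pi)-V^{\tpi}_\lambda(\pi^\star,\pi)
  =\langle \nabla_\mu V^{\tpi}_\lambda(\pi^\star,\pi),\,\mu-\pi^\star\rangle_\rho
   +\lambda\KL_\rho(\mu\Vert\pi^\star).
\]
Because $\KL_\rho(\mu\Vert\pi^\star)\ge 0$, discarding it only strengthens the bound, so it suffices to control the linear term. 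The key manipulation is to replace the competitor $\pi$ by $\pi^\star$ inside the gradient: since the entropic part of $\nabla_\mu V^{\tpi}_\lambda(\pi^\star,\cdot)$ depends only on the base point $\pi^\star$, the explicit gradient formula from Lemma~\ref{lem:value_strong_convexity_context} gives the exact identity
\[
  \nabla_\mu V^{\tpi}_\lambda(\pi^\star,\pi)-\nabla_\mu V^{\tpi}_\lambda(\pi^\star,\pi^\star)
  =\bP^\top(\pi-\pi^\star),
\]
understood pointwise as $x\mapsto \bP_x^\top(\pi(x)-\pi^\star(x))$.

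The main obstacle, and the place where the equilibrium structure enters, is handling $\nabla_\mu V^{\tpi}_\lambda(\pi^\star,\pi^\star)$. Here I would use that a symmetric Nash equilibrium is the min-player best response to itself, i.e.\ $\pi^\star=\nu^{\tpi,\star}_\lambda(\pi^\star)=\argmin_\mu V^{\tpi}_\lambda(\mu,\pi^\star)$, and that by the full-support assumption together with the entropic regularizer this minimizer is interior, so its gradient is orthogonal to the simplex tangent space $\{v:\langle v,\bOne\rangle=0\}$; equivalently $\nabla_\mu V^{\tpi}_\lambda(\pi^\star,\pi^\star)$ is constant in the action coordinate for $\rho$-a.e.\ $x$. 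Testing against $\mu-\pi^\star$, whose coordinates sum to zero, the constant part vanishes, leaving
\[
  \langle \nabla_\mu V^{\tpi}_\lambda(\pi^\star,\pi),\,\mu-\pi^\star\rangle_\rho
  =\langle \bP^\top(\pi-\pi^\star),\,\mu-\pi^\star\rangle_\rho
  =\cP(\pi-\pi^\star\succ\mu-\pi^\star).
\]
Combining this with the strong-convexity identity, the target difference equals $-\cP(\pi-\pi^\star\succ\mu-\pi^\star)-\lambda\KL_\rho(\mu\Vert\pi^\star)\le|\cP(\pi-\pi^\star\succ\mu-\pi^\star)|$, and since the second argument $\mu-\pi^\star$ has coordinates summing to zero, Lemma~\ref{lem:lipschitz_preference_context} bounds this by $\tfrac12\|\pi-\pi^\star\|_{1,\rho}\|\mu-\pi^\star\|_{1,\rho}$, which is exactly the claim. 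The only delicate points are keeping the sign conventions straight when moving between $\subopt^{\tpi}_\lambda$ and $V^{\tpi}_\lambda$, and justifying interiority of $\pi^\star$ so that the first-order condition is an equality (gradient constant in the action coordinate) rather than a one-sided variational inequality.
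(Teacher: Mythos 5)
Your proof is correct, but it takes a genuinely different route from the paper's. Writing $\pi^\star\triangleq\pi^{\tpi,\star}_{\lambda}$, the paper argues by a purely algebraic three-term expansion of the suboptimality around $\pi^\star$,
\[
\subopt^{\tpi}_\lambda(\pi,\mu)
=\subopt^{\tpi}_\lambda(\pi,\pi^\star)
+\subopt^{\tpi}_\lambda(\pi^\star,\mu)
+\cP(\pi-\pi^\star\succ \pi^\star-\mu)\,,
\]
discards the middle term using only the one-sided Nash inequality $\cP^{\tpi}_\lambda(\pi^\star\succ\mu)\ge\tfrac12$, and bounds the cross term via Lemma~\ref{lem:lipschitz_preference_context}. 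You instead pass through the value function, expand $\mu\mapsto V^{\tpi}_\lambda(\mu,\pi)$ with the Bregman identity of Lemma~\ref{lem:value_strong_convexity_context}, and eliminate $\nabla_\mu V^{\tpi}_\lambda(\pi^\star,\pi^\star)$ by the interior first-order condition at the self-best-response; every step checks out, including the gradient-difference identity $\bP^\top(\pi-\pi^\star)$, the vanishing of the constant direction when tested against the mean-zero function $\mu-\pi^\star$, and the final application of Lemma~\ref{lem:lipschitz_preference_context}. The interiority you flag as delicate is indeed covered: the paper's standing full-support assumption together with the explicit Gibbs form of the best response, $[\nu^{\tpi,\star}_\lambda(\mu)](x)=\nabla\phi_x^*(-\bP_x^\top\mu(x))$ from the proof of Lemma~\ref{lem:br_value_smoothness_context}, makes the stationarity an exact equality rather than a variational inequality. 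What your route buys is an \emph{identity} rather than an inequality: your derivation shows
\[
\subopt^{\tpi}_\lambda(\pi,\mu)
=\subopt^{\tpi}_\lambda(\pi,\pi^\star)
+\cP(\pi-\pi^\star\succ\pi^\star-\mu)
-\lambda\KL_\rho(\mu\Vert\pi^\star)\,,
\]
revealing that the term the paper drops satisfies $\subopt^{\tpi}_\lambda(\pi^\star,\mu)=-\lambda\KL_\rho(\mu\Vert\pi^\star)$ exactly, so your bound is sharper by this nonnegative KL term, which could be retained if a stronger statement were ever needed. Conversely, the paper's argument buys brevity and generality: it needs neither differentiability, nor interiority of $\pi^\star$, nor any strong-convexity machinery — only bilinearity of $\cP$ and the defining equilibrium inequality.
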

\begin{proof}
    By definition,
    \begin{align*}
        \subopt^{\tpi}_\lambda(\pi, \mu)
        &= \frac{1}{2} - \cP(\pi \succ \mu) + \lambda \KL_{\rho}(\pi \Vert \tpi) - \lambda \KL_{\rho}(\mu \Vert \tpi) \\
        &= \underbrace{\frac{1}{2} - \cP(\pi \succ \pi^{\tpi,\star}_{\lambda}) + \lambda \KL_{\rho}(\pi \Vert \tpi) - \lambda \KL_{\rho}(\pi^{\tpi,\star}_{\lambda} \Vert \tpi)}_{\subopt^{\tpi}_\lambda(\pi, \pi^{\tpi,\star}_{\lambda})} \\
        &\quad + \underbrace{\frac{1}{2} - \cP(\pi^{\tpi,\star}_{\lambda} \succ \mu) + \lambda \KL_{\rho}(\pi^{\tpi,\star}_{\lambda} \Vert \tpi) - \lambda \KL_{\rho}(\mu \Vert \tpi)}_{\subopt^{\tpi}_\lambda(\pi^{\tpi,\star}_{\lambda}, \mu)} \\
        &\quad + \cP(\pi - \pi^{\tpi,\star}_{\lambda} \succ \pi^{\tpi,\star}_{\lambda} - \mu)\,,
    \end{align*}
    where, by bilinearity~\eqref{eq:context_bilinear_form} and symmetry, we used
    \[
        \cP(\pi - \pi^{\tpi,\star}_{\lambda} \succ \pi^{\tpi,\star}_{\lambda} - \mu)
        = \cP(\pi \succ \pi^{\tpi,\star}_{\lambda}) - \cP(\pi \succ \mu) - \cP(\pi^{\tpi,\star}_{\lambda}\succ \pi^{\tpi,\star}_{\lambda}) + \cP(\pi^{\tpi,\star}_{\lambda} \succ \mu),
    \]
    and $\cP(\pi^{\tpi,\star}_{\lambda} \succ \pi^{\tpi,\star}_{\lambda}) = 1/2$.

    Since $\pi^{\tpi,\star}_{\lambda}$ is a Nash equilibrium of the regularized game, we have
    \[
        \cP^{\tpi}_{\lambda}(\pi^{\tpi,\star}_{\lambda} \succ \mu)
        \ge \frac{1}{2}
    \]
    for all $\mu$, hence $\subopt^{\tpi}_\lambda(\pi^{\tpi,\star}_{\lambda}, \mu) \le 0$. It remains to bound the last term. For each $x$, both $\pi^{\tpi,\star}_{\lambda}(x)$ and $\mu(x)$ are distributions, so $\sum_y (\pi^{\tpi,\star}_{\lambda}(y|x)-\mu(y|x))=0$. Therefore, by Lemma~\ref{lem:lipschitz_preference_context} applied to $u(x)=\pi(x)-\pi^{\tpi,\star}_{\lambda}(x)$, $v(x)=\pi^{\tpi,\star}_{\lambda}(x)-\mu(x)$,
    \[
        \bigl|\cP(\pi - \pi^{\tpi,\star}_{\lambda} \succ \pi^{\tpi,\star}_{\lambda} - \mu)\bigr|
        \leq \frac{1}{2} \|\pi - \pi^{\tpi,\star}_{\lambda}\|_{1,\rho} \cdot \|\pi^{\tpi,\star}_{\lambda} - \mu\|_{1,\rho}\,.
    \]
    Combining the above inequalities yields
    \[
        \subopt^{\tpi}_\lambda(\pi, \mu)
        \leq  \subopt^{\tpi}_\lambda(\pi, \pi^{\tpi,\star}_{\lambda})
            + \frac{1}{2} \|\pi - \pi^{\tpi,\star}_{\lambda}\|_{1,\rho} \cdot \|\mu - \pi^{\tpi,\star}_{\lambda}\|_{1,\rho}\,.
    \]
\end{proof}

\subsection{Proximal Point Method}

We now specialize to the $\beta$-regularized contextual game with respect to $\piref\in\policies$:
\[
    \max_{\pi \in \policies} \min_{\mu \in \policies} \left\{ \cP_{\beta}(\pi \succ \mu) \triangleq  \cP(\pi \succ \mu) - \beta \KL_{\rho}(\pi \Vert \piref) + \beta \KL_{\rho}(\mu \Vert \piref)\right\}\,.
\]
We define
\[
    V_{\beta}(\pi, \mu) \triangleq \cP(\mu \succ \pi) + \beta \KL_{\rho}(\pi \Vert \piref)\,, \qquad V^\star_\beta(\mu) \triangleq \min_{\pi \in \policies}V_{\beta}(\pi, \mu)\,,
\]
and let $\pistar_\beta$ denote the $\beta$-regularized von Neumann winner in this contextual game. The suboptimality is
\[
    \subopt_{\beta}(\pi,\mu) \triangleq \tfrac12 - \cP_{\beta}(\pi\succ\mu),
    \qquad
    \subopt_{\beta}(\pi) \triangleq \max_{\mu} \subopt_{\beta}(\pi,\mu).
\]

\paragraph{Method description.} The iterates of the approximate proximal point (PP) method can be written as follows:
\begin{equation}\label{eq:approximate_pp_iterates_context}
    \pi_{t+1} \approx \argmax_{\pi \in \policies} \min_{\mu \in \policies}\left\{ \cP_{\beta}(\pi \succ \mu) - \frac{\beta}{\eta} \KL_{\rho}(\pi \Vert \pi_t) + \frac{\beta}{\eta} \KL_{\rho}(\mu \Vert \pi_t)\right\}\,,
\end{equation}
where $\eta > 0$ is a PP learning rate. To define the success criteria for approximation, we introduce the regularized value
\[
    V_t(\pi, \mu) \triangleq V_{\beta}(\pi,\mu) + \frac{\beta}{\eta} \KL_{\rho}(\pi \Vert \pi_t)\,, \qquad V^\star_t(\mu) \triangleq \min_{\pi \in \policies} V_t(\pi, \mu)\,.
\]
For the exact solution to \eqref{eq:approximate_pp_iterates_context}, the updated policy $\pi_{t+1}$ is its own best response, i.e., $\nabla_{\pi} V_t(\pi_{t+1}, \pi_{t+1})(x) \propto \bOne$ for all $x \in \cX$. In the contextual setting, we measure the approximation error by
\[
    \big\|\nabla_{\pi} V_t(\pi_{t+1}, \pi_{t+1})\big\|_{\spann,\rho}^2
    \triangleq
    \E_{x\sim\rho}\Big[\big\|\nabla_{\pi} V_t(\pi_{t+1}, \pi_{t+1})(x)\big\|_{\spann}^2\Big].
\]
We say that $\pi_{t+1}$ is an $\varepsilon$-approximation of \eqref{eq:approximate_pp_iterates_context} if
\[
    \big\|\nabla_{\pi} V_t(\pi_{t+1}, \pi_{t+1})\big\|_{\spann,\rho}^2 \leq \varepsilon.
\]
For brevity, we denote $\zeta_{t+1} \triangleq \nabla_{\pi} V_t(\pi_{t+1}, \pi_{t+1})$.

\paragraph{Equivalence with a mixed-reference value.}
For each $x\in\cX$, consider the regularization term
\[
    \beta \KL(\pi(x)\Vert\piref(x)) + \frac{\beta}{\eta} \KL(\pi(x)\Vert\pi_t(x)).
\]
A direct calculation shows that there exists a policy $\tilde{\pi}_t$ such that
\[
    \tilde{\pi}_t(y \mid x) \propto [\piref(y \mid x)]^{\eta/(1+\eta)}[\pi_t(y \mid x)]^{1/(1+\eta)},
\]
and
\[
    \beta \KL(\pi(x)\Vert\piref(x)) + \frac{\beta}{\eta} \KL(\pi(x)\Vert\pi_t(x))
    = \beta\Big(1+\frac1\eta\Big)\KL(\pi(x)\Vert\tilde{\pi}_t(x)) + C_t(x),
\]
where $C_t(x)$ is independent of $\pi(x)$. Averaging over $x\sim\rho$, we obtain
\[
    V_t(\pi,\mu) = \cP(\mu\succ\pi) + \beta\Big(1+\frac1\eta\Big)\KL_{\rho}(\pi\Vert\tilde{\pi}_t) + \text{const}(\piref,\pi_t).
\]
Hence, by Lemma~\ref{lem:value_strong_convexity_context} with $\lambda = \beta(1+1/\eta)$ and reference $\tilde{\pi}_t$, we have
\begin{equation}\label{eq:Vt_strong_convexity}
    V_t(\pi, \mu) - V_t(\pi', \mu)
    = \langle \nabla_\pi V_t(\pi', \mu), \pi - \pi' \rangle_\rho
      + \beta\Big(1+\frac1\eta\Big)\KL_{\rho}(\pi\Vert\pi')
\end{equation}
for all $\pi,\pi',\mu\in\policies$.

\begin{proposition}[Convergence of approximate PP method]\label{prop:approx_pp_convergence_context}
    Assume that each iterate $\pi_{t+1}$ is an $\varepsilon$-approximation in the sense that $\big\|\nabla_{\pi} V_t(\pi_{t+1}, \pi_{t+1})\big\|_{\spann,\rho}^2 \leq \varepsilon$ for all $t \ge 0$. Then, for any $t \in \N$, it holds
    \[
        \KL_{\rho}(\pistar_\beta \Vert \pi_{t}) \leq (1+\eta/2)^{-t} \cdot \KL_{\rho}(\pistar_\beta \Vert \pi_0) + \frac{2 \varepsilon}{\beta^2}\,,
    \]
    and
    \[
        \subopt_{\beta}(\pi_{t}) \leq (1+\eta/2)^{-t} \cdot \left(\frac{1}{\beta} + \frac{2\beta}{\eta} + 3\right)  \cdot \KL_{\rho}(\pistar_\beta \Vert \pi_0)  + \left(2 + \frac{2}{\beta^3} + \frac{4}{\eta \beta} + \frac{6}{\beta^2}\right)\varepsilon\,,
    \]
    and, moreover,
    \[
        \big\|\log \pi_{t} - \log \pistar_\beta\big\|_{\spann,\rho}^2 \leq (1+\eta)^{-t} \big\|\log \pi_{0} - \log \pistar_\beta\big\|_{\spann,\rho}^2 + \frac{2}{\beta^2} \KL_{\rho}(\pistar_\beta \Vert \pi_0) \cdot (1+\eta/2)^{-t}
        + \frac{2(1+\beta^2) \cdot \varepsilon}{\beta^4}\,.
    \]
\end{proposition}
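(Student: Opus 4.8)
The plan is to derive all three estimates from a single one-step identity obtained by combining the strong-convexity identity \eqref{eq:Vt_strong_convexity} for the proximal value $V_t$ with the Nash property of $\pistar_\beta$. First I record the exact relation $\subopt_\beta(\pi,\pistar_\beta)=\beta\KL_\rho(\pi\Vert\pistar_\beta)$, valid for every $\pi\in\policies$: it follows from $\subopt_\beta(\pi,\pistar_\beta)=V_\beta(\pi,\pistar_\beta)-V_\beta(\pistar_\beta,\pistar_\beta)$ (as in Lemma~\ref{lem:exploitability_via_value_context}) together with Lemma~\ref{lem:value_strong_convexity_context} applied at the minimizer $\pistar_\beta$, whose first-order optimality kills the linear term since $\pistar_\beta$ is its own best response. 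Next I apply \eqref{eq:Vt_strong_convexity} with $\pi=\pistar_\beta$ and $\pi'=\mu=\pi_{t+1}$, so that $V_t(\pistar_\beta,\pi_{t+1})-V_t(\pi_{t+1},\pi_{t+1})=\langle\zeta_{t+1},\pistar_\beta-\pi_{t+1}\rangle_\rho+\lambda\KL_\rho(\pistar_\beta\Vert\pi_{t+1})$ with $\lambda=\beta(1+1/\eta)$. Expanding the left-hand side, the reference-KL terms reassemble into $\cP_\beta(\pi_{t+1}\succ\pistar_\beta)-\tfrac12=-\beta\KL_\rho(\pi_{t+1}\Vert\pistar_\beta)$ and the proximal term contributes $(\beta/\eta)[\KL_\rho(\pistar_\beta\Vert\pi_t)-\KL_\rho(\pi_{t+1}\Vert\pi_t)]$. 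Rearranging gives the cornerstone identity
\[
\Bigl(\beta+\tfrac{\beta}{\eta}\Bigr)\KL_\rho(\pistar_\beta\Vert\pi_{t+1})+\beta\KL_\rho(\pi_{t+1}\Vert\pistar_\beta)+\tfrac{\beta}{\eta}\KL_\rho(\pi_{t+1}\Vert\pi_t)=\tfrac{\beta}{\eta}\KL_\rho(\pistar_\beta\Vert\pi_t)-\langle\zeta_{t+1},\pistar_\beta-\pi_{t+1}\rangle_\rho.
\]

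For the first (KL) bound I discard the two nonnegative terms $\beta\KL_\rho(\pi_{t+1}\Vert\pistar_\beta)$ and $(\beta/\eta)\KL_\rho(\pi_{t+1}\Vert\pi_t)$ on the left. I bound the residual inner product by span-seminorm duality (legitimate because $\pistar_\beta-\pi_{t+1}$ is centered) and Pinsker: $|\langle\zeta_{t+1},\pistar_\beta-\pi_{t+1}\rangle_\rho|\le\|\zeta_{t+1}\|_{\spann,\rho}\,\|\pistar_\beta-\pi_{t+1}\|_{1,\rho}\le\sqrt{\varepsilon}\,\sqrt{2\KL_\rho(\pistar_\beta\Vert\pi_{t+1})}$. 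Writing $K_t=\KL_\rho(\pistar_\beta\Vert\pi_t)$ and dividing by $\beta/\eta$ gives $(1+\eta)K_{t+1}\le K_t+\tfrac{\eta}{\beta}\sqrt{2\varepsilon K_{t+1}}$; the Young split $\tfrac{\eta}{\beta}\sqrt{2\varepsilon K_{t+1}}\le\tfrac{\eta}{2}K_{t+1}+\tfrac{\eta\varepsilon}{\beta^2}$ yields the contraction $(1+\eta/2)K_{t+1}\le K_t+\tfrac{\eta\varepsilon}{\beta^2}$, whose geometric unrolling produces exactly $K_t\le(1+\eta/2)^{-t}K_0+2\varepsilon/\beta^2$.

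For the suboptimality bound I instead keep $\beta\KL_\rho(\pi_{t+1}\Vert\pistar_\beta)=\subopt_\beta(\pi_{t+1},\pistar_\beta)$ and drop the others, obtaining $\subopt_\beta(\pi_{t+1},\pistar_\beta)\le(\beta/\eta)K_t+\sqrt{2\varepsilon K_{t+1}}$. To pass to the full exploitability I invoke the decomposition Lemma~\ref{lem:suboptimality_decomposition_context}, $\subopt_\beta(\pi_{t+1})\le\subopt_\beta(\pi_{t+1},\pistar_\beta)+\tfrac12\|\pi_{t+1}-\pistar_\beta\|_{1,\rho}\sup_\mu\|\mu-\pistar_\beta\|_{1,\rho}$; bounding the maximizing $\mu$ (the best response to $\pi_{t+1}$) via the $\tfrac1{2\beta}$-Lipschitzness of the softmax best-response map (the same log-sum-exp structure underlying Lemma~\ref{lem:br_value_smoothness_context}) makes the cross term at most $\tfrac1{4\beta}\|\pi_{t+1}-\pistar_\beta\|_{1,\rho}^2\le\tfrac1{2\beta}K_{t+1}$. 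Keeping this \emph{linear} in $K_{t+1}$ (rather than $O(\sqrt{K_{t+1}})$) is what preserves the $(1+\eta/2)^{-t}$ rate; substituting the first bound for $K_t,K_{t+1}$ and AM-GM on $\sqrt{2\varepsilon K_{t+1}}$ then collects the geometric $K_0$-term and the $\varepsilon$-floor with the stated (conservative) constants.

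The log-seminorm bound is treated separately via stationarity. The approximate update satisfies $\zeta_{t+1}(x,y)=(\bP_x^\top\pi_{t+1}(x))_y+\beta\bigl(1+\log\tfrac{\pi_{t+1}(y|x)}{\piref(y|x)}\bigr)+\tfrac{\beta}{\eta}\bigl(1+\log\tfrac{\pi_{t+1}(y|x)}{\pi_t(y|x)}\bigr)$, while $(\bP_x^\top\pistar_\beta(x))_y+\beta\bigl(1+\log\tfrac{\pistar_\beta(y|x)}{\piref(y|x)}\bigr)\propto\bOne$; subtracting modulo additive constants isolates $\lambda(\log\pi_{t+1}-\log\pistar_\beta)\equiv\zeta_{t+1}-\bP^\top(\pi_{t+1}-\pistar_\beta)+\tfrac{\beta}{\eta}(\log\pi_t-\log\pistar_\beta)$. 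Taking $\|\cdot\|_{\spann,\rho}$, using $\|\zeta_{t+1}\|_{\spann,\rho}\le\sqrt\varepsilon$ and $\|\bP^\top(\pi_{t+1}-\pistar_\beta)\|_{\spann,\rho}\le\tfrac12\|\pi_{t+1}-\pistar_\beta\|_{1,\rho}\le\tfrac12\sqrt{2K_{t+1}}$ (Lemma~\ref{lem:lipschitz_preference_context} and Pinsker), and dividing by $\lambda$ gives the per-step contraction factor $(\beta/\eta)/\lambda=(1+\eta)^{-1}$. Squaring with a Young split at $s=\eta$ produces $L_{t+1}\le(1+\eta)^{-1}L_t+(1+1/\eta)e_t^2$ for $L_t=\|\log\pi_t-\log\pistar_\beta\|_{\spann,\rho}^2$ and $e_t^2=O\bigl((\varepsilon+K_{t+1})/\lambda^2\bigr)$; unrolling and inserting the first bound for $K_{t+1}$ gives the three-term estimate, where the slower rate $(1+\eta/2)^{-t}$ dominates the convolution. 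The main obstacle throughout is the moving anchor: every inequality couples $\pi_{t+1}$, $\pi_t$, and $\pistar_\beta$, and the residual $\zeta_{t+1}$ is only controlled in span seminorm; the delicate steps are translating that span-control into the $\KL_\rho$ and log-seminorm recursions and keeping the cross terms linear in $K$, after which only constant bookkeeping (mixing the rates $(1+\eta)^{-1}$ and $(1+\eta/2)^{-1}$) remains.
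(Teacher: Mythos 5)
Your proposal is correct. For the first and third bounds it is essentially the paper's own argument: you apply the strong-convexity identity \eqref{eq:Vt_strong_convexity} at $(\pi,\pi',\mu)=(\pistar_\beta,\pi_{t+1},\pi_{t+1})$, control the residual term $\langle \zeta_{t+1},\pistar_\beta-\pi_{t+1}\rangle_\rho$ by span-seminorm duality plus Pinsker, and close the recursion; your direct Young split $\tfrac{\eta}{\beta}\sqrt{2\varepsilon K_{t+1}}\le \tfrac{\eta}{2}K_{t+1}+\tfrac{\eta\varepsilon}{\beta^2}$ replaces the paper's explicit quadratic-root computation but produces the identical recursion $(1+\eta/2)K_{t+1}\le K_t+\eta\varepsilon/\beta^2$ and the identical floor $2\varepsilon/\beta^2$. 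Likewise, for the log-seminorm bound your direct subtraction of the two stationarity equations (for $\pi_{t+1}$ with residual $\zeta_{t+1}$, and for $\pistar_\beta$) is equivalent to the paper's detour through the inner best response $\nu_{t+1}$, and yields the same $(1+\eta)^{-1}$ contraction, the same $\tfrac12 K_{t+1}$ control of the preference-difference term via Lemma~\ref{lem:lipschitz_preference_context} and Pinsker, and the same convolution argument.

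The suboptimality bound is where you genuinely diverge, and your route is arguably cleaner. The paper bounds the cross term of Lemma~\ref{lem:suboptimality_decomposition_context} by relating the worst-case opponent $\mu^\star_{t+1}$ back to $\subopt_\beta(\pi_{t+1})$ itself — via Lemma~\ref{lem:value_strong_convexity_context}, $\beta\KL_{\rho}(\pi_{t+1}\Vert\mu^\star_{t+1})\le \subopt_\beta(\pi_{t+1})$ — which makes the unknown appear on both sides and forces solving a quadratic inequality in $\sqrt{\subopt_\beta(\pi_{t+1})}$. You instead exploit that $\mu^\star_{t+1}=\nu^\star_\beta(\pi_{t+1})$ and that the regularized best-response map is $\tfrac{1}{2\beta}$-Lipschitz with fixed point $\nu^\star_\beta(\pistar_\beta)=\pistar_\beta$, giving $\|\mu^\star_{t+1}-\pistar_\beta\|_{1,\rho}\le \tfrac{1}{2\beta}\|\pi_{t+1}-\pistar_\beta\|_{1,\rho}$, so the cross term is \emph{linear} in $K_{t+1}$ from the start and no quadratic needs solving. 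This is valid: the Lipschitz constant follows from the softmax form of the best response, the bound $\|\bP_x^\top v\|_{\spann}\le \tfrac12\|v\|_1$ for centered $v$ (the computation inside the proof of Lemma~\ref{lem:lipschitz_preference_context}), and the span-to-$\ell_1$ Lipschitzness of softmax (Lemma~\ref{lem:softmax_lipschitz}). Your auxiliary exact identity $\subopt_\beta(\pi,\pistar_\beta)=\beta\KL_{\rho}(\pi\Vert\pistar_\beta)$ is also correct, because the full-support equilibrium makes $\nabla_\pi V_\beta(\pistar_\beta,\pistar_\beta)$ constant across actions, killing the linear term in \eqref{eq:V_strong_convexity_identity}; the paper only ever uses the one-sided inequality.

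One bookkeeping caveat: after shifting indices ($K_{t-1}\le(1+\eta/2)\cdot(1+\eta/2)^{-t}K_0+\dots$), your geometric coefficient comes out roughly as $\tfrac{\beta}{\eta}+\tfrac{\beta}{2}+\tfrac12+\tfrac{1}{2\beta}$, which is not literally dominated by the stated $\tfrac1\beta+\tfrac{2\beta}{\eta}+3$ when $\beta$ is large. However, the paper's own derivation carries exactly the same slack — its term $2(\beta/\eta)A_t^2$ produces an extra $+\beta$ that silently disappears in the stated constant — so your claim of "conservative constants" holds to the same degree as the paper's, and in the regime $\beta\le 1$ used in all downstream corollaries both derivations yield the stated bound.
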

\begin{remark}[On the choice of $\eta$]
    The present bound suggests that taking $\eta \to +\infty$ is the best solution since it will drive convergence in just one iterate. However, the actual tradeoff is implicitly hidden in the value of $\varepsilon$: the speed of convergence of the internal subproblems depends on $\eta$ and a smaller $\eta$ makes these subproblems better-conditioned thanks to stronger regularization. In particular, the convergence of Mirror Prox-style methods requires $\eta = \cO(\beta)$ if transferred to our setting \citep{cen2024fast}.
\end{remark}
\begin{proof}
    We split the proof into three parts: convergence in $\KL_{\rho}$, convergence in suboptimality, and convergence in the span seminorm of log-probabilities.

    \paragraph{Convergence in $\KL_{\rho}$.}
    Define $A_t \triangleq \sqrt{\KL_{\rho}(\pistar_\beta \Vert \pi_t)}$. Applying~\eqref{eq:Vt_strong_convexity} with $\pi = \pistar_\beta$, $\pi' = \pi_{t+1}$, $\mu = \pi_{t+1}$, we obtain
    \begin{align*}
        \beta\Big(1+\frac1\eta\Big)\KL_{\rho}(\pistar_\beta\Vert\pi_{t+1})
        &= V_t(\pistar_\beta,\pi_{t+1}) - V_t(\pi_{t+1},\pi_{t+1})
           + \langle \nabla_\pi V_t(\pi_{t+1},\pi_{t+1}), \pi_{t+1}-\pistar_\beta\rangle_\rho \\
        &= V_{\beta}(\pistar_\beta,\pi_{t+1}) + \frac{\beta}{\eta}\KL_{\rho}(\pistar_\beta\Vert\pi_t)  - V_{\beta}(\pi_{t+1},\pi_{t+1}) - \frac{\beta}{\eta}\KL_{\rho}(\pi_{t+1}\Vert\pi_t) \\
        &\quad + \langle \zeta_{t+1}, \pi_{t+1}-\pistar_\beta\rangle_\rho.
    \end{align*}
    Using the definition of $V_{\beta}$ and $\cP_{\beta}$, we have
    \[
        V_{\beta}(\pistar_\beta,\pi_{t+1}) - V_{\beta}(\pi_{t+1},\pi_{t+1})
        = \cP_{\beta}(\pi_{t+1} \succ \pistar_\beta) - \frac12
        = -\subopt_{\beta}(\pi_{t+1},\pistar_\beta) \le 0.
    \]
    Dropping the non-negative term $\frac{\beta}{\eta}\KL_{\rho}(\pi_{t+1}\Vert\pi_t)$, we get
    \begin{equation}\label{eq:KL_recursion_raw}
        \beta\Big(1+\frac1\eta\Big)A_{t+1}^2
        \le \frac{\beta}{\eta}A_t^2
           + \langle \zeta_{t+1}, \pi_{t+1}-\pistar_\beta\rangle_\rho -\subopt_{\beta}(\pi_{t+1},\pistar_\beta)\,,
    \end{equation}
    where we can drop the last term too since it is non-positive.
    For each $x$, since $\sum_y (\pi_{t+1}(y|x)-\pistar_\beta(y|x))=0$, we have for any scalar function $c(x)$
    \[
        \langle \zeta_{t+1}(x), \pi_{t+1}(x)-\pistar_\beta(x)\rangle
        = \langle \zeta_{t+1}(x)-c(x) \cdot \bOne, \pi_{t+1}(x)-\pistar_\beta(x)\rangle.
    \]
    Thus
    \[
        \langle \zeta_{t+1}, \pi_{t+1}-\pistar_\beta\rangle_\rho
        \le \E_{x\sim\rho}\bigl[\|\zeta_{t+1}(x)-c(x) \cdot \bOne\|_\infty \|\pi_{t+1}(x)-\pistar_\beta(x)\|_1\bigr].
    \]
    Using Cauchy--Schwarz,
    \begin{align*}
        \langle \zeta_{t+1}, \pi_{t+1}-\pistar_\beta\rangle_\rho &\leq \E_{x\sim\rho}\bigl[\|\zeta_{t+1}(x)-c(x) \cdot \bOne\|_\infty \|\pi_{t+1}(x)-\pistar_\beta(x)\|_1\bigr] \\
        &\leq \sqrt{\E_{x \sim \rho}\left[ \|\zeta_{t+1}(x)-c(x) \cdot \bOne\|_\infty^2 \right] \cdot \E_{x \sim \rho}\left[ \|\pi_{t+1}(x)-\pistar_\beta(x)\|_1^2\right]}\,.
    \end{align*}
    Finally, minimizing over $c(x)$, we have
    \[
        \langle \zeta_{t+1}, \pi_{t+1}-\pistar_\beta\rangle_\rho \leq \norm{\zeta_{t+1}}_{\spann,\rho} \cdot \norm{\pi_{t+1} - \pistar_\beta}_{1,\rho}\,. 
    \]
    By Pinsker's inequality~\eqref{eq:pinsker_context},
    \[
        \|\pi_{t+1}-\pistar_\beta\|_{1,\rho}
        \le \sqrt{2\KL_{\rho}(\pistar_\beta\Vert\pi_{t+1})}
        = \sqrt{2}\,A_{t+1},
    \]
    hence
    \[
        \langle \zeta_{t+1}, \pi_{t+1}-\pistar_\beta\rangle_\rho
        \le \sqrt{2}\,\|\zeta_{t+1}\|_{\spann,\rho} A_{t+1}.
    \]
    Substituting into~\eqref{eq:KL_recursion_raw}, we obtain
    \[
        \beta\Big(1+\frac1\eta\Big)A_{t+1}^2
        \le \frac{\beta}{\eta}A_t^2
           + \sqrt{2}\,\|\zeta_{t+1}\|_{\spann,\rho} A_{t+1}.
    \]
    Dividing by $\beta(1+1/\eta)$, we have
    \[
        A_{t+1}^2 \leq \frac{1}{1+\eta} A_t^2 + 2 \cdot \frac{\eta \sqrt{2 \norm{\zeta_{t+1}}_{\spann, \rho}^2}}{2\beta \cdot (1+\eta)}  \cdot A_{t+1}\,.
    \]
    Solving this quadratic inequality in $A_{t+1}$ 
    \[
        A_{t+1} \leq \frac{\eta \sqrt{2 \norm{\zeta_{t+1}}_{\spann, \rho}^2}}{2\beta \cdot (1+\eta)} +\sqrt{\frac{1}{1+\eta} A_t^2 + \frac{\eta^2 \norm{\zeta_{t+1}}_{\spann, \rho}^2}{2\beta^2 \cdot (1+\eta)^2}} \,.
    \]
    Taking the square of both sides and using an inequality $(a+b)^2 \leq (1+\alpha)a^2 + (1+1/\alpha)b^2$ for any $\alpha > 0$
    \[
        A_{t+1}^2 \leq \frac{1+\alpha}{1+\eta} A_t^2 + (2 + \alpha + 1/\alpha)\frac{\eta^2 \norm{\zeta_{t+1}}_{\spann, \rho}^2}{2\beta^2 \cdot (1+\eta)^2}\,.
    \]
    Taking $\alpha$ as a solution to $(1+\alpha)/(1+\eta) = 1/(1+\eta/2)$, we have $2+\alpha+1/\alpha=(1+\eta)^2 /(\eta/2 \cdot (1+\eta/2))$, thus we achieve
    \[
        A_{t+1}^2 \leq \frac{1}{1+\eta/2}A_t^2 +  \frac{\eta \norm{\zeta_{t+1}}_{\spann, \rho}^2}{\beta^2 \cdot (1+\eta/2)} \leq \frac{1}{1+\eta/2}A_t^2 +  \frac{\eta \cdot \varepsilon}{\beta^2 \cdot (1+\eta/2)}\,,
    \]
    where the condition on a gradient error $\norm{\zeta_{t+1}}_{\spann, \rho}^2 \leq \varepsilon$ is applied. Unrolling this inequality, we achieve for any $T \in \N$
    \[
        A_T^2 \leq \frac{A_0^2}{(1+\eta/2)^{T}} + \frac{\eta \cdot \varepsilon}{\beta^2 (1+\eta/2)} \cdot \frac{1}{1-1/(1+\eta/2)} \leq \frac{A_0^2}{(1+\eta/2)^{T}} + \frac{2\varepsilon}{\beta^2}\,.
    \]
    \paragraph{Convergence in suboptimality.}
    Rearranging~\eqref{eq:KL_recursion_raw}, we have
    \[
        \subopt_\beta(\pi_{t+1}, \pistar_{\beta}) \leq (\beta/\eta) A_t^2 + \sqrt{2} \|\zeta_{t+1}\|_{\spann,\rho} \cdot A_{t+1}\,.
    \]
    Let $\mu^\star_{t+1} \in \argmax_{\mu} \subopt_{\beta}(\pi_{t+1}, \mu)$ be a worst-case opponent so that $\subopt_{\beta}(\pi_{t+1}) = \subopt_{\beta}(\pi_{t+1}, \mu^\star_{t+1})$. Applying Lemma~\ref{lem:suboptimality_decomposition_context} with $\pi^{\piref,\star}_{\lambda} = \pistar_\beta$ and $\lambda=\beta, \tpi=\piref$ and triangle inequality, we get
    \begin{align*}
        \subopt_\beta(\pi_{t+1}) &\leq (\beta/\eta) A_t^2 +  \sqrt{2} \|\zeta_{t+1}\|_{\spann,\rho} \cdot A_{t+1} + \frac{1}{2} \|\pi_{t+1} - \pistar_{\beta}\|_{1,\rho} \cdot \|\mu^\star_{t+1} - \pistar_\beta\|_{1,\rho} \\
        &\leq (\beta/\eta) A_t^2 +  \sqrt{2} \|\zeta_{t+1}\|_{\spann,\rho} \cdot A_{t+1} + \frac{1}{2} \|\pi_{t+1} - \pistar_{\beta}\|_{1,\rho} \cdot \|\mu^\star_{t+1} - \pi_{t+1}\|_{1,\rho} \\
        &\qquad\qquad+ \frac{1}{2} \norm{\pi_{t+1} - \pistar_\beta}_{1,\rho}^2\,.
    \end{align*}
    By Pinsker's inequality~\eqref{eq:pinsker_context},
    \[
        \|\pi_{t+1} - \pistar_{\beta}\|_{1,\rho}
        \leq \sqrt{2\KL_{\rho}(\pistar_\beta \Vert \pi_{t+1})} = \sqrt{2} \cdot A_{t+1}.
    \]
    Next, we show that
    \[
        \KL_{\rho}(\pi_{t+1} \Vert \mu^\star_{t+1}) \le \frac{1}{\beta} \subopt_\beta(\pi_{t+1}).
    \]
    For any $\pi,\mu$, a direct calculation gives
    \[
        V_{\beta}(\pi,\pi) - V_{\beta}(\mu,\pi)
        = \cP(\pi\succ\pi) + \beta\KL_{\rho}(\pi\Vert\piref)
          - \cP(\pi\succ\mu) - \beta\KL_{\rho}(\mu\Vert\piref)
        = \subopt_\beta(\pi,\mu).
    \]
    Therefore, for fixed $\pi$,
    \[
        \subopt_\beta(\pi)
        = \max_{\mu}\subopt_\beta(\pi,\mu)
        = V_{\beta}(\pi,\pi) - \min_{\mu}V_{\beta}(\mu,\pi).
    \]
    This implies that $\mu^\star_{t+1}$ is a minimizer of $V_{\beta}(\cdot,\pi_{t+1})$ in its first argument. Applying Lemma~\ref{lem:value_strong_convexity_context} with $\lambda=\beta$, $\mu=\pi_{t+1}$, $\pi=\pi_{t+1}$, and $\pi'=\mu^\star_{t+1}$, we obtain
    \[
        V_{\beta}(\pi_{t+1},\pi_{t+1}) - V_{\beta}(\mu^\star_{t+1},\pi_{t+1})
        \ge \beta \KL_{\rho}(\pi_{t+1}\Vert\mu^\star_{t+1}).
    \]
    The left-hand side is precisely $\subopt_\beta(\pi_{t+1})$, hence
    \[
        \beta \KL_{\rho}(\pi_{t+1}\Vert\mu^\star_{t+1})
        \le \subopt_\beta(\pi_{t+1}).
    \]
    Using Pinsker's inequality once more,
    \[
        \|\pi_{t+1} - \mu^\star_{t+1}\|_{1,\rho}
        \le \sqrt{2\KL_{\rho}(\pi_{t+1}\Vert\mu^\star_{t+1})}
        \le \sqrt{\frac{2}{\beta} \subopt_\beta(\pi_{t+1})}.
    \]
    Combining the two bounds, we get
    \[
        \frac{1}{2} \|\pi_{t+1} - \pistar_{\beta}\|_{1,\rho} \cdot \|\pi_{t+1} - \mu^\star_{t+1}\|_{1,\rho}
        \le \sqrt{1/\beta}\,A_{t+1}\sqrt{\subopt_\beta(\pi_{t+1})}.
    \]
    Denoting $B_{t+1} = \sqrt{\subopt_\beta(\pi_{t+1})}$, we have
    \[
        B_{t+1}^2 \leq A_{t+1}^2 + (\beta / \eta) A_t^2 +  \sqrt{2}\|\zeta_{t+1}\|_{\spann,\rho} A_{t+1} + 2\sqrt{1/(4\beta)} A_{t+1}B_{t+1}\,.
    \]
    Solving this quadratic, we have 
    \[
        B_{t+1} \leq \sqrt{1/(4\beta)} \cdot A_{t+1} + \sqrt{ (1 + 1/(4\beta)) \cdot A_{t+1}^2 + (\beta / \eta) A_{t}^2 + \sqrt{2 \cdot \norm{\zeta_{t+1}}_{\spann,\rho}^2} \cdot A_{t+1}}\,,
    \]
    and, after taking square and using an inequality $(a+b)^2 \leq 2a^2 + 2b^2$ and $2ab \leq a^2 + b^2$
    \[
        B_{t+1}^2 \leq (2+1/\beta) A_{t+1}^2 + 2(\beta / \eta) A_{t}^2 + 2 \sqrt{2 \norm{\zeta_{t+1}}_{\spann,\rho}^2} \cdot A_{t+1} \leq (1/\beta + 3)A_{t+1}^2 + 2(\beta / \eta) A_t^2 + 2 \norm{\zeta_{t+1}}_{\spann,\rho}^2\,.
    \]
    After plugging in the bound on $A_{t+1}^2$, $A_t^2$, and $\norm{\zeta_{t+1}}_{\spann,\rho}^2$ we conclude the statement.

    \paragraph{Convergence in log-probabilities.}
    Finally, we establish convergence in the span seminorm of log-probabilities. Define $C_t \triangleq \|\log\pi_{t} - \log\pistar_\beta\|_{\spann,\rho}$. A first-order optimality analysis of the regularized best response in $V_t$ shows that, for each context $x$, the best response to $\pi_{t+1}$ (denoted $\nu_{t+1}$) satisfies
    \[
        \beta (1+1/\eta) \log \nu_{t+1}(y|x) = \beta \log \piref(y|x) + (\beta / \eta) \log \pi_t(y|x) - \cP(\pi_{t+1}(x) \succ y \mid x) + c_{t+1}(x)\,,
    \]
    where $c_{t+1}(x)$ is a normalizing constant. Similarly, the regularized VNW $\pistar_\beta$ satisfies
    \[
        \beta (1+1/\eta) \log \pistar_\beta(y|x) = \beta \log \piref(y|x) + (\beta / \eta) \log \pistar_\beta(y|x)  - \cP(\pistar_\beta(x) \succ y \mid x) + c^\star(x)\,.
    \]
    Subtracting these two expressions and computing $\spann,\rho$-norm yields
    \[
        \beta(1+1/\eta) \|\log \nu_{t+1} - \log \pistar_\beta\|_{\spann,\rho}
        \leq (\beta / \eta) \|\log \pi_{t} - \log \pistar_\beta\|_{\spann,\rho}
        + \|\cP(\pistar_\beta - \pi_{t+1} \succ \cdot)\|_{\spann,\rho}.
    \]
    Using Lemma~\ref{lem:lipschitz_preference_context} and Pinsker's inequality, one can bound
    \[
        \|\cP(\pistar_\beta - \pi_{t+1} \succ \cdot)\|_{\spann,\rho}^2
        \leq \frac{1}{2} \KL_{\rho}(\pistar_\beta \Vert \pi_{t+1}) = \frac12 A_{t+1}^2.
    \]
    Moreover, comparing the gradient expression
    \[
        \zeta_{t+1}(x,y) = \nabla_{\pi} V_{t}(\pi_{t+1}, \pi_{t+1})(x,y) = \cP(\pi_{t+1}(x) \succ y \mid x) + \beta\log \frac{\pi_{t+1}(y|x)}{\piref(y |x)} + \beta/\eta \log \frac{\pi_{t+1}(y|x)}{\pi_t(y|x)} + d_{t+1}(x) \bOne\,,
    \]
    with the best-response equation for $\nu_{t+1}$ shows that 
    \[
        \|\zeta_{t+1}\|_{\spann,\rho} = \beta(1+1/\eta)\|\log \nu_{t+1} - \log \pi_{t+1}\|_{\spann,\rho}\,.
    \]
    Thus, we have
    \[
        \beta(1+1/\eta) \norm{\log \pi_{t+1} - \log \pistar_\beta}_{\spann,\rho} \leq (\beta / \eta) \norm{\log \pi_{t} - \log \pistar_\beta}_{\spann, \rho} + A_{t+1} \cdot \sqrt{1/2} + \norm{\zeta_{t+1}}_{\spann, \rho}\,.
    \]
    or, after rearranging and dividing by $\beta(1+1/\eta)$,
    \[
        C_{t+1} \leq \frac{1}{1+\eta} C_t + \frac{A_{t+1}}{\beta(1+1/\eta) \cdot \sqrt{2}} + \frac{\norm{\zeta_{t+1}}_{\spann, \rho}}{\beta(1+1/\eta)}\,.
    \]
    Taking the square and applying the inequality $(a+b)^2 \leq (1+\alpha)a^2 + (1+1/\alpha) b^2$ twice: first time with $\alpha = \eta$ and the second one with $\alpha=1$ implies
    \begin{align*}
        C_{t+1}^2 &\leq \frac{1}{1+\eta} C_t^2 + \frac{1}{\beta^2 \cdot (1+1/\eta)} \left( \frac{A_{t+1}}{\sqrt{2}} + \norm{\zeta_{t+1}}_{\spann, \rho} \right)^2  \leq \frac{1}{1+\eta} C_t^2 +  \frac{\eta \cdot A_{t+1}^2}{\beta^2(1+\eta)} + \frac{2\eta \cdot \norm{\zeta_{t+1}}_{\spann, \rho}^2}{\beta^2(1+\eta)}\,.
    \end{align*}
    
    Plugging in the bound on $A_{t+1}^2$ and $\norm{\zeta_{t+1}}_{\spann, \rho}^2$ implies
    \[
        C_{t+1}^2 \leq \frac{1}{1+\eta} C_t^2 + \frac{\eta \cdot \KL_{\rho}(\pistar_\beta \Vert \pi_0)}{\beta^2(1+\eta)} (1+\eta/2)^{-(t+1)} + \frac{2\eta (1 + \beta^2)\cdot \varepsilon}{\beta^4(1+\eta)} 
    \]
    
    Unrolling this recursion, we achieve for any $t \in \N$
    \begin{align*}
        C_t^2 \leq (1+\eta)^{-t} C_0^2 + \frac{2\KL_{\rho}(\pistar_\beta \Vert \pi_0)}{\beta^2} \cdot (1+\eta/2)^{-t} + \frac{2\eta (1+\beta^2) \cdot \varepsilon}{\beta^4 (1+\eta)} \cdot \frac{1}{1-1/(1+\eta)}\,,
    \end{align*}
    and simplifying the last term we conclude the statement.
\end{proof}

\section{Self-Play Policy Gradients}\label{app:self_play_pg}
In this section, we study a symmetric preference game with KL regularization that appears as a single inner problem of the proximal point method:
\begin{equation}\label{eq:self_play_problem}
    \max_{\pi \in \policies}\min_{\pi' \in \policies} \left\{ \cP^{\tpi}_{\lambda}(\pi \succ \pi') \triangleq \cP(\pi \succ \pi') - \lambda \KL_{\rho}(\pi \Vert \tpi) + \lambda \KL_{\rho}(\pi' \Vert \tpi)\right\}\,.
\end{equation}
We solve this sub-problem using a self-play policy gradient method.

\paragraph{Policy parametrization.}
We consider a general policy parameterization: for a parameter $\theta \in \Theta \subseteq \R^d$ let $\theta \mapsto \pi_\theta \in \policies$ be a corresponding policy. A canonical example is the standard softmax parametrization with $\Theta = \R^{\cX \times \cY}$ and $\pi_\theta(y|x) \propto \rme^{\theta_{x,y}}$. 

\paragraph{Best-response objective.}
For an arbitrary competitor policy $\pi \in \policies$ and a parameter $\theta \in \R^d$, define
\[
    J^{\tpi}(\theta; \pi) \triangleq \cP(\pi \succ \pi_{\theta}) + \lambda \KL_{\rho}(\pi_\theta \Vert \tpi)\,.
\]
We will repeatedly relate $J^{\tpi}$ to the value function $V^{\tpi}_\lambda$ and its best-response value $V^{\tpi,\star}_\lambda$ (see Appendix~\ref{app:analysis_pp}).

The corresponding parametrized best-response operator and value are defined as
\begin{equation}\label{eq:best_response_self_play}
     \theta^\star_\pi \triangleq \argmin_{\theta \in \Theta} J^{\tpi}(\theta; \pi), \qquad J^{\tpi,\star}(\pi) \triangleq \min_{\theta \in \Theta} J^{\tpi}(\theta;\pi)\,.
\end{equation}
Our objective instead is to minimize the exploitability gap (or suboptimality)  $\subopt^{\tpi}_{\lambda}(\pi_\theta) = \max_{\pi' \in \policies}\{ \tfrac{1}{2} - \cP^{\tpi}_{\lambda}(\pi_\theta \succ \pi')\}$. Mirror Prox approximates the solution to \eqref{eq:self_play_problem} by two applications of the best-response operator. However, it requires solving \eqref{eq:best_response_self_play} exactly, which is infeasible beyond the tabular setting.

\paragraph{Best-response approximation via gradient step.}
Rather than solving \eqref{eq:best_response_self_play} exactly, we approximate it using a single gradient step:
\begin{equation}\label{eq:approximate_self_play}
    \theta_{t+1} = \cT(\theta_{t} - \gamma_t \cdot g_t)\,, \qquad 
    \pi_{t+1} = \pi_{\theta_{t+1}}\,,
\end{equation}
where $(\gamma_t)_{t \geq 0}$ is a sequence of step-sizes, $\cT$ is a projection-like operator that guarantees policy improvement, and $g_t$ is a stochastic estimator of $\nabla J^{\tpi}(\theta_t; \pi_t)$ computed from a mini-batch of size $B_t \ge 1$. For an example of $g_t$, we refer to Appendix~\ref{app:gradient_estimator}.

\paragraph{Parametrization assumptions.}

Let $\Theta_{\cT} \triangleq  \{ \cT(\theta) \mid \theta \in \Theta \}$ denote the image of the improvement operator $\cT$, and assume $\theta_0 \in \Theta_{\cT}$. Following \citep{yuan2022general}, we provide a set of assumptions on the parametrization $\theta \mapsto \pi_\theta$.

\begin{assumption}[Lipschitzness of parametrization]\label{ass:lipschitz_param}
    For all $\theta,\theta' \in \Theta$
    \[
        \norm{\pi_\theta - \pi_{\theta'}}_{1,\rho} \leq G \norm{\theta - \theta'}_2
    \]
\end{assumption}

\begin{assumption}[Smoothness]\label{ass:smoothness}
    For any $\pi \in \policies$, the function $J^{\tpi}(\theta;\pi)$ is $L_{\tpi, \lambda}$-smooth, i.e., for all $\theta, \theta' \in \Theta$:
    \[
        - \frac{L_{\tpi, \lambda}}{2} \Vert \theta - \theta'\Vert^2_2 \leq J^{\tpi}(\theta' ; \pi) - J^{\tpi}(\theta; \pi) - \langle \nabla J^{\tpi}(\theta; \pi), \theta' - \theta \rangle \leq \frac{L_{\tpi, \lambda}}{2} \Vert \theta - \theta'\Vert^2_2\,,
    \]
\end{assumption}
We notice that smoothness implies the following useful inequality (see, e.g., \citealt[Theorem~2.1.5]{nesterov2018lectures}) 
\[
    \Vert \nabla J^{\tpi}(\theta; \pi_\theta)  \Vert^2_2 \leq 2 L_{\tpi,\lambda}(J^{\tpi}(\theta; \pi_\theta) - J^{\tpi,\star}(\pi_\theta)) \leq 2 L_{\tpi,\lambda}  \cdot \subopt^{\tpi}_{\lambda}(\pi_\theta)\,.
\]

\begin{assumption}[Approximate Polyak–Łojasiewicz inequality]\label{ass:pl}
    The function $J^{\tpi}(\theta;\pi)$ satisfies the approximate version of Polyak–Łojasiewicz (PL) inequality: there exists a constant $\pl_{\tpi,\lambda} \in (0, L_{\tpi,\lambda}]$ such that for any $\theta \in \Theta_{\cT}$ 
    \[
        \epspl + \Vert \nabla J^{\tpi}(\theta; \pi_\theta)\Vert^2_2 \geq 2\pl_{\tpi,\lambda} \cdot \subopt^{\tpi}_\lambda(\pi_\theta)\,.
    \]
\end{assumption}
In particular, additive error $\epspl$ naturally appears in this bound since, in general, $V^{\tpi,\star}_{\lambda}(\pi_\theta) \not = J^{\tpi,\star}(\pi_\theta)$ and, thus, we can guarantee only
\[
  \subopt^{\tpi}_\lambda(\pi_\theta) = V^{\tpi}_{\lambda}(\pi_\theta;\pi_\theta) - V^{\tpi,\star}_{\lambda}(\pi_\theta) \geq J^{\tpi}(\theta;\pi_\theta) - J^{\tpi,\star}(\pi_\theta)\,,
\]
since the result of minimization of value over $\policies_\Theta \triangleq \{\pi_\theta\}_{\theta \in \Theta}$ and the full policy class $\policies$ might be different.

\begin{assumption}[Improvement operator]\label{ass:improvement}
    For any $\theta \in \Theta$, the operator $\cT$ does not increase exploitability:
    \[
        \subopt^{\tpi}_{\lambda}(\pi_{\cT(\theta)}) \leq \subopt^{\tpi}_{\lambda}(\pi_{\theta})\,.
    \]
\end{assumption}
This assumption is always satisfied by $\cT = \operatorname{Id}$; however, for some particular cases, it is necessary to consider non-trivial $\cT$ to satisfy other assumptions (primarily Assumption~\ref{ass:pl}). We notice that $\cT$ is \emph{not} a projection in the usual sense: since our objective $\theta \mapsto \subopt^{\tpi}_{\lambda}(\pi_{\theta})$ is non-convex, it means that a usual (Euclidean) projection may be detrimental for the current optimization progress.

\begin{assumption}[Mini-batch gradient noise]\label{ass:gradient_noise}
    Let $(\cF_t)_{t \geq 0}$ be the filtration induced by the algorithm iterates: 
    $\cF_t \triangleq \sigma(\{ \theta_j, \pi_j\}_{j \leq t})$. 
    For each $t$, let $B_t \ge 1$ denote the mini-batch size used to construct $g_t$, and define the gradient bias and noise
    \[
        \xi_t \triangleq g_t - \E[g_t | \cF_{t}]\,, \qquad b_t \triangleq \E[g_t | \cF_{t}] -  \nabla J^{\tpi}(\theta_t; \pi_{t})\,.
    \]
    Then, for any $t \in \N$, conditionally on $\cF_t$, the following holds:
    \begin{itemize}
        \item[(i)] Bounded bias:
        \[
            \norm{b_t}_2 \leq \varepsilon_{\grad}\,.
        \]
        \item[(ii)] Subgaussian tails with variance proxy scaling as $1/B_t$: there exists a constant $\sigma^2_{\tpi,\lambda} > 0$ such that for all $u \in \R^d$ and $s \in \R$,
        \[
            \log \E\big[\exp\big(s \cdot \langle u, \xi_t \rangle \big)\big|\cF_t\big] 
            \leq \frac{s^2 \sigma^2_{\tpi,\lambda} \cdot \norm{u}_2^2}{2 B_t}\,.
        \]
        \item[(iii)] Subexponential tails for the squared norm with variance proxy scaling as $1/B_t$: 
        there exists a constant $v^2_{\tpi,\lambda} > 0$ such that for all 
        $s \in [0, B_t / v^2_{\tpi,\lambda}]$,
        \[
            \log \E\big[\exp\big(s \norm{\xi_t}_2^2\big) \big|\, \cF_t\big] 
            \leq s \cdot \frac{v^2_{\tpi,\lambda}}{B_t}\,.
        \]
    \end{itemize}
\end{assumption}

\begin{remark}
Assumption~\ref{ass:gradient_noise} captures the standard $1/B_t$ variance reduction from averaging a mini-batch of $B_t$ independent samples.  For example, suppose $g_t = \frac{1}{B_t} \sum_{i=1}^{B_t} \hat g(\theta_t, \pi_{\theta_t}; Z_{t,i})$, where conditionally on $\cF_t$ the random variables  $\{Z_{t,i}\}_{i=1}^{B_t}$ are independent and the single-sample noises  $\hat g(\theta_t, \pi_{\theta_t}; Z_{t,i}) - \E_{Z_{t,i}}[\hat g(\theta_t, \pi_{\theta_t}; Z_{t,i})]$ are subgaussian with variance proxy $\bar\sigma^2$ (possibly depending on the dimension).  Then the averaged noise $\xi_t$ is subgaussian with variance proxy $\bar\sigma^2/B_t$, and $\norm{\xi_t}^2$ is also subexponential of the form $\bar v^2 / B_t$ for some $\bar{v}^2 > 0$ (see, e.g., \citealt{jin2019short}).  This covers, in particular, the cases of bounded per-sample gradients and purely Gaussian noise.
\end{remark}

In Appendix~\ref{app:softmax_parametrization}, we explicitly verify these assumptions of a simple softmax parametrization, and in Appendix~\ref{app:fisher_parametrization}, we verify them for a general family of policies under compatible Fisher-nondegenerate parameterization.

\subsection{Convergence Guarantees}\label{app:convergence_sppg}

In the following, we analyze two cases separately: the deterministic and the stochastic. Before that, we prove the following important result, which we split into two parts for the sake of better result exposition.

\begin{lemma}[Descent Lemma I]\label{lem:descent_lemma_I}

Assume Assumptions~\ref{ass:lipschitz_param}-\ref{ass:smoothness}-\ref{ass:improvement}.
For the iterates of policy-gradient self-play~\eqref{eq:approximate_self_play},
for any sequence of learning rates $(\gamma_t)_{t \geq 0}$ and for any $t \geq 0$, it holds that
\[
  \sqrt{\subopt^{\tpi}_{\lambda}(\pi_{t+1})}
  \le
  \sqrt{\frac{\gamma_t^2 G^2}{8\lambda} \norm{g_t}^2_{2}}
  + \sqrt{
      \subopt^{\tpi}_{\lambda}(\pi_t)
      -\gamma_t  \big\langle \nabla J^{\tpi}(\theta_{t}; \pi_t), g_t \big\rangle
      +  \Bigl(L_{\tpi,\lambda}
              +\frac{G^2}{4\lambda}\Bigr)
         \frac{\gamma_t^2}{2} \Vert  g_t \Vert^2_{2}
    }\,.
\]
\end{lemma}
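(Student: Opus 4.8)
The plan is to first neutralize the improvement operator, then take one ``frozen-competitor'' descent step, and finally pay for the fact that \emph{both} the competitor and the best-response baseline move between iterations. Write $\bar\theta_{t+1} \triangleq \theta_t - \gamma_t g_t$ and $\bar\pi_{t+1} \triangleq \pi_{\bar\theta_{t+1}}$ for the \emph{unprojected} iterate. Since $\pi_{t+1} = \pi_{\cT(\bar\theta_{t+1})}$, Assumption~\ref{ass:improvement} gives $\subopt^{\tpi}_\lambda(\pi_{t+1}) \le \subopt^{\tpi}_\lambda(\bar\pi_{t+1})$, so it suffices to bound $\subopt^{\tpi}_\lambda(\bar\pi_{t+1})$ and $\cT$ plays no further role. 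Introduce the frozen-competitor objective $f_t(\theta) \triangleq J^{\tpi}(\theta;\pi_t) = V^{\tpi}_\lambda(\pi_\theta;\pi_t)$ and $f^\star_t \triangleq V^{\tpi,\star}_\lambda(\pi_t)$; by Lemma~\ref{lem:exploitability_via_value_context}, $S_t^2 \triangleq \subopt^{\tpi}_\lambda(\pi_t) = f_t(\theta_t) - f^\star_t$. I then telescope
\[
  \subopt^{\tpi}_\lambda(\bar\pi_{t+1}) = \underbrace{\big(f_t(\bar\theta_{t+1}) - f^\star_t\big)}_{\text{frozen descent}} + \underbrace{\big(V^{\tpi}_\lambda(\bar\pi_{t+1};\bar\pi_{t+1}) - f_t(\bar\theta_{t+1})\big) + \big(f^\star_t - V^{\tpi,\star}_\lambda(\bar\pi_{t+1})\big)}_{=:\,\mathcal{R}_t}.
\]

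For the frozen-descent term I apply $L_{\tpi,\lambda}$-smoothness (Assumption~\ref{ass:smoothness}) along $\bar\theta_{t+1} = \theta_t - \gamma_t g_t$, obtaining $f_t(\bar\theta_{t+1}) - f^\star_t \le S_t^2 - \gamma_t\langle \nabla J^{\tpi}(\theta_t;\pi_t), g_t\rangle + \tfrac{\gamma_t^2 L_{\tpi,\lambda}}{2}\|g_t\|_2^2 =: D_t$. The remainder $\mathcal{R}_t$ is where the moving structure enters. In its first bracket the $\lambda\KL_\rho(\cdot\Vert\tpi)$ terms cancel, so by bilinearity it equals $\cP(\bar\pi_{t+1}-\pi_t \succ \bar\pi_{t+1})$. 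For the second bracket, set $\nu_{t+1} \triangleq \nu^{\tpi,\star}_\lambda(\bar\pi_{t+1})$ and apply the upper bound of Lemma~\ref{lem:br_value_smoothness_context} with $\mu=\pi_t$, $\mu'=\bar\pi_{t+1}$, which gives $f^\star_t - V^{\tpi,\star}_\lambda(\bar\pi_{t+1}) \le -\cP(\bar\pi_{t+1}-\pi_t \succ \nu_{t+1})$. Combining the two brackets by bilinearity collapses $\mathcal{R}_t$ into a single bilinear form, $\mathcal{R}_t \le \cP(\bar\pi_{t+1}-\pi_t \succ \bar\pi_{t+1}-\nu_{t+1})$.

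Both arguments are differences of distributions, hence sum to zero at every context, so Lemma~\ref{lem:lipschitz_preference_context} yields $\mathcal{R}_t \le \tfrac12\|\bar\pi_{t+1}-\pi_t\|_{1,\rho}\,\|\bar\pi_{t+1}-\nu_{t+1}\|_{1,\rho}$. The first factor is controlled by Assumption~\ref{ass:lipschitz_param}: $\|\bar\pi_{t+1}-\pi_t\|_{1,\rho} \le G\|\bar\theta_{t+1}-\theta_t\|_2 = G\gamma_t\|g_t\|_2$. The second factor is precisely the moving-best-response contribution: since $\nu_{t+1}$ is the KL-best response to $\bar\pi_{t+1}$, the strong-convexity bound of Lemma~\ref{lem:value_strong_convexity_context} gives $\lambda\KL_\rho(\bar\pi_{t+1}\Vert\nu_{t+1}) \le \subopt^{\tpi}_\lambda(\bar\pi_{t+1})$, and Pinsker's inequality~\eqref{eq:pinsker_context} converts this into $\|\bar\pi_{t+1}-\nu_{t+1}\|_{1,\rho} \le \sqrt{2/\lambda}\,\sqrt{\subopt^{\tpi}_\lambda(\bar\pi_{t+1})}$. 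Writing $\bar S_{t+1} \triangleq \sqrt{\subopt^{\tpi}_\lambda(\bar\pi_{t+1})}$, this reads $\mathcal{R}_t \le \tfrac{G\gamma_t\|g_t\|_2}{\sqrt{2\lambda}}\,\bar S_{t+1}$.

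Putting the pieces together produces the scalar quadratic inequality $\bar S_{t+1}^2 \le D_t + c_t\,\bar S_{t+1}$ with $c_t \triangleq G\gamma_t\|g_t\|_2/\sqrt{2\lambda}$. Solving for the nonnegative root gives $\bar S_{t+1} \le \tfrac{c_t}{2} + \sqrt{\tfrac{c_t^2}{4}+D_t}$; here $\tfrac{c_t}{2} = \sqrt{\gamma_t^2 G^2\|g_t\|_2^2/(8\lambda)}$, while $\tfrac{c_t^2}{4} = \tfrac{\gamma_t^2}{2}\cdot\tfrac{G^2}{4\lambda}\|g_t\|_2^2$ merges with the $\tfrac{\gamma_t^2 L_{\tpi,\lambda}}{2}\|g_t\|_2^2$ already inside $D_t$ into the advertised $\tfrac{\gamma_t^2}{2}\bigl(L_{\tpi,\lambda}+G^2/(4\lambda)\bigr)\|g_t\|_2^2$. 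Combined with $\sqrt{\subopt^{\tpi}_\lambda(\pi_{t+1})} \le \bar S_{t+1}$ from the first step, this is exactly the claimed inequality. The main obstacle is $\mathcal{R}_t$: controlling the simultaneously moving competitor and best-response baseline is what forces the extra factor $\bar S_{t+1}$ onto the right-hand side, triggering the quadratic solve and injecting the $G^2/\lambda$ terms that have no counterpart in a fixed-objective PL descent.
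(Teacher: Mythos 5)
Your proposal is correct and follows essentially the same route as the paper's proof: the identical decomposition into a frozen-competitor smoothness step plus the moving remainder $\mathcal{R}_t$, the same use of Lemma~\ref{lem:exploitability_via_value_context}, Lemma~\ref{lem:br_value_smoothness_context}, Lemma~\ref{lem:lipschitz_preference_context}, strong convexity with Pinsker, and the same quadratic solve in $\sqrt{\subopt^{\tpi}_\lambda(\bar\pi_{t+1})}$. The only (cosmetic) difference is that you dispatch the improvement operator $\cT$ via Assumption~\ref{ass:improvement} at the outset, whereas the paper applies it as the final step.
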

\begin{remark}[On the additional term.]
Compared to standard descent-lemma arguments for a fixed smooth objective, Lemma~\ref{lem:descent_lemma_I} contains an extra term. This term reflects that the best-response objective is \emph{dynamic}: after each update, the competitor policy $\pi_t$ changes, so the objective function $\theta \mapsto J^{\tpi}(\theta;\pi_t)$ shifts across iterations.
\end{remark}

\begin{proof}
We decompose one iteration of~\eqref{eq:approximate_self_play} as
\[
  \theta_t^+ \triangleq \theta_t - \gamma_t g_t,
  \qquad
  \theta_{t+1} = \cT(\theta_t^+), \qquad
  \pi_t^+ \triangleq \pi_{\theta_t^+}\,.
\]

\textbf{Step 1: Smoothness in $\theta$.} By Assumption~\ref{ass:smoothness}, for any $t \in \N$,
\[
  J^{\tpi}(\theta_{t}^+; \pi_t)
  \leq
  J^{\tpi}(\theta_{t}; \pi_t)
  + \big\langle \nabla J^{\tpi}(\theta_{t}; \pi_t),
               \theta_{t}^+ - \theta_{t} \big\rangle
  + \frac{L_{\tpi,\lambda}}{2} \Vert \theta_{t}^+ - \theta_{t} \Vert^2_{2}\,.
\]
Using $\theta_t^+ - \theta_t = -\gamma_t g_t$, we obtain
\begin{equation}\label{eq:descent_lemma_eq1}
  J^{\tpi}(\theta_{t}^+; \pi_t)
  \leq
  J^{\tpi}(\theta_{t}; \pi_t)
  - \gamma_t \big\langle \nabla J^{\tpi}(\theta_{t}; \pi_t), g_t \big\rangle
  + \frac{L_{\tpi,\lambda} \gamma^2_t}{2} \Vert  g_t \Vert^2_{2}\,.
\end{equation}
Subtracting $V^{\tpi,\star}_{\lambda}(\pi_t)$ from both sides and using Lemma~\ref{lem:exploitability_via_value_context}, which gives
\[
  J^{\tpi}(\theta_t;\pi_t) - V^{\tpi,\star}_{\lambda}(\pi_t) = V^{\tpi}_{\lambda}(\pi_t; \pi_t) - V^{\tpi,\star}_{\lambda}(\pi_t) = \subopt^{\tpi}_{\lambda}(\pi_t)\,,
\]
we obtain
\begin{equation}\label{eq:descent_lemma_eq1b}
  V^{\tpi}_{\lambda}(\pi_t^+; \pi_t) - V^{\tpi,\star}_{\lambda}(\pi_t)
  \leq
  \subopt^{\tpi}_{\lambda}(\pi_t)
  - \gamma_t \big\langle \nabla J^{\tpi}(\theta_{t}; \pi_t), g_t \big\rangle
  + \frac{L_{\tpi,\lambda} \gamma^2_t}{2} \Vert  g_t \Vert^2_{2}\,.
\end{equation}

\textbf{Step 2: Relating $J^{\tpi}$ to $\subopt^{\tpi}_{\lambda}(\pi_t^+)$.} Using the identity $\subopt^{\tpi}_\lambda(\pi_\theta) = V^{\tpi}_{\lambda}(\pi_\theta; \pi_\theta) - V^{\tpi,\star}_{\lambda}(\pi_\theta)$, we have
\[
  V^{\tpi}_{\lambda}(\pi_t^+; \pi_t) - V^{\tpi,\star}_{\lambda}(\pi_t)
  =
  \underbrace{V^{\tpi}_{\lambda}(\pi_t^+; \pi_t^+) - V^{\tpi,\star}_{\lambda}(\pi_t^+)}_{
    \subopt^{\tpi}_\lambda(\pi_t^+)
  }
  + \bigl(V^{\tpi}_{\lambda}(\pi_t^+; \pi_t) - V^{\tpi}_{\lambda}(\pi_t^+; \pi_t^+)\bigr)
  + \bigl(V^{\tpi,\star}_{\lambda}(\pi_t^+) - V^{\tpi,\star}_{\lambda}(\pi_t)\bigr)\,.
\]
Substituting this identity into~\eqref{eq:descent_lemma_eq1b} yields
\begin{equation}\label{eq:descent_lemma_eq2}
  \begin{split}
    \subopt^{\tpi}_{\lambda}(\pi_{t}^+)
    \leq\;
    &\subopt^{\tpi}_{\lambda}(\pi_t)
    - \gamma_t  \big\langle \nabla J^{\tpi}(\theta_{t}; \pi_t), g_t \big\rangle
    +  \frac{L_{\tpi,\lambda} \gamma_t^2}{2} \Vert  g_t \Vert^2_{2}  \\
    &\quad
    + \bigl(V^{\tpi}_{\lambda}(\pi_t^+; \pi_t^+) - V^{\tpi}_{\lambda}(\pi_t^+; \pi_t)\bigr)
    + \bigl(V^{\tpi,\star}_{\lambda}(\pi_t) - V^{\tpi,\star}_{\lambda}(\pi_t^+)\bigr)\,.
  \end{split}
\end{equation}

By definition of $V^{\tpi}_{\lambda}$,
\[
  V^{\tpi}_{\lambda}(\pi_t^+; \pi_t^+) - V^{\tpi}_{\lambda}(\pi_t^+; \pi_t)
  = \cP(\pi_{t}^+ - \pi_t \succ \pi_{t}^+)\,.
\]
Moreover, the lower bound of Lemma~\ref{lem:br_value_smoothness_context} yields
\[
  V^{\tpi,\star}_{\lambda}(\pi_{t}^+) - V^{\tpi,\star}_{\lambda}(\pi_t)
  \geq
  \cP(\pi_{t}^+ - \pi_t \succ \nu_{t}^+)\,,
\]
where $\nu_{t}^+$ is a best response to $\pi_{t}^+$. Thus
\[
  V^{\tpi,\star}_{\lambda}(\pi_t) - V^{\tpi,\star}_{\lambda}(\pi_{t}^+)
  \leq
  -\cP(\pi_{t}^+ - \pi_t \succ \nu_{t}^+)\,.
\]
Substituting these into~\eqref{eq:descent_lemma_eq2} and using bilinearity
of $\cP$, we obtain
\begin{equation}\label{eq:descent_lemma_eq3}
    \subopt^{\tpi}_{\lambda}(\pi_{t}^+)
    \leq \subopt^{\tpi}_{\lambda}(\pi_t)
    - \gamma_t  \big\langle \nabla J^{\tpi}(\theta_{t}; \pi_t), g_t \big\rangle
    +  \frac{L_{\tpi,\lambda} \gamma_t^2}{2} \Vert  g_t \Vert^2_{2} 
    + \cP(\pi_{t}^+ - \pi_t \succ \pi_{t}^+ - \nu_{t}^{+})\,.
\end{equation}

\textbf{Step 3: Bounding the preference term.} Lemma~\ref{lem:lipschitz_preference_context} implies
\[
  \cP(\pi_{t}^+ - \pi_t \succ \pi_{t}^+ - \nu_{t}^{+})
  \leq \frac{1}{2}
  \norm{\pi_{t}^+ - \pi_t}_{1,\rho} \cdot
  \norm{\nu_{t}^{+} - \pi_{t}^+}_{1,\rho}\,.
\]
Assumption~\ref{ass:lipschitz_param} gives $\norm{\pi_{t}^+ - \pi_{t}}_{1,\rho} \leq G \norm{\theta_{t}^+-\theta_t}_{2}
 = G\gamma_t \norm{g_t}_{2}$.
By Pinsker's inequality and Lemma~\ref{lem:value_strong_convexity_context},
\[
  \norm{\nu_{t}^{+} - \pi_{t}^+}_{1,\rho}^2
  \leq 2\KL_{\rho}(\pi_{t}^+ \Vert \nu_{t}^{+} )
  \leq \frac{2}{\lambda} \cdot \subopt^{\tpi}_{\lambda}(\pi_{t}^+)\,,\quad \text{ so }\quad \norm{\nu_{t}^{+} - \pi_{t}^+}_{1,\rho}
  \le \sqrt{\frac{2}{\lambda}
             \,\subopt^{\tpi}_{\lambda}(\pi_{t}^+)}\,.
\]
Consequently,
\[
  \cP(\pi_{t}^+ - \pi_t \succ \pi_{t}^+ - \nu_{t}^{+})
  \leq \frac{G\gamma_t}{2} \norm{g_t}_{2} \cdot
      \sqrt{\frac{2}{\lambda} \subopt^{\tpi}_{\lambda}(\pi_{t}^+)}\,.
\]
As a result,~\eqref{eq:descent_lemma_eq3} becomes
\begin{equation}\label{eq:descent_lemma_eq4}
  \begin{split}
    \subopt^{\tpi}_{\lambda}(\pi_{t}^+)
    \leq\;
    &\subopt^{\tpi}_{\lambda}(\pi_t)
    -\gamma_t  \big\langle \nabla J^{\tpi}(\theta_{t}; \pi_t), g_t \big\rangle
    +  \frac{\gamma_t^2 \cdot L_{\tpi,\lambda}}{2} \Vert  g_t \Vert^2_{2} \\
    &\quad
    + \frac{G\gamma_t}{2} \norm{g_t}_{2} \cdot
      \sqrt{\frac{2}{\lambda} \subopt^{\tpi}_{\lambda}(\pi_{t}^+)}\,.
  \end{split}
\end{equation}

Let $S \triangleq \sqrt{\subopt^{\tpi}_{\lambda}(\pi_{t}^+)}$. Then
\eqref{eq:descent_lemma_eq4} can be written as
\[
  S^2
  - 2 S \sqrt{\frac{G^2\gamma_t^2}{8\lambda} \norm{g_t}_{2}^2}
  \le
  \subopt^{\tpi}_{\lambda}(\pi_t)
  -\gamma_t  \big\langle \nabla J^{\tpi}(\theta_{t}; \pi_t), g_t \big\rangle
  +  \frac{\gamma_t^2 \cdot L_{\tpi,\lambda}}{2} \Vert  g_t \Vert^2_{2}\,.
\]
Adding $\frac{G^2\gamma_t^2}{8\lambda} \norm{g_t}^2_{2}$ to both sides
and completing the square yields
\[
  \biggl( S - \sqrt{\frac{G^2\gamma_t^2}{8\lambda} \norm{g_t}^2_{2}}
  \biggr)^2
  \le
  \subopt^{\tpi}_{\lambda}(\pi_t)
  -\gamma_t  \big\langle \nabla J^{\tpi}(\theta_{t}; \pi_t), g_t \big\rangle
  +  \Bigl(L_{\tpi,\lambda}
          +\frac{G^2}{4\lambda}\Bigr)
     \frac{\gamma_t^2}{2} \Vert  g_t \Vert^2_{2}\,.
\]
Taking square roots, we get
\begin{align*}
  \sqrt{\subopt^{\tpi}_{\lambda}(\pi_t^+)}
  &\le
  \sqrt{\frac{\gamma_t^2 G^2}{8\lambda} \norm{g_t}^2_{2}}\\
  &\quad+ \sqrt{
      \subopt^{\tpi}_{\lambda}(\pi_t)
      -\gamma_t  \big\langle \nabla J^{\tpi}(\theta_{t}; \pi_t), g_t \big\rangle
      +  \Bigl(L_{\tpi,\lambda}
              +\frac{G^2}{4\lambda}\Bigr)
         \frac{\gamma_t^2}{2} \Vert  g_t \Vert^2_{2}
    }\,.
\end{align*}

By Assumption~\ref{ass:improvement}, $\subopt^{\tpi}_{\lambda}(\pi_{t+1}) \le \subopt^{\tpi}_{\lambda}(\pi_t^+)$, so
\begin{align*}
  \sqrt{\subopt^{\tpi}_{\lambda}(\pi_{t+1})}
  &\le
  \sqrt{\frac{\gamma_t^2 G^2}{8\lambda} \norm{g_t}^2_{2}}\\
  &\quad + \sqrt{
      \subopt^{\tpi}_{\lambda}(\pi_t)
      -\gamma_t  \big\langle \nabla J^{\tpi}(\theta_{t}; \pi_t), g_t \big\rangle
      +  \Bigl(L_{\tpi,\lambda}
              +\frac{G^2}{4 \lambda}\Bigr)
         \frac{\gamma_t^2}{2} \Vert  g_t \Vert^2_{2}
    }\,.
\end{align*}
\end{proof}

\begin{lemma}[Descent Lemma II]\label{lem:descent_lemma_II}
    
Assume Assumptions~\ref{ass:lipschitz_param}-\ref{ass:smoothness}-\ref{ass:pl}-\ref{ass:improvement} and  $ \lambda \pl_{\tpi,\lambda} \geq G^2$. For the iterates of policy-gradient self-play~\eqref{eq:approximate_self_play},
for any sequence of learning rates $(\gamma_t)_{t \geq 0}$ satisfying $\gamma_t \leq 1/(2 L_{\tpi,\lambda}),$ and for any $t \geq 0$, it holds that
\begin{align*}
  \subopt^{\tpi}_{\lambda}(\pi_{t+1})
  &\leq
  \Bigl(1-\frac{\gamma_t \pl_{\tpi,\lambda}}{2}\Bigr)
  \subopt^{\tpi}_{\lambda}(\pi_t)
  - \alpha_t \,\big\langle \nabla J^{\tpi}(\theta_t; \pi_t), \xi_t \big\rangle \\
  &\qquad+ \frac{3 L_{\tpi,\lambda}}{2} \cdot \gamma_t^2 (\Vert  \xi_t \Vert^2_{2} + \Vert b_t \Vert_2^2) + \frac{3 G^2}{4\lambda \, \pl_{\tpi,\lambda}} \cdot \gamma_t(\Vert  \xi_t \Vert^2_{2}
  + \Vert b_t \Vert_2^2) \\
  &\qquad+ \frac{3}{4} \cdot \gamma_t (\epspl + \norm{b_t}_2^2)\,,
\end{align*}
where $\xi_t \triangleq g_t - \E[g_t | \cF_{t}]$ and $b_t \triangleq \E[g_t | \cF_{t}] -  \nabla J^{\tpi}(\theta_t; \pi_t)$ are the gradient noise and bias respectively, $ \pl_{\tpi,\lambda}$ is defined in Assumption~\ref{ass:pl}, and
\[
  \alpha_t \triangleq \frac{1-\gamma_t \pl_{\tpi,\lambda}/2}{1-\gamma_t \pl_{\tpi,\lambda}} \cdot
    \gamma_t \cdot \Bigl(
      1 - \gamma_t L_{\tpi,\lambda}
        - \frac{G^2}{2\lambda \pl_{\tpi,\lambda}}
    \Bigr)\,.
\]
\end{lemma}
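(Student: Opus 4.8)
The plan is to convert the square-root recursion of Descent Lemma~\ref{lem:descent_lemma_I} into a standard PL-type contraction for $\subopt^{\tpi}_{\lambda}(\pi_t)$ by squaring and then injecting the approximate PL inequality. Throughout write $\nabla J_t \triangleq \nabla J^{\tpi}(\theta_t;\pi_t)$, so that $g_t = \nabla J_t + b_t + \xi_t$, and recall the shorthand $\tilde L \triangleq L_{\tpi,\lambda} + G^2/(4\lambda)$. Descent Lemma~\ref{lem:descent_lemma_I} has the form $\sqrt{\subopt^{\tpi}_{\lambda}(\pi_{t+1})} \le \sqrt{P_t} + \sqrt{Q_t}$ with $P_t = \tfrac{\gamma_t^2 G^2}{8\lambda}\|g_t\|_2^2$ and $Q_t = \subopt^{\tpi}_{\lambda}(\pi_t) - \gamma_t\langle \nabla J_t, g_t\rangle + \tfrac{\gamma_t^2\tilde L}{2}\|g_t\|_2^2$.

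First I would square this bound via $(a+b)^2 \le (1+c)a^2 + (1+1/c)b^2$ with the specific choice $c = \tfrac{\gamma_t\pl_{\tpi,\lambda}/2}{1-\gamma_t\pl_{\tpi,\lambda}}$, so that $1+c = \tfrac{1-\gamma_t\pl_{\tpi,\lambda}/2}{1-\gamma_t\pl_{\tpi,\lambda}}$ — exactly the prefactor appearing in $\alpha_t$ — and $1+1/c = \tfrac{2-\gamma_t\pl_{\tpi,\lambda}}{\gamma_t\pl_{\tpi,\lambda}} \le \tfrac{2}{\gamma_t\pl_{\tpi,\lambda}}$. This produces
\[
\subopt^{\tpi}_{\lambda}(\pi_{t+1}) \le (1+c)\Bigl(\subopt^{\tpi}_{\lambda}(\pi_t) - \gamma_t\langle \nabla J_t, g_t\rangle + \tfrac{\gamma_t^2\tilde L}{2}\|g_t\|_2^2\Bigr) + (1+1/c)\tfrac{\gamma_t^2 G^2}{8\lambda}\|g_t\|_2^2.
\]
The crucial quantitative effect is that the moving-best-response term $P_t$ is amplified by $1/c \approx 2/(\gamma_t\pl_{\tpi,\lambda})$: bounding $(1+1/c)\tfrac{\gamma_t^2 G^2}{8\lambda} \le \tfrac{\gamma_t G^2}{4\lambda\pl_{\tpi,\lambda}}$ turns its $G^2/\lambda$ weight into $G^2/(\lambda\pl_{\tpi,\lambda})$, which both produces the second stated noise group and forces the regularization condition $\lambda\pl_{\tpi,\lambda}\ge G^2$.

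Next I would substitute $g_t = \nabla J_t + b_t + \xi_t$. The linear term contributes $-(1+c)\gamma_t\bigl(\|\nabla J_t\|_2^2 + \langle\nabla J_t, b_t\rangle + \langle\nabla J_t,\xi_t\rangle\bigr)$, while expanding the two $\|g_t\|_2^2$ terms yields matching positive cross contributions proportional to $\langle\nabla J_t,\xi_t\rangle$. Collecting these gives a martingale term $-\alpha_t\langle\nabla J_t,\xi_t\rangle$ whose net coefficient takes the stated form $\alpha_t = (1+c)\gamma_t\bigl(1 - \gamma_t L_{\tpi,\lambda} - \tfrac{G^2}{2\lambda\pl_{\tpi,\lambda}}\bigr)$; the two hypotheses are exactly what make $\alpha_t \ge 0$, since $\gamma_t \le 1/(2L_{\tpi,\lambda})$ gives $\gamma_t L_{\tpi,\lambda}\le \tfrac12$ and $\lambda\pl_{\tpi,\lambda}\ge G^2$ gives $\tfrac{G^2}{2\lambda\pl_{\tpi,\lambda}}\le \tfrac12$. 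The remaining squared contributions I would bound by $\|g_t\|_2^2 \le 3\bigl(\|\nabla J_t\|_2^2 + \|b_t\|_2^2 + \|\xi_t\|_2^2\bigr)$ — the source of the factor $3$ — together with a Young bound on the bias cross term $\langle\nabla J_t, b_t\rangle$; these distribute into the three stated buckets $\tfrac{3L_{\tpi,\lambda}}{2}\gamma_t^2(\|\xi_t\|_2^2+\|b_t\|_2^2)$, $\tfrac{3G^2}{4\lambda\pl_{\tpi,\lambda}}\gamma_t(\|\xi_t\|_2^2+\|b_t\|_2^2)$, with the PL slack and the bias Young term combining into $\tfrac34\gamma_t(\epspl+\|b_t\|_2^2)$.

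Finally I would close the contraction with Assumption~\ref{ass:pl}. After the previous step the net coefficient of $\|\nabla J_t\|_2^2$ is negative — again because $\gamma_t\le 1/(2L_{\tpi,\lambda})$ and $\lambda\pl_{\tpi,\lambda}\ge G^2$ keep the positive smoothness and (amplified) moving-best-response contributions dominated — leaving a reserve of $-\tfrac{\gamma_t}{2}(1+c)\|\nabla J_t\|_2^2$. Substituting $\|\nabla J_t\|_2^2 \ge 2\pl_{\tpi,\lambda}\subopt^{\tpi}_{\lambda}(\pi_t) - \epspl$ turns the leading coefficient $(1+c)$ into $(1+c)(1-\gamma_t\pl_{\tpi,\lambda}) = 1-\gamma_t\pl_{\tpi,\lambda}/2$ — which is precisely why $c$ was chosen as above — and emits the $\epspl$ slack. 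The main obstacle is the bookkeeping of this $\|\nabla J_t\|_2^2$ budget: the $1/c$ amplification promotes the moving-best-response penalty from order $\gamma_t^2 G^2/\lambda$ to order $\gamma_t G^2/(\lambda\pl_{\tpi,\lambda})$, and one must check that the reserved PL budget and the martingale coefficient $\alpha_t$ remain sign-correct simultaneously, which is exactly where $\lambda\pl_{\tpi,\lambda}\ge G^2$ enters in an essential rather than cosmetic way.
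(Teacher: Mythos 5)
Your proposal follows the paper's proof essentially verbatim: the same Young parameter $c=A_t=\tfrac{\gamma_t\pl_{\tpi,\lambda}/2}{1-\gamma_t\pl_{\tpi,\lambda}}$ when squaring Descent Lemma~\ref{lem:descent_lemma_I}, the same amplification of the moving-best-response term via $1+1/c\le 2/(\gamma_t\pl_{\tpi,\lambda})$ (which is exactly where $\lambda\pl_{\tpi,\lambda}\ge G^2$ enters), the same expansion $g_t=\nabla J^{\tpi}(\theta_t;\pi_t)+b_t+\xi_t$ producing the stated $\alpha_t$, the same Young bound on the bias cross term, and the same PL injection with $(1+c)(1-\gamma_t\pl_{\tpi,\lambda})=1-\gamma_t\pl_{\tpi,\lambda}/2$. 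One cosmetic slip in your bookkeeping narrative: the factor $3$ does not come from $\|g_t\|_2^2\le 3(\|\nabla J^{\tpi}(\theta_t;\pi_t)\|_2^2+\|b_t\|_2^2+\|\xi_t\|_2^2)$ (that crude bound would erase the cross terms you need to form $\alpha_t$), but from $(1+c)\le 3/2$ --- valid since $\gamma_t\le 1/(2L_{\tpi,\lambda})$ implies $\gamma_t\pl_{\tpi,\lambda}\le 1/2$ --- combined with $\|\xi_t+b_t\|_2^2\le 2(\|\xi_t\|_2^2+\|b_t\|_2^2)$.
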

\begin{remark}
    In this Lemma, we have introduced an additional condition on $\lambda$ to be a large enough constant. Although hypothetically it might be possible that $\pl_{\tpi,\lambda}$ decreasing with a value of $\lambda$, in any known examples $\pl_{\tpi,\lambda}$ non-decreases with an increase of $\lambda$, making an inequality $\lambda \pl_{\tpi,\lambda} \geq 0$ solvable. We refer to Proposition~\ref{prop:softmax_verification} and Proposition~\ref{prop:fisher_verification} for examples.
\end{remark}

\begin{proof}

Starting from Lemma~\ref{lem:descent_lemma_I}, one can take squares and apply an inequality
$(a+b)^2 \le (1+1/A_t)a^2 + (1+A_t)b^2$ for any $A_t>0$:
\begin{align}
  &\subopt^{\tpi}_{\lambda}(\pi_{t+1})
  \le
  \frac{1+A_t}{A_t}\cdot
  \frac{\gamma_t^2 G^2}{8\lambda} \norm{g_t}^2_{2} \label{eq:descent_lemma_pre_noise}
  \\
  &\qquad + (1+A_t)\biggl(
      \subopt^{\tpi}_{\lambda}(\pi_t)
      -\gamma_t  \big\langle \nabla J^{\tpi}(\theta_{t}; \pi_t), g_t \big\rangle
      +  \Bigl(L_{\tpi,\lambda}
              +\frac{G^2}{4\lambda}\Bigr)
         \frac{\gamma_t^2}{2} \Vert  g_t \Vert^2_{2}
    \biggr)\notag \\
  &=
  (1+A_t)\biggl(
    \subopt^{\tpi}_{\lambda}(\pi_t)
    -\gamma_t  \big\langle \nabla J^{\tpi}(\theta_{t}; \pi_t), g_t \big\rangle 
    + \Bigl(L_{\tpi,\lambda}
            +\frac{G^2}{4\lambda}\left( 1+\frac{1}{A_t}\right)\Bigr)
      \frac{\gamma_t^2}{2} \Vert  g_t \Vert^2_{2}
  \biggr)\,. \notag
\end{align}

To simplify expressions, we also upper bound $1+1/A_t$ by $2+1/A_t$ in the last term. The reasons for this loose bound will be more evident lately.

Define the gradient noise $\xi_t \triangleq g_t - \E[g_t | \cF_{t}]$ and bias $b_t \triangleq \E[g_t | \cF_{t}] -  \nabla J^{\tpi}(\theta_t; \pi_t)$.
Then
\[
  \norm{g_t}_{2}^2 = \norm{\nabla J^{\tpi}(\theta_t; \pi_t) + b_t + \xi_t}_{2}^2 \leq \norm{\nabla J^{\tpi}(\theta_t; \pi_t)}^2_{2}
    + 2\norm{\xi_t}_{2}^2 + 2 \norm{b_t}_2^2
    + 2 \big\langle \nabla J^{\tpi}(\theta_t; \pi_t),\xi_t + b_t \big\rangle\,.
\]
Substituting this into~\eqref{eq:descent_lemma_pre_noise} yields
\begin{align*}
  \subopt^{\tpi}_{\lambda}(\pi_{t+1})
  &\leq
  (1+A_t)\Bigl(
    \subopt^{\tpi}_{\lambda}(\pi_t)
    - \gamma_t \bigl(
        1 - \tfrac{\gamma_t}{2}(
              L_{\tpi,\lambda}
              +\tfrac{G^2}{4\lambda} \cdot \tfrac{2A_t+1}{A_t})
      \bigr)
      \big\|\nabla J^{\tpi}(\theta_t; \pi_t)\big\|_2^2 \\
  &\hspace{4em}
    - \gamma_t \bigl(
        1 - \gamma_t(
              L_{\tpi,\lambda}
              +\tfrac{G^2}{4\lambda}  \cdot \tfrac{2A_t+1}{A_t}
      \bigr)
      \big\langle \nabla J^{\tpi}(\theta_t; \pi_t), b_t \big\rangle \\
    &\hspace{4em}
    - \gamma_t \bigl(
        1 - \gamma_t(
              L_{\tpi,\lambda}
              +\tfrac{G^2}{4\lambda}
              \cdot \tfrac{2A_t+1}{A_t}
      \bigr)
      \big\langle \nabla J^{\tpi}(\theta_t; \pi_t), \xi_t \big\rangle \\
  &\hspace{4em}
    + \Bigl(
        L_{\tpi,\lambda}
        +\tfrac{G^2}{4\lambda} \cdot \tfrac{2A_t+1}{A_t}\Bigr)
      \gamma_t^2 (\Vert  \xi_t \Vert^2_{2} + \norm{b_t}_2^2)
  \Bigr)\,.
\end{align*}
Now, we set
\[
  A_t
  = \frac{\gamma_t \pl_{\tpi,\lambda}}
         {2 (1 - \gamma_t \pl_{\tpi,\lambda})}\,,
\]
where $\pl_{\tpi,\lambda}$ is defined in Assumption~\ref{ass:pl}. Note that $A_t$ is positive since $\gamma_t \le 1/(2L_{\tpi,\lambda}) \leq 1/(2\pl_{\tpi,\lambda}) $ thanks to Assumption~\ref{ass:smoothness}, Assumption~\ref{ass:pl}.
One checks that
\[
  1+A_t
  = \frac{1-\gamma_t \pl_{\tpi,\lambda}/2}{1-\gamma_t \pl_{\tpi,\lambda}},
  \qquad
  \frac{1 + 2A_t}{4\lambda A_t}
  = \frac{1-\gamma_t \pl_{\tpi,\lambda}}{2\lambda \gamma_t \pl_{\tpi,\lambda}} + \frac{1}{2\lambda} = \frac{1}{2\lambda \gamma_t \pl_{\tpi,\lambda}}\,.
\]
Simplifying the constants, we obtain
\begin{align*}
  \begin{split}
  \subopt^{\tpi}_{\lambda}(\pi_{t+1})
  &\leq
  \frac{1-\gamma_t \pl_{\tpi,\lambda}/2}{1-\gamma_t \pl_{\tpi,\lambda}}
  \Bigl(
    \subopt^{\tpi}_{\lambda}(\pi_t)
    \\
    &\qquad\qquad\qquad\qquad- \gamma_t \Bigl(
        1 - \frac{\gamma_t L_{\tpi,\lambda}}{2}
        - \frac{G^2}{4\lambda\pl_{\tpi,\lambda}}
      \Bigr)
      \big\|\nabla J^{\tpi}(\theta_t; \pi_t)\big\|_2^2  \\
    &\qquad\qquad\qquad\qquad- \gamma_t \Bigl(
      1 - \gamma_t  L_{\tpi,\lambda} 
        - \frac{G^2}{2\lambda \pl_{\tpi,\lambda}}
    \Bigr) \big\langle \nabla J^{\tpi}(\theta_t; \pi_t), b_t \big\rangle
  \Bigr)
  \end{split} & \triangleq \termA
  \\
  &\quad
  - \frac{1-\gamma_t \pl_{\tpi,\lambda}/2}{1-\gamma_t \pl_{\tpi,\lambda}}
    \gamma_t \Bigl(
      1 - \gamma_tL_{\tpi,\lambda} 
        - \frac{G^2}{2\lambda \pl_{\tpi,\lambda}}
    \Bigr)
    \big\langle \nabla J^{\tpi}(\theta_t; \pi_t), \xi_t \big\rangle &  \triangleq \termB \\
  &\quad
  + \frac{1-\gamma_t \pl_{\tpi,\lambda}/2}{1-\gamma_t \pl_{\tpi,\lambda}}
    \Bigl(
        L_{\tpi,\lambda} 
      + \frac{G^2}{2\lambda \gamma_t \pl_{\tpi,\lambda}}
    \Bigr)
    \frac{\gamma_t^2}{2}\Vert  \xi_t + b_t \Vert^2_{2}\,. & \triangleq \termC
\end{align*}

Next, we verify that under our assumptions, the first (deterministic) term contracts. First, we verify that $1 - \gamma_t \cdot  L_{\tpi,\lambda} - G^2/(2\lambda\pl_{\tpi,\lambda}) \geq 0$. Indeed, since $\gamma_t \leq 1/(2L_{\tpi,\lambda})$ and $\lambda \pl_{\tpi,\lambda} \geq G^2$, we have
\[
    1 - \gamma_t \cdot L_{\tpi,\lambda} - \frac{G^2}{2\lambda\pl_{\tpi,\lambda}} \geq 1 - \frac{1}{2} - \frac{1}{2} \geq 0\,.
\]
Thus, the coefficient in front of $\big\langle \nabla J^{\tpi}(\theta_t; \pi_t), b_t \big\rangle$ is non-negative. By Cauchy--Schwarz and an inequality $ab \leq a^2/2 + b^2/2$, we have
\begin{align*}
  - \gamma_t \Bigl(1 - \gamma_t \cdot L_{\tpi,\lambda} - \tfrac{G^2}{2\lambda\pl_{\tpi,\lambda}} \Bigr) \big\langle \nabla J^{\tpi}(\theta_t; \pi_t), b_t \big\rangle
  &\leq
  \gamma_t \Bigl(\tfrac{1}{2} - \tfrac{\gamma_t \cdot L_{\tpi,\lambda}}{2} - \tfrac{G^2}{4\lambda\pl_{\tpi,\lambda}} \Bigr)  \big\|\nabla J^{\tpi}(\theta_t; \pi_t)\big\|_2^2
  \\
  &\qquad\qquad\qquad+ \frac{\gamma_t}{2} \|b_t\|_2^2\,,
\end{align*}
and substituting this into $\termA$ yields
\[  
  \termA \leq \frac{1-\gamma_t \pl_{\tpi,\lambda}/2}{1-\gamma_t \pl_{\tpi,\lambda}} \Bigl( \subopt^{\tpi}_{\lambda}(\pi_t)
    - \frac{\gamma_t}{2}
      \big\|\nabla J^{\tpi}(\theta_t; \pi_t)\big\|_2^2 + \frac{\gamma_t}{2} \|b_t\|_2^2
    \Bigr)\,.
\]
Thus, the coefficient in front of $\big\|\nabla J^{\tpi}(\theta_t; \pi_t)\big\|_2^2$ is positive. By Assumption~\ref{ass:pl},
\[
  \big\|\nabla J^{\tpi}(\theta_t;\pi_t)\big\|_2^2 \geq 2\pl_{\tpi,\lambda}\, \subopt^{\tpi}_\lambda(\pi_t) - \epspl\,,
\]
therefore $\termA$ can be bounded as
\[
  \termA \leq (1- \gamma_t \pl_{\tpi,\lambda}/2)\subopt^{\tpi}_{\lambda}(\pi_t) + \frac{1-\gamma_t \pl_{\tpi,\lambda}/2}{1-\gamma_t \pl_{\tpi,\lambda}} \cdot \left( \frac{\gamma_t}{2} \epspl +  \frac{\gamma_t}{2} \|b_t\|_2^2 \right)\,.
\]
To bound the last term, we use the fact that since $L_{\tpi,\lambda} \geq \pl_{\tpi,\lambda}$, we have $\gamma_t \pl_{\tpi,\lambda} \leq 1/2$ and thus $(1-\gamma_t \pl_{\tpi,\lambda}/2) \leq 3/2 \cdot(1-\gamma_t \pl_{\tpi,\lambda})$.

For $\termB$, since the sign of $\langle \nabla J^{\tpi}(\theta_t; \pi_t), \xi_t\rangle$ can be arbitrary, we cannot provide a clear upper bound on this term and we only define a coefficient in front of it as $\alpha_t$:
\[
  \termB = -\alpha_t \cdot \langle \nabla J^{\tpi}(\theta_t; \pi_t), \xi_t \rangle
\]
For $\termC$, we also use the bound  $(1-\gamma_t \pl_{\tpi,\lambda}/2) \leq 3/2 \cdot(1-\gamma_t \pl_{\tpi,\lambda})$, thus
\[
\termC \leq \frac{3 L_{\tpi,\lambda}}{2} \cdot \gamma_t^2 (\norm{\xi_t}_2^2 + \norm{b_t}_2^2) + \frac{3G^2}{4 \lambda \pl_{\tpi,\lambda}} \cdot \gamma_t  (\norm{\xi_t}_2^2 + \norm{b_t}_2^2) \,.
\]
Combining all the bounds, we conclude the statement.
\end{proof}

\begin{remark}[On the necessity of the $\gamma_t\|\xi_t\|_2^2$ term]\label{rem:necessity_gamma_xi_sq}
Lemma~\ref{lem:descent_lemma_II} contains a noise contribution of order
$\gamma_t\|\xi_t\|_2^2$, whereas in the usual PL analysis of SGD on a \emph{fixed} smooth objective
one typically pays only $\cO(\gamma_t^2\|\xi_t\|_2^2)$.
This linear-in-$\gamma_t$ variance penalty is intrinsic to our setting.

The key difference is that our progress measure
\[
  \subopt^{\tpi}_\lambda(\pi_t)
  \;=\;
  V^{\tpi}_{\lambda}(\pi_t;\pi_t) - V^{\tpi,\star}_{\lambda}(\pi_t)
\]
uses the best-response value $V^{\tpi,\star}_{\lambda}(\pi_t)$ as a \emph{moving baseline}.
After each update, the opponent changes from $\pi_t$ to $\pi_{t+1}$, so $V^{\tpi,\star}_{\lambda}(\pi_t)$ drifts.
Controlling this drift is exactly what produces the extra square-root term in Lemma~\ref{lem:descent_lemma_I}; see \eqref{eq:descent_lemma_eq4}, which contains (up to constants)
\[
  \gamma_t \,\|g_t\|_2\,\sqrt{\subopt^{\tpi}_\lambda(\pi_t^+)}.
\]

With stochastic gradients $g_t=\nabla J^{\tpi}(\theta_t;\pi_t)+\xi_t$, this term injects noise at the level $s_t\triangleq \sqrt{\subopt^{\tpi}_\lambda(\pi_t)}$ as an additive perturbation $\Theta(\gamma_t\|\xi_t\|_2)$.
Squaring to return to $\subopt_t=s_t^2$ produces a cross term of size $\Theta(\gamma_t\,s_t\|\xi_t\|_2)$.
To close a recursion in $\subopt_t$, this cross term must be absorbed into the deterministic PL decrease, whose scale is only $\Theta(\gamma_t\pl_{\tpi,\lambda})\,s_t^2$.
Applying Young's inequality therefore leaves an unavoidable remainder of order $\Theta(\gamma_t/\pl_{\tpi,\lambda})\|\xi_t\|_2^2$ (with constants also involving $1/\lambda$ through the KL-based control of the baseline drift), which explains the $\gamma_t\|\xi_t\|_2^2$ term in Lemma~\ref{lem:descent_lemma_II}.

In particular, this term vanishes when $\xi_t\equiv 0$, but in the stochastic case it becomes the dominant noise contribution unless $\E\|\xi_t\|_2^2$ decays (e.g., via growing mini-batches as in Assumption~\ref{ass:gradient_noise} or via variance reduction).
\end{remark}

\begin{proposition}[Deterministic rates]\label{prop:spg_selfplay_deterministic}
Assume Assumptions~\ref{ass:lipschitz_param}-\ref{ass:smoothness}-\ref{ass:pl}-\ref{ass:improvement} and  $\lambda \pl_{\tpi,\lambda} \geq G^2$. For the iterates of policy-gradient self-play~\eqref{eq:approximate_self_play} with exact gradients (i.e., $g_t = \nabla J^{\tpi}(\theta_t; \pi_t)$), let a sequence $(\gamma_t)_{t \geq 0}$ be a constant such that $\gamma_t \equiv \gamma \leq 1/(2 L_{\tpi,\lambda})$
and for any $t \geq 0$, it holds that
\begin{align*}
  \subopt^{\tpi}_{\lambda}(\pi_{t})
  &\leq
  \Bigl(1- \frac{\gamma \pl_{\tpi,\lambda}}{2}\Bigr)^t
  \subopt^{\tpi}_{\lambda}(\pi_0) + \frac{3 \cdot \epspl}{2 \pl_{\tpi,\lambda}}\,,
\end{align*}
where $\pl_{\tpi,\lambda}$ is defined in Assumption~\ref{ass:pl}.
\end{proposition}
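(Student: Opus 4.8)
The plan is to specialize Descent Lemma II (Lemma~\ref{lem:descent_lemma_II}) to the exact-gradient setting and then unroll the resulting one-step contraction. First I would observe that with exact gradients $g_t = \nabla J^{\tpi}(\theta_t; \pi_t)$ both the noise $\xi_t = g_t - \E[g_t \mid \cF_t]$ and the bias $b_t = \E[g_t \mid \cF_t] - \nabla J^{\tpi}(\theta_t; \pi_t)$ vanish identically. Consequently every term in the conclusion of Descent Lemma II carrying a factor of $\xi_t$ or $b_t$ drops out: the stochastic inner-product term $-\alpha_t\langle\nabla J^{\tpi}(\theta_t;\pi_t),\xi_t\rangle$, the two variance contributions $\tfrac{3 L_{\tpi,\lambda}}{2}\gamma_t^2(\|\xi_t\|_2^2 + \|b_t\|_2^2)$ and $\tfrac{3 G^2}{4\lambda\pl_{\tpi,\lambda}}\gamma_t(\|\xi_t\|_2^2 + \|b_t\|_2^2)$, and the $\|b_t\|_2^2$ piece of the final term. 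The hypotheses of Lemma~\ref{lem:descent_lemma_II} hold directly, since $\lambda\pl_{\tpi,\lambda}\ge G^2$ is assumed and the constant step size satisfies $\gamma\le 1/(2L_{\tpi,\lambda})$.

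This leaves the clean affine recursion
\[
  \subopt^{\tpi}_{\lambda}(\pi_{t+1}) \le \Bigl(1 - \frac{\gamma\pl_{\tpi,\lambda}}{2}\Bigr)\subopt^{\tpi}_{\lambda}(\pi_t) + \frac{3}{4}\,\gamma\,\epspl\,.
\]
The second step is to unroll this recursion with the constant contraction factor $q \triangleq 1 - \gamma\pl_{\tpi,\lambda}/2 \in (0,1)$; positivity follows from $\gamma\pl_{\tpi,\lambda}\le 1/2$, which in turn uses $\pl_{\tpi,\lambda}\le L_{\tpi,\lambda}$ (Assumption~\ref{ass:pl}) and $\gamma\le 1/(2L_{\tpi,\lambda})$. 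Iterating gives
\[
  \subopt^{\tpi}_{\lambda}(\pi_t) \le q^t\,\subopt^{\tpi}_{\lambda}(\pi_0) + \frac{3}{4}\,\gamma\,\epspl\sum_{j=0}^{t-1} q^{j}\,,
\]
and bounding the geometric sum by its infinite-series value $\sum_{j\ge 0} q^j = (1-q)^{-1} = 2/(\gamma\pl_{\tpi,\lambda})$ collapses the additive term to $\tfrac{3}{4}\gamma\epspl \cdot \tfrac{2}{\gamma\pl_{\tpi,\lambda}} = \tfrac{3\epspl}{2\pl_{\tpi,\lambda}}$, which is exactly the claimed floor.

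There is no genuine obstacle here: the substantive work—controlling the moving best-response baseline, which produces the extra $G^2/(\lambda\pl_{\tpi,\lambda})$ factors and forces the condition $\lambda\pl_{\tpi,\lambda}\ge G^2$—has already been absorbed into Descent Lemma II. The only points requiring care are bookkeeping ones: confirming that the deterministic specialization nullifies precisely the right terms, and checking that the $\gamma$-dependence cancels in the additive floor, so that the residual depends only on the ratio $\epspl/\pl_{\tpi,\lambda}$ and is independent of the step size. I would close by noting that the floor vanishes as $\epspl \to 0$, recovering exact linear convergence to the von Neumann winner of the $\lambda$-regularized subgame.
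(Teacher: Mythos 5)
Your proof is correct and takes essentially the same route as the paper: the paper's (very terse) proof likewise specializes Lemma~\ref{lem:descent_lemma_II} with $\xi_t \equiv b_t \equiv 0$, which leaves the recursion $\subopt^{\tpi}_{\lambda}(\pi_{t+1}) \le \bigl(1-\tfrac{\gamma \pl_{\tpi,\lambda}}{2}\bigr)\subopt^{\tpi}_{\lambda}(\pi_t) + \tfrac{3}{4}\gamma\epspl$, and unrolls it with the geometric sum bounded by $2/(\gamma\pl_{\tpi,\lambda})$, so the step size cancels in the additive floor exactly as you note. Your write-up just makes explicit the bookkeeping (which terms vanish, why $1-\gamma\pl_{\tpi,\lambda}/2 \in (0,1)$ via $\pl_{\tpi,\lambda}\le L_{\tpi,\lambda}$) that the paper leaves implicit.
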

\begin{proof}
  Directly follows from Lemma~\ref{lem:descent_lemma_II} since in this case $\xi_t \equiv b_t \equiv 0$, combined with the fact that $L_{\tpi,\lambda} \geq \pl_{\tpi, \lambda}$ and, consequently, $\gamma \pl_{\tpi,\lambda}/2 < 1$ due to the choice of a step-size.
\end{proof}

\begin{proposition}[Stochastic rates]\label{prop:spg_selfplay_stochastic}
    Assume Assumptions~\ref{ass:lipschitz_param}-\ref{ass:smoothness}-\ref{ass:pl}-\ref{ass:improvement}, Assumption~\ref{ass:gradient_noise} and $ \lambda \pl_{\tpi,\lambda} \geq G^2$. Define $\kappa_{\tpi,\lambda} = \frac{L_{\tpi,\lambda}}{\pl_{\tpi,\lambda}} \geq 1$.  Define sequences $(\gamma_t)_{t\geq 0}$ and $(B_t)_{t \geq 0}$ as follows
    \[   
        \gamma_t = \frac{4t + 32\kappa_{\tpi,\lambda} - 2}{\pl_{\tpi,\lambda}(t + 8\kappa_{\tpi,\lambda})^2}  = \Theta\left( \frac{1}{\pl_{\tpi,\lambda} t} \right)\,, \qquad B_t =  \left\lceil \frac{t + 8 \kappa_{\tpi,\lambda}}{\pl_{\tpi,\lambda}} \right\rceil = \Theta\left( \frac{t}{\pl_{\tpi,\lambda}} \right)\,.
    \]
    
    Let $\delta \in (0,1)$. Then, for the iterates of policy-gradient self-play~\eqref{eq:approximate_self_play}, for any $t \geq 0$, with probability at least $1-\delta$ it holds that
    \begin{align*}
        \subopt^{\tpi}_\lambda(\pi_t) &\leq \frac{64 \cdot \kappa^2_{\tpi,\lambda} \log(\rme/\delta)}{(t + 8\kappa_{\tpi,\lambda} - 1)^2}\subopt^{\tpi}_\lambda(\pi_0) \\
        &\qquad + \frac{ 24 \cdot  \kappa_{\tpi,\lambda} \log(\rme/\delta) \cdot \log(1+t/(2\kappa_{\tpi,\lambda}))}{(t + 8 \kappa_{\tpi,\lambda} - 1)^2}\left( \frac{675 \cdot \sigma^2_{\tpi,\lambda}}{49} + 2 v^2_{\tpi,\lambda} \right) \\
        &\qquad + \frac{6G^2\cdot v^2_{\tpi,\lambda} \log(\rme/\delta)}{\lambda  \pl_{\tpi,\lambda} \cdot (t + 8 \kappa_{\tpi,\lambda} - 1)} + \frac{6\log(\rme/\delta)}{\pl_{\tpi,\lambda}} \cdot \left(\epspl +  15/7 \cdot \varepsilon_{\grad}^2\right) \,.
    \end{align*}
\end{proposition}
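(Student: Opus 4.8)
The plan is to convert the one-step estimate of Lemma~\ref{lem:descent_lemma_II} into a high-probability bound by combining (i) the exact telescoping induced by the prescribed step size with (ii) an exponential-supermartingale argument controlling the two distinct noise channels, in the spirit of \citet{madden2024high}. Writing $\subopt_t \triangleq \subopt^{\tpi}_\lambda(\pi_t)$ and $a_t \triangleq 1 - \gamma_t \pl_{\tpi,\lambda}/2$, Lemma~\ref{lem:descent_lemma_II} reads
\[
  \subopt_{t+1} \le a_t\,\subopt_t - \alpha_t \langle \nabla J^{\tpi}(\theta_t;\pi_t), \xi_t\rangle + c_1\gamma_t^2\big(\|\xi_t\|_2^2 + \|b_t\|_2^2\big) + c_2\gamma_t\big(\|\xi_t\|_2^2+\|b_t\|_2^2\big) + c_3\gamma_t\big(\epspl + \|b_t\|_2^2\big),
\]
with $c_1 = 3L_{\tpi,\lambda}/2$, $c_2 = 3G^2/(4\lambda\pl_{\tpi,\lambda})$, $c_3 = 3/4$. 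With the chosen $\gamma_t$ one verifies the algebraic identity $a_t = (t+8\kappa_{\tpi,\lambda}-1)^2/(t+8\kappa_{\tpi,\lambda})^2$, so the integrating factor $w_t \triangleq \prod_{s<t}a_s^{-1} = (t+8\kappa_{\tpi,\lambda}-1)^2/(8\kappa_{\tpi,\lambda}-1)^2$ telescopes exactly and satisfies $w_{t+1}a_t = w_t$. Multiplying the recursion by $w_{t+1}$ and summing yields the clean decomposition
\[
  \subopt_T \le \frac{\subopt_0}{w_T} - \frac{1}{w_T}\sum_{t<T} w_{t+1}\alpha_t \langle \nabla J^{\tpi}(\theta_t;\pi_t), \xi_t\rangle + \frac{1}{w_T}\sum_{t<T}w_{t+1}\big(c_1\gamma_t^2 + c_2\gamma_t\big)\|\xi_t\|_2^2 + (\text{bias}),
\]
where the bias contributions are bounded deterministically through $\|b_t\|_2\le\varepsilon_{\grad}$ and aggregate into the $\tfrac{\log(\rme/\delta)}{\pl_{\tpi,\lambda}}\big(\epspl + \tfrac{15}{7}\varepsilon_{\grad}^2\big)$ floor.

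The deterministic prefactor is immediate: $1/w_T = (8\kappa_{\tpi,\lambda}-1)^2/(T+8\kappa_{\tpi,\lambda}-1)^2 \le 64\kappa^2_{\tpi,\lambda}/(T+8\kappa_{\tpi,\lambda}-1)^2$, matching the first term. The two stochastic sums are the heart of the argument. The martingale increments are conditionally sub-Gaussian by Assumption~\ref{ass:gradient_noise}(ii) with $u = \nabla J^{\tpi}(\theta_t;\pi_t)$, hence have variance proxy $\sigma^2_{\tpi,\lambda}\|\nabla J^{\tpi}(\theta_t;\pi_t)\|_2^2/B_t$, which smoothness (Assumption~\ref{ass:smoothness}) bounds by $2L_{\tpi,\lambda}\sigma^2_{\tpi,\lambda}\subopt_t/B_t$; the squared-norm terms are conditionally sub-exponential by Assumption~\ref{ass:gradient_noise}(iii) with parameter $v^2_{\tpi,\lambda}/B_t$. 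The crucial \emph{self-referential} feature is that the martingale variance is proportional to the current, random suboptimality $\subopt_t$: this is the analytic manifestation of the moving best-response baseline, and it is exactly what forces the growing batch schedule $B_t = \Theta(t/\pl_{\tpi,\lambda})$, so that the per-step noise decays fast enough to be summed against the growing weights $w_{t+1}$.

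To produce the high-probability statement I would build an exponential supermartingale $Z_t = \exp\!\big(s\,w_t\subopt_t - D_t\big)$ for a suitably small $s>0$ and a predictable drift correction $D_t$. Using the weighted recursion and taking the conditional MGF, the sub-Gaussian bound handles the linear term and the sub-exponential bound the quadratic term; the self-bounding variance contribution $\propto s^2 w_{t+1}^2\alpha_t^2 L_{\tpi,\lambda}\sigma^2_{\tpi,\lambda}\subopt_t/B_t$ is linear in $s\,w_t\subopt_t$ and can be \emph{reabsorbed} into the contraction $w_{t+1}a_t = w_t$. This is precisely where the exact constants in $\gamma_t,B_t$ and the hypothesis $\lambda\pl_{\tpi,\lambda}\ge G^2$ enter to guarantee $\E[Z_{t+1}\mid\cF_t]\le Z_t$. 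Then $\E[Z_T]\le Z_0$, and Markov's inequality gives, with probability at least $1-\delta$, a bound in which every term carries a multiplicative $\log(\rme/\delta)$. Collecting the pieces reproduces the three noise terms: the sub-exponential sum $\sum_t w_{t+1}\gamma_t^2 v^2_{\tpi,\lambda}/B_t$ generates the $\cO\!\big(\kappa_{\tpi,\lambda}\log(1+t/(2\kappa_{\tpi,\lambda}))/(t+8\kappa_{\tpi,\lambda}-1)^2\big)$ term, whose logarithm is the harmonic sum $\sum_t 1/t$, while the $\gamma_t$-weighted sum $\sum_t w_{t+1}\gamma_t v^2_{\tpi,\lambda}/B_t$ carrying the $c_2\propto G^2/(\lambda\pl_{\tpi,\lambda})$ prefactor produces the slower $\cO\!\big(G^2 v^2_{\tpi,\lambda}/(\lambda\pl_{\tpi,\lambda}(t+8\kappa_{\tpi,\lambda}-1))\big)$ moving-baseline term.

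The main obstacle is closing this exponential supermartingale: because the conditional variance of the martingale increment scales with the random $\subopt_t$ rather than a fixed constant, the standard Azuma/Freedman route does not apply directly, and one must verify that the induced quadratic drift is genuinely dominated by the linear PL decrease at each step. This self-bounding step is delicate—it is where the exact numerical choices of $\gamma_t$ and $B_t$ (not merely their $\Theta$-orders) matter—and it also governs the asymmetry between the $1/t^2$ and $1/t$ noise scalings, so careful bookkeeping of how $w_{t+1}$ pairs with $\gamma_t^2$ versus $\gamma_t$ is required throughout.
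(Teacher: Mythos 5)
Your proposal is correct and follows essentially the same route as the paper's proof: the same application of Lemma~\ref{lem:descent_lemma_II}, the same weighting $(t+8\kappa_{\tpi,\lambda}-1)^2$ exploiting the exact identity $1-\gamma_t\pl_{\tpi,\lambda}/2=(t+8\kappa_{\tpi,\lambda}-1)^2/(t+8\kappa_{\tpi,\lambda})^2$, and the same identification of the self-normalized sub-Gaussian martingale channel (variance proxy $\propto \subopt_t$ via smoothness) versus the sub-exponential $\|\xi_t\|_2^2$ channel, with the $c_2\propto G^2/(\lambda\pl_{\tpi,\lambda})$ term producing the $1/t$ floor and the harmonic sum the $\log(1+t/(2\kappa_{\tpi,\lambda}))/t^2$ term. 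The only difference is that the exponential-supermartingale construction you propose to carry out by hand is exactly the content of Theorem~9 of \citet{madden2024high}, which the paper invokes as a black box for the recursion $X_{t+1}\le X_t+Y_t+Z_t$.
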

\begin{remark}[On the growing mini-batch condition]
Compared with a standard high-probability analysis of stochastic gradient descent under a PL condition \citep{madden2024high}, our bound requires a growing mini-batch size $B_t \asymp t$. This is not merely a technical artifact: it compensates for a new source of noise compared to the classical PL-SGD setting, as discussed in Remark~\ref{rem:necessity_gamma_xi_sq}. In particular, the linear-in-$\gamma_t$ noise term in Lemma~\ref{lem:descent_lemma_II} would lead to a noise floor of the iterates if a constant mini-batch size were used even with decaying learning rates.

As remarked in \citep[Appendix~D]{madden2024high}, a similar issue appears in the analysis of projected non-convex SGD, and can be addressed either via a comparable growing mini-batch condition \citep{ghadimi2016mini} or via variance-reduction techniques \citep{reddi2016proximal}, which we do not employ in our setting.
\end{remark}

\begin{proof}
    First, we notice that a proposed sequence of learning rates satisfies the assumption of Lemma~\ref{lem:descent_lemma_II}. Indeed,
    \[
        \gamma_t = \frac{4 t + 32\kappa_{\tpi,\lambda} - 2}{\pl_{\tpi,\lambda}(t + 8\kappa_{\tpi,\lambda})^2} \leq \gamma_0 = \frac{32\kappa_{\tpi,\lambda}-2}{\pl_{\tpi,\lambda} \cdot 64 \cdot \kappa_{\tpi,\lambda}^2} \leq \frac{1}{2\pl_{\tpi,\lambda} \cdot \kappa_{\tpi,\lambda}} = \frac{1}{2L_{\tpi,\lambda}}\,.
    \]
    Thus, Lemma~\ref{lem:descent_lemma_II} implies
    \begin{equation}\label{eq:stoch_rates:apply_descent_lemma}
      \begin{split}
      \subopt^{\tpi}_{\lambda}(\pi_{t+1})
      &\leq
      \Bigl(1-\frac{\gamma_t \pl_{\tpi,\lambda}}{2}\Bigr)
      \subopt^{\tpi}_{\lambda}(\pi_t)
      - \alpha_t \,\big\langle \nabla J^{\tpi}(\theta_t; \pi_t), \xi_t \big\rangle \\
      &\qquad + \frac{3}{2}L_{\tpi,\lambda} \cdot \gamma_t^2 (\Vert  \xi_t \Vert^2_{2} + \Vert b_t \Vert_2^2) + \frac{3G^2}{4\lambda \, \pl_{\tpi,\lambda}} \cdot \gamma_t (\Vert  \xi_t \Vert^2_{2} + \Vert b_t \Vert_2^2) \\
      &\qquad + \frac{3}{4}\gamma_t (\epspl + \norm{b_t}_2^2)\,.
      \end{split}
    \end{equation}
    We notice that under our choice of $(\gamma_t)_{t \geq0}$ it holds
    \[
         1-\frac{\gamma_t \pl_{\tpi,\lambda}}{2} = \frac{(t + 8\kappa_{\tpi,\lambda})^2 - 2 (t + 8\kappa_{\tpi,\lambda}) + 1}{(t + 8\kappa_{\tpi,\lambda})^2} = \frac{(t-1 + 8\kappa_{\tpi,\lambda})^2}{(t+8\kappa_{\tpi,\lambda})^2}.
    \]
    Thus, we can multiply both sides of \eqref{eq:stoch_rates:apply_descent_lemma} by $(t+8\kappa_{\tpi,\lambda})^2$ and denote $X_t = (t + 8\kappa_{\tpi,\lambda} - 1)^2 \cdot \subopt^{\tpi}_\lambda(\pi_t)$
    \begin{align*}
        X_{t+1} &\leq X_t - \alpha_t (t+8\kappa_{\tpi,\lambda})^2 \big\langle \nabla J^{\tpi}(\theta_t; \pi_t), \xi_t \big\rangle \\
        &\quad + (t+8\kappa_{\tpi,\lambda})^2 \left( \frac{3}{2}L_{\tpi,\lambda} \gamma_t^2 + \frac{3\gamma_t G^2}{4 \lambda \pl_{\tpi,\lambda}} \right)(\Vert  \xi_t \Vert^2_{2} + \Vert b_t \Vert_2^2) + \frac{3 \gamma_t}{4}(\epspl + \norm{b_t}_2^2)\,.
    \end{align*}
    Next, we denote $Y_t = -\alpha_t (t+8\kappa_{\tpi,\lambda})^2 \big\langle \nabla J^{\tpi}(\theta_t; \pi_t), \xi_t \big\rangle$ and for this bound we have
    \begin{align*}
        |-\alpha_t (t+8\kappa_{\tpi,\lambda})^2| &= \frac{1-\gamma_t \pl_{\tpi,\lambda}/2}{1-\gamma_t \pl_{\tpi,\lambda}} \cdot
        \gamma_t \cdot \left(
    1 - \gamma_t L_{\tpi,\lambda}
      - \frac{G^2}{2\lambda \pl_{\tpi,\lambda}}
  \right) \cdot (t +8\kappa_{\tpi,\lambda})^2 \\
  &\leq \frac{3}{2} \gamma_t (t + 8\kappa_{\tpi,\lambda})^2 \leq 12 \cdot\frac{t + 8 \kappa_{\tpi,\lambda} - 1/2}{\pl_{\tpi,\lambda}} \leq \frac{90 \cdot (t + 8\kappa_{\tpi,\lambda} -1)}{7\pl_{\tpi,\lambda}}\,,
    \end{align*}
    thus, Assumption~\ref{ass:gradient_noise} and Assumption~\ref{ass:pl} imply for any $s \in \R$
    \begin{align*}
        \log\E[\exp(s Y_t) | \cF_{t}] &\leq \frac{s^2}{2} \cdot (\alpha_t (t+8\kappa_{\tpi,\lambda})^2)^2 \frac{\sigma^2_{\tpi,\lambda}}{B_t} \cdot \norm{\nabla J^{\tpi}(\theta_t; \pi_t)}_2^2 \\
        &\leq \frac{s^2}{2} \cdot \underbrace{\frac{2 \cdot 8100 \cdot L_{\tpi,\lambda}}{49 \cdot  \pl_{\tpi,\lambda} \cdot (t + 8\kappa_{\tpi,\lambda})} \cdot \sigma^2_{\tpi,\lambda}}_{\tilde{B}^2_t} \underbrace{(t + 8 \kappa_{\tpi,\lambda} - 1)^2 \subopt_{\lambda}^{\tpi}(\pi_t)}_{X_t}\,,
    \end{align*}
    where for the last inequality we applied Assumption~\ref{ass:smoothness}. Finally, we define 
    \begin{align*}
      Z_t &\triangleq (t+8\kappa_{\tpi,\lambda})^2\left(\frac{3}{2}L_{\tpi,\lambda} \gamma_t^2 + \frac{3\gamma_t G^2}{4\lambda \, \pl_{\tpi,\lambda}}\right)\Vert  \xi_t \Vert^2_{2} &\triangleq Z_{t,1} \\
      &\qquad + (t+8\kappa_{\tpi,\lambda})^2\left(\frac{3}{2}L_{\tpi,\lambda} \gamma_t^2 + \frac{3\gamma_t G^2}{4\lambda \, \pl_{\tpi,\lambda}}\right)\Vert b_t \Vert^2_{2}  &\triangleq Z_{t,2} \\
      &\qquad + \frac{3}{4}  (t+8\kappa_{\tpi,\lambda})^2\gamma_t \cdot (\epspl + \norm{b_t}_2^2)  &\triangleq Z_{t,3}
    \end{align*}
    To evaluate this tail behavior, we first bound the first term
    \[
        Z_{t,1} = (t+8\kappa_{\tpi,\lambda})^2\left(\frac{3}{2}L_{\tpi,\lambda} \gamma_t^2 + \frac{3\gamma_t G^2}{4\lambda \pl_{\tpi,\lambda}}\right) \Vert  \xi_t \Vert^2_{2} \leq \left(\frac{24 L_{\tpi,\lambda}}{(\pl_{\tpi,\lambda})^2} + \frac{ 3 (t + 8\kappa_{\tpi,\lambda})G^2}{\lambda (\pl_{\tpi,\lambda})^2}\right) \Vert  \xi_t \Vert^2_{2}\,,
    \]
    and the similar bound of $Z_{t,2}$ holds, where we replace $\Vert \xi_t \Vert^2_{2}$ by $\norm{b_t}_2^2$, however, we perform one additional simplification that comes from a bound $G^2 \leq \lambda \pl_{\tpi,\lambda}$, thus
    \[
        Z_{t,2} \leq \left(\frac{24 \kappa_{\tpi,\lambda}}{\pl_{\tpi,\lambda}} + \frac{ 3 (t + 8\kappa_{\tpi,\lambda})}{\pl_{\tpi,\lambda}}\right) \norm{b_t}_2^2\,.
    \]

    For the last term, we have
    \[
      Z_{t,3} = \frac{3}{4}  (t+8\kappa_{\tpi,\lambda})^2\gamma_t (\epspl + \norm{b_t}_2^2) \leq \frac{3(t + 8 \kappa_{\tpi,\lambda})}{\pl_{\tpi,\lambda}} (\epspl + \norm{b_t}_2^2)\,.
    \]
    thus we can apply Assumption~\ref{ass:gradient_noise} and achieve
    \begin{align*}
        &\log\, \E[\exp(s Z_t) | \cF_{t}] \leq s \cdot \left( \frac{24 L_{\tpi,\lambda}}{(\pl_{\tpi,\lambda})^2} + \frac{ 3 (t + 8\kappa_{\tpi,\lambda})G^2}{\lambda (\pl_{\tpi,\lambda})^2} \right)  \frac{v^2_{\tpi,\lambda}}{B_t} \\
        &\qquad + s \cdot \left(\frac{24 \kappa_{\tpi,\lambda}}{\pl_{\tpi,\lambda}} + \frac{ 3 (t + 8\kappa_{\tpi,\lambda})}{\pl_{\tpi,\lambda}}\right) \varepsilon_{\grad}^2 + s \cdot \frac{3(t + 8 \kappa_{\tpi,\lambda})}{\pl_{\tpi,\lambda}} (\epspl + \varepsilon_{\grad}^2) \\
        &\leq s \biggl[ \left( \frac{24 \kappa_{\tpi,\lambda}}{t + 8\kappa_{\tpi,\lambda}} + \frac{3 G^2}{\lambda \pl_{\tpi,\lambda}} \right) v^2_{\tpi,\lambda} + \frac{24 \kappa_{\tpi,\lambda} \cdot \varepsilon_{\grad}^2}{\pl_{\tpi,\lambda}} +  \frac{3 \cdot (t + 8 \kappa_{\tpi,\lambda})}{\pl_{\tpi,\lambda}} (\epspl + 2 \varepsilon_{\grad}^2) \biggr] \triangleq s \cdot \tilde{C}_t
    \end{align*}
    for a range of $s \in [0, 1/\tilde{C}_t)$.
    Thus, \citet[Theorem 9]{madden2024high} implies that for our system of inequalities $X_{t+1} \leq X_t + Y_t + Z_t$, any $t \in \N$ with probability at least $1-\delta$ it holds
    \[
        X_t \leq K_t \log(\rme/\delta)\,,
    \]
    where $K_t$ is a sequence that satisfies the following equations: $K_{t+1}^2 \geq (K_t + 2 \tilde{C}_t) K_{t+1} + \tilde{B}_t^2 K_t$ and $K_0 \geq X_0$, which is satisfied for a sequence $K_{t+1} = K_t + 2 \tilde{C}_t + \tilde{B}_t^2$ and $K_0 = 64 \kappa^2_{\tpi,\lambda} \subopt^{\tpi}_{\lambda}(\pi_0) \geq X_0$. In particular, we have
    \begin{align*}
        K_t &= 64 \kappa_{\tpi,\lambda}^2 \subopt^{\tpi}_{\lambda}(\pi_0) + \frac{16200 \cdot \kappa_{\tpi,\lambda}}{49} \cdot \sigma^2_{\tpi,\lambda}  \cdot \sum_{s=0}^{t-1} \frac{1}{s + 8\kappa_{\tpi,\lambda}} + 48 \kappa_{\tpi,\lambda} \cdot v^2_{\tpi,\lambda} \cdot \sum_{s=0}^{t-1} \frac{1}{s + 8\kappa_{\tpi,\lambda}} \\
        &\quad + 6 t \cdot  \left( \frac{v^2_{\tpi,\lambda} \cdot G^2}{\lambda \cdot \pl_{\tpi,\lambda}} + \frac{4 \kappa_{\tpi,\lambda} \cdot \varepsilon_{\grad}^2}{\pl_{\tpi,\lambda}} \right)+ \frac{6}{\pl_{\tpi,\lambda}}\left( \epspl + 2\varepsilon_{\grad}^2 \right) \cdot \sum_{s=0}^{t-1} (s + 8\kappa_{\tpi,\lambda}) \\
        &\leq 64 \kappa_{\tpi,\lambda}^2 \subopt^{\tpi}_{\lambda}(\pi_0) + 24 \kappa_{\tpi,\lambda}\log(1 + t/(2\kappa_{\tpi,\lambda})) \cdot \left(\frac{675 \sigma^2_{\tpi,\lambda}}{49} + 2v^2_{\tpi,\lambda} \right) \\
        &\qquad + 6 t \cdot  \left( \frac{v^2_{\tpi,\lambda} \cdot G^2}{\lambda \cdot \pl_{\tpi,\lambda}} + \frac{4 \kappa_{\tpi,\lambda} \cdot \varepsilon_{\grad}^2}{\pl_{\tpi,\lambda}} \right) +  \frac{6}{\pl_{\tpi,\lambda}}\left( \epspl + 2 \varepsilon_{\grad}^2 \right) \cdot \frac{t(t + 16 \kappa_{\tpi,\lambda} - 1)}{2}\,.
    \end{align*}
    Before proceeding further, we notice that since $\kappa_{\tpi,\lambda} \geq 1$, we can use the following bounds for any $t \geq 0$:
    \[
        \frac{t}{(t + 8 \kappa_{\tpi,\lambda} - 1)^2} \leq \frac{1}{t + 8 \kappa_{\tpi,\lambda} - 1}\,,\qquad \frac{4 t \kappa_{\tpi,\lambda}}{(t + 8 \kappa_{\tpi,\lambda} - 1)^2} \leq \frac{1}{7}\,, \qquad \frac{t(t + 16 \kappa_{\tpi,\lambda} - 1)}{2 (t + 8 \kappa_{\tpi,\lambda} - 1)^2} \leq 1\,.
    \]
    
    Finally, we divide both bounds on $X_t$ and $K_t$ by $(t+8\kappa_{\tpi,\lambda}-1)^2$ and thus we have the following bound for exploitability that holds for any $t \in \N$ with probability at least $1-\delta$
    \begin{align*}
        \subopt^{\tpi}_\lambda(\pi_t) &\leq \frac{64 \cdot \kappa^2_{\tpi,\lambda} \log(\rme/\delta)}{(t + 8\kappa_{\tpi,\lambda}-1)^2}\subopt^{\tpi}_\lambda(\pi_0) \\
        &\qquad + \frac{ 24 \cdot  \kappa_{\tpi,\lambda} \log(\rme/\delta) \cdot \log(1+t/(2\kappa_{\tpi,\lambda}))}{(t + 8 \kappa_{\tpi,\lambda}-1)^2}\left( \frac{675 \cdot \sigma^2_{\tpi,\lambda}}{49} + 2v^2_{\tpi,\lambda} \right) \\
        &\qquad + \frac{6G^2\cdot v^2_{\tpi,\lambda} \log(\rme/\delta)}{\lambda  \pl_{\tpi,\lambda} \cdot (t + 8 \kappa_{\tpi,\lambda}-1)} + \frac{6 (\epspl + 15/7 \cdot \varepsilon_{\grad}^2) \log(\rme/\delta)}{\pl_{\tpi,\lambda}}\,.
    \end{align*}
\end{proof}

\subsection{Gradient Estimator}\label{app:gradient_estimator}

As a particular instance, we employ the clipped gradient estimator that allows to control the variance of the estimator via assumptions on the parameterization only.

Given $\theta \in \Theta$, a context $x\in \cX$, two actions $y,y' \in \cY$, and an unbiased estimate $p \in [0,1]$ of $\cP(y \succ y' | x)$, define the advantage estimator
\[
    A^{\pi_\theta}(x,y,y',p) = 1/2 - p +\lambda \log \frac{\pi_\theta(y|x)}{\tpi(y|x)} - \lambda \log \frac{\pi_\theta(y'|x)}{\tpi(y'|x)}\,.
\]
Then, for a threshold parameter $M > 0$, we define the clipped advantage estimator
\[    
  \tilde{A}^{\pi_\theta}_M(x,y,y',p) = \clip_{-M,M}\left( A^{\pi_\theta}(x,y,y',p) \right) =
  \begin{cases}
    -M & \text{if } A^{\pi_\theta}(x,y,y',p) < -M\,, \\
    A^{\pi_\theta}(x,y,y',p) & \text{if } A^{\pi_\theta}(x,y,y',p) \in [-M,M]\,, \\
    M & \text{if } A^{\pi_\theta}(x,y,y',p) > M\,.
  \end{cases}
\]
Then we define two versions of the stochastic gradient estimator:
\begin{align}
    G(\theta |x,y,y',p) &\triangleq \frac{1}{2} \cdot (\nabla \log \pi_\theta(y|x) - \nabla \log \pi_\theta(y'|x)) A^{\pi_\theta}(x,y,y',p)\, \label{eq:unclipped_single_gradient_estimator_self_play_pg}\,,\\
    G_M(\theta | x,y,y',p) &\triangleq \frac{1}{2} \cdot (\nabla \log \pi_\theta(y|x) - \nabla \log \pi_\theta(y'|x)) \tilde{A}^{\pi_\theta}_M(x,y,y',p) \label{eq:clipped_single_gradient_estimator_self_play_pg}\,.
\end{align}
Then we have the following properties of this estimator.
\begin{lemma}\label{lem:noise_properties_general_initial}
   Assume $\norm{\nabla \log \pi_\theta(y|x)}_2^2 \leq M_g^2$ for any $x \in \supp(\rho), y \in \cY, \theta \in \Theta_{\cT}$. Then, the stochastic gradient estimator defined in \eqref{eq:unclipped_single_gradient_estimator_self_play_pg} satisfies the following properties:

   \begin{itemize}[leftmargin=1em]
    \item A gradient estimator $G(\theta |x,y,y',p)$ is unbiased, i.e., for any $\theta$ it holds $\E[G(\theta|x,y,y',p)] = \nabla J^{\tpi}(\theta; \pi_\theta)$.
    \item The clipped gradient estimator $G_M(\theta | x,y,y',p)$ is biased, and its bias satisfies
    \begin{align*}
        \norm{\E[G_M(\theta | x,y,y',p)] - \nabla J^{\tpi}(\theta; \pi_\theta)}_2 &\leq 2M_g\lambda(\E_{x\sim \rho}\left[ \left( \log\frac{1}{\tpi_{\min}(x)} - \frac{M-1/2}{2\lambda} \right)_+ \right] \\
        &+ 2M_g\lambda\rme^{-(M-1/2) / (2\lambda)}\,.
    \end{align*}
    \item For any $\theta$, it holds almost surely that
    \[
        \norm{G_M(\theta | x,y,y',p) - \E[G_M(\theta | x,y,y',p)]}_2 \leq 2 M_g \cdot M\,.
    \]
   \end{itemize}
\end{lemma}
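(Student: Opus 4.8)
The plan is to first integrate out the Bernoulli variable $p$. Since $A^{\pi_\theta}$ is affine in $p$ and $\E[p\mid x,y,y'] = \cP(y\succ y'\mid x)$, conditioning on $(x,y,y')$ replaces $1/2-p$ by $1/2-\cP(y\succ y'\mid x)$; denote the resulting deterministic advantage by $a(x,y,y')$. The crucial observation is that the estimator is invariant under the exchange $y\leftrightarrow y'$: the score difference $\nabla\log\pi_\theta(y\mid x)-\nabla\log\pi_\theta(y'\mid x)$ flips sign, and so does $a$, because $1/2-\cP(y\succ y'\mid x) = -(1/2-\cP(y'\succ y\mid x))$ by symmetry of $\cP$ and the reference-log terms are antisymmetric. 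As $y,y'$ are i.i.d.\ from $\pi_\theta$, this symmetry lets me drop the second score term and write $\E[G] = \E_{x,\,y,y'\sim\pi_\theta}[\nabla\log\pi_\theta(y\mid x)\,a(x,y,y')]$. I then expand $a$ into its four summands: the constant $1/2$ and the $y'$-dependent term $-\lambda\log(\pi_\theta(y'\mid x)/\tpi(y'\mid x))$ both vanish after using the score identity $\E_{y\sim\pi_\theta}[\nabla\log\pi_\theta(y\mid x)]=0$ (the latter because $y\perp y'$). Taking the inner expectation over $y'$ in the remaining two terms, and using $\cP(y\succ\pi_\theta\mid x)=1-\cP(\pi_\theta\succ y\mid x)$ once more with the score identity, leaves exactly the preference part $\E_{x,y}[\cP(\pi_\theta\succ y\mid x)\nabla\log\pi_\theta(y\mid x)]$ and the KL part $\lambda\E_{x,y}[\nabla\log\pi_\theta(y\mid x)\log(\pi_\theta(y\mid x)/\tpi(y\mid x))]$. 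I finish by differentiating $J^{\tpi}(\theta;\pi_\theta)=\cP(\pi_\theta\succ\pi_\theta)+\lambda\KL_\rho(\pi_\theta\Vert\tpi)$ directly (competitor treated as stop-gradient) via the log-derivative trick $\nabla\pi_\theta(y\mid x)=\pi_\theta(y\mid x)\nabla\log\pi_\theta(y\mid x)$, and checking the two contributions coincide with $\nabla_\theta J^{\tpi}(\theta;\pi_\theta)$.

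\textbf{Bias of $G_M$: reduction and upper tail.} Since $\E[G]=\nabla J^{\tpi}(\theta;\pi_\theta)$, the bias equals $\E[G_M-G]$. Pointwise $G_M-G=\tfrac12(\nabla\log\pi_\theta(y\mid x)-\nabla\log\pi_\theta(y'\mid x))(\tilde A_M-A)$ with $|\tilde A_M-A|=(|A|-M)_+$ and score difference bounded by $2M_g$, so $\norm{\E[G_M-G]}_2\le M_g\,\E[(|A|-M)_+]$. Writing $\ell(y)\triangleq\lambda\log(\pi_\theta(y\mid x)/\tpi(y\mid x))$ and using $|A|\le\tfrac12+|\ell(y)|+|\ell(y')|$ together with the elementary inequality $(s+t-c)_+\le(s-c/2)_+ + (t-c/2)_+$ for $s,t,c\ge0$ (with $c=M-\tfrac12$), I reduce to bounding $2\,\E_{x,y\sim\pi_\theta}[(|\ell(y)|-T)_+]$ where $T\triangleq(M-\tfrac12)/2$. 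Splitting $(|\ell|-T)_+=(\ell-T)_+ + (-\ell-T)_+$ (exact for $T\ge0$), the upper tail is controlled deterministically by $\ell(y)\le\lambda\log(1/\tpi(y\mid x))\le\lambda\log(1/\tpi_{\min}(x))$, which gives $\E_{x,y}[(\ell-T)_+]\le\lambda\,\E_x[(\log(1/\tpi_{\min}(x))-\tfrac{M-1/2}{2\lambda})_+]$.

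\textbf{Lower tail (the main obstacle).} The hard part is $\E_{x,y}[(-\ell-T)_+]$: the naive bound $-\ell(y)\le-\lambda\log\pi_\theta(y\mid x)$ discards the reference policy and produces a spurious $\log|\cY|$ blow-up (it is already tight-to-failure for uniform $\pi_\theta=\tpi$). The remedy is to retain $\tpi$ and apply a change-of-measure tail bound: for $u\ge0$,
\[
  \Pr_{y\sim\pi_\theta}\!\Big(\lambda\log\tfrac{\tpi(y\mid x)}{\pi_\theta(y\mid x)}>u\Big)
  = \sum_{y:\,\pi_\theta(y\mid x)<\tpi(y\mid x)\rme^{-u/\lambda}}\!\!\!\pi_\theta(y\mid x)
  \le \rme^{-u/\lambda}\sum_{y}\tpi(y\mid x) = \rme^{-u/\lambda}\,,
\]
where the last step uses $\sum_y\tpi(y\mid x)=1$. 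Integrating the layer-cake identity $\E[(-\ell-T)_+]=\int_0^\infty\Pr(-\ell>T+s)\,\rmd s$ yields $\E_{x,y}[(-\ell-T)_+]\le\lambda\rme^{-T/\lambda}=\lambda\rme^{-(M-1/2)/(2\lambda)}$. Combining the two tails with the prefactor $2M_g$ reproduces exactly the claimed bias bound.

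\textbf{Bounded centered noise.} This is immediate: $|\tilde A_M|\le M$ and $\norm{\nabla\log\pi_\theta(y\mid x)-\nabla\log\pi_\theta(y'\mid x)}_2\le2M_g$ give $\norm{G_M}_2\le M_gM$ almost surely, hence $\norm{G_M-\E[G_M]}_2\le\norm{G_M}_2+\norm{\E[G_M]}_2\le2M_gM$ by Jensen's inequality.
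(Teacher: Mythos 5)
Your proof is correct and follows essentially the same route as the paper's: unbiasedness via exchangeability of $(y,y')$, antisymmetry of the advantage, and the score identity reducing to the standard REINFORCE form; the bias reduced to $M_g\,\E[(|A|-M)_+]$ and split (with the same $(s+t-c)_+$ inequality) into an upper tail controlled by $\log(1/\tpi_{\min}(x))$ and a lower tail controlled by change of measure against $\tpi$. Your only cosmetic deviation is in the lower tail, where you bound the tail probability $\Pr(-\ell>u)\le \rme^{-u/\lambda}$ and integrate via the layer-cake formula, while the paper bounds the expectation directly on the low-ratio set using $\log x\le x-1$ — both are the same Markov-type argument and yield the identical term $\lambda\rme^{-(M-1/2)/(2\lambda)}$ (note both arguments implicitly need $M\ge 1/2$, which the paper makes explicit as $M\ge 1$).
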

\begin{proof}
  For the first part of the statement, we notice that by linearity of expectation, zero expectation of the score function $\E[\nabla \log \pi_\theta(y|x)] = 0$, and symmetry of preferences, we have
\begin{align*}
    \E[G(\theta | x,y,y',p)]
    &= \E\left[ \nabla \log \pi_\theta(y|x) \left(\cP(\pi \succ y \mid x) + \lambda \log \frac{\pi_\theta(y|x)}{\tpi(y|x)} \right) \right]\,,
\end{align*}
    where $x \sim \rho, y,y' \sim \pi_\theta(\cdot|x)$, and $\pi = \pi_\theta$. The last expression is a standard REINFORCE estimator \citep{williams1992simple} for a regularized RL problem $J^{\tpi}(\theta; \pi_\theta)$, and it is known that the last expression is equal to $\nabla J^{\tpi}(\theta; \pi_\theta)$.

    For the second part of the statement, we use the unbiasedness of $G(\theta | x,y,y',p)$ and the definition of the clipped estimator to write
    \begin{align*}
        \|\E[G_M&(\theta | x,y,y',p)] - \nabla J^{\tpi}(\theta; \pi_\theta)\|_2 = \|\E[G_M(\theta | x,y,y',p)] - \E[G(\theta | x,y,y',p)]\|_2 \\
        & = \left\|\E\left[ \frac{1}{2} \cdot (\nabla \log \pi_\theta(y|x) - \nabla \log \pi_\theta(y'|x)) \left( \tilde{A}^{\pi_\theta}_M(x,y,y',p) - A^{\pi_\theta}(x,y,y',p) \right) \right] \right\|_2 \\
        & \leq \frac{1}{2}\E\left[ \left\| \nabla \log \pi_\theta(y|x) - \nabla \log \pi_\theta(y'|x) \right\|_2 \cdot | \tilde{A}^{\pi_\theta}_M(x,y,y',p) - A^{\pi_\theta}(x,y,y',p) | \right] \\
        & \leq M_g \cdot \E\left[ | \tilde{A}^{\pi_\theta}_M(x,y,y',p) - A^{\pi_\theta}(x,y,y',p) | \right]\,.
    \end{align*}
    Next, we notice that by definition of the clipping operator it holds
    \[
        | \tilde{A}^{\pi_\theta}_M(x,y,y',p) - A^{\pi_\theta}(x,y,y',p) | = ( |A^{\pi_\theta}(x,y,y',p)| - M)_+\,,
    \]
    where $(x)_+ = \max\{0,x\}$. Next, define $\ell_\theta(x,y) = \log\frac{\pi_\theta(y|x)}{\tpi(y|x)}$ as the log-ratio random variable. Then, we can rewrite the advantage as
    \[
        A^{\pi_\theta}(x,y,y',p) = 1/2 - p + \lambda (\ell_\theta(x,y) - \ell_\theta(x,y'))\,.
    \]
    Assuming $M \geq 1$, we can use the triangle inequality to write
    \begin{align*}
        \E\left[ ( |A^{\pi_\theta}(x,y,y',p)| - M)_+ \right] &= \E\left[ ( |1/2 - p + \lambda (\ell_\theta(x,y) - \ell_\theta(x,y'))| - M)_+ \right] \\
        &\leq \E\left[ ( |1/2 - p| + \lambda |\ell_\theta(x,y)| + \lambda|\ell_\theta(x,y')| - M)_+ \right] \\
        &\leq \E\left[ ( 1/2 + \lambda |\ell_\theta(x,y)| +\lambda |\ell_\theta(x,y')| - M)_+ \right] \\
        &\leq 2\lambda \cdot \E\left[ \left( |\ell_\theta(x,y)| - \frac{M - 1/2}{2\lambda}\right)_+ \right] \,,
    \end{align*}
    where in the last inequality we used the fact that $(a + b - c)_+ \leq (a - c/2)_+ + (b - c/2)_+$ for any $a,b,c \geq 0$, and the fact that $y$ and $y'$ are i.i.d. samples from $\pi_\theta(\cdot|x)$.
    Next, defining $M' = (M - 1/2)/(2\lambda)$, we can further bound
    \[
      \E\left[ \left( |\ell_\theta(x,y)| - M' \right)_+ \right] \leq \E\left[ \left( \ell_\theta(x,y) - M' \right)_+ \right] +  \E\left[ \left( -\ell_\theta(x,y) - M' \right)_+ \right]\,.
    \]
    For the first term, we apply the following inequality; $\ell_\theta(x,y) = \log\frac{\pi_\theta(y|x)}{\tpi(y|x)} \leq \log \frac{1}{\tpi(y|x)} \leq \log \frac{1}{\tpi_{\min}(x)}$ for any $x \in \cX, y \in \cY$ since $\pi_\theta(y|x) \leq 1$. Thus, we have
    \[
      \E\left[ \left( \ell_\theta(x,y) - M' \right)_+ \right] \leq \E_{x \sim \rho}\left[ \left( \log\frac{1}{\tpi_{\min}(x)} - M' \right)_+\right]\,.
    \]
    Next, we study the second term. Let us fix a context $x \in \cX$ and define the set $\cY_{x}^{-} = \{ y \in \cY: \pi_\theta(y|x) / \tpi(y|x) \leq \rme^{-M'} \}$. Then, we have
    \begin{align*}
      \E\left[ \left( -\ell_\theta(x,y) - M' \right)_+ \right] &= \E_{x \sim \rho}\left[ \sum_{y \in \cY_x^-} \pi_\theta(y|x) \left( \log \frac{\tpi(y|x)}{\pi_\theta(y|x)} - M' \right) \right] \\
      &= \E_{x \sim \rho}\left[ \sum_{y \in \cY_x^-} \pi_\theta(y|x)  \log \frac{\tpi(y|x)}{\rme^{M'} \cdot\pi_\theta(y|x)} \right]\,,
    \end{align*}
    and, applying an inequality $\log(x) \leq x-1$ for any $x \geq 0$, we achieve
    \[
      \E\left[ \left( -\ell_\theta(x,y) - M' \right)_+ \right] \leq \E_{x \sim \rho}\left[ \sum_{y \in \cY_x^-} \pi_\theta(y|x) \left( \frac{\tpi(y|x)}{\rme^{M'} \cdot\pi_\theta(y|x)} - 1 \right) \right] \leq \rme^{-M'}\,.
    \]
    Finally, the last statement of the lemma follows directly from the definition of the clipped estimator.
\end{proof}

Before proceeding further, we notice that the bias of the clipped estimator can be made arbitrarily small by increasing the clipping threshold $M$; however, its specific range depends on the behavior of the reference policy $\tpi$. In particular, if $\tpi$ has a small minimum probability $\tpi_{\min}(x)$ for some contexts $x$, then the bias can be large for moderate values of $M$. Thus, we need to introduce additional assumptions on the reference policy to control this bias effectively.

\begin{lemma}\label{lem:noise_properties_bias}
  Assume $\norm{\nabla \log \pi_\theta(y|x)}_2^2 \leq M_g^2$ for any $x \in \supp(\rho), y \in \cY, \theta \in \Theta_{\cT}$. Then, for any desired bias level $\varepsilon_{\grad} > 0$, the clipped gradient estimator defined in \eqref{eq:clipped_single_gradient_estimator_self_play_pg} satisfies the following properties.
  \begin{itemize}[leftmargin=1em]
    \item If the reference policy satisfies $\E_{x\sim \rho}[\log^2(1/\tpi_{\min}(x))] \leq V_{\tpi}$ for some constant $V_{\tpi} > 0$, then under the choice $M = M_2(\varepsilon_{\grad}) \triangleq 1/2 + 4\lambda^2 M_g (V_{\tpi} + 1) /\varepsilon_{\grad}$ the bias is bounded as $\varepsilon_{\grad}$ and it holds
    \[
        \norm{G_{M}(\theta | x,y,y',p) - \E[G_{M}(\theta | x,y,y',p)]}_2 \leq D_2(\varepsilon_{\grad})\,,
    \]
    where $D_2(\varepsilon_{\grad}) \triangleq  M_g \cdot \left( 1 + \frac{8\lambda^2 M_g (V_{\tpi} + 1)}{\varepsilon_{\grad}} \right)$.
    \item If the reference policy satisfies $\E_{x \sim \rho}[1/\tpi_{\min}(x)] \leq D_{\tpi}$ for some constant $D_{\tpi} > 0$, then under the choice $M = M_{\infty}(\varepsilon_{\grad}) \triangleq 1/2 + 2\lambda \log\left( \frac{2 M_g \lambda (D_{\tpi} + 1)}{\varepsilon_{\grad}} \right)$ the bias is bounded as $\varepsilon_{\grad}$ and it holds
    \[
        \norm{G_{M}(\theta | x,y,y',p) - \E[G_{M}(\theta | x,y,y',p)]}_2 \leq D_{\infty}(\varepsilon_{\grad})\,,
    \]
    where $D_{\infty}(\varepsilon_{\grad}) \triangleq M_g \cdot \left( 1 + 4\lambda \log\left( \frac{2 M_g \lambda (D_{\tpi} + 1)}{\varepsilon_{\grad}} \right) \right)$.
  \end{itemize}
\end{lemma}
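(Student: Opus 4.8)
The plan is to obtain both parts as immediate consequences of Lemma~\ref{lem:noise_properties_general_initial}. That lemma already gives a bias bound of the form
\[
  \norm{\E[G_M(\theta\mid x,y,y',p)] - \nabla J^{\tpi}(\theta;\pi_\theta)}_2 \le 2M_g\lambda\left( \E_{x\sim\rho}\bigl[(W(x) - M')_+\bigr] + \rme^{-M'} \right),
\]
where I abbreviate $W(x) \triangleq \log(1/\tpi_{\min}(x))$ and $M' \triangleq (M-1/2)/(2\lambda)$, together with the deterministic centered-noise bound $\norm{G_M - \E[G_M]}_2 \le 2M_g M$. Thus the entire task reduces to: (i) bounding the truncated tail $\E[(W-M')_+]$ under each moment hypothesis, (ii) choosing $M$ so that the resulting bias is at most $\varepsilon_{\grad}$, and (iii) reading off the noise radius $2M_g M$ for that choice. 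Two elementary observations make both bias terms uniform: $W(x)\ge 0$ because $\tpi_{\min}(x)\le 1$, and $\rme^{-M'}\le 1/M'$ for every $M'>0$ (from $\rme^{M'}\ge 1+M'$).

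For the second-moment hypothesis $\E_{x\sim\rho}[W^2]\le V_{\tpi}$, I would use the pointwise inequality $(w-M')_+ \le w^2/M'$, valid for all $w\ge 0$, to get $\E[(W-M')_+]\le \E[W^2]/M' \le V_{\tpi}/M'$. Combining with $\rme^{-M'}\le 1/M'$, the bias is at most $2M_g\lambda(V_{\tpi}+1)/M'$. The choice $M = M_2(\varepsilon_{\grad})$ is exactly the one for which $M' = 2\lambda M_g(V_{\tpi}+1)/\varepsilon_{\grad}$, so this bound collapses to $\varepsilon_{\grad}$; substituting the same $M$ into $2M_g M$ yields precisely $D_2(\varepsilon_{\grad})$.

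For the stronger inverse-mass hypothesis $\E_{x\sim\rho}[1/\tpi_{\min}(x)]\le D_{\tpi}$, I would instead exploit the exponential tail inequality $(w-M')_+\le \rme^{w-M'}$ (which holds since $u\le \rme^{u}$). This gives $\E[(W-M')_+]\le \rme^{-M'}\E[\rme^{W}] = \rme^{-M'}\E_{x\sim\rho}[1/\tpi_{\min}(x)]\le \rme^{-M'}D_{\tpi}$, so the bias is at most $2M_g\lambda\rme^{-M'}(D_{\tpi}+1)$. The choice $M = M_\infty(\varepsilon_{\grad})$ corresponds to $M' = \log(2M_g\lambda(D_{\tpi}+1)/\varepsilon_{\grad})$, making the bias equal to $\varepsilon_{\grad}$, and $2M_g M$ then equals $D_\infty(\varepsilon_{\grad})$. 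The logarithmic (rather than $1/\varepsilon_{\grad}$) dependence of the clipping level is precisely what the stronger assumption buys.

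I do not expect a genuine obstacle here: the argument is two Markov-style tail estimates followed by algebra. The only points needing care are matching the stated constants exactly — so that the bias comes out as an equality $\le \varepsilon_{\grad}$ rather than up to an unspecified factor — and justifying the two elementary inequalities ($(w-M')_+\le w^2/M'$ for $w\ge 0$ and $(w-M')_+\le \rme^{w-M'}$), where the nonnegativity of $W$ is what legitimizes the second-moment bound.
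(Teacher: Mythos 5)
Your proposal is correct and follows essentially the same route as the paper's proof: both reduce everything to Lemma~\ref{lem:noise_properties_general_initial}, bound $\E_{x\sim\rho}[(\log(1/\tpi_{\min}(x))-M')_+]$ by $V_{\tpi}/M'$ and $D_{\tpi}\rme^{-M'}$ respectively, use $\rme^{-M'}\le 1/M'$, and read off the noise radius $2M_gM$. The only (immaterial) difference is that you use the pointwise dominations $(w-M')_+\le w^2/M'$ and $(w-M')_+\le \rme^{w-M'}$, whereas the paper applies Markov's inequality to the tail probability inside the layer-cake integral $\int_{M'}^\infty \P(W\ge u)\,\rmd u$; both yield identical constants.
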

\begin{proof}
  We start from the bias bound provided in Lemma~\ref{lem:noise_properties_general_initial}:
  \begin{align*}
        \norm{\E[G_M(\theta | x,y,y',p)] - \nabla J^{\tpi}(\theta; \pi_\theta)}_2 &\leq 2M_g\lambda \E_{x\sim \rho}\left[ \left( \log\frac{1}{\tpi_{\min}(x)} - \frac{M-1/2}{2\lambda} \right)_+ \right] \\
        &\qquad + 2M_g\lambda  \rme^{-(M-1/2) / (2\lambda)}\,.
  \end{align*}
  Let us define $M' = (M - 1/2)/(2\lambda)$ and then study the first term inside the parentheses and represent it as
  \[
      \E_{x\sim \rho}\left[ \left( \log\frac{1}{\tpi_{\min}(x)} - M' \right)_+ \right] \leq \int_{M'}^\infty \P\left( \log\frac{1}{\tpi_{\min}(x)} \geq u \right) \rmd u\,.
  \]
  For the first part of the statement, we apply Markov's inequality to write
  \[
      \P\left( \log\frac{1}{\tpi_{\min}(x)} \geq u \right) \leq \frac{\E_{x\sim \rho}[\log^2(1/\tpi_{\min}(x))]}{u^2} \leq \frac{V_{\tpi}}{u^2}\,,
  \]
  and thus
  \[
      \E_{x\sim \rho}\left[ \left( \log\frac{1}{\tpi_{\min}(x)} - M' \right)_+ \right] \leq \int_{M'}^\infty \frac{V_{\tpi}}{u^2} \rmd u = \frac{V_{\tpi}}{M'}\,.
  \]
  To control the bias at level $\varepsilon_{\grad}$, we set $M'$ such that $2 M_g \lambda (V_{\tpi}/M' + \rme^{-M'}) \leq \varepsilon_{\grad}$. Since $\rme^{-M'} \leq 1/M'$, it is sufficient to set $M' = 2 M_g \lambda (V_{\tpi} + 1)/\varepsilon_{\grad}$, which leads to the choice of $M_2(\varepsilon_{\grad})$ in the statement of the lemma.

  For the second part of the statement, we again start from the integral representation and apply Markov's inequality to write
  \[
      \P\left( \log\frac{1}{\tpi_{\min}(x)} \geq u \right) = \P\left( \frac{1}{\tpi_{\min}(x)} \geq \rme^{u} \right) \leq \frac{\E_{x \sim \rho}[1/\tpi_{\min}(x)]}{\rme^{u}} \leq \frac{D_{\tpi}}{\rme^{u}}\,,
  \]
  and thus
  \[
      \E_{x\sim \rho}\left[ \left( \log\frac{1}{\tpi_{\min}(x)} - M' \right)_+ \right] \leq \int_{M'}^\infty \frac{D_{\tpi}}{\rme^{u}} \rmd u = D_{\tpi} \cdot \rme^{-M'}\,.
  \]
  As a result, we have
  \[
      \norm{\E[G_M(\theta | x,y,y',p)] - \nabla J^{\tpi}(\theta; \pi_\theta)}_2 \leq 2M_g\lambda\left( D_{\tpi} \cdot \rme^{-M'} + \rme^{-M'}\right) = 2 M_g \lambda (D_{\tpi} + 1) \cdot \rme^{-M'}\,.
  \]
  To control the bias at level $\varepsilon_{\grad}$, we set $M'$ such that $2 M_g \lambda (D_{\tpi} + 1) \cdot \rme^{-M'} \leq \varepsilon_{\grad}$, which leads to the choice of $M_{\infty}(\varepsilon_{\grad})$ in the statement of the lemma.

\end{proof}

Next, we provide the properties of the corresponding mini-batch stochastic gradient estimator. At iteration $t$ we draw an i.i.d.\ mini-batch $(x_j,y_j, y'_j,p_j)_{j\in[B_t]}$ with $x_j \sim \rho$, $y_j,y'_j \sim \pi_{\theta_t}(\cdot|x_j)$ and $p_j$ an unbiased estimator of $\cP(y_j \succ y'_j | x_j)$, and set
\begin{equation}\label{eq:stochastic_gradient_estimator_self_play_pg}
    g_t = \frac{1}{B_t} \sum_{j=1}^{B_t} G_{M_k(\varepsilon_{\grad})}(\theta_t | x_j,y_j,y'_j,p_j)\,,
\end{equation}
where $\varepsilon_{\grad}$ is a desired bias level and $k \in \{2,\infty\}$ indicates which choice of the clipping threshold from Lemma~\ref{lem:noise_properties_bias} is used. Let $(\cF_t)_{t\ge0}$ be the filtration generated by the iterates:
$\cF_t = \sigma(\{\theta_k, \pi_k\}_{k\le t})$.

\begin{lemma}\label{lem:noise_properties_general}
    Assume that the conditions of Lemma~\ref{lem:noise_properties_bias} hold true for some desired bias level $\varepsilon_{\grad} > 0$ and $k \in \{2,\infty\}$. Then, the stochastic gradient estimator defined in \eqref{eq:stochastic_gradient_estimator_self_play_pg} satisfies for any $\cF_t$-measurable vector $u \in \R^{d}$
    \begin{align*}
        \forall s \in \R: &\log\E\left[ \exp\left( s \langle u, \xi_t\rangle\right) | \cF_t\right] \leq \frac{s^2 D_{k}^2(\varepsilon_{\grad})}{2B_t} \norm{u}_2^2\,, \\
         \forall s \in [0, B_t/(6 D^2_{k}(\varepsilon_{\grad}))]: &\log\E\left[ \exp(s \norm{\xi_t}_2^2) | \cF_t\right] \leq 6 s \cdot D^2_k(\varepsilon_{\grad})/B_t\,.
    \end{align*}
    In particular, it satisfies Assumption~\ref{ass:gradient_noise} with a bias $\varepsilon_{\grad}$, subgaussian constant $\sigma^2_{\tpi,\lambda} = D_k^2(\varepsilon_{\grad})$ and subexponential constant $v^2_{\tpi,\lambda} = 6D_k^2(\varepsilon_{\grad})$.
\end{lemma}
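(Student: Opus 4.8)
The plan is to exploit the conditionally i.i.d.\ structure of the mini-batch and reduce both tail statements to two classical facts: Hoeffding's lemma for the linear form and the norm-sub-Gaussian machinery of \citet{jin2019short} for the squared norm. First I would introduce the single-sample term $G_j \triangleq G_{M_k(\varepsilon_{\grad})}(\theta_t\mid x_j,y_j,y'_j,p_j)$ and its centered version $\Delta_j \triangleq G_j - \E[G_j\mid\cF_t]$. Since $\theta_t$ is $\cF_t$-measurable and the tuples $(x_j,y_j,y'_j,p_j)$ are drawn i.i.d.\ conditionally on $\cF_t$, the $\Delta_j$ are conditionally i.i.d., mean-zero, and satisfy $\xi_t = B_t^{-1}\sum_{j=1}^{B_t}\Delta_j$, with the almost-sure bound $\norm{\Delta_j}_2 \le D_k(\varepsilon_{\grad})$ furnished by the third bullet of Lemma~\ref{lem:noise_properties_bias}. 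The bias claim is then immediate: averaging preserves the conditional mean, so $\E[g_t\mid\cF_t]=\E[G_{M_k(\varepsilon_{\grad})}(\theta_t\mid x,y,y',p)\mid\cF_t]$ and $b_t$ coincides with the per-sample bias already controlled at level $\varepsilon_{\grad}$ in Lemma~\ref{lem:noise_properties_bias}, giving Assumption~\ref{ass:gradient_noise}(i).

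For the sub-Gaussian bound I would fix an $\cF_t$-measurable $u$, observe $|\langle u,\Delta_j\rangle|\le D_k(\varepsilon_{\grad})\norm{u}_2$ by Cauchy--Schwarz, and apply Hoeffding's lemma to the mean-zero scalar $\langle u,\Delta_j\rangle$, which is supported on an interval of length $2D_k(\varepsilon_{\grad})\norm{u}_2$, to obtain conditional variance proxy $D_k^2(\varepsilon_{\grad})\norm{u}_2^2$. Tensorizing over the conditionally independent batch with the scaling $\langle u,\xi_t\rangle = B_t^{-1}\sum_j\langle u,\Delta_j\rangle$ contributes a factor $B_t\cdot (s/B_t)^2/2 = s^2/(2B_t)$ in front of $D_k^2(\varepsilon_{\grad})\norm{u}_2^2$, which is exactly the first display and identifies $\sigma^2_{\tpi,\lambda}=D_k^2(\varepsilon_{\grad})$.

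The squared-norm (sub-exponential) bound is the delicate step, because routing through the vector sub-Gaussianity of $\xi_t$ would incur a dimension factor, whereas the target $6s\,D_k^2(\varepsilon_{\grad})/B_t$ must be dimension-free. To avoid this I would argue through the \emph{norm}-sub-Gaussian class: each a.s.\ bounded $\Delta_j$ is norm-sub-Gaussian with parameter $D_k(\varepsilon_{\grad})$, this class is stable under conditionally independent summation \citep{jin2019short}, so $\sum_j\Delta_j$ is norm-sub-Gaussian with parameter $\sqrt{B_t}\,D_k(\varepsilon_{\grad})$ and hence $\xi_t$ with parameter $D_k(\varepsilon_{\grad})/\sqrt{B_t}$. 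The dimension-free squared-norm moment-generating bound for norm-sub-Gaussian vectors \citep{jin2019short} then yields $\log\E[\exp(s\norm{\xi_t}_2^2)\mid\cF_t]\le 6s\,D_k^2(\varepsilon_{\grad})/B_t$ on the range $s\in[0,B_t/(6D_k^2(\varepsilon_{\grad}))]$, giving $v^2_{\tpi,\lambda}=6D_k^2(\varepsilon_{\grad})$. I expect the only real obstacle to be tracking the precise constant ($6$) and range endpoint in this last conversion — for instance via $\E[e^{sW}]=1+s\int_0^\infty e^{su}\P(W>u)\,\rmd u$ applied to $W=\norm{\xi_t}_2^2$ together with the norm-sub-Gaussian tail — rather than the conceptual structure; once it is in place, combining the three displays verifies Assumption~\ref{ass:gradient_noise} with the stated constants.
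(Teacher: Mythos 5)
Your decomposition into conditionally i.i.d.\ centered increments $\Delta_j$, the bias argument, and the Hoeffding-plus-tensorization treatment of the linear form $\langle u,\xi_t\rangle$ coincide exactly with the paper's proof, including the scaling bookkeeping that produces $s^2 D_k^2(\varepsilon_{\grad})\norm{u}_2^2/(2B_t)$. The divergence is confined to the squared-norm step, and there your argument as written has a genuine constant-tracking gap: the norm-subGaussian stability you invoke from \citet{jin2019short} does \emph{not} deliver the sum parameter $\sqrt{B_t}\,D_k(\varepsilon_{\grad})$ with constant exactly $1$ --- their Hoeffding-type bound for (conditionally independent or martingale-difference) norm-subGaussian vectors carries an unspecified absolute constant $c$, so black-boxing it only shows $\norm{\xi_t}_2$ is norm-subGaussian with proxy $c\,D_k^2(\varepsilon_{\grad})/B_t$, from which the lemma's precise subexponential constant $6D_k^2(\varepsilon_{\grad})$ and range endpoint $B_t/(6D_k^2(\varepsilon_{\grad}))$ do not follow. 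The paper avoids this by instead citing \citet[Theorem~3.5]{pinelis1994optimum}: for centered Hilbert-space-valued increments bounded by $D_k(\varepsilon_{\grad})$ (Hilbert spaces being $(2,1)$-smooth), it gives the sharp tail $\P[\norm{\xi_t}_2\ge u\mid \cF_t]\le 2\exp\bigl(-u^2 B_t/(2D_k^2(\varepsilon_{\grad}))\bigr)$ with unit constant in the variance, which is precisely your desired "stability with parameter $\sqrt{B_t}\,D_k$" statement in the bounded case.

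Once that tail is in hand, the layer-cake conversion you sketch is exactly the paper's computation and does produce the stated constants: writing $v^2=D_k^2(\varepsilon_{\grad})/B_t$, one gets $\E[\exp(s\norm{\xi_t}_2^2)\mid\cF_t]\le 1+2\int_1^\infty u^{-1/(2v^2 s)}\,\rmd u=\frac{1+2v^2 s}{1-2v^2 s}\le \exp(6v^2 s)$, using $(1+x)/(1-x)\le \rme^{3x}$ for $x=2v^2s\in[0,1/2]$, which covers the claimed range $s\le B_t/(6D_k^2(\varepsilon_{\grad}))$. So the fix is local: replace the appeal to the generic norm-subGaussian summation lemma by Pinelis's Hilbert-space inequality (or prove the unit-constant bounded-increment bound directly); with that substitution your proof is complete and matches the paper's, whereas as written the specific constants in the statement are not justified.
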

\begin{proof} 
    We have to compute an exact subgaussianity constant for a random vector $\xi_t$ bounded almost surely by $D_k(\varepsilon_{\grad})$, which is standard (see, e.g., \cite{vershynin2018high}) and we write the proof for the sake of completeness.
    Let us fix a vector $u \in \R^d$ and define $\xi^1_t,\ldots,\xi^{B_t}_t$ as the corresponding components of the noise vector $\xi^j_t = G_{M_k}(\theta_t | x_j,y_j, y'_j,p_j) - \E[G_{M_k}(\theta_t | x_j,y_j, y'_j,p_j)]$. Then, we define $X_i = \langle u, \xi^i_t\rangle$ as a centered bounded random variable with $|X_i| \leq \norm{u}_2 D_k(\varepsilon_{\grad})$, and thus (conditional) Hoeffding's lemma \citep{hoeffding1963probability} implies
    \[
        \E[\rme^{s X_i} | \cF_t] \leq \exp\left(\frac{s^2 \cdot \norm{u}_2^2 D_k^2(\varepsilon_{\grad})}{2}\right)\qquad \forall s \in \R\,,
    \]
    and, using independence, we have $\E[\rme^{s \langle u, \xi_t\rangle} | \cF_t] \leq \rme^{s^2 \norm{u}_2^2 D_k^2 (\varepsilon_{\grad})/(2B_t)}$. To control the norm of $\xi_t$, we apply \citet[Theorem 3.5]{pinelis1994optimum}:
    \[
        \P[\norm{\xi_t}_2 \geq u | \cF_t] \leq 2 \exp\left(- \frac{u^2 \cdot B_t}{2 D_k^2(\varepsilon_{\grad})} \right)\,,
    \]
    thus, defining $v^2 = D_k^2(\varepsilon_{\grad})/B_t$, for a $s > 0$ the following holds
    \begin{align*}
        \E[\exp(s \norm{\xi_t}_2^2)| \cF_t] &= \int_0^\infty \P[\exp(s \norm{\xi_t}_2^2) > u|\cF_t] \rmd u \\
        &\leq 1 +  \int_1^\infty \P\left[\norm{\xi_t}_2 > \sqrt{\frac{1}{s} \log u }| \cF_t\right] \leq 1 + 2 \int_1^{\infty} u^{-\frac{1}{2 v^2 s}} \rmd u\,.
    \end{align*}
    The right-hand side is finite as $s < 1/(4v^2) < 1/(2v^2)$, thus we have
    \[
        \E[\exp(s \norm{\xi_t}_2^2)] \leq 1 + 2 \frac{1}{\frac{1}{2 v^2 s} - 1} =  \frac{1 + 2 v^2s}{1 - 2v^2 s} \leq \exp(6 v^2s)\,,
    \]
    where in the last inequality we used an inequality $(1+x)/(1-x) \leq \exp(3x)$ for $x \leq [0, 1/2]$.
\end{proof}

\subsection{Verification for the Softmax Parametrization}\label{app:softmax_parametrization}

In this section, we verify Assumptions~\ref{ass:lipschitz_param}--\ref{ass:gradient_noise} for the standard softmax parametrization with an appropriate improvement operator and stochastic gradient estimator. For simplicity, we assume a context-free setting, although a generalization to contextual setting is straightforward due to separated optimization over any $x \in \cX$.

\paragraph{Parametrization.}
We fix $\Theta = \R^{\cY}$ and define
\begin{equation}\label{eq:softmax_param_definition}
    \pi_\theta = \softmax(\theta), \qquad  \pi_\theta(y) = \frac{\exp(\theta_y)}{\sum_{y'} \exp(\theta_{y'})}\,.
\end{equation}

We also fix a reference policy $\nu \in \simplex_{\cY}$ with full support, and define an improvement operator $\cT := \cT^{\nu}_\tau$ in Section~\ref{sec:improvement_projection} below, where $\tau \in (0,\tau_0]$ is chosen as in Lemma~\ref{lem:improvement_projection_parameters}. We will show that this choice of $\cT$ satisfies Assumption~\ref{ass:improvement} and yields a uniform PL constant for Assumption~\ref{ass:pl}.

\begin{proposition}[Softmax parametrization satisfies Assumptions~\ref{ass:lipschitz_param}--\ref{ass:gradient_noise}]
\label{prop:softmax_verification}
Fix any full-support reference policy $\nu \in \simplex_{\cY}$, a constant $\varepsilon_{\grad} > 0$, and let 
$\tau = \tau_0$ be as in Lemma~\ref{lem:improvement_projection_parameters}. 
Consider the softmax parametrization~\eqref{eq:softmax_param_definition} with 
improvement operator $\cT := \cT^\nu_\tau$ and stochastic gradient estimator 
$g_t$ defined in~\eqref{eq:stochastic_gradient_estimator_self_play_pg}. 
Then, Assumptions~\ref{ass:lipschitz_param}--\ref{ass:gradient_noise} hold with constants
\begin{align*}
  G &= 1,\quad
  L_{\tpi,\lambda} = \tfrac{5}{2}\bigl(1 + \lambda \log(1/\tpi_{\min})\bigr) + \lambda(4 + \log |\cY|), \quad \pl_{\tpi,\lambda} = \lambda\rme^{-2/\lambda} \cdot c_{\nu}^2, \quad \epspl = 0\,,\\
  &\varepsilon_{\grad} = \varepsilon_{\grad}, \qquad\qquad\qquad
   \sigma^2_{\tpi,\lambda} = D_{\tpi,\lambda}^2(\varepsilon_{\grad}),\qquad\qquad\qquad
  v^2_{\tpi,\lambda} = 6D_{\tpi,\lambda}^2(\varepsilon_{\grad})\,,
\end{align*}
where 
\begin{align*}
  c_{\nu} &\triangleq \exp\left( \min\left\{ - 2\norm{\log \tpi - \log \nu}_{\spann}, \log( \nu_{\min}/(1+\nu_{\min}))\right\}\right) \cdot \nu_{\min}\,,\\
   D_{\tpi,\lambda}^2(\varepsilon_{\grad}) &\triangleq  2\left(1 + 4\lambda \log\left( \frac{2\sqrt{2} \cdot \lambda (1 + \tpi_{\min})}{\tpi_{\min}} \right) + 4\lambda \log(1/\varepsilon_{\grad})\right)^2\,.
\end{align*}
and $\nu_{\min} = \min_{y} \nu(y)$, $\tpi_{\min} = \min_y \tpi(y)$.
\end{proposition}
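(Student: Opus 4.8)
The plan is to verify the five parametrization assumptions one at a time in the context-free setting, where every quantity lives in $\simplex_{\cY}$ and, crucially, the best response is attained inside $\policies$; the latter is what forces $\epspl=0$, since for tabular softmax $\{\pi_\theta\}_{\theta\in\Theta}$ is dense in the relative interior of the simplex, so $J^{\tpi,\star}(\pi_\theta)=V^{\tpi,\star}_\lambda(\pi_\theta)$. Two of the assumptions are essentially bookkeeping. For Assumption~\ref{ass:lipschitz_param} I would compute the softmax Jacobian $\nabla_\theta\pi_\theta=\mathrm{diag}(\pi_\theta)-\pi_\theta\pi_\theta^\top$ and note that for any $v$ one has $(\nabla_\theta\pi_\theta\,v)_y=\pi_\theta(y)(v_y-\langle\pi_\theta,v\rangle)$, so $\norm{\nabla_\theta\pi_\theta\,v}_1=\E_{\pi_\theta}\lvert v-\E_{\pi_\theta}v\rvert\le\sqrt{\mathrm{Var}_{\pi_\theta}(v)}\le\norm{v}_2$; integrating along the segment from $\theta$ to $\theta'$ gives $\norm{\pi_\theta-\pi_{\theta'}}_{1}\le\norm{\theta-\theta'}_2$, i.e.\ $G=1$. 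For Assumption~\ref{ass:gradient_noise} I would invoke Lemma~\ref{lem:noise_properties_general}: the softmax score satisfies $\nabla\log\pi_\theta(y)=e_y-\pi_\theta$, hence $\norm{\nabla\log\pi_\theta(y)}_2\le\sqrt2=:M_g$, and since $1/\tpi_{\min}$ is a deterministic constant in the context-free case the condition $\E_{x\sim\rho}[1/\tpi_{\min}(x)]\le D_{\tpi}$ of Lemma~\ref{lem:noise_properties_bias} holds with $D_{\tpi}=1/\tpi_{\min}$; plugging $M_g=\sqrt2$, $D_{\tpi}=1/\tpi_{\min}$ and the clipping level $M_\infty(\varepsilon_{\grad})$ into the $k=\infty$ branch reproduces exactly the stated $D^2_{\tpi,\lambda}(\varepsilon_{\grad})$, so $\sigma^2_{\tpi,\lambda}=D^2_{\tpi,\lambda}(\varepsilon_{\grad})$ and $v^2_{\tpi,\lambda}=6D^2_{\tpi,\lambda}(\varepsilon_{\grad})$.

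For Assumption~\ref{ass:smoothness} I would bound the operator norm of the Hessian of $\theta\mapsto J^{\tpi}(\theta;\pi)=\langle\bP^\top\pi,\pi_\theta\rangle+\lambda\KL(\pi_\theta\Vert\tpi)$. The first term is linear in $\pi_\theta$ with coefficients $\bP^\top\pi\in[0,1]^{\cY}$, so its Hessian is controlled by the bounded second derivatives of softmax and produces the $\tfrac52$ curvature constant; the KL term contributes a Hessian of the form $\lambda(\text{Fisher part}+\text{log-ratio--weighted part})$, where the log-ratio is bounded by $\log(1/\tpi_{\min})$ and the residual KL magnitude along the range of $\cT$ contributes the $\lambda(4+\log\lvert\cY\rvert)$ piece. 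Collecting these gives $L_{\tpi,\lambda}=\tfrac52\bigl(1+\lambda\log(1/\tpi_{\min})\bigr)+\lambda(4+\log\lvert\cY\rvert)$; the numerical constants are routine and I would not grind them out.

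The substantive part is Assumptions~\ref{ass:improvement} and~\ref{ass:pl}. I would first recall the improvement operator $\cT=\cT^\nu_\tau$ of Lemma~\ref{lem:improvement_projection_parameters}, which nudges the logits toward the full-support policy $\nu$ so as to enforce a uniform floor $\min_y\pi_\theta(y)\ge c_\nu$ on its range (this is precisely the origin of the definition of $c_\nu$ in terms of $\nu_{\min}$ and $\norm{\log\tpi-\log\nu}_{\spann}$) while not increasing $\subopt^{\tpi}_\lambda$, yielding Assumption~\ref{ass:improvement}. For the PL inequality I would pass through the softmax Jacobian: writing $h\triangleq\nabla_\pi V^{\tpi}_\lambda(\pi_\theta;\pi_\theta)$ (elementwise as in Lemma~\ref{lem:value_strong_convexity_context}), one gets $\nabla_\theta J^{\tpi}(\theta;\pi_\theta)=\mathrm{diag}(\pi_\theta)\bigl(h-\langle\pi_\theta,h\rangle\bOne\bigr)$, hence $\norm{\nabla_\theta J^{\tpi}(\theta;\pi_\theta)}_2^2=\sum_y\pi_\theta(y)^2\bigl(h_y-\E_{\pi_\theta}h\bigr)^2\ge c_\nu^2\,\mathrm{Var}_{\pi_\theta}(h)$, using $\min_y\pi_\theta(y)\ge c_\nu$ and $\pi_\theta(y)\le1$. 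The first-order characterization of the best response $\nu^{\tpi,\star}_\lambda(\pi_\theta)(y)\propto\tpi(y)\,\rme^{-(\bP^\top\pi_\theta)_y/\lambda}$ gives $h_y=\lambda\log\bigl(\pi_\theta(y)/\nu^{\tpi,\star}_\lambda(\pi_\theta)(y)\bigr)+\mathrm{const}$, so $\mathrm{Var}_{\pi_\theta}(h)=\lambda^2\,\mathrm{Var}_{\pi_\theta}\bigl(\log(\pi_\theta/\nu^{\tpi,\star}_\lambda(\pi_\theta))\bigr)$, while Lemma~\ref{lem:value_strong_convexity_context} with $\pi'=\nu^{\tpi,\star}_\lambda(\pi_\theta)$ yields $\subopt^{\tpi}_\lambda(\pi_\theta)=\lambda\KL\bigl(\pi_\theta\Vert\nu^{\tpi,\star}_\lambda(\pi_\theta)\bigr)$. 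Thus the PL constant reduces to a variance-versus-KL estimate $\mathrm{Var}_p\bigl(\log(p/q)\bigr)\ge 2\rme^{-2/\lambda}\KL(p\Vert q)$ for $p=\pi_\theta$, $q=\nu^{\tpi,\star}_\lambda(\pi_\theta)$, where the factor $\rme^{-2/\lambda}$ reflects that the reward-induced tilt $\bP^\top\pi_\theta/\lambda$ separating $q$ from $\tpi$ has span at most $1/\lambda$ and enters the bounded-log-ratio bound twice; this produces $\pl_{\tpi,\lambda}=\lambda\,\rme^{-2/\lambda}c_\nu^2$ with $\epspl=0$.

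The hard part will be the final step: establishing the global variance-versus-KL (reverse-Pinsker–type) inequality with the explicit constant $\rme^{-2/\lambda}$ \emph{uniformly} over the range of $\cT$. Locally one has $\mathrm{Var}_{\pi_\theta}\bigl(\log(\pi_\theta/\nu^\star)\bigr)\approx 2\,\KL$, but a uniform lower bound requires control of the span of the log-ratio, which is exactly what the probability floor $c_\nu$ and the discrepancy $\norm{\log\tpi-\log\nu}_{\spann}$ supply. Threading these bounds through $\cT$ so that the non-increase of suboptimality and the quantitative floor hold simultaneously, with constants matching the claimed $c_\nu$ and $\rme^{-2/\lambda}$, is the delicate bookkeeping I expect to absorb most of the effort.
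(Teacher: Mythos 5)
Your handling of Assumptions~\ref{ass:lipschitz_param} and~\ref{ass:gradient_noise} is correct and effectively matches the paper: the paper obtains $G=1$ by citing Lemma~\ref{lem:softmax_lipschitz} (Lemma~24 of Mei et al.) where you integrate the softmax Jacobian directly, and your instantiation of the clipped-estimator machinery with $M_g=\sqrt2$, $D_{\tpi}=1/\tpi_{\min}$ and the $k=\infty$ branch reproduces exactly the paper's Lemma~\ref{lem:noise_properties_softmax}. For Assumption~\ref{ass:smoothness} the paper simply reduces to Lemmas~2 and~14 of Mei et al.\ (one-state MDP, $\gamma=0$, cross-entropy absorbed into the reward); one inaccuracy in your sketch worth flagging is that the assumption must hold for \emph{all} $\theta\in\Theta$, and the $\lambda(4+\log|\cY|)$ term comes from the Hessian bound of the regularizer itself, not from any ``residual KL magnitude along the range of $\cT$.''

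The genuine gap is in your PL argument, in two respects. First, the floor enforced by $\cT^\nu_{\tau_0}$ is $\pi_\theta(y)\ge\tau_0\,\nu(y)$, hence $\min_y\pi_\theta(y)\ge\tau_0\nu_{\min}$, \emph{not} $c_\nu$: the factor $\rme^{-2/\lambda}$ in $\pl_{\tpi,\lambda}=\lambda\rme^{-2/\lambda}c_\nu^2$ is simply the contribution of $\tau_0^2$ (recall $\tau_0\le\exp(-1/\lambda-2\norm{\log\tpi-\log\nu}_{\spann})$) entering through the squared floor in the non-uniform PL inequality $\norm{\nabla J^{\tpi}(\theta;\pi)}_2^2\ge 2\lambda\min_y\pi_\theta(y)^2\,(J^{\tpi}(\theta;\pi)-J^{\tpi,\star}(\pi))$ of Lemma~\ref{lem:pl_br_softmax}, as assembled in Corollary~\ref{cor:uniform_pl_softmax}; it has nothing to do with a variance-versus-KL constant. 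Second, the inequality you defer to as ``the hard part,'' $\mathrm{Var}_p(\log(p/q))\ge 2\rme^{-2/\lambda}\KL(p\Vert q)$, is false without further hypotheses: taking $p=(1-\delta,\delta)$ and $q=(1/2,1/2)$ gives $\mathrm{Var}_p(\log(p/q))=\delta(1-\delta)\log^2\frac{1-\delta}{\delta}\to 0$ while $\KL(p\Vert q)\to\log 2$, so no uniform constant can exist. Your probability floor does rescue a weaker version, but only via $\mathrm{Var}_p(\tilde r)=\sum_y p_y\tilde r_y^2\ge\pi_{\min}\sum_y\tilde r_y^2$, which costs an extra factor $\pi_{\min}$ and yields a PL constant of order $\lambda(\tau_0\nu_{\min})^3$ --- strictly weaker than the claimed $\lambda\rme^{-2/\lambda}c_\nu^2\asymp\lambda(\tau_0\nu_{\min})^2$. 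The fix is to stop one step earlier: from $\norm{\nabla_\theta J^{\tpi}(\theta;\pi_\theta)}_2^2=\sum_y\pi_\theta(y)^2\tilde h_y^2\ge\min_y\pi_\theta(y)^2\sum_y\tilde h_y^2$, keep the \emph{unweighted} sum rather than passing to the $\pi_\theta$-weighted variance; the inequality actually needed, $\sum_y\bigl(\log(p_y/q_y)-\E_p[\log(p/q)]\bigr)^2\ge 2\KL(p\Vert q)$, is precisely the content of Lemma~\ref{lem:pl_br_softmax} (Lemma~15 of Mei et al.\ with $S=1$, $\gamma=0$) and survives your counterexample because the low-probability coordinate contributes its full squared log-ratio to the unweighted sum. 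Your remaining ingredients --- the Jacobian identity $\nabla_\theta J^{\tpi}=H(\pi_\theta)h$, the representation $h=\lambda\log\bigl(\pi_\theta/\nu^{\tpi,\star}_\lambda(\pi_\theta)\bigr)+\mathrm{const}$, and the exact identity $\subopt^{\tpi}_\lambda(\pi_\theta)=\lambda\KL\bigl(\pi_\theta\Vert\nu^{\tpi,\star}_\lambda(\pi_\theta)\bigr)$ from interiority of the best response, whence $\epspl=0$ --- are all correct.
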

\begin{proof}
Assumption~\ref{ass:lipschitz_param} follows from 
Lemma~\ref{lem:softmax_lipschitz}. 
Assumption~\ref{ass:smoothness} follows from 
Lemma~\ref{lem:softmax_smoothness}. 
Assumption~\ref{ass:improvement} and the uniform PL condition of 
Assumption~\ref{ass:pl} follow from 
Lemmas~\ref{lem:improvement_projection_parameters} and 
Corollary~\ref{cor:uniform_pl_softmax}. 
Finally, Assumption~\ref{ass:gradient_noise} follows from 
Lemma~\ref{lem:noise_properties_softmax}.
\end{proof}

\begin{corollary}[SPG iteration and sample complexity: tabular softmax]\label{cor:spg_complexity_softmax}
Fix $\delta\in(0,1)$ and $\varepsilon\in(0,1)$.
Consider the context-free tabular softmax parametrization from
Appendix~\ref{app:softmax_parametrization} with improvement operator
$\cT=\cT^\nu_{\tau_0}$ from Lemma~\ref{lem:improvement_projection_parameters},
and run SPG~\eqref{eq:approximate_self_play} with $(\gamma_t,B_t)$ as in
Proposition~\ref{prop:spg_selfplay_stochastic}.
Assume $\tpi_{\min}=\min_{y\in\cY}\tpi(y)>0$ and let $\pl_{\tpi,\lambda}=\lambda\rme^{-2/\lambda}c_\nu^2$
be as in Corollary~\ref{cor:uniform_pl_softmax} (so $\lambda\pl_{\tpi,\lambda}=\lambda^2\rme^{-2/\lambda}c_\nu^2$).
Assume $\lambda\pl_{\tpi,\lambda}\ge 1$ (e.g., it suffices that
$\lambda\ge \max\{2,\rme^{1/2}/c_\nu\}$).

Choose the clipped pairwise estimator from Lemma~\ref{lem:noise_properties_softmax} with bias level
\[
\varepsilon_{\grad}^2=\frac{7\,\pl_{\tpi,\lambda}\,\varepsilon}{180\log(\rme/\delta)}
\qquad\text{(so in particular }M
=\tfrac12+2\lambda\log\!\bigl(\tfrac{2\sqrt2\,\lambda(1+\tpi_{\min})}{\tpi_{\min}\varepsilon_{\grad}}\bigr)
=\Theta(\log(1/\varepsilon))\text{)}.
\]
Then with probability at least $1-\delta$ the iterate $\pi_T$ satisfies
$\subopt^{\tpi}_\lambda(\pi_T)\le \varepsilon$ (equivalently, $\pi_T$ is an $\varepsilon$-VNW for the
$\lambda$-regularized game) once
\[
N_{\mathrm{iter}}(\varepsilon,\delta)=T
=\tcO\!\left(\frac{v^2_{\tpi,\lambda}}{\lambda \pl_{\tpi,\lambda}}\cdot\frac{\log(\rme/\delta)}{\varepsilon}\right),
\qquad
N_{\mathrm{sample}}(\varepsilon,\delta)=\sum_{t=0}^{T-1}B_t
=\tcO\!\left(\frac{T^2}{\pl_{\tpi,\lambda}}\right).
\]
In particular (since here $v^2_{\tpi,\lambda}=6D^2_{\tpi,\lambda}(\varepsilon_{\grad})=\tcO(\log^2(1/\varepsilon))$),
$N_{\mathrm{iter}}(\varepsilon)=\tcO(\varepsilon^{-1})$ and $N_{\mathrm{sample}}(\varepsilon)=\tcO(\varepsilon^{-2})$
up to polylog factors.
\end{corollary}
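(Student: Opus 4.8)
The plan is to assemble the corollary from two ingredients already in hand: the verification that the tabular softmax parametrization meets the abstract assumptions (Proposition~\ref{prop:softmax_verification}) and the high-probability stochastic rate (Proposition~\ref{prop:spg_selfplay_stochastic}). First I would invoke Proposition~\ref{prop:softmax_verification} to instantiate the constants, recording in particular that $G=1$, that $\epspl=0$ (the softmax class covers the full-support interior of the simplex, so the parametrized best response agrees with the unconstrained one and the PL inequality is exact), and that $v^2_{\tpi,\lambda}=6D^2_{\tpi,\lambda}(\varepsilon_{\grad})$. The structural hypothesis $\lambda\pl_{\tpi,\lambda}\ge G^2=1$ demanded by Proposition~\ref{prop:spg_selfplay_stochastic} is exactly the standing assumption $\lambda\pl_{\tpi,\lambda}\ge 1$, so the stochastic rate applies verbatim with the prescribed $(\gamma_t,B_t)$.

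Next I would specialize that bound. Setting $\epspl=0$ leaves four contributions: a deterministic term of order $\kappa_{\tpi,\lambda}^2\log(\rme/\delta)/t^2$, a mixed-noise term of order $\kappa_{\tpi,\lambda}\log(\rme/\delta)\log(t)/t^2$, the moving-best-response term $6 v^2_{\tpi,\lambda}\log(\rme/\delta)/(\lambda\pl_{\tpi,\lambda}(t+8\kappa_{\tpi,\lambda}-1))$ (using $G=1$), and the bias floor $\tfrac{90}{7}\log(\rme/\delta)\,\varepsilon_{\grad}^2/\pl_{\tpi,\lambda}$. The choice $\varepsilon_{\grad}^2=7\pl_{\tpi,\lambda}\varepsilon/(180\log(\rme/\delta))$ is engineered so that this floor equals exactly $\varepsilon/2$, which I would confirm by direct substitution; it then suffices to pick $T$ so that the three decaying terms together fall below $\varepsilon/2$.

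The dominant decaying term is the moving-best-response term, which decays only as $1/t$, whereas the other two decay as $\log(t)/t^2$ and $1/t^2$. Solving $6 v^2_{\tpi,\lambda}\log(\rme/\delta)/(\lambda\pl_{\tpi,\lambda}\,t)\lesssim\varepsilon$ yields $T=\tcO\!\big(v^2_{\tpi,\lambda}\log(\rme/\delta)/(\lambda\pl_{\tpi,\lambda}\varepsilon)\big)$, and for this $T$ the faster-decaying terms are automatically below any fixed fraction of $\varepsilon$ (the extra logs being absorbed into $\tcO$). The sample complexity follows from the batch schedule: $\sum_{t=0}^{T-1}B_t=\sum_{t=0}^{T-1}\lceil(t+8\kappa_{\tpi,\lambda})/\pl_{\tpi,\lambda}\rceil=\tcO(T^2/\pl_{\tpi,\lambda})$, since the arithmetic sum is $\Theta(T^2)$ and the ceilings add only a lower-order $T$. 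Finally, to reach the stated $\tcO(\varepsilon^{-1})$ and $\tcO(\varepsilon^{-2})$ rates I would substitute the explicit $D^2_{\tpi,\lambda}(\varepsilon_{\grad})=\Theta\big((1+\lambda\log(1/\varepsilon_{\grad}))^2\big)$; since $\log(1/\varepsilon_{\grad})=\Theta(\log(1/\varepsilon))$ up to $\log\log$ factors, we get $v^2_{\tpi,\lambda}=\tcO(\log^2(1/\varepsilon))$, which is polylogarithmic and disappears into $\tcO$.

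The argument is essentially bookkeeping once the two propositions are available, so there is no deep obstacle. The one point needing care is the mild coupling between $\varepsilon_{\grad}$ and $v^2_{\tpi,\lambda}$: the variance proxy grows with the clipping level $M$, which grows with $\log(1/\varepsilon_{\grad})$, while $\varepsilon_{\grad}$ is itself chosen as a function of $\varepsilon$. I would check that this dependence is only logarithmic, so the apparent circularity is benign and leaves the polynomial-in-$\varepsilon^{-1}$ rates unchanged.
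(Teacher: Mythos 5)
Your proposal is correct and follows essentially the same route as the paper's proof: instantiate the constants via Proposition~\ref{prop:softmax_verification} and Lemma~\ref{lem:noise_properties_softmax} (with $G=1$, $\epspl=0$), apply Proposition~\ref{prop:spg_selfplay_stochastic} under $\lambda\pl_{\tpi,\lambda}\ge 1$, verify that the chosen $\varepsilon_{\grad}^2$ makes the bias floor exactly $\varepsilon/2$, pick $T$ to drive the decaying terms (dominated by the $1/t$ moving-best-response term) below $\varepsilon/2$, and sum the batch schedule to get $\tcO(T^2/\pl_{\tpi,\lambda})$. Your extra observation that the $\varepsilon_{\grad}\,$--$\,v^2_{\tpi,\lambda}$ coupling is only logarithmic is exactly the bookkeeping the paper's $\tcO$ notation absorbs, so nothing further is needed.
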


\begin{proof}
By Proposition~\ref{prop:softmax_verification} and Lemma~\ref{lem:noise_properties_softmax},
Assumptions~\ref{ass:lipschitz_param}--\ref{ass:gradient_noise} hold with $G=1$,
$\epspl=0$, and the stated $\pl_{\tpi,\lambda}$ and $v^2_{\tpi,\lambda}$.
The condition $\lambda\pl_{\tpi,\lambda}\ge G^2$ ensures Proposition~\ref{prop:spg_selfplay_stochastic} applies.
With the chosen $\varepsilon_{\grad}$, the bias term in Proposition~\ref{prop:spg_selfplay_stochastic} satisfies
\[
\frac{6\log(\rme/\delta)}{\pl_{\tpi,\lambda}}\cdot\frac{15}{7}\varepsilon_{\grad}^2 \le \varepsilon/2.
\]
Taking $T=\tcO\!\big(\frac{v^2_{\tpi,\lambda}}{\lambda\pl_{\tpi,\lambda}}\cdot\frac{\log(\rme/\delta)}{\varepsilon}\big)$
makes the remaining (decaying) terms in Proposition~\ref{prop:spg_selfplay_stochastic} at most $\varepsilon/2$,
hence $\subopt^{\tpi}_\lambda(\pi_T)\le\varepsilon$.
Finally, $B_t=\lceil(t+8\kappa_{\tpi,\lambda})/\pl_{\tpi,\lambda}\rceil$ implies
$\sum_{t<T}B_t=\tcO(T^2/\pl_{\tpi,\lambda})$.
\end{proof}

\subsubsection{Lipschitzness and smoothness}

\begin{lemma}[Lipschitz parametrization, Lemma~24 of \citealt{mei2020global}]
\label{lem:softmax_lipschitz}
For any $\theta, \theta' \in \Theta$ it holds
\[
  \|\pi_\theta - \pi_{\theta'}\|_1 
  \leq \|\theta - \theta'\|_{\mathrm{span}} 
  \leq \|\theta - \theta'\|_2.
\]
In particular, Assumption~\ref{ass:lipschitz_param} holds with $G = 1$.
\end{lemma}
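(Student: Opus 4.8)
The plan is to prove the two inequalities separately. The right-hand bound is immediate, while the left-hand bound is the substantive one and requires the softmax Jacobian together with a \emph{sharp} mean-absolute-deviation estimate; this sharpness is exactly what produces the constant $G=1$ rather than $G=2$.

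First I would dispatch the second inequality $\norm{\theta-\theta'}_{\spann}\le\norm{\theta-\theta'}_2$. By definition $\norm{v}_{\spann}=\inf_{c\in\R}\norm{v+c\bOne}_\infty\le\norm{v}_\infty$ (taking $c=0$), and $\norm{v}_\infty\le\norm{v}_2$ is the standard comparison of norms on $\R^{\cY}$. Along the way I would record the elementary identity that the span seminorm equals half the range: the minimizing shift is $c^\star=-\tfrac12(\max_y v_y+\min_y v_y)$, so that $\norm{v}_{\spann}=\tfrac12(\max_y v_y-\min_y v_y)$. This representation is precisely what the first inequality will match.

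For the first inequality I would set $v=\theta-\theta'$ and $\theta_t=\theta'+tv$, and write the difference of policies as a path integral of the softmax Jacobian,
\[
  \pi_\theta-\pi_{\theta'}=\int_0^1 J(\theta_t)\,v\,\rmd t,\qquad J(\theta)_{ij}=\pi_\theta(i)\bigl(\delta_{ij}-\pi_\theta(j)\bigr).
\]
A direct computation gives $(J(\theta_t)v)_i=\pi_t(i)\bigl(v_i-\bar v_t\bigr)$ with $\bar v_t\triangleq\E_{y\sim\pi_t}[v_y]$, hence $\norm{J(\theta_t)v}_1=\E_{y\sim\pi_t}\lvert v_y-\bar v_t\rvert$, the mean absolute deviation of $v$ under $\pi_t$. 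Since adding a multiple of $\bOne$ to $v$ leaves each centered term $v_i-\bar v_t$ unchanged, this quantity depends on $v$ only through its span, which is consistent with the target bound.

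The crux, and the only place where the constant is delicate, is to bound this mean absolute deviation by $\norm{v}_{\spann}$ rather than by the full range. I would invoke Popoviciu's variance inequality $\operatorname{Var}_{\pi_t}(v)\le\tfrac14(\max_y v_y-\min_y v_y)^2$ together with Jensen's inequality $\E_{\pi_t}\lvert v-\bar v_t\rvert\le\sqrt{\operatorname{Var}_{\pi_t}(v)}$, which yields
\[
  \E_{y\sim\pi_t}\lvert v_y-\bar v_t\rvert\le\tfrac12\bigl(\max_y v_y-\min_y v_y\bigr)=\norm{v}_{\spann}.
\]
Integrating over $t\in[0,1]$ and applying the triangle inequality for integrals gives $\norm{\pi_\theta-\pi_{\theta'}}_1\le\norm{v}_{\spann}$; since the setting is context-free, $\norm{\cdot}_{1,\rho}=\norm{\cdot}_1$, so Assumption~\ref{ass:lipschitz_param} holds with $G=1$. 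The main obstacle is exactly this factor-of-two gain: the naive pointwise bound $\lvert v_i-\bar v_t\rvert\le\max_y v_y-\min_y v_y$ would only deliver $G=2$, so the sharp half-range (equivalently, variance-based) estimate for the mean absolute deviation is essential.
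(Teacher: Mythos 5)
Your proof is correct, and it is a genuinely different (indeed, more self-contained) route than the paper's: the paper proves this lemma purely by citation to Lemma~24 of \citet{mei2020global}, which asserts $\norm{\pi_\theta - \pi_{\theta'}}_1 \le \norm{\log \pi_\theta - \log \pi_{\theta'}}_{\spann}$; for softmax one has $\log \pi_\theta - \log \pi_{\theta'} = (\theta - \theta') - c\,\bOne$ for a scalar $c$ (the log-partition difference), so by shift-invariance of the span seminorm this is exactly $\norm{\theta-\theta'}_{\spann}$, and the chain closes with $\norm{\cdot}_{\spann}\le\norm{\cdot}_\infty\le\norm{\cdot}_2$ as you also note. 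Your derivation replaces the black-box citation with a path integral of the softmax Jacobian: the identity $(J(\theta_t)v)_i=\pi_t(i)(v_i-\bar v_t)$ is right, $\norm{J(\theta_t)v}_1$ is indeed the mean absolute deviation of $v$ under $\pi_t$, and the Jensen--Popoviciu chain $\E_{\pi_t}\lvert v-\bar v_t\rvert\le\sqrt{\mathrm{Var}_{\pi_t}(v)}\le\tfrac12(\max_y v_y-\min_y v_y)=\norm{v}_{\spann}$ is valid, as is the final triangle inequality for the integral and the observation that $\norm{\cdot}_{1,\rho}=\norm{\cdot}_1$ in the context-free setting. You correctly identify the crux: the naive pointwise bound $\lvert v_i-\bar v_t\rvert\le\max_y v_y-\min_y v_y$ only yields $G=2$, which would degrade the stability condition $\lambda\pl_{\tpi,\lambda}\ge G^2$ by a factor of four, so the half-range MAD estimate is what delivers $G=1$. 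What each approach buys: the citation route is shorter and reuses a statement the paper invokes again in the Fisher-compatible verification (Lemma~\ref{lem:lipschitz_fisher}), whereas your argument makes the constant-$1$ mechanism transparent and generalizes immediately to any smooth parametrization whose Jacobian rows are centered under the policy. One small simplification worth recording: Popoviciu is not needed — after shifting $v$ by its midrange so that $\norm{v'}_\infty=\norm{v}_{\spann}$, shift-invariance of $(J(\theta_t)v)_i$ gives $\mathrm{Var}_{\pi_t}(v)=\mathrm{Var}_{\pi_t}(v')\le\E_{\pi_t}[(v')^2]\le\norm{v}_{\spann}^2$ directly.
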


\begin{lemma}[Smoothness, cf.\ Lemmas~2 and 14 of \citealt{mei2020global}]
\label{lem:softmax_smoothness}
The softmax parametrization~\eqref{eq:softmax_param_definition} satisfies 
Assumption~\ref{ass:smoothness} with
\[
  L_{\tpi,\lambda} = \tfrac{5}{2}\bigl(1 + \lambda \log(1/\tpi_{\min})\bigr)
        + \lambda(4 + \log |\cY|)\,,
\]
where $\tpi_{\min} = \min_{y} \tpi(y)$.
\end{lemma}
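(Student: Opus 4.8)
The plan is to reduce to the context-free situation (so $\rho$ is a point mass and $\bP$ denotes the fixed preference matrix with entries in $[0,1]$), and to split $J^{\tpi}(\theta;\pi)$ into a part that is \emph{linear} in $\pi_\theta$ and a pure \emph{entropy} part, bounding the Hessian of each piece separately and invoking additivity of the smoothness constant. Concretely, using that $\cP(\pi\succ\pi_\theta)=\pi^\top\bP\pi_\theta=\langle\bP^\top\pi,\pi_\theta\rangle$ is linear in $\pi_\theta$ and that $\lambda\KL(\pi_\theta\Vert\tpi)=\lambda\sum_y\pi_\theta(y)\log\pi_\theta(y)-\lambda\langle\log\tpi,\pi_\theta\rangle$, I would write
\[
  J^{\tpi}(\theta;\pi)
  = \langle \tilde r, \pi_\theta\rangle - \lambda\,\cH(\pi_\theta),
  \qquad
  \tilde r \triangleq \bP^\top\pi - \lambda\log\tpi,
\]
where $\cH(\pi_\theta)\triangleq -\sum_{y}\pi_\theta(y)\log\pi_\theta(y)$ is the Shannon entropy.

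Next I would normalize the effective reward vector $\tilde r$. Since every entry of $\bP$ lies in $[0,1]$ and $\pi\in\simplex_\cY$, each coordinate $(\bP^\top\pi)_y=\sum_{y'}\cP(y'\succ y)\pi(y')$ lies in $[0,1]$, so $\|\bP^\top\pi\|_\infty\le 1$; and since $\tpi(y)\le 1$ gives $|\log\tpi(y)|\le\log(1/\tpi_{\min})$, we have $\|\lambda\log\tpi\|_\infty=\lambda\log(1/\tpi_{\min})$. The triangle inequality then yields $\|\tilde r\|_\infty\le 1+\lambda\log(1/\tpi_{\min})$.

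The two softmax Hessian estimates are then supplied by the cited results. For the linear part, Lemma~2 of \citet{mei2020global} gives that $\theta\mapsto\langle\tilde r,\pi_\theta\rangle$ is $\tfrac52\|\tilde r\|_\infty$-smooth, hence $\tfrac52\bigl(1+\lambda\log(1/\tpi_{\min})\bigr)$-smooth after substituting the bound on $\|\tilde r\|_\infty$. For the entropy part, Lemma~14 of \citet{mei2020global} gives that $\theta\mapsto\cH(\pi_\theta)$ is $(4+\log|\cY|)$-smooth, so $\theta\mapsto\lambda\cH(\pi_\theta)$ is $\lambda(4+\log|\cY|)$-smooth. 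Because the Hessian of a sum is the sum of Hessians, the operator-norm bounds add, giving the two-sided bound of Assumption~\ref{ass:smoothness} with
\[
  L_{\tpi,\lambda}=\tfrac52\bigl(1+\lambda\log(1/\tpi_{\min})\bigr)+\lambda(4+\log|\cY|),
\]
as claimed.

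The only genuine content beyond this bookkeeping is the two softmax Hessian bounds themselves, which are exactly the invoked lemmas; thus the main care is in (i) checking the decomposition and the reward normalization $\|\tilde r\|_\infty\le 1+\lambda\log(1/\tpi_{\min})$ line up precisely with the stated constant, and (ii) applying additivity of smoothness correctly, i.e.\ bounding $\|\nabla^2 J^{\tpi}(\cdot;\pi)\|_2$ by the triangle inequality on the linear and entropy Hessians. The contextual case sidestepped by this statement would be the harder extension, since one would then need these Hessian bounds to hold uniformly over $x\sim\rho$ and aggregate under $\|\cdot\|_{1,\rho}$; in the context-free reduction there is no such subtlety.
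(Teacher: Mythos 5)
Your proposal is correct and follows essentially the same route as the paper's proof, which likewise reduces to a one-state bandit ($\gamma=0$) and absorbs the cross-entropy part of the KL term into the reward before invoking Lemmas~2 and~14 of \citet{mei2020global}; your write-up simply makes explicit the decomposition $J^{\tpi}(\theta;\pi)=\langle \bP^\top\pi-\lambda\log\tpi,\pi_\theta\rangle-\lambda\cH(\pi_\theta)$, the normalization $\|\tilde r\|_\infty\le 1+\lambda\log(1/\tpi_{\min})$, and the additivity of the Hessian bounds that the paper leaves implicit. Your closing remark is also accurate: the statement is deliberately posed in the context-free setting (as the paper notes at the start of its softmax-verification section), so no aggregation over $x\sim\rho$ is needed.
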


\begin{proof}
The result follows from \citet{mei2020global} by (i) considering a one-state bandit MDP, (ii) absorbing the cross-entropy part of the KL term into the reward and rescaling, and (iii) taking the discount factor $\gamma = 0$. 
\end{proof}

\begin{lemma}[Non-Uniform Polyak–Łojasiewicz inequality, Lemma~15 of \citealt{mei2020global}]\label{lem:pl_br_softmax}
    A function $J^{\tpi}(\theta; \pi)$ satisfies non-uniform Polyak–Łojasiewicz inequality with a constant $c_{\lambda}(\theta) = \lambda \min_{y} \pi_\theta^2(y)$, i.e., for any $\theta \in \Theta$ and $\pi \in \policies$
    \[
        \Vert \nabla J^{\tpi}(\theta; \pi)  \Vert_{2}^2 \geq 2 c_{\lambda}(\theta)(J^{\tpi}(\theta; \pi) - J^{\tpi,\star}(\pi))\,.
    \]
\end{lemma}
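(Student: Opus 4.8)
The plan is to reduce the statement to the entropy-regularized bandit analysis of \citet{mei2020global}, of which it is a restatement. Working in the context-free setting (the contextual case decouples over $x$), fix the competitor $\pi$ and set $r(y)\triangleq\cP(\pi\succ y)=\sum_{y'}\pi(y')\cP(y'\succ y)$, so that $\cP(\pi\succ\pi_\theta)=\sum_y\pi_\theta(y)r(y)$ is linear in $\pi_\theta$. Expanding $\KL(\pi_\theta\Vert\tpi)=-\mathcal H(\pi_\theta)-\sum_y\pi_\theta(y)\log\tpi(y)$, where $\mathcal H$ is the Shannon entropy, I would rewrite the objective as
\[
  J^{\tpi}(\theta;\pi)=\sum_y\pi_\theta(y)\,\tilde r(y)-\lambda\,\mathcal H(\pi_\theta),\qquad \tilde r(y)\triangleq r(y)-\lambda\log\tpi(y).
\]
Thus $-J^{\tpi}(\cdot;\pi)$ is exactly the entropy-regularized value of a one-state (bandit) MDP with reward $-\tilde r$, temperature $\lambda$, and discount $0$, under the softmax parametrization~\eqref{eq:softmax_param_definition}. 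This is the same reduction already used to establish Lemma~\ref{lem:softmax_smoothness}.

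With this identification, \citet[Lemma~15]{mei2020global} directly furnishes the non-uniform PL bound for the entropy-regularized softmax value with constant $\lambda\min_y\pi_\theta(y)^2$. Since minimizing $J^{\tpi}(\cdot;\pi)$ over $\Theta$ is the same as maximizing the entropy-regularized value, and softmax ranges over all full-support distributions, the associated value gap coincides with $J^{\tpi}(\theta;\pi)-J^{\tpi,\star}(\pi)$. Matching the constants then yields the claim with $c_\lambda(\theta)=\lambda\min_y\pi_\theta^2(y)$.

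To make the argument self-contained one would compute the gradient via the softmax Jacobian: with $f(p)\triangleq\langle r,p\rangle+\lambda\KL(p\Vert\tpi)$ one has $\nabla_\theta J^{\tpi}(\theta;\pi)=(\operatorname{diag}(\pi_\theta)-\pi_\theta\pi_\theta^\top)\nabla_p f$, so coordinatewise $(\nabla_\theta J)_z=\pi_\theta(z)\,\tilde u_z$ with $\tilde u_z\triangleq(\nabla_p f)_z-\langle\pi_\theta,\nabla_p f\rangle$ the centered gradient. Using $0\le\pi_\theta(z)\le 1$ extracts $\|\nabla_\theta J\|_2^2\ge(\min_y\pi_\theta(y)^2)\sum_z\tilde u_z^2$, reducing everything to the scalar inequality $\sum_z\tilde u_z^2\ge 2\lambda\,(J^{\tpi}(\theta;\pi)-J^{\tpi,\star}(\pi))$. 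The exact KL-Bregman identity of Lemma~\ref{lem:value_strong_convexity_context} gives $J^{\tpi}(\theta;\pi)-J^{\tpi,\star}(\pi)=\lambda\KL(\pi_\theta\Vert p^\star)$, where $p^\star\propto\tpi\,e^{-r/\lambda}$ is the best response and $\tilde u_z=\lambda\log(\pi_\theta(z)/p^\star(z))-\lambda\KL(\pi_\theta\Vert p^\star)$.

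The main obstacle is obtaining the \emph{sharp} constant $2\lambda$ in this last inequality. A naive route—Cauchy--Schwarz on $\langle\tilde u,\pi_\theta-p^\star\rangle=(J^{\tpi}-J^{\tpi,\star})+\lambda\KL(p^\star\Vert\pi_\theta)$ followed by Pinsker to bound $\|\pi_\theta-p^\star\|_2$—is tight only locally and loses a constant factor, yielding merely $\tfrac{\lambda}{2}(J^{\tpi}-J^{\tpi,\star})$. Recovering the exact factor requires the refined coordinatewise variance argument of \citet[Lemma~15]{mei2020global}, which is precisely why I would invoke their result rather than reprove it. The contextual version then follows by applying the bound pointwise at each $x$ and averaging over $\rho$, since the inner optimization separates across contexts.
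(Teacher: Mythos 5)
Your proposal matches the paper's proof, which simply invokes Lemma~15 of \citet{mei2020global} after reducing to a one-state ($S=1$), undiscounted ($\gamma=0$) entropy-regularized bandit --- exactly the reduction you spell out by absorbing the cross-entropy part of the KL into the reward $\tilde r(y)=r(y)-\lambda\log\tpi(y)$. Your extra detail (the softmax-Jacobian factorization, the identification $J^{\tpi,\star}(\pi)$ with the simplex minimum via the full-support best response $p^\star\propto\tpi\,e^{-r/\lambda}$, and the observation that the sharp factor $2\lambda$ is why one cites rather than reproves Mei et al.) is correct and consistent with the paper.
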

\begin{proof}
    Follows from Lemma~15 of \cite{mei2020global} taking $S=1, \gamma=0$.
\end{proof}

We note that, according to \cite{mei2020global}, this parameterization, by default, violates the usual PL-condition (Assumption~\ref{ass:pl}) because its coefficient depends on the minimal probability of the current policy, which can become arbitrarily small during the algorithm iterates. To address this, we introduce an improvement operator that enables control over the minimum probability.

\subsubsection{Improvement Operator and Uniform PL}\label{sec:improvement_projection}

We use a minimal-probability truncation-style argument, similar to that of \citet{zhang2021sample} and \citet{labbi2025global}. Let us fix a policy $\nu \in \policies$ with full support and a constant $\tau > 0$. For any policy $\pi$ define a ratio w.r.t. $\nu$ as $r_\nu(y) \triangleq \pi(y) / \nu(y)$, the low-ratio set $\cY_{\tau}(\pi) \triangleq \{ y \in \cY \mid r_\nu(y) < \tau\}$, and the max-ratio action $y_{\max} = \argmax_{y \in \cY} r_\nu(y)$, where ties are resolved arbitrarily. 

Next, we define the improved ratios $r'_\nu$ by
\begin{equation}\label{eq:def_improvement_projection_policy}
    r'_\nu(y) \triangleq \begin{cases}
        \tau & y \in \cY_{\tau}(\pi)\,,  \\
        r_\nu(y) - \frac{1}{\nu(y_{\max})}\sum_{z \in \cY_\tau(\pi)} \nu(z)(\tau - r_\nu(z))& y = y_{\max}\,, \\
        r_\nu(y) & \text{otherwise}\,.
    \end{cases}
\end{equation}
and the corresponding policy update $\pi^+ = \cU^{\nu}_\tau(\pi)$ via $\pi^+(y) \triangleq \nu(y) \cdot r'_\nu(y)$. It is easy to verify that $\pi^+$ defines a correct probability distribution. 

\begin{lemma}\label{lem:improvement_projection_policy}
    Let
    \[
      \tau_0 \;\triangleq\;
      \min\Bigl\{
        \exp\bigl(-\tfrac{1}{\lambda} - 2 \norm{\log \tpi - \log \nu}_{\spann}\bigr),\;
        (1 + 1/\nu_{\min})^{-1}
      \Bigr\},
    \]
    where $\nu_{\min} = \min_{y \in \cY} \nu(y)$. Then for any $\tau \in (0, \tau_0]$ and for $\pi^+ = \cU^{\nu}_\tau(\pi)$ it holds
    \[
        \subopt^{\tpi}_{\lambda}(\pi^+) \leq \subopt^{\tpi}_{\lambda}(\pi)\,, \qquad \forall y \in \cY: \pi^+(y) \geq \tau \cdot \nu(y)\,.
    \]
\end{lemma}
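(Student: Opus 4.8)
The plan is to prove the two conclusions separately; the non-trivial ingredient shared by both is a lower bound on the truncated max-coordinate, namely $r'_\nu(y_{\max})\ge 1$. Throughout I write $q\triangleq\sum_{y\in\cY_\tau(\pi)}\nu(y)$, $p\triangleq\sum_{y\in\cY_\tau(\pi)}\pi(y)$, and let $D\triangleq\sum_{z\in\cY_\tau(\pi)}\nu(z)(\tau-r_\nu(z))$ be the total mass moved off $y_{\max}$, and set $s\triangleq\norm{\log\tpi-\log\nu}_{\spann}$, so that $\tau_0\le\rme^{-1/\lambda-2s}$ and $\tau_0\le(1+1/\nu_{\min})^{-1}<\nu_{\min}$. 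First I would record two elementary facts: since $\sum_y\nu(y)r_\nu(y)=\sum_y\pi(y)=1$, the $\nu$-weighted mean of the ratios equals $1$, so $r_\nu(y_{\max})\ge 1>\tau$ and hence $y_{\max}\notin\cY_\tau(\pi)$; and for $z\in\cY_\tau(\pi)$ one has $p<\tau q<q$. The minimum-probability claim is then immediate on $\cY_\tau(\pi)$ (where $\pi^+(y)=\tau\nu(y)$) and on the untouched coordinates (where $\pi^+(y)=\pi(y)\ge\tau\nu(y)$ by definition of $\cY_\tau$), so the only coordinate to check is $y_{\max}$.

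For $y_{\max}$ I would prove the stronger statement $\pi^+(y_{\max})\ge\nu(y_{\max})$, i.e. $\nu(y_{\max})(r_\nu(y_{\max})-1)\ge D$. Because $y_{\max}$ attains the maximal ratio and $y_{\max}\notin\cY_\tau(\pi)$, summing $\pi(y)=r_\nu(y)\nu(y)\le r_\nu(y_{\max})\nu(y)$ over the non-low actions gives $1-p\le r_\nu(y_{\max})(1-q)$, hence $r_\nu(y_{\max})-1\ge(q-p)/(1-q)$. Using $\nu(y_{\max})\le 1-q$ and $\tau<\nu_{\min}\le\nu(y_{\max})$, a short computation (the relevant quantity is monotone in $p$, and at the worst case $p=0$ it reduces to $\tfrac{\nu(y_{\max})}{1-q}\ge\tau$, which holds since $\tfrac{\nu(y_{\max})}{1-q}\ge\nu(y_{\max})\ge\nu_{\min}>\tau$) yields $\nu(y_{\max})\,\tfrac{q-p}{1-q}\ge\tau q-p=D$, which is exactly the claim. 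This gives $r'_\nu(y_{\max})\ge 1\ge\tau$, completing the minimum-probability part; the validity of $\pi^+$ as a distribution is the stated direct check.

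For the non-increase of exploitability I would use convexity. The map $\pi\mapsto\subopt^{\tpi}_\lambda(\pi)=\sup_{\mu}\{\tfrac12-\cP(\pi\succ\mu)+\lambda\KL(\pi\Vert\tpi)-\lambda\KL(\mu\Vert\tpi)\}$ is convex, being a supremum over $\mu$ of functions that are affine-plus-$\lambda\KL(\cdot\Vert\tpi)$ in $\pi$. Since $\pi^+$ has full support, its best response $\mu^+$ is the unique Gibbs maximizer $\mu^+(y)\propto\tpi(y)\rme^{-\cP(\pi^+\succ y)/\lambda}$, and by Danskin's theorem together with the gradient formula of Lemma~\ref{lem:value_strong_convexity_context}, $\subopt^{\tpi}_\lambda$ is differentiable at $\pi^+$ with $\nabla\subopt^{\tpi}_\lambda(\pi^+)(y)=-\cP(y\succ\mu^+)+\lambda\log(\pi^+(y)/\tpi(y))+\lambda$. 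Convexity gives $\subopt^{\tpi}_\lambda(\pi^+)-\subopt^{\tpi}_\lambda(\pi)\le\langle\nabla\subopt^{\tpi}_\lambda(\pi^+),\pi^+-\pi\rangle$, and since $\pi^+-\pi$ is supported on $\cY_\tau(\pi)\cup\{y_{\max}\}$, is positive on $\cY_\tau(\pi)$, and sums to $0$, this inner product equals $\sum_{z\in\cY_\tau(\pi)}(\pi^+(z)-\pi(z))\,[\nabla\subopt^{\tpi}_\lambda(\pi^+)(z)-\nabla\subopt^{\tpi}_\lambda(\pi^+)(y_{\max})]$. Hence it suffices to show $\nabla\subopt^{\tpi}_\lambda(\pi^+)(z)\le\nabla\subopt^{\tpi}_\lambda(\pi^+)(y_{\max})$ for every $z\in\cY_\tau(\pi)$.

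Finally I would verify this coordinatewise comparison. Its difference splits as $A+B$ with $A=\cP(y_{\max}\succ\mu^+)-\cP(z\succ\mu^+)\le 1$ and $B=\lambda\log\tfrac{\pi^+(z)/\tpi(z)}{\pi^+(y_{\max})/\tpi(y_{\max})}$. Plugging in $\pi^+(z)=\tau\nu(z)$ and $\pi^+(y_{\max})=r'_\nu(y_{\max})\nu(y_{\max})$ gives $B=\lambda[\log\tau-\log r'_\nu(y_{\max})+(a(y_{\max})-a(z))]$, where $a=\log\tpi-\log\nu$. Using $\log\tau\le\log\tau_0\le-1/\lambda-2s$, the span bound $a(y_{\max})-a(z)\le 2s$ (which is $\max a-\min a\le 2\norm{\cdot}_{\spann}$), and $\log r'_\nu(y_{\max})\ge 0$ from the previous step, I obtain $B\le-1$, whence $A+B\le 0$. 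The main obstacle is precisely the step $r'_\nu(y_{\max})\ge 1$: it is what lets the $\lambda\log\tau$ gain (supplied by $\tau_0\le\rme^{-1/\lambda-2s}$) dominate the worst-case preference swing of size $1$, and it is the only place where the max-ratio choice of the redistribution target and the condition $\tau<\nu_{\min}$ are genuinely used.
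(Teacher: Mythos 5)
Your proof is correct, but the route you take for the exploitability part is genuinely different from the paper's. The paper proves the stronger pointwise statement that the regularized payoff against \emph{every} fixed competitor does not decrease, $\cP^{\tpi}_\lambda(\pi^+\succ\pi^c)\ge\cP^{\tpi}_\lambda(\pi\succ\pi^c)$, by decomposing the change into a preference term (bounded below by $-\delta_\tau(\pi)$ via $\|\pi^+-\pi\|_1=2\delta_\tau(\pi)$ and $\|\cP(\cdot\succ\pi^c)\|_{\spann}\le 1/2$) and a KL term, which it analyzes as an $f$-divergence with $f(x)=x\log x-(x-1)$, extracting the gain $\lambda\log(1/\tau)\,\delta_\tau(\pi)$ from the truncated coordinates and non-negativity of the $y_{\max}$ contribution; this is entirely elementary. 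You instead exploit convexity of $\pi\mapsto\subopt^{\tpi}_\lambda(\pi)$ together with Danskin's theorem (using the unique Gibbs best response $\mu^+$), reducing the claim to the coordinatewise gradient comparison $\nabla\subopt^{\tpi}_\lambda(\pi^+)(z)\le\nabla\subopt^{\tpi}_\lambda(\pi^+)(y_{\max})$ for $z\in\cY_\tau(\pi)$ — i.e., the operator only moves mass from low-gradient to high-gradient coordinates of a convex objective, which cannot increase its value. Your version is conceptually cleaner in that it explains \emph{why} the operator is an improvement, at the cost of slightly heavier machinery (differentiability of a max-function; note that since with the unique maximizer $\nabla_\pi$ of the inner objective at $(\pi^+,\mu^+)$ is in any case a subgradient, the first-order inequality you invoke holds without fuss, so there is no gap here). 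Both arguments hinge on the same two quantitative facts and spend the budget identically: the worst-case preference swing of size $1$ is paid for by $\lambda\log(1/\tau)\ge 1+2\lambda\|\log\tpi-\log\nu\|_{\spann}$, and the pivotal step is $r'_\nu(y_{\max})\ge 1$. For the latter, your monotone-in-$p$ argument (the deficit $\nu(y_{\max})\tfrac{q-p}{1-q}-(\tau q-p)$ is nondecreasing in $p$, and at the worst case $p=0$ positivity reduces to $\nu(y_{\max})/(1-q)\ge\tau$, which holds since $\tau<\nu_{\min}$) is a valid and slightly different algebraic path from the paper's, which bounds $r_\nu(y_{\max})\ge 1+\nu(\cY_\tau(\pi))(1-\tau)$ and subtracts $\delta_\tau(\pi)/\nu(y_{\max})\le\tau\,\nu(\cY_\tau(\pi))/\nu_{\min}$ using $\tau\le(1+1/\nu_{\min})^{-1}$ directly; in fact your intermediate step needs marginally less ($\tau<\nu_{\min}$) than what $\tau_0$ provides.
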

\begin{proof}
    Without loss of generality, we can assume $|\cY_\tau(\pi)| > 0$ since otherwise $\pi^+ = \pi$ and the second condition is trivially satisfied. Let us define $\delta_\tau(\pi) = \sum_{y \in \cY_\tau(\pi)} \nu(y) \cdot(\tau - r_\nu(y)) \geq 0$. In terms of this accumulated ratio, we have
    \begin{equation}\label{eq:proj_proof_tv_via_delta}
        \norm{\pi - \pi^+}_1 = \sum_{y \in \cY} \nu(y) |r_\nu(y) - r'_\nu(y)| = \sum_{y \in \cY_\tau(\pi)} \nu(y) \cdot(r'_\nu(y) - r_\nu(y)) + \nu(y_{\max})(r_\nu(y) - r'_\nu(y)) = 2 \delta_\tau(\pi)\,.
    \end{equation}

    Then we notice that $\subopt^{\tpi}_{\lambda}(\pi^+) \leq \subopt^{\tpi}_{\lambda}(\pi)$ follows from the following inequality for any competitor policy $\pi^{c} \in \policies$
    \[
         \Delta(\pi^{c}) \triangleq \cP^{\tpi}_{\lambda}(\pi^+ \succ \pi^{c}) - \cP^{\tpi}_\lambda(\pi \succ \pi^{c}) \geq 0\,.
    \]
    To show that, we apply the following decomposition
    \begin{align*}
        \Delta(\pi^{c}) = \cP^{\tpi}_{\lambda}(\pi^+ \succ \pi^{c}) - \cP^{\tpi}_\lambda(\pi \succ \pi^{c}) &= \underbrace{\sum_{y \in \cY} (\pi^+(y) - \pi(y)) \cP(y \succ \pi^{c})}_{T_1} \\
        &\qquad+ \lambda \underbrace{\left[\KL(\pi \Vert \tpi) -  \KL(\pi^+ \Vert \tpi)\right]}_{T_2}\,.
    \end{align*}
    For the first term, we have the following lower bound
    \[
        T_1 = \sum_{y \in \cY} (\pi^+(y) - \pi(y)) \cdot \cP(y \succ \pi^{c})\geq - \norm{\pi^+ - \pi}_1 \norm{\cP(y \succ \pi^{c})}_{\spann} \geq - \delta_\tau(\pi)\,,
    \]
    since $\pi^+ - \pi$ is orthogonal to $\bOne$, $\norm{\cP(y \succ \pi^{c})}_{\spann} \leq 1/2$, and \eqref{eq:proj_proof_tv_via_delta} provides an expression for a $\ell_1$-norm.

    For the second term, we first notice that for any policy $\pi'$ it holds
    \[
        \KL(\pi' \Vert \tpi) = \sum_{y\in\cY} \pi'(y) \log \frac{\pi'(y)}{\nu(y)} + \sum_{y \in \cY} \pi'(y) \log \frac{\nu(y)}{\tpi(y)} = \KL(\pi' \Vert \nu) + \sum_{y \in \cY} \pi'(y) \log\frac{\nu(y)}{\tpi(y)} \,,
    \]
    thus
    \[
        T_2 = \underbrace{\KL(\pi \Vert \nu) - \KL(\pi^+ \Vert \nu)}_{T_{3}} + \sum_{y \in \cY} (\pi(y) - \pi^+(y)) \log \frac{\nu(y)}{\tpi(y)} \geq T_{3} - 2\delta_\tau(\pi) \norm{\log \nu - \log \tpi}_{\spann}\,.
    \]
    Finally, to analyze $T_{3}$, we reformulate KL-divergence as an $f$-divergence for $f(x) \triangleq x \log x -(x-1)$ and have
    \begin{align*}
        T_3 = \sum_{y\in \cY} \nu(y) [ f(r_\nu(y)) - f(r'_\nu(y))] &= \underbrace{\sum_{y \in \cY_\tau(\pi)}  \nu(y) [ f(r_\nu(y)) - f(r'_\nu(y))]}_{T_{3,1}} \\
        &+ \underbrace{\nu(y_{\max})(f(r_\nu(y_{\max})) - f(r'_\nu(y_{\max})))}_{T_{3,2}}\,.
    \end{align*}
    Due to convexity of $f$, we have for any $u,v \in \R$: $f(u) - f(v) \geq f'(v) (u-v) = \log v \cdot(u-v)$, and thus
    \[
        T_{3,1} = \sum_{y \in \cY_\tau(\pi)}  \nu(y) [ f(r_\nu(y)) - f(r'_\nu(y))] \geq \sum_{y \in \cY_\tau(\pi)} \nu(y) f'(r'_\nu(y))(r_\nu(y) - r'_\nu(y))\,.
    \]
    For all $y \in \cY_\tau(\pi)$ it holds $r_\nu(y) < r'_\nu(y)$ and $r'_\nu(y) \geq \tau$ and, as a result, $f'(r'_\nu(y)) \geq f'(\tau) = \log(\tau)$. Thus, for $\tau \leq 1$,
    \[
        T_{3,1}  \geq \log(1/\tau) \cdot \sum_{y \in \cY_\tau(\pi)} \nu(y) (r'_\nu(y) - r_\nu(y)) = \log(1/\tau)  \cdot \delta_\tau(\pi)\,.
    \]
    To analyze the second term, let us define $ \nu(\cY_\tau(\pi)) = \sum_{y \in \cY_\tau(\pi)} \nu(y)$. Then we have
    \begin{align*}
        1 = \sum_{y \in \cY} \nu(y) r_\nu(y) &= \sum_{y \in \cY_\tau(\pi)} \nu(y) \underbrace{r_\nu(y)}_{\leq \tau} + \sum_{y \not\in \cY_\tau(\pi)} \nu(y) \underbrace{r_\nu(y)}_{\leq r_\nu(y_{\max})} \\
        &\leq \nu(\cY_\tau(\pi)) \cdot \tau + (1-\nu(\cY_\tau(\pi))) \cdot r_\nu(y_{\max})\,.
    \end{align*}
    As a result, we have
    \[
        r_\nu(y_{\max}) \geq \frac{1 - \nu(\cY_\tau(\pi)) \tau}{1-\nu(\cY_\tau(\pi))} = 1 + \frac{\nu(\cY_\tau(\pi))}{1-\nu(\cY_\tau(\pi))} \cdot (1-\tau) \geq 1 + \nu(\cY_\tau(\pi))(1-\tau)\,.
    \]

    At the same time, we have $\delta_\tau(\pi) \leq \tau \nu(\cY_\tau(\pi))$, and thus
    \[
        r'_\nu(y_{\max}) \geq 1 + \nu(\cY_\tau(\pi)) \cdot (1-\tau) - \frac{\delta_\tau(\pi)}{\nu(y_{\max})} \geq 1 + \nu(\cY_\tau(\pi)) \cdot \left(1 - \tau\left(1 + \frac{1}{\nu_{\min}}\right)\right)\,,
    \]
    Taking $\tau \leq (1 + 1/\nu_{\min})^{-1}$, we have $r'_\nu(y_{\max}) \geq 1$, and, thus, by the same convexity argument
    \[
        T_{3,2} \geq \nu(y_{\max}) \underbrace{\log(r'_\nu(y_{\max}))}_{\geq 0} \underbrace{(r_{\nu}(y_{\max}) - r'_{\nu}(y_{\max}))}_{\geq 0} \geq 0\,.
    \]
     Combining all bounds together, we have
    \begin{align*}
        \Delta(\pi^{c}) &\geq -\left(1 + 2\lambda \norm{\log \nu - \log \tpi}_{\spann} \right) \delta_\tau(\pi) + \lambda\log(1/\tau) \cdot \delta_\tau(\pi) \\
        &\geq \delta_\tau(\pi) \cdot \left( \lambda \log(1/\tau)  - 1 - 2\lambda \norm{\log \nu - \log \tpi}_{\spann} \right)\,,
    \end{align*}
    which is non-negative for all 
    \[
        0 < \tau \leq \exp\left\{- \frac{1}{\lambda} -  2\norm{\log \tpi -\log \nu}_{\spann} \right\}.
    \]
    The second part of the statement for $y \in \cY_\tau(\pi)$ follows automatically, whereas for $y_{\max}$ it follows from a bound $r'_\nu(y_{\max}) \geq 1$.
\end{proof}

We then lift this operator to parameter space as an operator $\cT^{\nu}_{\tau} \colon \R^{|\cY|} \to \R^{|\cY|}$, defined as
\[
    \cT^{\nu}_{\tau}(\theta) = \log \cU^\nu_\tau(\pi_\theta)\,.
\]
Alternatively, we can define it (equivalently) as follows
\begin{equation}\label{eq:def_improvement_projection_params}
    \cT^{\nu}_{\tau}(\theta)(y) \triangleq \theta(y) + \begin{cases}
        \log( \tau \cdot \nu(y) / \pi_\theta(y))\,, & \pi_\theta(y) / \nu(y) < \tau\,, \\
        \log\left( 1 - \sum_{y'} [\nu(y') \cdot \tau - \pi_\theta(y')]_+/{\pi_\theta(y)} \right) & y=\argmax_{y' \in \cY} \frac{\pi_\theta(y')}{\nu(y')}\,, \\
        0\,,& \text{otherwise}\,.
    \end{cases}
\end{equation}
where $[x]_+ = \max\{x,0\}$. For this operator, the following lemma is a straightforward consequence.
\begin{lemma}\label{lem:improvement_projection_parameters}
    Let
    \[
      \tau_0 \;\triangleq\;
      \min\Bigl\{
        \exp\bigl(-\tfrac{1}{\lambda} - 2 \norm{\log \tpi - \log \nu}_{\spann}\bigr),\;
        (1 + 1/\nu_{\min})^{-1}
      \Bigr\},
    \]
    where $\nu_{\min} = \min_{y \in \cY} \nu(y)$. Then for any $\tau \in (0, \tau_0]$ and for $\theta^+ = \cT^{\nu}_{\tau}(\theta)$ it holds
    \[
        \subopt_{\lambda}^{\tpi}(\pi_{\theta^+}) \leq \subopt_{\lambda}^{\tpi}(\pi_\theta)\,, \qquad \forall y \in \cY: \pi_{\theta^+}(y) \geq \tau \cdot \nu(y)\,.
    \]
    In particular, Assumption~\ref{ass:improvement} holds for $\cT = \cT^{\nu}_{\tau_0}$ for any full-support distribution $\nu$.
\end{lemma}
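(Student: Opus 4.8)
The plan is to recognize that this lemma is simply Lemma~\ref{lem:improvement_projection_policy} transported from policy space to parameter space through the softmax map, so the only substantive work is to verify that the parameter-space operator $\cT^\nu_\tau$ defined in~\eqref{eq:def_improvement_projection_params} induces exactly the policy $\cU^\nu_\tau(\pi_\theta)$. Once the identity $\pi_{\theta^+} = \cU^\nu_\tau(\pi_\theta)$ is established for $\theta^+ = \cT^\nu_\tau(\theta)$, both stated conclusions follow verbatim from Lemma~\ref{lem:improvement_projection_policy}.

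First I would verify this identity case by case, using $\pi_\theta(y) = \rme^{\theta(y)}/Z$ with $Z = \sum_{y'}\rme^{\theta(y')}$, and comparing against the policy-space update $\pi^+(y) = \nu(y)\,r'_\nu(y)$ from~\eqref{eq:def_improvement_projection_policy}. On the low-ratio set $\cY_\tau(\pi_\theta)$ one has $\pi^+(y) = \tau\nu(y)$, and the additive correction $\log(\tau\nu(y)/\pi_\theta(y))$ in~\eqref{eq:def_improvement_projection_params} gives $\theta^+(y) = \log(\tau\nu(y)) + \log Z$. For the maximal-ratio action $y_{\max}$, the key algebraic point is that on $\cY_\tau(\pi_\theta)$ the summand $\nu(z)(\tau - r_\nu(z))$ equals the positive part $[\nu(z)\tau - \pi_\theta(z)]_+$, so the redistributed mass $\delta_\tau(\pi_\theta) = \sum_{z\in\cY_\tau(\pi_\theta)}\nu(z)(\tau - r_\nu(z))$ appearing in the policy update coincides with $\sum_{y'}[\nu(y')\tau - \pi_\theta(y')]_+$ appearing in~\eqref{eq:def_improvement_projection_params}; this yields $\theta^+(y_{\max}) = \log(\pi_\theta(y_{\max}) - \delta_\tau(\pi_\theta)) + \log Z = \log\pi^+(y_{\max}) + \log Z$. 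On the remaining actions the update is trivial and $\theta^+(y) = \log\pi_\theta(y) + \log Z = \log\pi^+(y) + \log Z$. Hence $\theta^+ = \log\cU^\nu_\tau(\pi_\theta) + (\log Z)\bOne$, and since softmax is invariant under additive constant shifts and satisfies $\softmax(\log p) = p$ for any $p \in \simplex_{\cY}$ with full support, I conclude $\pi_{\theta^+} = \cU^\nu_\tau(\pi_\theta)$. This also confirms consistency with the equivalent defining relation $\cT^\nu_\tau(\theta) = \log\cU^\nu_\tau(\pi_\theta)$ up to the irrelevant shift.

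With this identity in hand, the two claims are immediate. Applying Lemma~\ref{lem:improvement_projection_policy} to $\pi = \pi_\theta$ (which requires precisely $\tau \in (0,\tau_0]$ with the stated $\tau_0$) gives $\subopt^{\tpi}_\lambda(\cU^\nu_\tau(\pi_\theta)) \le \subopt^{\tpi}_\lambda(\pi_\theta)$ together with $\cU^\nu_\tau(\pi_\theta)(y) \ge \tau\nu(y)$ for all $y$; rewriting $\cU^\nu_\tau(\pi_\theta) = \pi_{\theta^+}$ yields the stated bounds. For the final ``in particular'' claim, specializing to $\tau = \tau_0$ shows that $\cT = \cT^\nu_{\tau_0}$ never increases the regularized suboptimality, which is exactly Assumption~\ref{ass:improvement}; full support of $\nu$ guarantees $\nu_{\min} > 0$, so $\tau_0 > 0$ and the operator is well-defined (and the uniform lower bound $\pi_{\theta^+}(y) \ge \tau_0\nu(y)$ keeps $\pi_{\theta^+}$ in the full-support regime where the log is defined).

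The main obstacle is the careful bookkeeping in the first step: correctly tracking the global normalization constant $\log Z$ (and using softmax shift-invariance to discard it) and recognizing the equality $\nu(z)(\tau - r_\nu(z)) = [\nu(z)\tau - \pi_\theta(z)]_+$ on the low-ratio set, so that the two expressions for the redistributed mass $\delta_\tau(\pi_\theta)$ agree. Everything downstream is a direct corollary of Lemma~\ref{lem:improvement_projection_policy}.
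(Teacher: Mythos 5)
Your proposal is correct and takes essentially the same route as the paper: the paper's proof is exactly the one-liner that $\pi_{\theta^+} = \cU^{\nu}_{\tau}(\pi_\theta)$ combined with Lemma~\ref{lem:improvement_projection_policy}, since the operator is defined primarily via $\cT^{\nu}_{\tau}(\theta) = \log \cU^{\nu}_{\tau}(\pi_\theta)$ with~\eqref{eq:def_improvement_projection_params} stated as an equivalent form. Your case-by-case verification of that equivalence (tracking the $\log Z$ shift, using softmax shift-invariance, and identifying $\delta_\tau(\pi_\theta)$ with $\sum_{y'} [\nu(y')\tau - \pi_\theta(y')]_+$) simply makes explicit what the paper leaves implicit, and is accurate.
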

\begin{proof}
    Follows directly from a fact that $\pi_{\theta^+} = \cU^\nu_\tau(\pi_\theta)$ and Lemma~\ref{lem:improvement_projection_policy}.
\end{proof}

\begin{corollary}[Uniform Polyak–Łojasiewicz inequality]\label{cor:uniform_pl_softmax}
    Let $\cT=\cT^{\nu}_{\tau}$ be an improvement operator defined in Lemma~\ref{lem:improvement_projection_parameters}. For any $\theta \in \Theta_{\cT}$ the following inequality holds almost surely
    \[
        \norm{\nabla J^{\tpi}(\theta; \pi_\theta)}_{2}^2 \geq 2c^{\nu}_{\lambda, \tau} \cdot \subopt^{\tpi}_{\lambda}(\pi_\theta)\,,
    \]
    where $c^{\nu}_{\lambda,\tau} =  \lambda \tau^2 \nu_{\min}^2$. In particular, for a choice $\tau = \tau_0$, we have the following simplified factor
    \[
        \pl_{\tpi,\lambda} = \lambda\rme^{-2/\lambda} \cdot c_{\nu}^2, \qquad c_{\nu} =  \exp\left( \min\left\{ - 2\norm{\log \tpi - \log \nu}_{\spann}, \log( \nu_{\min}/(1+\nu_{\min}))\right\}\right) \cdot \nu_{\min}\,,
    \]
    so Assumption~\ref{ass:pl} holds with $\pl_{\tpi,\lambda}$ and $\epspl = 0$.
\end{corollary}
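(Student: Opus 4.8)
The plan is to derive the uniform bound from the \emph{non-uniform} PL inequality of Lemma~\ref{lem:pl_br_softmax} by (i) removing the additive slack using the special structure of the softmax class, and (ii) lower-bounding the policy-dependent PL constant uniformly over the image $\Theta_{\cT}$ of the improvement operator. I would first instantiate Lemma~\ref{lem:pl_br_softmax} at the self-play competitor $\pi=\pi_\theta$, which gives
\[
  \|\nabla J^{\tpi}(\theta;\pi_\theta)\|_2^2 \;\geq\; 2\,c_\lambda(\theta)\bigl(J^{\tpi}(\theta;\pi_\theta)-J^{\tpi,\star}(\pi_\theta)\bigr),\qquad c_\lambda(\theta)=\lambda\min_{y}\pi_\theta^2(y).
\]

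The crux of the argument, and the only place where I use more than generic parametrization facts, is showing that the additive PL error vanishes, i.e.\ $J^{\tpi}(\theta;\pi_\theta)-J^{\tpi,\star}(\pi_\theta)=\subopt^{\tpi}_\lambda(\pi_\theta)$. By definition $J^{\tpi}(\theta;\pi_\theta)=V^{\tpi}_\lambda(\pi_\theta,\pi_\theta)$, so by Lemma~\ref{lem:exploitability_via_value_context} it suffices to prove $J^{\tpi,\star}(\pi_\theta)=V^{\tpi,\star}_\lambda(\pi_\theta)$, i.e.\ that minimizing over the softmax class $\{\pi_\theta\}_{\theta\in\R^{\cY}}$ attains the same value as minimizing over all of $\policies$. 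This holds because the $\lambda$-regularized best response to any fixed $\mu$ is a strictly positive Gibbs-type policy proportional to $\tpi$ times a positive exponential weight (the maximizer of the convex conjugate appearing in the proof of Lemma~\ref{lem:br_value_smoothness_context}); since $\tpi$ has full support, this minimizer lies in the open simplex and is therefore exactly representable as $\softmax(\cdot)$. Hence the infimum of $J^{\tpi}(\cdot;\pi_\theta)$ over $\R^{\cY}$ is attained and equals $V^{\tpi,\star}_\lambda(\pi_\theta)$, which yields $\epspl=0$. This representability step is the main obstacle: it is where the richness of the softmax family is essential and is what distinguishes this case from a generic parametrization carrying nonzero PL slack.

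It then remains to uniformize the constant. For any $\theta\in\Theta_{\cT}$, Lemma~\ref{lem:improvement_projection_parameters} guarantees $\pi_\theta(y)\geq\tau\,\nu(y)\geq\tau\,\nu_{\min}$ for every $y$, hence $\min_y\pi_\theta^2(y)\geq\tau^2\nu_{\min}^2$ and $c_\lambda(\theta)\geq\lambda\tau^2\nu_{\min}^2=c^\nu_{\lambda,\tau}$. Substituting into the displayed inequality gives $\|\nabla J^{\tpi}(\theta;\pi_\theta)\|_2^2\geq 2c^\nu_{\lambda,\tau}\,\subopt^{\tpi}_\lambda(\pi_\theta)$, which is precisely Assumption~\ref{ass:pl} with $\pl_{\tpi,\lambda}=\lambda\tau^2\nu_{\min}^2$ and $\epspl=0$. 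For the simplified form I would set $\tau=\tau_0$ and write $s_\nu\triangleq\|\log\tpi-\log\nu\|_{\spann}$, so that $\tau_0^2=\min\{\rme^{-2/\lambda}\rme^{-4 s_\nu},(\nu_{\min}/(1+\nu_{\min}))^2\}$; using $\rme^{-2/\lambda}\leq 1$ to pull this factor out of the minimum gives $\tau_0^2\geq \rme^{-2/\lambda}\min\{\rme^{-4 s_\nu},(\nu_{\min}/(1+\nu_{\min}))^2\}=\rme^{-2/\lambda}(c_\nu/\nu_{\min})^2$, whence $\lambda\tau_0^2\nu_{\min}^2\geq\lambda\rme^{-2/\lambda}c_\nu^2$. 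Since replacing the PL constant by a smaller positive number preserves the inequality, the bound holds with the claimed $\pl_{\tpi,\lambda}=\lambda\rme^{-2/\lambda}c_\nu^2$. All remaining steps are direct substitutions of the minimal-probability guarantee of Lemma~\ref{lem:improvement_projection_parameters} and elementary manipulation of the min defining $\tau_0$.
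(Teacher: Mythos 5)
Your proof is correct and follows essentially the same route as the paper: instantiate the non-uniform PL inequality of Lemma~\ref{lem:pl_br_softmax} at $\pi=\pi_\theta$, lower-bound $\min_y \pi_\theta(y)\ge \tau\,\nu_{\min}$ via Lemma~\ref{lem:improvement_projection_parameters}, and simplify $\tau_0^2$ by pulling the factor $\rme^{-2/\lambda}\le 1$ out of the minimum, which reproduces the paper's algebra exactly. Your explicit verification that $J^{\tpi,\star}(\pi_\theta)=V^{\tpi,\star}_\lambda(\pi_\theta)$ --- the full-support Gibbs best response lies in the open simplex and is therefore softmax-representable, so the slack in Assumption~\ref{ass:pl} is $\epspl=0$ --- is a step the paper's proof uses silently when it writes the right-hand side directly as $\subopt^{\tpi}_\lambda(\pi_\theta)$, and making it explicit is a mild improvement in rigor rather than a different approach.
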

\begin{proof}
    For any $\theta \in \Theta_{\cT}$, Lemma~\ref{lem:improvement_projection_parameters} implies
    \[
        \pi_\theta(y) \geq \tau \nu(y), \qquad \min_{y \in \cY} \pi_\theta(y) \geq \tau \cdot \nu_{\min}\,.
    \]
    Thus, by Lemma~\ref{lem:pl_br_softmax}, we have
    \[
        \norm{\nabla J^{\tpi}(\theta; \pi_\theta)}_{2}^2 \geq 2 \lambda \min_y\pi^2_\theta(y) \cdot \subopt^{\tpi}_{\lambda}(\pi_\theta) \geq 2\lambda \tau^2 \nu_{\min}^2 \cdot \subopt^{\tpi}_{\lambda}(\pi_\theta)\,.
    \]
    After plugging-in the $\tau = \tau_0$ we have
    \[
        c^\nu_{\lambda, \tau_0} = \lambda \min\{ \exp(-2/\lambda) \cdot \exp(-4\norm{\log \tpi - \log \nu}_{\spann}) , [\nu_{\min}/(1+\nu_{\min})]^{2} \} \nu_{\min}^2\,.
    \]
    Since $\exp(-2/\lambda) \leq 1$, we can simplify the expression and achieve the following bound
    \[
        c^\nu_{\lambda,\tau_0} \geq \pl_{\tpi,\lambda} = \lambda\rme^{-2/\lambda} \cdot \exp\left( \min\left\{ -4 \norm{\log \tpi - \log \nu}_{\spann}, 2 \log( \nu_{\min}/(1+\nu_{\min}))\right\}\right) \cdot \nu_{\min}^2
    \]
\end{proof}

\subsubsection{Gradient Estimation and Noise Bounds}\label{app:softmax_gradient_estimator}

For softmax parametrization, we use the truncated pairwise policy gradient estimator, discussed in Appendix~\ref{app:gradient_estimator} and in this section we verify the conditions of Lemma~\ref{lem:noise_properties_general} for this parameterization.
\begin{lemma}\label{lem:noise_properties_softmax}
Let us consider the context-free softmax parametrization of policies $\pi_\theta$. Then, the mini-batch estimator $g_t$ in~\eqref{eq:stochastic_gradient_estimator_self_play_pg} with the choice $k=\infty$ satisfies Assumption~\ref{ass:gradient_noise} with bias $\varepsilon_{\grad}$, subgaussian constant $\sigma^2_{\tpi,\lambda}=D_{\tpi,\lambda}^2(\varepsilon_{\grad})$, and subexponential constant $v^2_{\tpi,\lambda}=6D_{\tpi,\lambda}^2(\varepsilon_{\grad})$, where
\begin{align*}
  D_{\tpi,\lambda}^2(\varepsilon_{\grad})
  &\triangleq
  2\Bigl(
     1 + 4\lambda \log\Bigl(
       \frac{2\sqrt{2}\,\lambda(1+\tpi_{\min})}{\tpi_{\min}}
     \Bigr)
     + 4\lambda \log(1/\varepsilon_{\grad})
  \Bigr)^2
  \\
  &=
  2\Bigl(
     1 + 4\lambda \log\Bigl(
       \frac{2\sqrt{2}\,\lambda(1+\tpi_{\min})}{\tpi_{\min} \cdot \varepsilon_{\grad}}
     \Bigr)
  \Bigr)^2,
\end{align*}
and $\tpi_{\min}=\min_{y\in\cY}\tpi(y)$.
\end{lemma}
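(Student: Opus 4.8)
The plan is to reduce the claim to the two generic noise lemmas already established, namely Lemmas~\ref{lem:noise_properties_general_initial}--\ref{lem:noise_properties_general}, so that the only parametrization-specific computation is the score bound $M_g$ and the reference-policy moment $D_{\tpi}$ entering the definition of $D_\infty(\varepsilon_{\grad})$. Concretely, Lemma~\ref{lem:noise_properties_general} already shows that the mini-batch estimator~\eqref{eq:stochastic_gradient_estimator_self_play_pg} satisfies Assumption~\ref{ass:gradient_noise} with bias $\varepsilon_{\grad}$, subgaussian constant $\sigma^2_{\tpi,\lambda}=D_k^2(\varepsilon_{\grad})$, and subexponential constant $v^2_{\tpi,\lambda}=6D_k^2(\varepsilon_{\grad})$, provided the per-sample score is uniformly bounded by $M_g$ and the appropriate moment condition on $\tpi$ holds. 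It therefore suffices to (i) verify the score bound for softmax, (ii) identify $D_{\tpi}$ in the context-free case, and (iii) substitute these constants into the $k=\infty$ branch of Lemma~\ref{lem:noise_properties_bias}.

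First I would compute the softmax score. For $\pi_\theta=\softmax(\theta)$ one has $\nabla_\theta\log\pi_\theta(y)=\mathbf{e}_y-\pi_\theta$, where $\mathbf{e}_y$ is the $y$-th standard basis vector. Expanding the squared norm gives $\|\mathbf{e}_y-\pi_\theta\|_2^2 = 1 - 2\pi_\theta(y) + \|\pi_\theta\|_2^2$, and since $\|\pi_\theta\|_2^2\le\|\pi_\theta\|_1\|\pi_\theta\|_\infty\le 1$ and $\pi_\theta(y)\ge 0$, this is at most $2$ uniformly over $\theta\in\Theta$. Hence the hypothesis of Lemmas~\ref{lem:noise_properties_general_initial}--\ref{lem:noise_properties_bias} holds with $M_g=\sqrt{2}$; note that this bound is global, so in particular it holds on the range $\Theta_{\cT}$ of the improvement operator.

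Next I would pin down the moment of $\tpi$. In the context-free setting the reference policy carries no context dependence, so $\tpi_{\min}(x)\equiv\tpi_{\min}$ and the \emph{strong} moment $\E_{x\sim\rho}[1/\tpi_{\min}(x)]=1/\tpi_{\min}$ is trivially finite. This justifies selecting the sharper $k=\infty$ clipping of Lemma~\ref{lem:noise_properties_bias} with $D_{\tpi}=1/\tpi_{\min}$. Substituting $M_g=\sqrt{2}$ and $D_{\tpi}=1/\tpi_{\min}$ into $D_\infty(\varepsilon_{\grad})=M_g\bigl(1+4\lambda\log(2M_g\lambda(D_{\tpi}+1)/\varepsilon_{\grad})\bigr)$ and using $1/\tpi_{\min}+1=(1+\tpi_{\min})/\tpi_{\min}$ collapses the argument of the logarithm to $2\sqrt{2}\,\lambda(1+\tpi_{\min})/(\tpi_{\min}\varepsilon_{\grad})$, which gives exactly $D_\infty^2(\varepsilon_{\grad})=D_{\tpi,\lambda}^2(\varepsilon_{\grad})$ as claimed. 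Invoking Lemma~\ref{lem:noise_properties_general} with this value of $D_\infty$ then yields the stated subgaussian and subexponential constants and completes the proof.

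Since all of the probabilistic heavy lifting (the Hoeffding and Pinelis bounds, the $1/B_t$ variance scaling, and the clipping-bias control) is already packaged in the earlier lemmas, there is no genuine analytic obstacle here; the proof is essentially a substitution. The only point requiring mild care is the bookkeeping in step (iii)—confirming that the constants $\sqrt{2}$ and the factor $(1+\tpi_{\min})/\tpi_{\min}$ combine into precisely the stated closed form—together with the conceptual observation that the context-free structure is exactly what upgrades us from the weaker log-moment regime (the $k=2$ branch) to the exponential-tail regime $k=\infty$, which is what produces the favorable $\log(1/\varepsilon_{\grad})$ dependence rather than a polynomial one.
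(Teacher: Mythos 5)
Your proof is correct and takes essentially the same route as the paper's: verify the softmax score bound $M_g=\sqrt{2}$ from the identity $\nabla_\theta\log\pi_\theta(y)=\mathbf{e}_y-\pi_\theta$, note that the context-free setting makes the inverse-mass condition hold trivially with $D_{\tpi}=1/\tpi_{\min}$, and substitute into the $k=\infty$ branch of Lemma~\ref{lem:noise_properties_bias} before invoking Lemma~\ref{lem:noise_properties_general}. The constant bookkeeping, $2M_g\lambda(D_{\tpi}+1)=2\sqrt{2}\,\lambda(1+\tpi_{\min})/\tpi_{\min}$, matches the paper's computation exactly.
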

\begin{proof} 
  For this lemma, we verify the conditions of Lemma~\ref{lem:noise_properties_general}. First of all, we notice that $\norm{\nabla \log \pi_\theta(y|x)}_2^2 \leq 2$ since for softmax parameterization we have the following identity $[\nabla \log \pi_\theta(y)]_{y'} = \ind\{ y = y'\} - \pi_\theta(y')$. Next, we see that since $\tpi(y) > 0$ for any $y \in \cY$, then the second part of Lemma~\ref{lem:noise_properties_general} holds with $D_{\tpi} = 1/\tpi_{\min}$.

  Thus, for any $\varepsilon_{\grad} > 0$, Assumption~\ref{ass:gradient_noise} is satisfied with a bias bound $\varepsilon_{\grad}$, subgaussian constant $\sigma^2_{\tpi,\lambda}= D^2_{\tpi, \lambda}$ and subexponential constant $v^2_{\tpi,\lambda} = 6D^2_{\tpi, \lambda}$, where
  \[
      D^2_{\tpi,\lambda} \triangleq 2\left(1 + 4\lambda \log\left( \frac{2\sqrt{2} \cdot \lambda(1 + \tpi_{\min})}{\tpi_{\min}} \right) + 4\lambda \log(1/\varepsilon_{\grad})\right)^2\,.
  \]
\end{proof}

\subsection{Verification under Compatible Fisher Non-Degenerate parameterization}\label{app:fisher_parametrization}

In this section, we state another set of assumptions under which Assumptions~\ref{ass:lipschitz_param}--\ref{ass:gradient_noise} hold. In the following, we assume $\Theta = \R^d$ for some $d > 0$. We start from the general assumption on the policy $\tpi$ which depends only on this policy and does not depend on the parametrization.

\begin{assumption}[Reference-policy regularity]\label{ass:reference_policy}
Define the per-context minimum mass of the reference policy
\[
\tpi_{\min}(x)\triangleq \min_{y\in\cY}\tpi(y| x).
\]
Assume that $\tpi_{\min}(x)>0$ for $\rho$-almost every $x$, and that one of the
following moment conditions holds:
\begin{itemize}
  \item[(i)] Log-moment condition:
  \[
  \E_{x\sim\rho}\!\left[\log^2\!\Bigl(\frac{1}{\tpi_{\min}(x)}\Bigr)\right]\le V_{\tpi}
  \quad\text{for some }V_{\tpi}>0.
  \]
  \item[(ii)] Inverse-mass condition:
  \[
  \E_{x\sim\rho}\!\left[\frac{1}{\tpi_{\min}(x)}\right]\le D_{\tpi}
  \quad\text{for some }D_{\tpi}>0.
  \]
\end{itemize}
\end{assumption}

This assumption controls how often the reference policy assigns extremely small probability to some action. It is used to bound moments (and clipping bias) of log-ratio terms such as $\log(\pi_\theta(y| x)/\tpi(y| x))$ that appear in the KL-regularized policy-gradient estimator.

Condition~(ii) is strictly stronger and implies~(i). Indeed, for all $u\ge 1$, $(\log u)^2 \le \frac{4}{\rme^2}\,u,$
so taking $u=1/\tpi_{\min}(x)$ yields
\[
  \E_{x\sim\rho}\!\left[\log^2\!\Bigl(\frac{1}{\tpi_{\min}(x)}\Bigr)\right] \le \frac{4}{\rme^2}\,\E_{x\sim\rho}\!\left[\frac{1}{\tpi_{\min}(x)}\right] \le \frac{4}{\rme^2}D_{\tpi}.
\]
We state both variants because (i) is sufficient for some bounds (e.g., smoothness), whereas (ii) can yield slightly cleaner bias/threshold choices for clipped estimators.

\begin{assumption}[Bounded score function]\label{ass:bounded_score}
For all $\theta \in \Theta$, the score and the log-policy Hessian are bounded:
\[
  \max_{x \in \supp(\rho), y \in \cY}\|\nabla_\theta \log \pi_\theta(y|x)\|_{2}^2 \le M_g^2,\qquad
  \E_{x \sim \rho, y \sim \pi_\theta(\cdot | x)}\left[ \|\nabla^2_\theta \log \pi_\theta(y|x)\|_{\op}^2\right] \le M_h^2.
\]
\end{assumption}
This assumption is standard in literature of convergence of policy gradient methods (see, e.g., \citet{papini2018stochastic,huang2020momentum,ding2022global,yuan2022general,fatkhullin2023stochastic}). 

\begin{assumption}[Fisher Non-Degeneracy]\label{ass:fisher_matrix}
    The Fisher information matrix is non-degenerate for every $\theta \in \Theta_{\cT}$:
    \[
      F_{\rho}(\theta) \triangleq \E_{x \sim \rho, y \sim \pi_\theta(\cdot | x)}[\nabla \log \pi_\theta(y | x) [\nabla \log \pi_\theta(y | x)]^\top] \succeq \pl_F I.
    \]
\end{assumption}

This assumption is natural for any natural policy gradient analysis, and it holds for certain exponential family parameterized policies \citep[Section 8]{ding2022global} and for certain neural policy families. We refer to a discussion in \citet[Section B.2]{liu2020improved}.

\begin{assumption}[Compatible parametrization]\label{ass:compatible} There exists $\varepsilon_{\bias} \geq 0$ such that for $u_\star(\theta) = F_\rho(\theta)^{-1} \nabla J^{\tpi}(\theta; \pi_\theta)$ it holds for any $\theta \in \Theta_{\cT}$
\[
  \E_{x \sim \rho}\left[ \left\| \cP(\pi_\theta \succ \cdot ) + \lambda \log \frac{\pi_\theta(\cdot|x)}{\tpi(\cdot|x)} - u_\star(\theta)^\top \nabla \log \pi_\theta(\cdot|x) \right\|_{\spann}^2\right]  \leq \varepsilon_{\bias}\,.
\]
\end{assumption}
We note that this assumption corresponds to an Assumption~4.6 of \citet{ding2022global} for $\varepsilon_{\bias} = 0$ and corresponds to the classical assumption of \citet{sutton1999policy} as well as actively used for analysis of natural policy gradient methods \cite{kakade2001natural,agarwal2021theory}. We note that in our case, the norm is more restrictive than $\pi_\theta(\cdot |x)$-weighted $\ell_2$ norm, however we argue that even a stronger assumption is possible to satisfy in practice by using an expressive enough parametrization.

Under these assumptions, we can prove the following result.

\begin{proposition}[Compatible Fisher non-degenerate parameterization satisfies Assumptions~\ref{ass:lipschitz_param}--\ref{ass:gradient_noise}]\label{prop:fisher_verification}
Consider a general parametrization $\theta \mapsto \pi_\theta$ and an improvement operator $\cT$ which satisfy Assumptions~\ref{ass:bounded_score}, Assumption~\ref{ass:fisher_matrix}, and Assumption~\ref{ass:compatible}, and assume that a reference policy $\tpi$ satisfies Assumption~\ref{ass:reference_policy}-(i). Then Assumption~\ref{ass:lipschitz_param}--\ref{ass:improvement} hold with 
\[
  G = M_g,\quad L_{\tpi,\lambda} = \left( \tfrac{1}{2} + \lambda\sqrt{2 + V_{\tpi}} \right) (M_g^2 + M_h) + \lambda M_g^2,\quad
  \pl_{\tpi,\lambda} = \frac{\lambda \pl_F^2}{2M_g^2},\quad \epspl = \frac{\varepsilon_{\bias} \cdot \pl_F^2}{M_g^2}\,.
\]

Additionally, for any $\varepsilon_{\grad} > 0$, a truncated pairwise estimator \eqref{eq:stochastic_gradient_estimator_self_play_pg} with $k=2$ satisfies Assumption~\ref{ass:gradient_noise} with a bias of level $\varepsilon_{\grad}$ and a subgaussian and subexponential parameters
\begin{align*}
  \sigma^2_{\tpi,\lambda} =  M_g^2 \left( 1 + \frac{8 \lambda^2 M_g (V_{\tpi} + 1)}{\varepsilon_{\grad}} \right)^2\,, \qquad v^2_{\tpi,\lambda} = 6M_g^2 \left( 1 + \frac{8 \lambda^2 M_g (V_{\tpi} + 1)}{\varepsilon_{\grad}} \right)^2\,.
\end{align*}
If additionally a reference policy $\tpi$ satisfies Assumption~\ref{ass:reference_policy}-(ii), then a truncated pairwise estimator \eqref{eq:stochastic_gradient_estimator_self_play_pg} with $k=+\infty$ satisfies Assumption~\ref{ass:gradient_noise} with a bias level $\varepsilon_{\grad} > 0$ and other factors
\[
  \sigma^2_{\tpi,\lambda} =   M_g^2 \left(1 + 4 \lambda \log\left( \frac{2\lambda \cdot M_g (D_{\tpi} + 1) }{\varepsilon_{\grad}} \right) \right)^2\,, \qquad v^2_{\tpi,\lambda} =  6\sigma^2_{\tpi,\lambda}\,.
\]

\end{proposition}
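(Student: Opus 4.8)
The plan is to verify Assumptions~\ref{ass:lipschitz_param}--\ref{ass:gradient_noise} one at a time, reusing the clipped-estimator bounds of Lemmas~\ref{lem:noise_properties_general_initial}--\ref{lem:noise_properties_general} for the noise condition. Two of the five are essentially immediate. For the Lipschitz property (Assumption~\ref{ass:lipschitz_param}) I would write $\pi_\theta(y|x)-\pi_{\theta'}(y|x)=\int_0^1 \pi_{\theta_s}(y|x)\,\langle\nabla_\theta\log\pi_{\theta_s}(y|x),\theta'-\theta\rangle\,\mathrm{d}s$ along the segment $\theta_s=\theta+s(\theta'-\theta)$, take the $\ell_1$-norm in $y$, and use $\sum_y\pi_{\theta_s}(y|x)\|\nabla_\theta\log\pi_{\theta_s}(y|x)\|_2\le M_g$ from Assumption~\ref{ass:bounded_score}; integrating over $x\sim\rho$ gives $G=M_g$. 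For the improvement operator (Assumption~\ref{ass:improvement}) I take $\cT=\mathrm{Id}$, so that $\Theta_{\cT}=\Theta$, which trivially does not increase suboptimality and under which Assumptions~\ref{ass:fisher_matrix} and~\ref{ass:compatible} are assumed to hold on all of $\Theta$.

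For smoothness (Assumption~\ref{ass:smoothness}) I would rewrite $J^{\tpi}(\theta;\pi)=\E_{x\sim\rho}\E_{y\sim\pi_\theta(\cdot|x)}[A(x,y)]$ with effective advantage $A(x,y)=\cP(\pi\succ y\mid x)+\lambda\log(\pi_\theta(y|x)/\tpi(y|x))$, so that $\nabla_\theta J^{\tpi}$ has the REINFORCE form $\E_x\E_{y\sim\pi_\theta}[\nabla_\theta\log\pi_\theta(y|x)\,A(x,y)]$ (the constant $\lambda$-shift vanishes by the zero-mean score). Differentiating once more produces the standard two terms, a score-outer-product term and a $\nabla^2_\theta\log\pi_\theta$ term, both controlled by $M_g^2$ and $M_h$ once $A$ is controlled, plus an extra term from differentiating $A$ itself which contributes $\lambda M_g^2$. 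The only subtlety is that $A$ is unbounded through the log-ratio, so instead of an $\ell_\infty$ bound I would control its $L^2(\rho)$ size using $\|\cP(\pi\succ\cdot)\|_{\spann}\le 1/2$ and $\E_{x\sim\rho}[\log^2(1/\tpi_{\min}(x))]\le V_{\tpi}$ (Assumption~\ref{ass:reference_policy}-(i)); this produces the factor $\tfrac12+\lambda\sqrt{2+V_{\tpi}}$ and hence the claimed $L_{\tpi,\lambda}$.

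The main obstacle is the approximate PL inequality (Assumption~\ref{ass:pl}), which is a gradient-domination argument. Writing $\nu\triangleq\nu^{\tpi,\star}_\lambda(\pi_\theta)$ for the best response to $\pi_\theta$, I would first use convexity of $V^{\tpi}_\lambda(\cdot,\pi_\theta)$ in its policy argument (Lemma~\ref{lem:value_strong_convexity_context}) to get $\subopt^{\tpi}_\lambda(\pi_\theta)\le\langle\nabla_\pi V^{\tpi}_\lambda(\pi_\theta,\pi_\theta),\pi_\theta-\nu\rangle_\rho$, and the strong-convexity lower bound of the same lemma together with Pinsker's inequality~\eqref{eq:pinsker_context} to obtain $\|\pi_\theta-\nu\|_{1,\rho}\le\sqrt{2\subopt^{\tpi}_\lambda(\pi_\theta)/\lambda}$. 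Since $\nabla_\pi V^{\tpi}_\lambda(\pi_\theta,\pi_\theta)$ equals $\cP(\pi_\theta\succ\cdot)+\lambda\log(\pi_\theta/\tpi)$ up to a constant (annihilated by the mean-zero increment $\pi_\theta-\nu$), Assumption~\ref{ass:compatible} lets me replace it by the compatible linear form $u_\star(\theta)^\top\nabla_\theta\log\pi_\theta$ at the cost of $\sqrt{\varepsilon_{\bias}}\,\|\pi_\theta-\nu\|_{1,\rho}$, where $u_\star(\theta)=F_\rho(\theta)^{-1}\nabla_\theta J^{\tpi}(\theta;\pi_\theta)$. For the remaining term I use $|u_\star^\top\nabla_\theta\log\pi_\theta(y|x)|\le M_g\|u_\star\|_2$ and $\|u_\star\|_2\le\|\nabla_\theta J^{\tpi}\|_2/\pl_F$ from Assumption~\ref{ass:fisher_matrix}, giving $\langle u_\star^\top\nabla_\theta\log\pi_\theta,\pi_\theta-\nu\rangle_\rho\le (M_g/\pl_F)\|\nabla_\theta J^{\tpi}\|_2\,\|\pi_\theta-\nu\|_{1,\rho}$. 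Collecting the bounds yields a quadratic inequality in $S=\sqrt{\subopt^{\tpi}_\lambda(\pi_\theta)}$ of the form $S^2\lesssim\sqrt{1/\lambda}\,\big((M_g/\pl_F)\|\nabla_\theta J^{\tpi}\|_2+\sqrt{\varepsilon_{\bias}}\big)\,S$, whose solution, after optimizing constants via Young's inequality, gives the approximate PL inequality with $\pl_{\tpi,\lambda}=\lambda\pl_F^2/(2M_g^2)$ and $\epspl=\varepsilon_{\bias}\pl_F^2/M_g^2$. The delicate points are that the Fisher matrix is on-policy (weighted by $\pi_\theta$) whereas the mismatch term lives against $\nu$; I resolve this by pushing everything through the uniform score bound $M_g$ rather than a distribution-dependent concentrability coefficient, and by carefully tracking the numerical constant when closing the quadratic.

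Finally, Assumption~\ref{ass:gradient_noise} follows by instantiating the clipped pairwise estimator~\eqref{eq:stochastic_gradient_estimator_self_play_pg}: Assumption~\ref{ass:bounded_score} supplies the score bound $M_g$ needed in Lemma~\ref{lem:noise_properties_general_initial}, and the two reference-regularity cases feed into Lemma~\ref{lem:noise_properties_bias} --- case (i) with threshold $M_2(\varepsilon_{\grad})$ and $k=2$, case (ii) with $M_\infty(\varepsilon_{\grad})$ and $k=\infty$ --- yielding almost-sure bounds $D_2(\varepsilon_{\grad})$, $D_\infty(\varepsilon_{\grad})$ on the centered per-sample gradient. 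Lemma~\ref{lem:noise_properties_general} then converts these into a bias of level $\varepsilon_{\grad}$, subgaussian constant $\sigma^2_{\tpi,\lambda}=D_k^2(\varepsilon_{\grad})$, and subexponential constant $v^2_{\tpi,\lambda}=6D_k^2(\varepsilon_{\grad})$; substituting the explicit forms of $D_2,D_\infty$ with $M_g$ in place of the generic score bound recovers the stated $\sigma^2_{\tpi,\lambda}$ and $v^2_{\tpi,\lambda}$ in each regime.
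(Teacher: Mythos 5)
Your overall architecture matches the paper's: the paper's proof is exactly the assembly of Lemma~\ref{lem:lipschitz_fisher} (Lipschitzness), Lemma~\ref{lem:smoothness_fisher} (smoothness), Lemma~\ref{lem:pl_fisher} (PL), and Lemma~\ref{lem:noise_properties_fisher} (noise, itself a direct instantiation of Lemmas~\ref{lem:noise_properties_bias} and~\ref{lem:noise_properties_general}). Your noise verification is identical to the paper's, and your Lipschitz argument is a valid alternative: you integrate $\nabla_\theta \pi_\theta = \pi_\theta \nabla_\theta\log\pi_\theta$ along the segment, whereas the paper passes through the span-seminorm comparison $\|\pi_\theta-\pi_{\theta'}\|_1 \le \|\log\pi_\theta - \log\pi_{\theta'}\|_{\spann}$ of \citet{mei2020global}; both give $G=M_g$. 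Taking $\cT=\mathrm{Id}$ for Assumption~\ref{ass:improvement} is consistent with the paper's remark that the identity always satisfies it.

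There are two places where your sketch falls short of the stated claim. First, and most substantively, your PL argument cannot produce the stated constant $\pl_{\tpi,\lambda}=\lambda\pl_F^2/(2M_g^2)$. You use only the \emph{first-order} convexity inequality $\subopt^{\tpi}_\lambda(\pi_\theta)\le\langle\nabla_\pi V^{\tpi}_\lambda(\pi_\theta,\pi_\theta),\pi_\theta-\nu\rangle_\rho$ together with $\|\pi_\theta-\nu\|_{1,\rho}\le\sqrt{2\subopt^{\tpi}_\lambda(\pi_\theta)/\lambda}$, and then close a quadratic in $S=\sqrt{\subopt^{\tpi}_\lambda(\pi_\theta)}$. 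This discards the strong-convexity quadratic and yields
\[
  S^2 \;\le\; \frac{2}{\lambda}\Bigl(\sqrt{\varepsilon_{\bias}}+\frac{M_g}{\pl_F}\|\nabla J^{\tpi}(\theta;\pi_\theta)\|_2\Bigr)^{2},
\]
which after any Young split gives at best $\pl_{\tpi,\lambda}=\lambda\pl_F^2/(4M_g^2)$ (and a factor $4$ loss, $\lambda\pl_F^2/(8M_g^2)$, with the natural symmetric split that recovers the stated $\epspl$); no choice of the trade-off parameter reaches the claimed constant. The paper instead keeps the quadratic term from Lemma~\ref{lem:value_strong_convexity_context} and completes the square per context: $V^{\tpi}_\lambda(\pi,\mu|x)-V^{\tpi,\star}_\lambda(\mu|x)\le \|\nabla+c(x)\bOne\|_\infty\|\pi-\pi^\star\|_1-\tfrac{\lambda}{2}\|\pi-\pi^\star\|_1^2\le \tfrac{1}{2\lambda}\|\nabla+c(x)\bOne\|_\infty^2$, giving $\subopt^{\tpi}_\lambda(\pi_\theta)\le\tfrac{1}{2\lambda}\|\nabla_\pi V^{\tpi}_\lambda(\pi_\theta,\pi_\theta)\|_{\spann,\rho}^2$, and only then applies the compatibility/Fisher comparison $\|\nabla_\pi V\|_{\spann,\rho}^2\le 2\varepsilon_{\bias}+\tfrac{2M_g^2}{\pl_F^2}\|\nabla J\|_2^2$ (Lemma~\ref{lem:fisher_gradient_comparison}); this delivers exactly $\pl_{\tpi,\lambda}=\lambda\pl_F^2/(2M_g^2)$ and $\epspl=\varepsilon_{\bias}\pl_F^2/M_g^2$. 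Your route proves a qualitatively equivalent PL inequality, but not the proposition as stated; replace the first-order step by the completed-square bound. Second, in the smoothness verification you assert the factor $\sqrt{2+V_{\tpi}}$ from the log-moment condition alone, but Assumption~\ref{ass:reference_policy}-(i) controls only the \emph{positive} part of $\log(\pi_\theta/\tpi)$ (via $\pi_\theta\le 1$); the additive $2$ comes from a separate argument for the negative part, namely $\E_{y\sim\pi_\theta(\cdot|x)}[\tpi(y|x)/\pi_\theta(y|x)]=1$, hence $\P(\log(\tpi/\pi_\theta)\ge u)\le \rme^{-u}$ and $\E[(\log(\tpi/\pi_\theta))_+^2]\le 2$. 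Without this exponential-tail step the centered-advantage second moment is not controlled, so make it explicit.
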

\begin{proof}
  Follows from Lemma~\ref{lem:lipschitz_fisher}, Lemma~\ref{lem:smoothness_fisher}, Lemma~\ref{lem:pl_fisher}, and Lemma~\ref{lem:noise_properties_fisher}.
\end{proof}

\begin{corollary}[SPG iteration and sample complexity: compatible Fisher parameterization]\label{cor:spg_complexity_fisher}
Fix $\delta\in(0,1)$ and $\varepsilon\in(0,1)$.
Assume the log-moment condition of Assumption~\ref{ass:reference_policy}(i) with constant $V_{\tpi}$,
and assume Assumptions~\ref{ass:bounded_score}--\ref{ass:compatible} hold with constants
$M_g,M_h,\pl_F$ and $\varepsilon_{\bias}$.
Let $\pl_{\tpi,\lambda}=\lambda\pl_F^2/(2M_g^2)$ and $\epspl=\varepsilon_{\bias}\pl_F^2/M_g^2$ as in
Proposition~\ref{prop:fisher_verification}.
Assume $\lambda\pl_{\tpi,\lambda}\ge G^2$ (here $G=M_g$), i.e.,
\[
\lambda \ge \lambda_{\min}^{\mathrm{F}} \triangleq \sqrt{2}\,\frac{M_g^2}{\pl_F}.
\]

Run SPG~\eqref{eq:approximate_self_play} with $(\gamma_t,B_t)$ as in Proposition~\ref{prop:spg_selfplay_stochastic},
and use the clipped estimator from Proposition~\ref{prop:fisher_verification} with $k=2$ at bias level
\[
\varepsilon_{\grad}^2=\frac{7\,\pl_{\tpi,\lambda} \cdot \varepsilon}{180\log(\rme/\delta)}
\qquad\text{(so }M
=\tfrac12+\tfrac{4\lambda^2 M_g (V_{\tpi}+1)}{\varepsilon_{\grad}}
=\Theta(1/\sqrt{\varepsilon})\text{)}.
\]
Then with probability at least $1-\delta$, once
\begin{align*}
N_{\mathrm{iter}}(\varepsilon,\delta)=T
&=\tcO\!\left(\frac{G^2 v^2_{\tpi,\lambda}}{\lambda \pl_{\tpi,\lambda}}\cdot\frac{\log(\rme/\delta)}{\varepsilon}\right)
=\tcO\!\left(\frac{1}{\varepsilon^2}\right),
\\
N_{\mathrm{sample}}(\varepsilon,\delta)&=\sum_{t=0}^{T-1}B_t
= \tcO(T^2/{\pl_{\tpi,\lambda}}) = \tcO(\varepsilon^{-4}),
\end{align*}
the final iterate $\pi_T$ satisfies the approximate VNW guarantee
\[
\subopt^{\tpi}_\lambda(\pi_T)
\le \varepsilon + \frac{6\log(\rme/\delta)}{\pl_{\tpi,\lambda}}\epspl
= \varepsilon + \frac{12\log(\rme/\delta)}{\lambda}\,\varepsilon_{\bias}.
\]
\end{corollary}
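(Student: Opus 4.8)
The plan is to follow the same template as the proof of Corollary~\ref{cor:spg_complexity_softmax}, instantiating the generic stochastic rate of Proposition~\ref{prop:spg_selfplay_stochastic} with the constants supplied by Proposition~\ref{prop:fisher_verification}, and then carefully tracking the $\varepsilon$-dependence through the interlinked quantities $\varepsilon_{\grad}$ and $v^2_{\tpi,\lambda}$. The analytical heavy lifting already resides in those two results, so the corollary should reduce to substitution and asymptotic simplification.

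First I would invoke Proposition~\ref{prop:fisher_verification} to certify that Assumptions~\ref{ass:lipschitz_param}--\ref{ass:gradient_noise} hold with $G=M_g$, the stated $L_{\tpi,\lambda}$, $\pl_{\tpi,\lambda}=\lambda\pl_F^2/(2M_g^2)$, $\epspl=\varepsilon_{\bias}\pl_F^2/M_g^2$, and --- for the $k=2$ clipped estimator at bias level $\varepsilon_{\grad}$ --- the subgaussian/subexponential constants $\sigma^2_{\tpi,\lambda}$ and $v^2_{\tpi,\lambda}=6\sigma^2_{\tpi,\lambda}$. Next I would verify the regularization condition required by Proposition~\ref{prop:spg_selfplay_stochastic}: since $\lambda\pl_{\tpi,\lambda}=\lambda^2\pl_F^2/(2M_g^2)$ while $G^2=M_g^2$, the inequality $\lambda\pl_{\tpi,\lambda}\ge G^2$ is exactly $\lambda\ge\sqrt{2}\,M_g^2/\pl_F=\lambda_{\min}^{\mathrm{F}}$, which is assumed. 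With this in place, Proposition~\ref{prop:spg_selfplay_stochastic} applies and yields, with probability at least $1-\delta$, a four-term bound on $\subopt^{\tpi}_\lambda(\pi_t)$: two fast-decaying terms of order $1/t^2$ and $\log(1+t)/t^2$, one $1/t$ term carrying the factor $G^2 v^2_{\tpi,\lambda}/(\lambda\pl_{\tpi,\lambda})$, and an additive floor $\tfrac{6\log(\rme/\delta)}{\pl_{\tpi,\lambda}}\bigl(\epspl+\tfrac{15}{7}\varepsilon_{\grad}^2\bigr)$.

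I would then split the target accuracy into two halves. For the floor, the chosen $\varepsilon_{\grad}^2=7\pl_{\tpi,\lambda}\varepsilon/(180\log(\rme/\delta))$ makes the $\varepsilon_{\grad}^2$ contribution equal exactly $\varepsilon/2$, while substituting $\epspl=\varepsilon_{\bias}\pl_F^2/M_g^2$ and $\pl_{\tpi,\lambda}=\lambda\pl_F^2/(2M_g^2)$ collapses the remaining floor to $12\log(\rme/\delta)\,\varepsilon_{\bias}/\lambda$, which is the irreducible residual displayed in the statement. For the decaying terms I would drive their sum below $\varepsilon/2$; since the $1/t$ term dominates asymptotically, setting it to $\varepsilon/2$ gives $T=\tcO\!\big(G^2 v^2_{\tpi,\lambda}\log(\rme/\delta)/(\lambda\pl_{\tpi,\lambda}\varepsilon)\big)$, after which the faster $1/t^2$ and $\log(1+t)/t^2$ terms are automatically controlled (up to polylog factors absorbed in $\tcO$).

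The crux --- and the only place genuine care is needed --- is the $\varepsilon$-accounting that produces the $\varepsilon^{-2}$ iteration and $\varepsilon^{-4}$ sample rates (worse than the softmax $\varepsilon^{-1},\varepsilon^{-2}$). Because the log-moment condition of Assumption~\ref{ass:reference_policy}(i) only permits a Markov (polynomial) tail bound on the clipping bias, the threshold must scale as $M=\tfrac12+4\lambda^2 M_g(V_{\tpi}+1)/\varepsilon_{\grad}=\Theta(1/\sqrt{\varepsilon})$, whence $v^2_{\tpi,\lambda}=6M_g^2\bigl(1+8\lambda^2 M_g(V_{\tpi}+1)/\varepsilon_{\grad}\bigr)^2=\Theta(1/\varepsilon)$ when $M_g,\lambda,V_{\tpi},\pl_F$ are treated as fixed. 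Plugging this into the expression for $T$ yields $T=\tcO(1/\varepsilon^2)$, and since $B_t=\lceil(t+8\kappa_{\tpi,\lambda})/\pl_{\tpi,\lambda}\rceil$ gives $\sum_{t<T}B_t=\tcO(T^2/\pl_{\tpi,\lambda})$, the sample complexity is $\tcO(1/\varepsilon^4)$. I expect no serious obstacle beyond this bookkeeping, as the substitution and simplification are routine once the dependence of $v^2_{\tpi,\lambda}$ on $\varepsilon_{\grad}$ (and of $\varepsilon_{\grad}$ on $\varepsilon$) is made explicit.
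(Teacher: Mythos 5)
Your proposal is correct and follows essentially the same route as the paper's proof: instantiate Proposition~\ref{prop:spg_selfplay_stochastic} with the constants from Proposition~\ref{prop:fisher_verification}, check that $\lambda\pl_{\tpi,\lambda}\ge G^2$ reduces to $\lambda\ge\sqrt{2}\,M_g^2/\pl_F$, use the chosen $\varepsilon_{\grad}$ to pin the bias floor at $\varepsilon/2$ (with the $\epspl$ residual collapsing to $12\log(\rme/\delta)\varepsilon_{\bias}/\lambda$), and track $v^2_{\tpi,\lambda}=\Theta(1/\varepsilon)$ through the dominant $1/t$ term to get $T=\tcO(\varepsilon^{-2})$ and $\sum_{t<T}B_t=\tcO(T^2/\pl_{\tpi,\lambda})=\tcO(\varepsilon^{-4})$. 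You even correctly identify the one point of genuine care --- that the log-moment (Markov-tail) condition forces $M=\Theta(1/\sqrt{\varepsilon})$ and hence the degraded rates relative to the softmax case --- which is exactly the bookkeeping the paper performs.
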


\begin{proof}
Proposition~\ref{prop:fisher_verification} provides $G=M_g$, $\pl_{\tpi,\lambda}$, $\epspl$,
and shows Assumption~\ref{ass:gradient_noise} holds with
$v^2_{\tpi,\lambda}=6M_g^2\bigl(1+8\lambda^2 M_g(V_{\tpi}+1)/\varepsilon_{\grad}\bigr)^2$.
The condition $\lambda\pl_{\tpi,\lambda}\ge G^2$ allows applying
Proposition~\ref{prop:spg_selfplay_stochastic}.
With the chosen $\varepsilon_{\grad}$,
$\frac{6\log(\rme/\delta)}{\pl_{\tpi,\lambda}}\cdot\frac{15}{7}\varepsilon_{\grad}^2\le \varepsilon/2$.
Also, since $\varepsilon_{\grad}^2=\Theta(\varepsilon)$, we have $v^2_{\tpi,\lambda}=\Theta(1/\varepsilon)$,
so taking $T=\tcO\!\big(\frac{G^2 v^2_{\tpi,\lambda}}{\lambda\pl_{\tpi,\lambda}}
\cdot\frac{\log(\rme/\delta)}{\varepsilon}\big)=\tcO(\log(\rme/\delta)/\varepsilon^2)$
makes the remaining terms in Proposition~\ref{prop:spg_selfplay_stochastic} at most $\varepsilon/2$,
yielding
$\subopt^{\tpi}_\lambda(\pi_T)\le \varepsilon + \frac{6\log(\rme/\delta)}{\pl_{\tpi,\lambda}}\epspl$.
Finally, summing $B_t=\lceil(t+8\kappa_{\tpi,\lambda})/\pl_{\tpi,\lambda}\rceil$ gives
$\sum_{t<T}B_t=\tcO(T^2/\pl_{\tpi,\lambda})$.
\end{proof}
\subsubsection{Lipschitzness and smoothness}

We note that this type of results for general policy parameterizations are typically derived for unregularized reinforcement learning problems \citep{papini2018stochastic,yuan2022general}. As a result, it is necessary to prove modified versions.

\begin{lemma}[Lipschitzness]\label{lem:lipschitz_fisher}
Assume Assumption~\ref{ass:bounded_score}. For any $\theta, \theta' \in \Theta$ it holds
\[
  \|\pi_\theta - \pi_{\theta'}\|_{1,\rho} 
  \leq M_g \|\theta - \theta'\|_2.
\]
In particular, Assumption~\ref{ass:lipschitz_param} holds with a constant $G=M_g$.
\end{lemma}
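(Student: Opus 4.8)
The plan is to reduce the claim to a pointwise-in-context statement and then average. Since by definition $\|\pi_\theta - \pi_{\theta'}\|_{1,\rho}^2 = \E_{x\sim\rho}\big[\|\pi_\theta(\cdot|x) - \pi_{\theta'}(\cdot|x)\|_1^2\big]$, it suffices to establish the deterministic bound $\|\pi_\theta(\cdot|x) - \pi_{\theta'}(\cdot|x)\|_1 \le M_g \|\theta - \theta'\|_2$ for $\rho$-almost every $x$; squaring, taking expectation, and using that the right-hand side is constant in $x$ then yields the result after a square root.

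For the pointwise bound I would interpolate along the segment $\theta(s) \triangleq \theta' + s(\theta - \theta')$, $s\in[0,1]$, and use the fundamental theorem of calculus together with the score-function (log-derivative) identity $\nabla_\theta \pi_\theta(y|x) = \pi_\theta(y|x)\,\nabla_\theta \log\pi_\theta(y|x)$. Concretely, for each fixed $y$,
\[
  \pi_\theta(y|x) - \pi_{\theta'}(y|x)
  = \int_0^1 \big\langle \nabla_\theta \pi_{\theta(s)}(y|x),\, \theta - \theta' \big\rangle \,\rmd s
  = \int_0^1 \pi_{\theta(s)}(y|x)\,\big\langle \nabla_\theta \log\pi_{\theta(s)}(y|x),\, \theta - \theta' \big\rangle \,\rmd s.
\]
Summing absolute values over $y$, moving them inside the integral, and applying Cauchy--Schwarz together with the uniform score bound of Assumption~\ref{ass:bounded_score} ($\|\nabla_\theta \log\pi_{\theta(s)}(y|x)\|_2 \le M_g$ for $x\in\supp(\rho)$) gives
\[
  \|\pi_\theta(\cdot|x) - \pi_{\theta'}(\cdot|x)\|_1
  \le \int_0^1 \sum_{y\in\cY} \pi_{\theta(s)}(y|x)\, M_g \|\theta-\theta'\|_2 \,\rmd s
  = M_g \|\theta-\theta'\|_2,
\]
where the final equality uses $\sum_{y\in\cY} \pi_{\theta(s)}(y|x) = 1$ for every $s$.

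Combining the two steps: the integrand bound holds $\rho$-almost surely because the score bound in Assumption~\ref{ass:bounded_score} is uniform over $x\in\supp(\rho)$ and over $\theta\in\Theta$ (hence along the whole segment), so
\[
  \|\pi_\theta - \pi_{\theta'}\|_{1,\rho}^2 = \E_{x\sim\rho}\big[\|\pi_\theta(\cdot|x) - \pi_{\theta'}(\cdot|x)\|_1^2\big] \le M_g^2 \|\theta-\theta'\|_2^2,
\]
and taking square roots proves Assumption~\ref{ass:lipschitz_param} with $G=M_g$. I do not expect a genuine obstacle here; the only points requiring care are the use of the log-derivative identity (which relies on the differentiable, full-support parametrization already standing) and the interchange of the finite sum over $\cY$ with the integral over $s$ — both routine since $\cY$ is finite and $s\mapsto \pi_{\theta(s)}(y|x)$ is smooth. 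The conceptual content is simply that the $\ell_2\!\to\!\ell_1$ operator norm of the Jacobian $\theta\mapsto\pi_\theta(\cdot|x)$ is bounded by $M_g$, which is exactly what the $\pi_{\theta(s)}(\cdot|x)$-averaged score bound delivers.
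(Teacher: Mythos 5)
Your proof is correct, and it takes a genuinely different route from the paper's. The paper's argument first passes from total variation to the span seminorm of log-probabilities via Lemma~24 of \citet{mei2020global}, namely $\norm{\pi_\theta(\cdot|x)-\pi_{\theta'}(\cdot|x)}_1 \le \norm{\log\pi_\theta(\cdot|x)-\log\pi_{\theta'}(\cdot|x)}_{\spann}$, then applies a mean-value expansion to the log-probabilities and bounds the resulting $\ell_\infty$ norm using the uniform score bound. You instead integrate the probabilities themselves along the segment, invoke the score identity $\nabla_\theta\pi_\theta(y|x)=\pi_\theta(y|x)\,\nabla_\theta\log\pi_\theta(y|x)$, and exploit the normalization $\sum_{y\in\cY}\pi_{\theta(s)}(y|x)=1$ to absorb the sum over actions. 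Your route buys three things: it is self-contained (no external comparison lemma); it sidesteps a small technical wrinkle in the paper's write-up, since a single intermediate point $\tilde\theta$ valid for all components $y$ simultaneously is not literally guaranteed by the mean-value theorem for vector-valued maps (the paper's bound is unaffected because it is uniform in $\tilde\theta$, but your integral form avoids the issue entirely); and, because the score norms enter your bound weighted by $\pi_{\theta(s)}(y|x)$, your argument would go through under the weaker policy-averaged moment bound $\E_{y\sim\pi_{\theta(s)}(\cdot|x)}\bigl[\norm{\nabla\log\pi_{\theta(s)}(y|x)}_2\bigr]\le M_g$ rather than the uniform sup over $y$ in Assumption~\ref{ass:bounded_score}. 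What the paper's route buys in exchange is the intermediate log-span estimate, which is the currency used repeatedly elsewhere in the appendix (e.g., Lemma~\ref{lem:log_min_prob_vs_span} and the anchor-policy controls), so it keeps the analysis stylistically uniform. Your observations about full support (needed for the log-derivative identity, and standing throughout the paper) and about the segment remaining in $\Theta=\R^d$ are exactly the right points of care.
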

\begin{proof}
  From Lemma~24 of \citet{mei2020global} we have for any $x \in \cX$
  \[
    \norm{\pi_\theta(\cdot | x) - \pi_{\theta'}(\cdot | x)}_1 \leq \norm{\log \pi_{\theta}(\cdot | x) - \log \pi_{\theta'}(\cdot | x)}_{\spann}\,.
  \]
  Next, by Taylor expansion we have
  \[
    \log \pi_{\theta}(\cdot | x) - \log \pi_{\theta'}(\cdot | x) = \nabla \log \pi_{\tilde{\theta}}(\cdot | x)^\top (\theta - \theta')\,,
  \]
  where $\tilde{\theta}$ is some point on the line segment between $\theta$ and $\theta'$. By our assumption on the parameter set $\Theta$ we have $\tilde{\theta} \in \Theta$, thus, applying Assumption~\ref{ass:bounded_score},
  \begin{align*}
    \|\pi_\theta - \pi_{\theta'}\|_{1,\rho}^2 &= \E_{x \sim \rho}[\norm{\pi_\theta(\cdot | x) - \pi_{\theta'}(\cdot | x)}_1^2] \leq \E_{x \sim \rho}[\norm{\nabla \log \pi_{\tilde{\theta}}(\cdot | x)^\top (\theta - \theta')}_\infty^2] \\
    &\leq \E_{x \sim \rho}\left[ \max_{y \in \cY}\norm{\nabla \log \pi_{\tilde{\theta}}(y | x)}_2^2\right] \cdot \norm{\theta - \theta'}_2^2 \leq M_g^2 \norm{\theta - \theta'}_2^2\,.
  \end{align*}
\end{proof}

\begin{lemma}[Smoothness]
\label{lem:smoothness_fisher}
Assume Assumption~\ref{ass:bounded_score} and Assumption~\ref{ass:reference_policy}-(i). Then $\theta\mapsto J^{\tpi}(\theta;\pi)$ is $L_{\tpi,\lambda}$-smooth on $\Theta$ with
\[
  L_{\tpi,\lambda} = \left( \tfrac{1}{2} + \lambda\sqrt{2 + V_{\tpi}} \right) (M_g^2 + M_h) + \lambda M_g^2\,,
\]
i.e., satisfies Assumption~\ref{ass:smoothness} with a constant $L_{\tpi,\lambda}$.
\end{lemma}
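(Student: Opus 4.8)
The plan is to establish $L_{\tpi,\lambda}$-smoothness by bounding the operator norm of the Hessian $\nabla^2_\theta J^{\tpi}(\theta;\pi)$ uniformly over $\theta\in\Theta$; since Assumption~\ref{ass:bounded_score} guarantees that $J^{\tpi}(\cdot;\pi)$ is twice continuously differentiable, a uniform bound $\norm{\nabla^2_\theta J^{\tpi}(\theta;\pi)}_{\op}\le L_{\tpi,\lambda}$ yields the two-sided inequality required by Assumption~\ref{ass:smoothness}. I split $J^{\tpi}(\theta;\pi)=J_1(\theta)+J_2(\theta)$ into the preference part $J_1(\theta)=\cP(\pi\succ\pi_\theta)$ and the regularizer $J_2(\theta)=\lambda\KL_{\rho}(\pi_\theta\Vert\tpi)$, and bound each Hessian separately. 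Throughout I use the score identity $\E_{y\sim\pi_\theta(\cdot|x)}[\nabla\log\pi_\theta\,\nabla\log\pi_\theta^\top+\nabla^2\log\pi_\theta]=0$ (from differentiating $\sum_y\pi_\theta(y|x)=1$ twice), which lets me recenter any baseline depending on $x$ only, together with the pointwise bound $\norm{\nabla\log\pi_\theta}_2^2\le M_g^2$ and $\E_{x,y}[\norm{\nabla^2\log\pi_\theta}_{\op}]\le M_h$ (the latter by Jensen/Cauchy--Schwarz against Assumption~\ref{ass:bounded_score}).

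For $J_1$, writing $r_\pi(x,y)\triangleq\cP(\pi\succ y\mid x)=(\bP_x^\top\pi(x))_y\in[0,1]$, the policy-gradient calculus gives $\nabla^2 J_1(\theta)=\E_{x,y\sim\pi_\theta}[r_\pi(x,y)(\nabla\log\pi_\theta\,\nabla\log\pi_\theta^\top+\nabla^2\log\pi_\theta)]$. Since $r_\pi(x,\cdot)$ takes values in $[0,1]$, its span seminorm is at most $1/2$, so subtracting the optimal context-dependent baseline (legitimate by the score identity) makes the effective weight bounded by $1/2$ uniformly in $y$; combining $\norm{\nabla\log\pi_\theta}_2^2\le M_g^2$ with $\E_{x,y}[\norm{\nabla^2\log\pi_\theta}_{\op}]\le M_h$ then gives $\norm{\nabla^2 J_1(\theta)}_{\op}\le\tfrac12(M_g^2+M_h)$.

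For $J_2$, differentiating $\lambda\sum_y\pi_\theta\log(\pi_\theta/\tpi)$ twice and using $\sum_y\nabla\pi_\theta=0$ yields
\[
\nabla^2 J_2(\theta)=\lambda\,\E_{x,y\sim\pi_\theta}\!\big[w(x,y)\big(\nabla\log\pi_\theta\,\nabla\log\pi_\theta^\top+\nabla^2\log\pi_\theta\big)\big]+\lambda\,\E_{x,y\sim\pi_\theta}\!\big[\nabla\log\pi_\theta\,\nabla\log\pi_\theta^\top\big],
\]
where $w(x,y)\triangleq\log(\pi_\theta(y|x)/\tpi(y|x))$. The second term equals $\lambda F_\rho(\theta)$ and has operator norm at most $\lambda M_g^2$. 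For the first term, the operator-norm triangle inequality bounds it by $\lambda\,\E_{x,y}[|w|(\norm{\nabla\log\pi_\theta}_2^2+\norm{\nabla^2\log\pi_\theta}_{\op})]$; splitting off the uniform bound $M_g^2$ and applying Cauchy--Schwarz to the Hessian factor gives at most $\lambda\sqrt{\E_{x,y\sim\pi_\theta}[w^2]}\,(M_g^2+M_h)$.

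The crux, which I expect to be the main obstacle, is controlling $\E_{y\sim\pi_\theta}[w^2]$ without incurring an alphabet-size ($\log|\cY|$) or varentropy blow-up, since $\log\pi_\theta$ is unbounded below and $\mathrm{Var}_{\pi_\theta}[\log\pi_\theta]$ has no universal bound. The resolution is to split $w=w_+-w_-$ into positive and negative parts, so that $w^2=w_+^2+w_-^2$. On $\{w>0\}$ we have $w\le\log(1/\tpi_{\min}(x))$ because $\log\pi_\theta\le 0$, hence $\E_{y\sim\pi_\theta}[w_+^2]\le\log^2(1/\tpi_{\min}(x))$. For the negative part, setting $s_y=\pi_\theta(y|x)/\tpi(y|x)<1$ on $\{w<0\}$ gives $\E_{y\sim\pi_\theta}[w_-^2]=\sum_y\tpi(y|x)\,s_y\log^2 s_y\le\sup_{s\in(0,1]}s\log^2 s=4\rme^{-2}\le 2$, using $\sum_y\tpi(y|x)\le 1$. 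Therefore $\E_{y\sim\pi_\theta}[w^2]\le 2+\log^2(1/\tpi_{\min}(x))$, and taking $\E_{x\sim\rho}$ with Assumption~\ref{ass:reference_policy}-(i) yields $\E_{x,y\sim\pi_\theta}[w^2]\le 2+V_{\tpi}$. Summing the three operator-norm bounds gives $\norm{\nabla^2 J^{\tpi}(\theta;\pi)}_{\op}\le\tfrac12(M_g^2+M_h)+\lambda\sqrt{2+V_{\tpi}}\,(M_g^2+M_h)+\lambda M_g^2=L_{\tpi,\lambda}$, as claimed. This elementary $\sup_{s}s\log^2 s=4\rme^{-2}$ bound is precisely what produces the universal constant $2$ and explains why, unlike the softmax case, no $\log|\cY|$ factor appears.
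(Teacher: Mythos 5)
Your proof is correct and follows the same skeleton as the paper's: bound $\|\nabla^2_\theta J^{\tpi}(\theta;\pi)\|_{\op}$ uniformly, use the Bartlett identity $\E_{y\sim\pi_\theta}[s_\theta s_\theta^\top + H_\theta]=0$ to recenter the weight, isolate the extra $\lambda\,\E[s_\theta s_\theta^\top]$ term (giving $\lambda M_g^2$), bound the preference weight by $1/2$ after centering, and control the second moment of the log-ratio $w=\log(\pi_\theta/\tpi)$ by splitting into positive part (bounded via $\log^2(1/\tpi_{\min}(x))$ and Assumption~\ref{ass:reference_policy}-(i), giving $V_{\tpi}$) and negative part (bounded by the universal constant $2$). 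Your decomposition $J=J_1+J_2$ with separate operator-norm bounds is equivalent to the paper's single Cauchy--Schwarz on the combined reward $r_\theta = u_\pi+\lambda w$ followed by Minkowski, and the arithmetic lands on the identical constant $L_{\tpi,\lambda}$. The one genuinely different step is the negative-part bound: the paper argues via $\E[\rme^{Z}]=1$ for $Z=\log(\tpi/\pi_\theta)$, the Markov tail $\P(Z\ge u)\le \rme^{-u}$, and the integral $2\int_0^\infty u\rme^{-u}\,\rmd u=2$, whereas you bound it pointwise by $\sup_{s\in(0,1]} s\log^2 s = 4\rme^{-2}$ and sum against $\tpi(\cdot|x)$. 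Your version is more elementary (no tail integration) and slightly tighter ($4\rme^{-2}<2$, which would marginally improve the constant to $\lambda\sqrt{4\rme^{-2}+V_{\tpi}}$, though the stated lemma only needs $2$); the paper's tail argument has the mild advantage of reusing a technique that appears elsewhere in its estimator-bias analysis (Lemma~\ref{lem:noise_properties_general_initial}). Both are valid, and your per-factor handling of the cross term, $\E[|w|\,\|s_\theta\|_2^2]\le M_g^2\sqrt{\E[w^2]}$ by Jensen plus $\E[|w|\,\|H_\theta\|_{\op}]\le \sqrt{\E[w^2]}\,M_h$ by Cauchy--Schwarz, correctly reproduces the $(M_g^2+M_h)$ factor.
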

\begin{proof}

To simplify the notation, define
\[
  s_\theta(x,y)\triangleq \nabla_\theta \log \pi_\theta(y|x),
  \qquad
  H_\theta(x,y)\triangleq \nabla_\theta^2 \log \pi_\theta(y|x).
\]
Then the standard identities give
\[
  \nabla_\theta \pi_\theta(y|x) = \pi_\theta(y|x)\, s_\theta(x,y),
  \qquad
  \nabla_\theta^2 \pi_\theta(y|x)
  = \pi_\theta(y|x)\big(s_\theta(x,y)s_\theta(x,y)^\top + H_\theta(x,y)\big).
\]
Moreover, since $\sum_y \pi_\theta(y|x)=1$, differentiating twice yields the “Bartlett identity”
\begin{equation}\label{eq:bartlett_identity}
  \E_{y\sim\pi_\theta(\cdot|x)}\big[s_\theta(x,y)s_\theta(x,y)^\top + H_\theta(x,y)\big]=0,
  \qquad \forall x \in \cX,\theta \in \Theta.
\end{equation}

Next, define the per-sample regularized reward
\[
  r_\theta(x,y) \triangleq \cP(\pi \succ y \mid x) + \lambda \log \frac{\pi_\theta(y|x)}{\tpi(y|x)}\,.
\]
Using $\nabla_\theta \ell_\theta(x,y)=s_\theta(x,y)$ and $\E_{y\sim\pi_\theta}[s_\theta]=0$,
one obtains the usual policy-gradient form
\[
  \nabla_\theta J^{\tpi}(\theta;\pi)
  =
  \E_{x\sim\rho}\E_{y\sim\pi_\theta(\cdot|x)}\Big[s_\theta(x,y)\, r_\theta(x,y)\Big]\,.
\]
Differentiating once more gives
\begin{align*}
  \nabla_\theta^2 J^{\tpi}(\theta;\pi)
  &=
  \E_{x\sim\rho}\E_{y\sim\pi_\theta(\cdot|x)}
  \Big[r_\theta(x,y)\big(s_\theta s_\theta^\top + H_\theta\big)\Big]
  \;+\;
  \lambda\,\E_{x\sim\rho}\E_{y\sim\pi_\theta(\cdot|x)}\Big[s_\theta s_\theta^\top\Big],
\end{align*}
where we used $\nabla_\theta r_\theta(x,y)=\lambda s_\theta(x,y)$.

Fix $x \in \cX$. Because of \eqref{eq:bartlett_identity}, for any scalar $c(x)$ we have
\[
  \E_{y\sim\pi_\theta(\cdot|x)}\Big[c(x)\,(s_\theta s_\theta^\top + H_\theta)\Big]=0.
\]
Therefore we may replace $r_\theta(x,y)$ by a centered version without changing the expectation:
pick any $a(x),b_\theta(x)\in\R$ and write
\[
  r_\theta(x,y)
  =
  \big(u_\pi(x,y)-a(x)\big) + \lambda\big(\ell_\theta(x,y)-b_\theta(x)\big)
  \;+\; \big(a(x)+\lambda b_\theta(x)\big),
\]
and the last constant term vanishes when multiplied by $(s_\theta s_\theta^\top + H_\theta)$ and averaged over $y$. Let $S_\theta(x,y)\triangleq s_\theta s_\theta^\top + H_\theta$. Now use Cauchy–Schwarz to bound
\begin{align*}
  \Big\|\E_{x,y}\big[r_\theta(x,y) S_\theta(x,y)\big]\Big\|_{\op}
  &=
  \Big\|\E_{x,y}\big[(r_\theta(x,y)-a(x)-\lambda b_\theta(x))\, S_\theta(x,y)\big]\Big\|_{\op} \\
  &\le
  \E_{x,y}\big[|r_\theta(x,y)-a(x)-\lambda b_\theta(x)|\, \|S_\theta(x,y)\|_{\op}\big] \\
  &\le \sqrt{\E_{x,y}\left[ ( r_\theta(x,y)-a(x)-\lambda b_\theta(x))^2 \right] \cdot \E_{x,y}[\|S_\theta(x,y)\|_{\op}^2]}\,.
\end{align*}
For the Hessian term, we have by Minkowski inequality since $\|S_\theta\|_{\op}\le \|s_\theta\|_2^2 + \|H_\theta\|_{\op}$
\[
  \sqrt{\E_{x,y}[\|S_\theta(x,y)\|_{\op}^2]}
  \le
  \sqrt{\E_{x,y}[\|s_\theta(x,y)\|_2^4]} + \sqrt{\E_{x,y}[\|H_\theta(x,y)\|_{\op}^2]} \leq M_g^2 + M_h\,.
\]
For the reward variance term, we first apply Minkowski inequality again:
\begin{align*}
  \sqrt{\E_{x,y}\left[ ( r_\theta(x,y)-a(x)-\lambda b_\theta(x))^2 \right]} &\le
  \sqrt{\E_{x,y}\left[ ( u_\pi(x,y)-a(x))^2 \right]} \\
  &\qquad + \lambda \sqrt{\E_{x,y}\left[ (\ell_\theta(x,y)-b_\theta(x))^2 \right]}\,.
\end{align*}
For the first term, since $u_\pi(x,y)\in[0,1]$, we can pick $a(x)=1/2$ to get
\[
  \sqrt{\E_{x,y}\left[ ( u_\pi(x,y)-a(x))^2 \right]} \leq \frac{1}{2}.
\]
For the second term, we take $b_\theta(x) = 0$ and decompose as follows:
\begin{align*}
  \E_{x,y}\left[ (\ell_\theta(x,y)-b_\theta(x))^2 \right] &=
  \E_{x}\E_{y\sim\pi_\theta(\cdot|x)}\left[ \left(\log \frac{\pi_\theta(y|x)}{\tpi(y|x)}\right)^2_+\right] \\
  &\qquad + \E_{x}\E_{y\sim\pi_\theta(\cdot|x)}\left[\left(-\log \frac{\pi_\theta(y|x)}{\tpi(y|x)}\right)^2_+ \right]\,.
\end{align*}
For the positive part, we use a bound $\pi_\theta(y|x) \leq 1$ to get
\[
  \E_{x}\E_{y\sim\pi_\theta(\cdot|x)}\left[ \left(\log \frac{\pi_\theta(y|x)}{\tpi(y|x)}\right)^2_+\right]
  \le
  \E_{x}\E_{y\sim\pi_\theta(\cdot|x)}\left[ \log^2 \frac{1}{\tpi(y|x)}\right]
  \le
  V_{\tpi},
\]
where the last inequality follows from Assumption~\ref{ass:reference_policy}-(i). For the negative part, we use a bound $\log(x) \leq x- 1$ for any $x > 0$ and get the following
\begin{align*}
  \E_{x}\E_{y\sim\pi_\theta(\cdot|x)}\left[\left(-\log \frac{\pi_\theta(y|x)}{\tpi(y|x)}\right)^2_+ \right] &=  \E_{x}\E_{y\sim\pi_\theta(\cdot|x)}\left[\left(\log \frac{\tpi(y|x)}{\pi_\theta(y|x)}\right)^2_+ \right] \\
  &= 2 \int_0^{\infty} u\P_{x \sim \rho, y \sim \pi_\theta(\cdot|x)}\left[\log\frac{ \tpi(y|x)}{\pi_\theta(y|x)} \geq u\right] \rmd u\,.
\end{align*}
To bound the last probability, we notice that for $Z = \log\frac{ \tpi(y|x)}{\pi_\theta(y|x)}$ we have $\E[\rme^Z] = 1$, thus, by Markov's inequality 
\[
  \P_{x \sim \rho, y \sim \pi_\theta(\cdot|x)}\left[\log\frac{ \tpi(y|x)}{\pi_\theta(y|x)} \geq u\right] \leq \rme^{-u}\,,
\]
thus 
\[
  \E_{x}\E_{y\sim\pi_\theta(\cdot|x)}\left[\left(-\log \frac{\pi_\theta(y|x)}{\tpi(y|x)}\right)^2_+ \right] \leq 2 \int_0^{\infty} u \rme^{-u} \rmd u = 2\,.
\]

For the remaining term,
\[
  \Big\|\lambda\,\E_{x,y}[s_\theta s_\theta^\top]\Big\|_{\op}
  \le
  \lambda\,\E_{x,y}\|s_\theta s_\theta^\top\|_{\op}
  =
  \lambda\,\E_{x,y}\|s_\theta\|_2^2
  \le \lambda M_g^2.
\]
Combining all the bounds yields the uniform Hessian bound
\[
  \|\nabla_\theta^2 J^{\tpi}(\theta;\pi)\|_{\op}
  \le \left( \tfrac{1}{2} + \lambda\sqrt{2 + V_{\tpi}} \right) (M_g^2 + M_h) + \lambda M_g^2\,.
\]
A uniform bound on the Hessian implies $L_{\tpi,\lambda}$-smoothness (e.g.\ by the standard
second-order Taylor remainder bound), proving the lemma.
\end{proof}

\subsubsection{Polyak–Łojasiewicz inequality}

\begin{lemma}[Gradient comparison]\label{lem:fisher_gradient_comparison}
  Under Assumptions~\ref{ass:bounded_score}-\ref{ass:fisher_matrix}-\ref{ass:compatible}, for any $\theta \in \Theta$ the following inequality holds
  \[
      \norm{\nabla V_{\lambda}^{\tpi}(\pi_\theta,\pi_\theta)}_{\spann, \rho}^2 \leq 2\varepsilon_{\bias} + \frac{2M_g^2}{\pl_F^2} \norm{\nabla J^{\tpi}(\theta; \pi_\theta)}_2^2\,.
  \]
\end{lemma}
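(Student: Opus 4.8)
The plan is to compute the policy-space gradient $\nabla_\pi V^{\tpi}_\lambda(\pi_\theta,\pi_\theta)$ explicitly and recognize that, up to a $y$-independent offset that the span seminorm annihilates, it is exactly the object controlled by the compatibility Assumption~\ref{ass:compatible}. By Lemma~\ref{lem:value_strong_convexity_context} applied with $\pi'=\mu=\pi_\theta$, and using $(\bP_x^\top\pi_\theta(x))_y=\cP(\pi_\theta\succ y\mid x)$,
\[
\bigl(\nabla_\pi V^{\tpi}_\lambda(\pi_\theta,\pi_\theta)\bigr)(x,y)
= \cP(\pi_\theta\succ y\mid x) + \lambda + \lambda\log\frac{\pi_\theta(y\mid x)}{\tpi(y\mid x)}.
\]
The term $\lambda\bOne$ is constant in $y$ and hence drops out of the span seminorm, so pointwise in $x$,
\[
\bigl\|\nabla_\pi V^{\tpi}_\lambda(\pi_\theta,\pi_\theta)(x)\bigr\|_{\spann}
= \Bigl\|\,\cP(\pi_\theta\succ\cdot\mid x)+\lambda\log\tfrac{\pi_\theta(\cdot\mid x)}{\tpi(\cdot\mid x)}\Bigr\|_{\spann}.
\]
Writing $w_\theta(x,\cdot)\triangleq \cP(\pi_\theta\succ\cdot\mid x)+\lambda\log\tfrac{\pi_\theta(\cdot\mid x)}{\tpi(\cdot\mid x)}$, this is precisely the vector whose $\rho$-averaged span seminorm, after subtracting $u_\star(\theta)^\top\nabla\log\pi_\theta(\cdot\mid x)$, is bounded by $\varepsilon_{\bias}$ in Assumption~\ref{ass:compatible}, where $u_\star(\theta)=F_\rho(\theta)^{-1}\nabla J^{\tpi}(\theta;\pi_\theta)$.

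Next I would split $w_\theta$ into the compatibility residual plus the score-linear part,
\[
w_\theta(x,\cdot)=\bigl(w_\theta(x,\cdot)-u_\star(\theta)^\top\nabla\log\pi_\theta(\cdot\mid x)\bigr)+u_\star(\theta)^\top\nabla\log\pi_\theta(\cdot\mid x),
\]
and apply the seminorm inequality $\|a+b\|_{\spann}^2\le 2\|a\|_{\spann}^2+2\|b\|_{\spann}^2$ followed by expectation over $x\sim\rho$. The first piece averages to at most $2\varepsilon_{\bias}$ directly by Assumption~\ref{ass:compatible}. For the second piece, I bound the span seminorm by the $\ell_\infty$ norm, $\|v\|_{\spann}\le\|v\|_\infty$, and then control each coordinate by Cauchy–Schwarz together with the bounded-score Assumption~\ref{ass:bounded_score}:
\[
\bigl|u_\star(\theta)^\top\nabla\log\pi_\theta(y\mid x)\bigr|
\le \|u_\star(\theta)\|_2\,\|\nabla\log\pi_\theta(y\mid x)\|_2
\le M_g\,\|u_\star(\theta)\|_2,
\]
so its $\rho$-averaged squared span seminorm is at most $M_g^2\|u_\star(\theta)\|_2^2$.

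Finally, Fisher non-degeneracy (Assumption~\ref{ass:fisher_matrix}) gives $F_\rho(\theta)\succeq\pl_F I$, hence $\|F_\rho(\theta)^{-1}\|_{\op}\le 1/\pl_F$ and $\|u_\star(\theta)\|_2^2\le \pl_F^{-2}\|\nabla J^{\tpi}(\theta;\pi_\theta)\|_2^2$. Combining the two pieces yields the claimed bound $2\varepsilon_{\bias}+\tfrac{2M_g^2}{\pl_F^2}\|\nabla J^{\tpi}(\theta;\pi_\theta)\|_2^2$. The only step requiring genuine care—and the main place an error could hide—is the reduction in the first paragraph: one must verify that the unconstrained Fréchet gradient from Lemma~\ref{lem:value_strong_convexity_context}, once the $y$-independent $\lambda\bOne$ offset is discarded, coincides \emph{verbatim} with the vector $w_\theta$ appearing in the compatibility assumption, so that the $\varepsilon_{\bias}$ slack transfers without any additional constant. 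Everything after that reduction is Cauchy–Schwarz plus the operator-norm bound on $F_\rho(\theta)^{-1}$.
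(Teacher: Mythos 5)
Your proposal is correct and matches the paper's proof essentially step for step: the explicit gradient from Lemma~\ref{lem:value_strong_convexity_context} (with the $y$-independent $\lambda\bOne$ offset annihilated by the span seminorm) is decomposed into the compatibility residual of Assumption~\ref{ass:compatible} plus the score-linear part, then bounded via $(a+b)^2\le 2a^2+2b^2$, the inequality $\|\cdot\|_{\spann}\le\|\cdot\|_\infty$, Cauchy--Schwarz with the bounded-score constant $M_g$, and $\|F_\rho(\theta)^{-1}\|_{\op}\le 1/\pl_F$ from Fisher non-degeneracy. The ``step requiring genuine care'' you flag is indeed fine: the gradient vector, modulo the constant offset, coincides verbatim with the vector in Assumption~\ref{ass:compatible}, exactly as the paper uses it.
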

\begin{proof}
  First, we notice that $[\nabla V_{\lambda}^{\tpi}(\pi_\theta,\pi_\theta)](x,y) = \cP(\pi_\theta \succ y \mid x) + \lambda( 1 + \log \frac{\pi_\theta(y|x)}{\tpi(y|x)})$, thus, Assumption~\ref{ass:compatible} implies $\norm{\nabla V_{\lambda}^{\tpi}(\pi_\theta,\pi_\theta) - (u_\star)^\top \nabla \log \pi_\theta(y|x)}_{\spann, \rho}^2 \leq \varepsilon_{\bias}$, thus, applying triangle inequality and an inequality $(a+b)^2 \leq 2a^2 + 2b^2$,
  \begin{align*}
      \norm{\nabla V_{\lambda}^{\tpi}(\pi_\theta,\pi_\theta)}_{\spann,\rho}^2 &\leq 2 \varepsilon_{\bias} +  2\norm{(u_\star)^\top \nabla \log \pi_\theta(y|x)}_{\spann,\rho}^2 \\
      &\leq 2 \varepsilon_{\bias} + 2 \norm{(u_\star)^\top \nabla \log \pi_\theta(y|x)}_{\infty,\rho}^2 \\
      &\leq 2 \varepsilon_{\bias} + 2 \max_{y \in \cY}\,\norm{F_\rho(\theta)^{-1} \nabla J^{\tpi}(\theta;\pi_\theta)}_2^2 \cdot \norm{\nabla \log \pi_\theta(y|x)}_{2,\rho}^2\\
      &\leq 2 \varepsilon_{\bias} + \frac{2M_g^2}{\pl_F^2} \norm{\nabla J^{\tpi}(\theta;\pi_\theta)}_2^2\,.
  \end{align*}
\end{proof}

\begin{lemma}[PL inequality]\label{lem:pl_fisher}
    Assume Assumptions~\ref{ass:bounded_score}-\ref{ass:fisher_matrix}-\ref{ass:compatible}. Then for any $\theta \in \Theta_{\cT}$ the following holds
    \[
        \lambda \pl_F^2 / M_g^2 \cdot \subopt^{\tpi}_{\lambda}(\pi_\theta) \leq \norm{\nabla J^{\tpi}(\theta;\pi_\theta)}_2^2 + \frac{\varepsilon_{\bias} \cdot \pl_F^2}{M_g^2}\,.
    \]
    In particular, Assumption~\ref{ass:pl} is satisfied with $\pl_{\tpi,\lambda} = 0.5 \cdot \lambda (\pl_F / M_g)^2$ and $\epspl = \varepsilon_{\bias} \cdot (\pl_F / M_g)^2$\,.
\end{lemma}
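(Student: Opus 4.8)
The plan is to prove the inequality in two stages. First I would establish a gradient-dominance (PL-type) bound in \emph{policy space}, controlling $\subopt^{\tpi}_{\lambda}(\pi_\theta)$ by the expected span seminorm $\norm{\nabla_\pi V^{\tpi}_\lambda(\pi_\theta,\pi_\theta)}_{\spann,\rho}^2$. Then I would transfer this estimate into \emph{parameter space} using the gradient-comparison bound of Lemma~\ref{lem:fisher_gradient_comparison}, which is exactly where Assumptions~\ref{ass:bounded_score}--\ref{ass:compatible} enter and produce the $\pl_F^2/M_g^2$ factors and the additive $\varepsilon_{\bias}$ slack.

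For the policy-space stage, I would fix $\pi=\pi_\theta$, let $\nu=\nu^{\tpi,\star}_\lambda(\pi)$ denote the best response, and abbreviate $g=\nabla_\pi V^{\tpi}_\lambda(\pi,\pi)$. By Lemma~\ref{lem:exploitability_via_value_context} we have $\subopt^{\tpi}_\lambda(\pi)=V^{\tpi}_\lambda(\pi,\pi)-V^{\tpi}_\lambda(\nu,\pi)$, and the strong-convexity identity of Lemma~\ref{lem:value_strong_convexity_context} (applied at base point $\pi$ in the direction of $\nu$, with $\mu=\pi$) rewrites this as
\[
    \subopt^{\tpi}_\lambda(\pi) = \langle g,\ \pi-\nu\rangle_\rho - \lambda\,\KL_\rho(\nu\Vert\pi)\,.
\]
The crucial observation is that $\pi(x)$ and $\nu(x)$ are both probability vectors, so $\pi-\nu$ is pointwise orthogonal to $\bOne$; hence $g(x)$ may be replaced by $g(x)-c(x)\bOne$ for any scalar $c(x)$ without changing the inner product. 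Applying Hölder pointwise, minimizing over $c$, and using Cauchy--Schwarz gives $\langle g,\pi-\nu\rangle_\rho\le \norm{g}_{\spann,\rho}\,\norm{\pi-\nu}_{1,\rho}$, and Pinsker's inequality~\eqref{eq:pinsker_context} yields $\norm{\pi-\nu}_{1,\rho}\le\sqrt{2\,\KL_\rho(\nu\Vert\pi)}$. Writing $t=\sqrt{\KL_\rho(\nu\Vert\pi)}\ge0$, I reduce the bound to $\subopt^{\tpi}_\lambda(\pi)\le \sqrt{2}\,\norm{g}_{\spann,\rho}\,t-\lambda t^2$, and maximizing this concave quadratic over $t\ge0$ delivers the policy-space PL inequality
\[
    2\lambda\,\subopt^{\tpi}_\lambda(\pi_\theta) \le \norm{\nabla_\pi V^{\tpi}_\lambda(\pi_\theta,\pi_\theta)}_{\spann,\rho}^2\,.
\]

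For the transfer stage I would simply rearrange Lemma~\ref{lem:fisher_gradient_comparison} into
\[
    \frac{\pl_F^2}{2M_g^2}\,\norm{\nabla_\pi V^{\tpi}_\lambda(\pi_\theta,\pi_\theta)}_{\spann,\rho}^2 \le \norm{\nabla J^{\tpi}(\theta;\pi_\theta)}_2^2 + \frac{\pl_F^2\,\varepsilon_{\bias}}{M_g^2}\,,
\]
and chain it with the policy-space PL bound to obtain $\lambda\pl_F^2/M_g^2\cdot\subopt^{\tpi}_\lambda(\pi_\theta)\le\norm{\nabla J^{\tpi}(\theta;\pi_\theta)}_2^2+\varepsilon_{\bias}\pl_F^2/M_g^2$, which is exactly the claim; reading it against Assumption~\ref{ass:pl} identifies $\pl_{\tpi,\lambda}=\tfrac12\lambda(\pl_F/M_g)^2$ and $\epspl=\varepsilon_{\bias}(\pl_F/M_g)^2$. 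The main obstacle I anticipate is the policy-space step rather than the transfer: one must absorb the additive $\lambda\bOne$ term hidden in $g$ through the orthogonality of $\pi-\nu$ to $\bOne$ (this is what makes the span seminorm, rather than the full $\ell_\infty$ norm, the natural object), and verify that Pinsker is applied to the correct KL direction. Once the span-seminorm PL inequality is in hand, the remainder is a mechanical substitution of constants.
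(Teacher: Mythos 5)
Your proposal is correct and follows essentially the same route as the paper's proof: both first establish the policy-space PL inequality $\subopt^{\tpi}_{\lambda}(\pi_\theta) \le \frac{1}{2\lambda}\,\|\nabla_\pi V^{\tpi}_{\lambda}(\pi_\theta,\pi_\theta)\|_{\spann,\rho}^2$, exploiting orthogonality of policy differences to $\bOne$ to pass to the span seminorm, and then chain it with Lemma~\ref{lem:fisher_gradient_comparison} to transfer into parameter space with the stated constants $\pl_{\tpi,\lambda}=\tfrac12\lambda(\pl_F/M_g)^2$ and $\epspl=\varepsilon_{\bias}(\pl_F/M_g)^2$. Your only micro-variation---using the exact Bregman identity of Lemma~\ref{lem:value_strong_convexity_context} at the best response together with Pinsker and maximizing a concave quadratic in $t=\sqrt{\KL_\rho(\nu\Vert\pi)}$, where the paper instead invokes $\lambda$-strong convexity in the $\ell_1$ norm and Young's inequality $ab\le a^2/(2\lambda)+\lambda b^2/2$ pointwise in $x$---is the same computation in disguise and yields identical constants.
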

\begin{proof}
    First, we follow a standard proof for PL-inequality in the strongly convex case, initially context-wise.

    Let us fix a context $x \in \supp(\rho)$ and a competitor policy $\mu \in \policies$, and then we can define a context-wise value as $V^{\tpi}_{\lambda}(\pi,\mu | x) = \cP(\mu \succ \pi \mid x) + \lambda \KL(\pi(x) \Vert \tpi(x))$, such that $V^{\tpi}_{\lambda}(\pi,\mu) = \E_{x \sim \rho}\left[V^{\tpi}_{\lambda}(\pi,\mu | x) \right]$.

    Let us denote $\pi^{\star}_\mu(\cdot | x) = \argmin_{\pi} V^{\tpi}_{\lambda}(\pi, \mu | x)$ for all $x \in \supp(\rho)$ simultaneously. We notice that this policy is the same as $\pi^{\star}_\mu = \argmin_{\pi} V^{\tpi}_{\lambda}(\pi, \mu)$ due to the additive structure of subproblems with different contexts $x$. Since $V^{\tpi}_{\lambda}(\pi,\mu|x)$ is $\lambda$-strongly convex in $\pi(x)$ with respect to $\ell_1$-norm, thus, for any $\pi$
    \[
        V^{\tpi,\star}_\lambda(\mu | x) \geq V^{\tpi}_{\lambda}(\pi, \mu | x) + \langle [\nabla_{\pi} V^{\tpi}_{\lambda}(\pi, \mu)](x), \pi^\star_\mu(x)- \pi(x)  \rangle + (\lambda/2) \norm{\pi(x) - \pi^\star_\mu(x) }_1^2\,.
    \]
    After rearranging the terms, noticing that $\langle \bOne, \pi^\star_\mu(x) - \pi(x) \rangle = 0$ and using a bound $ab \leq a^2/(2\lambda) + \lambda b^2/2$
    \begin{align*}
        V^{\tpi}_{\lambda}(\pi, \mu|x) - V^{\tpi,\star}_\lambda(\mu|x) &\leq \big\langle [\nabla_{\pi} V^{\tpi}_{\lambda}(\pi, \mu)](x) + c(x) \bOne, \pi^\star_\mu(x) - \pi(x) \big\rangle - (\lambda/2) \norm{\pi(x) - \pi^\star_\mu(x)}_1^2 \\
        &\leq \frac{1}{2\lambda}\norm{[\nabla_{\pi}V^{\tpi}_{\lambda}(\pi, \mu)](x) + c(x) \bOne}_{\infty}^2\,,
    \end{align*}
    where $c \colon \cX \to \R$ is an arbitrary baseline function. Minimizing the right-hand side over $c(x)$ and taking the expectation over $x \sim \rho$ we have
    \[
      V^{\tpi}_{\lambda}(\pi, \mu) - V^{\tpi,\star}_\lambda(\mu) \leq \frac{1}{2\lambda}\norm{\nabla_{\pi} V^{\tpi}_{\lambda}(\pi, \mu)}_{\spann,\rho}^2\,,
    \]
    and, taking $\pi = \mu = \pi_\theta$, we get
    \[
        \subopt^{\tpi}_{\lambda}(\pi_\theta) \leq \frac{1}{2\lambda} \norm{\nabla V_{\lambda}^{\tpi}(\pi_\theta,\pi_\theta)}_{\spann,\rho}^2\,.
    \]
    Finally, applying Lemma~\ref{lem:fisher_gradient_comparison} to derive
    \[
        \subopt^{\tpi}_{\lambda}(\pi_\theta) \leq \frac{M_g^2}{\lambda\pl_F^2} \Vert \nabla J^{\tpi}(\theta; \pi_\theta)  \Vert_{2}^2 + \frac{\varepsilon_{\bias}}{\lambda}\,.
    \]
    By rearranging the terms, we conclude the proof.
\end{proof}

\subsubsection{Gradient Estimation and Noise Bounds}

For general parametrization, we also use the truncated pairwise policy gradient estimator, discussed in Appendix~\ref{app:gradient_estimator}. Next, we verify the conditions of Lemma~\ref{lem:noise_properties_general} for this parameterization.
\begin{lemma}\label{lem:noise_properties_fisher}
    Assume Assumptions~\ref{ass:bounded_score} for parameterization and Assumption~\ref{ass:reference_policy}-(i) for a reference policy. Then, for any $\varepsilon_{\grad} > 0$, a mini-batch gradient estimator $g_t$ defined in \eqref{eq:stochastic_gradient_estimator_self_play_pg} with $k=2$ satisfies Assumption~\ref{ass:gradient_noise} with a bias level $\varepsilon_{\grad}$, subgaussian constant $\sigma^2_{\tpi,\lambda} = D^2_2(\varepsilon_{\grad})$, $v^2_{\tpi,\lambda} = 6 D_2^2(\varepsilon_{\grad})$, where 
    \[
      D_2^2(\varepsilon_{\grad}) = M_g^2 \left( 1 + \frac{8 \lambda^2 M_g (V_{\tpi} + 1)}{\varepsilon_{\grad}} \right)^2\,.
    \]

    If, additionally, Assumption~\ref{ass:reference_policy}-(ii) holds, then a mini-batch gradient estimator $g_t$ defined in \eqref{eq:stochastic_gradient_estimator_self_play_pg} with $k=+\infty$ also satisfies Assumption~\ref{ass:gradient_noise} with a bias level $\varepsilon_{\grad}$, subgaussian constant $\sigma^2_{\tpi,\lambda} =  D^2_\infty(\varepsilon_{\grad})$, $v^2_{\tpi,\lambda} = 6 D^2_\infty(\varepsilon_{\grad})$, where
    \[
      D_\infty^2(\varepsilon_{\grad}) = M_g^2 \left(1 + 4 \lambda \log\left( \frac{2\lambda \cdot M_g (D_{\tpi} + 1) }{\varepsilon_{\grad}} \right) \right)^2\,.
    \]
\end{lemma}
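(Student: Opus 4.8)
The plan is to obtain this lemma as a direct composition of the two generic estimator results already established in Appendix~\ref{app:gradient_estimator}: Lemma~\ref{lem:noise_properties_bias}, which controls the clipping bias and provides an \emph{almost-sure} per-sample deviation bound $D_k(\varepsilon_{\grad})$ for the clipped pairwise estimator, and Lemma~\ref{lem:noise_properties_general}, which converts such an almost-sure bound into the mini-batch subgaussian and subexponential tail bounds demanded by Assumption~\ref{ass:gradient_noise}. The entire content of the proof is to verify that the hypotheses of these two lemmas are exactly supplied by Assumption~\ref{ass:bounded_score} and Assumption~\ref{ass:reference_policy}, and then to carry the constants through verbatim.

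First I would note that Assumption~\ref{ass:bounded_score} gives precisely the uniform score bound $\norm{\nabla_\theta \log \pi_\theta(y|x)}_2^2 \le M_g^2$ over all $x\in\supp(\rho)$, $y\in\cY$, $\theta\in\Theta_{\cT}$ required by both lemmas. For the first claim ($k=2$), Assumption~\ref{ass:reference_policy}-(i) is literally the log-moment hypothesis $\E_{x\sim\rho}[\log^2(1/\tpi_{\min}(x))]\le V_{\tpi}$ appearing in the first bullet of Lemma~\ref{lem:noise_properties_bias}; applying that bullet with $M=M_2(\varepsilon_{\grad})=1/2+4\lambda^2 M_g(V_{\tpi}+1)/\varepsilon_{\grad}$ yields bias at most $\varepsilon_{\grad}$ and the almost-sure deviation bound with $D_2(\varepsilon_{\grad})=M_g\bigl(1+8\lambda^2 M_g(V_{\tpi}+1)/\varepsilon_{\grad}\bigr)$. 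Feeding this into Lemma~\ref{lem:noise_properties_general} then produces $\sigma^2_{\tpi,\lambda}=D_2^2(\varepsilon_{\grad})$ and $v^2_{\tpi,\lambda}=6D_2^2(\varepsilon_{\grad})$, which, upon squaring $D_2$, are exactly the constants in the statement. The second claim ($k=\infty$) is entirely parallel: Assumption~\ref{ass:reference_policy}-(ii) supplies the inverse-mass hypothesis $\E_{x\sim\rho}[1/\tpi_{\min}(x)]\le D_{\tpi}$ needed by the second bullet of Lemma~\ref{lem:noise_properties_bias}, which (with $M=M_\infty(\varepsilon_{\grad})$) returns $D_\infty(\varepsilon_{\grad})=M_g\bigl(1+4\lambda\log(2\lambda M_g(D_{\tpi}+1)/\varepsilon_{\grad})\bigr)$, and Lemma~\ref{lem:noise_properties_general} again gives $\sigma^2_{\tpi,\lambda}=D_\infty^2(\varepsilon_{\grad})$ and $v^2_{\tpi,\lambda}=6D_\infty^2(\varepsilon_{\grad})$.

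Since the argument is a pure chaining of two previously proved lemmas, there is no genuine analytic difficulty here; the only thing requiring care is bookkeeping. In particular I would make sure that the notation $\tpi_{\min}$ appearing in the context-free phrasing of Lemma~\ref{lem:noise_properties_bias} is read throughout as the per-context quantity $\tpi_{\min}(x)$, so that the moment conditions of Assumption~\ref{ass:reference_policy} align with the hypotheses term by term, and that the uniform score bound is invoked on the correct domain $\Theta_{\cT}$ (the image of the improvement operator) rather than all of $\Theta$. With those identifications in place, the constants match exactly and the conclusion follows.
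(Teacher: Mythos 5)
Your proposal is correct and follows exactly the paper's route: the paper's proof is the one-line observation that the lemma "directly follows from Lemma~\ref{lem:noise_properties_general}," whose hypotheses are in turn supplied by Lemma~\ref{lem:noise_properties_bias} under Assumptions~\ref{ass:bounded_score} and~\ref{ass:reference_policy}, with the constants $D_2$, $D_\infty$ carried through verbatim as you describe. Your more explicit bookkeeping (matching each moment condition to the corresponding bullet and squaring the deviation bounds) is simply a spelled-out version of the same chaining.
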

\begin{proof} 
    Directly follows from Lemma~\ref{lem:noise_properties_general}.
\end{proof}

\section{Proximal Point Method with Self-Play Policy Gradients}\label{app:pp-spg}

We now instantiate the proximal point (PP) method from
Section~\ref{app:analysis_pp} using the self-play policy gradient (SPG) procedure from Section~\ref{app:self_play_pg} as an inexact inner solver.

\subsection{Algorithm Description}\label{app:pp-spg-description}

\paragraph{Setup.}
Let $\piref \in \policies$ be a reference policy and let
$\theta_0 \in \R^{d}$ be initial parameters such that
$\pi_0 = \pi_{\theta_0} = \piref$.
We fix:
(i) the regularization strength $\beta > 0$ of the original game,
(ii) the proximal point step size $\eta > 0$, and
(iii) a sequence of self-play learning rates
$(\gamma_{k,t})_{k \ge 0,\, t \ge 0}$ and batch sizes $(B_{k,t})_{k \geq 0,\, t \geq 0}$.
For convenience, we also denote $\beta_{\target} \;\triangleq\; \beta / \eta$.

The PP update at outer iteration $k$ aims to compute
\begin{align*}
  \pi_{k+1}
  \approx
  \argmax_{\pi \in \policies}
  \min_{\pi' \in \policies}
  \Bigl\{
    \cP(\pi \succ \pi')
    &- \beta \KL_{\rho}(\pi \Vert \piref)
    + \beta \KL_{\rho}(\pi' \Vert \piref)\\
    &- \beta_{\target} \KL_{\rho}(\pi \Vert \pi_k)
    + \beta_{\target} \KL_{\rho}(\pi' \Vert \pi_k)
  \Bigr\}.
\end{align*}
We realize this update approximately by running self-play policy gradients (SPG) on the inner two-player game above, using a finite number $T_k$ of self-play steps.

\paragraph{Inner objective.}
At outer iteration $k$, we keep $\pi_k$ fixed and define for any $\theta \in \Theta$ and competitor policy $\pi \in \policies$ the local objective
\[
  J_k(\theta; \pi) \triangleq \cP(\pi \succ \pi_\theta) + \beta \KL_{\rho}(\pi_\theta \Vert \piref) + \beta_{\target} \KL_{\rho}(\pi_\theta \Vert \pi_k),
\]
where $\pi_\theta$ is a parametrized policy satisfying Assumption~\ref{ass:lipschitz_param}-\ref{ass:improvement}.
For a given $\pi$, minimizing $J_k(\theta; \pi)$ in $\theta$
corresponds to computing a regularized best response against $\pi$
with an additional proximal term toward $\pi_k$.

\paragraph{Stochastic gradients.}
At inner iterate $(k,t)$ we maintain parameters $\theta_{k,t}$ and the corresponding policy $\pi_{k,t} = \pi_{\theta_{k,t}}$. We assume access to a stochastic gradient estimator $g_{k,t}$ which satisfies Assumption~\ref{ass:gradient_noise}.

\paragraph{Example: pairwise REINFORCE estimator.}
As a particular instance, we can use a truncated pairwise REINFORCE estimator, analogous to one defined in Appendix~\ref{app:gradient_estimator}. At inner iterate $(k,t)$, given a batch size $B_{k,t}$, we sample contexts $x_{k,t,j} \sim \rho$, independent pairs $(y_{k,t,j}, y'_{k,t,j}) \sim \pi_{k,t}(\cdot\mid x_{k,t,j}) \otimes \pi_{k,t}(\cdot \mid x_{k,t,j})$ and obtain unbiased estimates $p_{k,t,j}$ of $\cP(y_{k,t,j} \succ y'_{k,t,j} \mid x_{k,t,j})$.
Define the single-sample estimator
\begin{align}
\label{eq:pp_spg_grad}
&G_k(\theta \mid x,y,y',p)
 = \frac{1}{2}
    \bigl(
      \nabla_\theta \log \pi_\theta(y|x)
      - \nabla_\theta \log \pi_\theta(y'|x)
    \bigr) \times \\
 & \times
    \clip_{[-M_{k},M_{k}]}\Bigl[
      \tfrac{1}{2} - p
      + \beta \bigl(
          \log \tfrac{\pi_\theta(y|x)}{\piref(y|x)}
          - \log \tfrac{\pi_\theta(y'|x)}{\piref(y'|x)}
        \bigr)
      + \beta_{\target} \bigl(
          \log \tfrac{\pi_\theta(y|x)}{\pi_k(y|x)}
          - \log \tfrac{\pi_\theta(y'|x)}{\pi_k(y'|x)}
        \bigr)
    \Bigr], \notag
\end{align}
where $\beta_{\target} = \beta / \eta$ and $M_{k}$ is a clipping threshold. The mini-batch estimator at step $(k,t)$ is
\[
  g_{k,t} \;\triangleq\; \frac{1}{B_{k,t}} \sum_{j=1}^{B_{k,t}} G_k\bigl(\theta_{k,t} \mid x_{k,t,j}, y_{k,t,j}, y'_{k,t,j}, p_{k,t,j}\bigr),
\]
which is a (biased) estimate of
$\nabla J_k(\theta_{k,t}; \pi_{k,t})$. As shown in Appendix~\ref{app:gradient_estimator}, this choice satisfies Assumption~\ref{ass:gradient_noise} under softmax and compatible Fisher non-degenerate parameterization.

\paragraph{Self-play update and improvement.}
To ensure uniform exploration and obtain a uniform PL constant, we apply the improvement operator $\cT_k$ for the outer step $k$ satisfying Assumption~\ref{ass:improvement} (e.g., $\cT_k \equiv \cT^\nu_\tau$ from Section~\ref{sec:improvement_projection} in the softmax case). Given $g_{k,t}$, the inner update is
\begin{equation}\label{eq:pp_selfplay_clean}
  \theta_{k,t+1}
  \;=\;
  \cT_k\bigl(\theta_{k,t} - \gamma_{k,t}\, g_{k,t}\bigr),
  \qquad
  \pi_{k,t+1} = \pi_{\theta_{k,t+1}}.
\end{equation}

\paragraph{Outer update.}
At the end of the inner loop, after $T_k$ self-play steps, we set
\[
  \theta_{k+1} \;\triangleq\; \theta_{k,T_k},
  \qquad
  \pi_{k+1} \;\triangleq\; \pi_{k,T_k},
\]
and proceed to the next PP iteration.
After $K$ outer iterations, the policy $\pi_K$ is our approximation of
the Nash equilibrium of the original $\beta$-regularized preference game. We summarize the procedure in Algorithm~\ref{alg:pp_spg}.

\begin{algorithm}[t]
  \caption{Proximal Point Method with Self-Play Policy Gradients (PP--SPG)}
  \label{alg:pp_spg}
  \begin{algorithmic}[1]
    \REQUIRE Reference policy $\piref$; regularization parameter $\beta > 0$;
             proximal step size $\eta > 0$; number of outer iterations $K$;
             inner iteration lengths $(T_k)_{k=0}^{K-1}$; learning rates
             $(\gamma_{k,t})_{k,t}$; batch sizes $(B_{k,t})_{k,t}$.
    \ENSURE Approximate von Neumann winner policy $\pi_K$.

    \STATE Initialize parameters $\theta_0$ such that
           $\pi_0 = \pi_{\theta_0} = \piref$.
    \STATE $\beta_{\target} \gets \beta / \eta$.

    \FOR{$k = 0$ \textbf{to} $K-1$} 
      \STATE \textcolor{blue}{\# Outer (proximal point) loop}
      \STATE $\theta_{k,0} \gets \theta_k$; \quad $\pi_{k,0} \gets \pi_k$.
      \FOR{$t = 0$ \textbf{to} $T_k - 1$} 
      \STATE \textcolor{blue}{\# Inner self-play loop}
        \FOR{$j = 1$ \textbf{to} $B_{k,t}$}
          \STATE Sample context $x_{k,t,j} \sim \rho$.
          \STATE Sample $y_{k,t,j} \sim \pi_{k,t}$ and
                 $y'_{k,t,j} \sim \pi_{k,t}$ independently.
          \STATE Obtain an unbiased estimate
                 $p_{k,t,j}$ of
                 $\cP\bigl(y_{k,t,j} \succ y'_{k,t,j} 
                  \mid x_{k,t,j}\bigr)$.
          \STATE Set
                 $G_k^{(j)} \gets
                   G_k\bigl(\theta_{k,t} \mid x_{k,t,j},
                            y_{k,t,j}, y'_{k,t,j}, p_{k,t,j}\bigr)$
                 using Eq.~\eqref{eq:pp_spg_grad}.
        \ENDFOR
        \STATE $g_{k,t} \gets \frac{1}{B_{k,t}} \sum_{j=1}^{B_{k,t}} G_k^{(j)}$.
        \STATE $\theta_{k,t+1} \gets \cT_k\bigl(\theta_{k,t} - \gamma_{k,t}\, g_{k,t}\bigr)$.
        \STATE $\pi_{k,t+1} \gets \pi_{\theta_{k,t+1}}$.
      \ENDFOR
      \STATE $\theta_{k+1} \gets \theta_{k,T_k}$; \quad
             $\pi_{k+1} \gets \pi_{k,T_k}$.
    \ENDFOR

    \STATE \textbf{return} $\pi_K$.
  \end{algorithmic}
\end{algorithm}

\subsection{Analysis of PP--SPG}\label{app:analysis_pp_spg}

In this section, we combine the approximate proximal point analysis from Section~\ref{app:analysis_pp} with the self-play policy-gradient results of Section~\ref{app:self_play_pg} to obtain convergence guarantees for Algorithm~\ref{alg:pp_spg}.

\subsubsection{Inner PP game as a regularized preference game}

Recall that the outer PP iteration (indexed by $k$) aims to compute
\begin{align*}
  \pi_{k+1}
  \approx
  \argmax_{\pi \in \policies} \min_{\pi' \in \policies}
  \Bigl\{
    \cP(\pi \succ \pi')
    &- \beta \KL_{\rho}(\pi \Vert \piref)
    + \beta \KL_{\rho}(\pi' \Vert \piref)
    \\
    &- \beta_{\target} \KL_{\rho}(\pi \Vert \pi_k)
    + \beta_{\target} \KL_{\rho}(\pi' \Vert \pi_k)
  \Bigr\}\,,
\end{align*}
where $\beta_{\target} = \beta / \eta$ and $\eta>0$ is the PP step size.
As in Section~\ref{app:analysis_pp}, we focus on the value function of the $\min$-player (the regularized best-response objective)
\[
  V_k(\pi,\mu) \triangleq \cP(\mu \succ \pi) + \beta \KL_{\rho}(\pi \Vert \piref) + \beta_{\target} \KL_{\rho}(\pi \Vert \pi_k)\,.
\]

The self-play inner loop at outer iterate $k$ is run on the objective
\[
  J_k(\theta; \pi)
  \;\triangleq\;
  \cP(\pi \succ \pi_\theta)
  + \beta \KL_{\rho}(\pi_\theta \Vert \piref)
  + \beta_{\target} \KL_{\rho}(\pi_\theta \Vert \pi_k),
\]
where $\pi$ plays the role of the competitor policy in the self-play game against $\pi_\theta$. To analyze the inner loop using the results from Section~\ref{app:self_play_pg}, we first show that the inner game at outer step $k$ can be expressed as a usual $\lambda$-regularized preference game with an appropriate anchor policy.

\begin{lemma}[Inner game as $\lambda$-regularized preference game]\label{lem:inner_game_equiv}
Let $\lambda \triangleq \beta + \beta_{\target} = \beta \Bigl(1 + \frac{1}{\eta}\Bigr)$ and define the geometric mixture
\begin{align*}
  \forall (x,y) \in \cX \times \cY: \tpi_k(y|x)
  &\propto
  [\piref(y|x)]^{\beta/\lambda} \,
  [\pi_k(y|x)]^{\beta_{\target}/\lambda}
  \\
  &=
  [\piref(y|x)]^{\eta/(1+\eta)} \,
  [\pi_k(y|x)]^{1/(1+\eta)}.
\end{align*}
Then, up to additive constants independent of $\pi$ and $\theta$,
\begin{align*}
  V_k(\pi,\mu)
  &= \cP(\mu \succ \pi)
     + \lambda \KL_{\rho}(\pi \Vert \tpi_k)
     + \text{const} = V^{\tpi_k}_\lambda(\pi,\mu) + \text{const} \,, \\
  J_k(\theta; \pi)
  &= \cP(\pi \succ \pi_\theta)
     + \lambda \KL_{\rho}(\pi_\theta \Vert \tpi_k)
     + \text{const}
   = J^{\tpi_k}_\lambda(\theta; \pi) + \text{const}\,,
\end{align*}
where $J^{\tpi_k}_\lambda$ is precisely the best-response objective from Section~\ref{app:self_play_pg} with regularization parameter $\lambda$ and anchor $\tpi = \tpi_k$.
\end{lemma}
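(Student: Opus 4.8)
The plan is to reduce the claim to a single per-context algebraic identity on the simplex, namely that a weighted combination of two relative entropies equals, up to an additive constant, a single relative entropy against the normalized geometric mixture (log-linear pool) of the two anchors. This is exactly the ``equivalence with a mixed-reference value'' computation already carried out in Appendix~\ref{app:analysis_pp}; here I only need to record it with the weights $\beta$ and $\beta_{\target}=\beta/\eta$ and then average over $\rho$.

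Concretely, I would fix a context $x$ and, for $p\in\simplex_{\cY}$, set $R_x(p)\triangleq \beta\,\KL(p\Vert\piref(\cdot|x))+\beta_{\target}\,\KL(p\Vert\pi_k(\cdot|x))$. Expanding both divergences and collecting the coefficient of $\log p_y$ gives
\[
  R_x(p)=\sum_{y\in\cY} p_y\Bigl[\lambda\log p_y-\beta\log\piref(y|x)-\beta_{\target}\log\pi_k(y|x)\Bigr],
\]
with $\lambda=\beta+\beta_{\target}$. Writing $Z_k(x)\triangleq\sum_{y'}[\piref(y'|x)]^{\beta/\lambda}[\pi_k(y'|x)]^{\beta_{\target}/\lambda}$ for the normalizer of $\tpi_k(\cdot|x)$, the definition of $\tpi_k$ yields the key substitution $\beta\log\piref(y|x)+\beta_{\target}\log\pi_k(y|x)=\lambda\log\tpi_k(y|x)+\lambda\log Z_k(x)$. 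Plugging this in and using $\sum_y p_y=1$ isolates $R_x(p)=\lambda\,\KL(p\Vert\tpi_k(\cdot|x))-\lambda\log Z_k(x)$, where the second term is independent of $p$. Taking $p=\pi(\cdot|x)$ (resp.\ $p=\pi_\theta(\cdot|x)$) and averaging over $x\sim\rho$ turns $-\lambda\log Z_k(x)$ into the advertised $\pi$- and $\theta$-independent $\mathrm{const}$.

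It then remains to substitute this identity into $V_k$ and $J_k$: since the combined regularizer acts on $\pi$ in $V_k$ and on $\pi_\theta$ in $J_k$, and the bilinear preference terms are untouched, I recover exactly $V^{\tpi_k}_\lambda(\pi,\mu)+\mathrm{const}$ and $J^{\tpi_k}_\lambda(\theta;\pi)+\mathrm{const}$ from Section~\ref{app:self_play_pg}. Finally I would check the exponents, $\beta/\lambda=\eta/(1+\eta)$ and $\beta_{\target}/\lambda=1/(1+\eta)$, which reconciles the two displayed forms of $\tpi_k$. There is essentially no hard step here: the only points requiring care are that the log-normalizer $Z_k(x)$ does not depend on the optimization variable (so it is a genuine additive constant for both the $\max$- and $\min$-player) and that $\tpi_k$ inherits full support from $\piref$ and $\pi_k$, so every relative entropy above is well-defined $\rho$-almost surely.
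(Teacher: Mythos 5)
Your proposal is correct and takes essentially the same route as the paper's proof: both reduce the claim to the per-context identity $\beta\,\KL(\pi(x)\Vert\piref(x))+\beta_{\target}\,\KL(\pi(x)\Vert\pi_k(x))=\lambda\,\KL(\pi(x)\Vert\tpi_k(x))+\mathrm{const}(\piref,\pi_k,x)$, then take the expectation over $\rho$ and substitute $\pi=\pi_\theta$ to obtain the statement for $J_k$. The only difference is that you make explicit what the paper leaves implicit --- identifying the additive constant as $-\lambda\log Z_k(x)$ via the log-normalizer $Z_k(x)$ of $\tpi_k(\cdot|x)$, checking the exponents $\beta/\lambda=\eta/(1+\eta)$ and $\beta_{\target}/\lambda=1/(1+\eta)$, and noting that $\tpi_k$ inherits full support --- all of which is sound.
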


\begin{proof}
For an arbitrary policy $\pi$, we use the identity for any context $x \in \cX$
\[
  \beta \KL(\pi(x) \Vert \piref(x))
  + \beta_{\target} \KL(\pi(x) \Vert \pi_k(x))
  = \lambda \KL(\pi(x) \Vert \tpi_k(x)) + \text{const}(\piref,\pi_k, x)\,,
\]
which follows by expanding the KL terms and grouping the $\pi$-dependent and constant parts. Taking the expectation with respect to $\rho$ yields the expression for $V_k$. Substituting $\pi=\pi_\theta$ gives the expression for $J_k$.
\end{proof}

By Lemma~\ref{lem:inner_game_equiv}, each inner problem at outer iterate $k$ is exactly of the form studied in Section~\ref{app:self_play_pg}, with regularization parameter $\lambda = \beta + \beta_{\target}$ and anchor $\tpi_k$. The only difference is that $\tpi_k$ now depends on $\pi_k$, but $\lambda$ is fixed across outer iterations.

\subsubsection{Inner accuracy and PP residual}

The approximate PP analysis in Proposition~\ref{prop:approx_pp_convergence_context} requires that the inner solver at outer step $k$ returns a policy $\pi_{k+1}$ such that
\begin{equation}\label{eq:pp_residual_definition}
    \bigl\|
      \nabla_\pi V_k(\pi_{k+1}, \pi_{k+1})
    \bigr\|_{\spann, \rho}^2
    \leq \varepsilon,
\end{equation}
for some accuracy parameter $\varepsilon>0$, uniformly in $k$. We note that in general, it is possible to construct a sequence of $\pi_k$ such that for each step $\subopt^{\tpi_k}_{\lambda}(\pi_{k+1}) \to 0$, but $\norm{\nabla_\pi V_k(\pi_{k+1}, \pi_{k+1})}_{\spann, \rho}^2 \not \to 0$. Thus, we need one additional assumption on the properties of parametrization to ensure that an approximate solution to the inner problem will yield small PP residual.

\begin{assumption}[Gradient compatibility]\label{ass:gradient_compatibility}
    There exists $\varepsilon_{\gradcomp} > 0$ and a constant $C_{\gradcomp} > 0$ such that for any $k \in \N$, for any $\theta \in \Theta_{\cT_k}$, the following inequality holds
    \[
      \norm{\nabla_\pi V_{k}(\pi_\theta, \pi_\theta)}_{\spann,\rho}^2 \leq C_{\gradcomp} \cdot \norm{\nabla J_k(\theta; \pi_\theta)}_2^2 + \varepsilon_{\gradcomp}\,.
    \]
\end{assumption}
This assumption holds under both softmax and Fisher compatible parameterizations (see, e.g., Lemma~\ref{lem:fisher_gradient_comparison}). Next, we state the required regularity assumption which should be satisfied for each inner game to guarantee that the self-play policy gradient convergence is independent of step $k$.

\begin{assumption}[Uniform inner-game regularity]\label{ass:parametrization_regularity}
Fix $\beta>0$ and $\eta>0$, and let
\[
  \lambda \triangleq \beta + \beta_{\target} = \beta\Bigl(1+\frac{1}{\eta}\Bigr),
  \qquad \beta_{\target} = \beta/\eta,
\]
and let $\tpi_k$ be the mixed anchor from Lemma~\ref{lem:inner_game_equiv}.
There exist constants
\[
  G,\; L_{\beta,\eta},\; \pl_{\beta,\eta},\; \epspl,\; \varepsilon_{\grad},\; \sigma^2_{\beta,\eta},\; v^2_{\beta,\eta} \;>\; 0\,,
\]
independent of $k$, and a family of improvement operators $(\cT_k)_{k\ge0}$ such that for every
outer iteration $k$ the inner objective $J_k(\theta;\pi)=J^{\tpi_k}_\lambda(\theta;\pi)$
satisfies Assumptions~\ref{ass:lipschitz_param}, \ref{ass:smoothness},
\ref{ass:pl}, and \ref{ass:improvement} with constants bounded by
$(G,L_{\beta,\eta},\pl_{\beta,\eta},\epspl)$, and the mini-batch gradient estimator $g_{k,t}$
satisfies Assumption~\ref{ass:gradient_noise} with bias $\varepsilon_{\grad}$
and variance proxies $(\sigma^2_{\beta,\eta},v^2_{\beta,\eta})$. Moreover, a PP step-size $\eta$ satisfies PL-compatibility condition uniformly:
\[
  \beta(1+1/\eta) \cdot \pl_{\beta,\eta} \geq G^2\,.
\]
\end{assumption}

We note that this assumption does not hold automatically for our previously chosen examples since many constants, such as $L_{\tpi,\lambda}, \pl_{\tpi,\lambda}$, in Assumption~\ref{ass:smoothness}--\ref{ass:gradient_noise} depend on the reference policy $\tpi$, and our sequence of policies $\tpi_k$ might be ill-behaved. However, in Appendix~\ref{app:pp_spg_verification} we show that this assumption is in fact satisfied thanks to established convergence of the proximal point method in span-seminorm of log-probabilities.

\begin{lemma}[From SPG progress to PP residual]\label{lem:inner_residual_vs_subopt}
Fix $k \geq 0$. Assume Assumption~\ref{ass:gradient_compatibility} with constants $(C_{\gradcomp},\varepsilon_{\gradcomp})$. Assume additionally that $J_k(\cdot;\pi)$ is $L_{\beta,\eta}$-smooth (as in Assumption~\ref{ass:parametrization_regularity}).
Then for any $\theta \in \Theta_{\cT_k}$,
\[
  \bigl\|\nabla_\pi V_k(\pi_\theta,\pi_\theta)\bigr\|_{\spann,\rho}^2
  \;\le\;
  2 C_{\gradcomp}\,L_{\beta,\eta} \cdot \subopt^{\tpi_k}_{\lambda}(\pi_\theta)
  \;+\;
  \varepsilon_{\gradcomp}.
\]
In particular, if the inner solver returns $\pi_{k+1}$ such that
$\subopt^{\tpi_k}_{\lambda}(\pi_{k+1}) \le \varepsilon_{\rin}$, then the PP residual
satisfies
\[
  \bigl\|\nabla_\pi V_k(\pi_{k+1},\pi_{k+1})\bigr\|_{\spann,\rho}^2
  \;\le\;
  \varepsilon_{\pp}
  \quad\text{with}\quad
  \varepsilon_{\pp}
  \triangleq
  2C_{\gradcomp}L_{\beta,\eta}\,\varepsilon_{\rin}+\varepsilon_{\gradcomp}.
\]
\end{lemma}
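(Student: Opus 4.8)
The plan is to chain two already-established facts: the gradient-compatibility bound, which transfers the policy-space residual to the parameter-space gradient, and the smoothness consequence, which in turn controls the parameter-space gradient by the inner suboptimality. No genuinely new estimate is required.

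First I would invoke Assumption~\ref{ass:gradient_compatibility} at the given $\theta \in \Theta_{\cT_k}$, which directly yields
\[
  \bigl\|\nabla_\pi V_k(\pi_\theta,\pi_\theta)\bigr\|_{\spann,\rho}^2
  \le C_{\gradcomp}\,\bigl\|\nabla J_k(\theta;\pi_\theta)\bigr\|_2^2 + \varepsilon_{\gradcomp}.
\]
Next I would use Lemma~\ref{lem:inner_game_equiv} to identify the inner objective $J_k(\cdot;\pi)$ with the best-response objective $J^{\tpi_k}_\lambda(\cdot;\pi)$ from Appendix~\ref{app:self_play_pg}: since they differ only by a $\theta$-independent constant, their gradients coincide, and the $L_{\beta,\eta}$-smoothness assumed in the statement is exactly the $L_{\tpi_k,\lambda}$-smoothness of $J^{\tpi_k}_\lambda$. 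Applying the smoothness consequence recorded after Assumption~\ref{ass:smoothness}, together with the inequality $V^{\tpi_k,\star}_\lambda(\pi_\theta)\ge J^{\tpi_k,\star}(\pi_\theta)$ from the remark following Assumption~\ref{ass:pl}, gives
\[
  \bigl\|\nabla J_k(\theta;\pi_\theta)\bigr\|_2^2
  \le 2 L_{\beta,\eta}\,\subopt^{\tpi_k}_{\lambda}(\pi_\theta).
\]

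Substituting this into the previous display proves the first claim. For the second claim I would specialize to $\theta = \theta_{k+1} = \theta_{k,T_k}$; because the inner update \eqref{eq:pp_selfplay_clean} ends with an application of $\cT_k$, this parameter lies in $\Theta_{\cT_k}$, so the bound applies with $\pi_\theta = \pi_{k+1}$. Invoking the hypothesis $\subopt^{\tpi_k}_{\lambda}(\pi_{k+1}) \le \varepsilon_{\rin}$ then yields
\[
  \bigl\|\nabla_\pi V_k(\pi_{k+1},\pi_{k+1})\bigr\|_{\spann,\rho}^2
  \le 2 C_{\gradcomp} L_{\beta,\eta}\,\varepsilon_{\rin} + \varepsilon_{\gradcomp}
  = \varepsilon_{\pp}.
\]

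Since each step is a direct invocation of an established inequality, there is no real technical obstacle. The only point demanding care is the bookkeeping identification of $V_k$ and $J_k$ with their constant-shifted $\lambda$-regularized counterparts $V^{\tpi_k}_\lambda$ and $J^{\tpi_k}_\lambda$ (so that the $\spann$-gradients and the smoothness constant transfer unchanged), together with verifying that the returned iterate $\theta_{k+1}$ indeed lies in the domain $\Theta_{\cT_k}$ on which both the gradient-compatibility and smoothness hypotheses are assumed to hold.
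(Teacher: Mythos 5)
Your proposal is correct and follows essentially the same two-step chain as the paper's proof: Assumption~\ref{ass:gradient_compatibility} transfers the span-seminorm PP residual to $\|\nabla_\theta J_k(\theta;\pi_\theta)\|_2^2$, and the smoothness consequence recorded after Assumption~\ref{ass:smoothness}, namely $\|\nabla J^{\tpi_k}(\theta;\pi_\theta)\|_2^2 \le 2L_{\beta,\eta}\bigl(J^{\tpi_k}(\theta;\pi_\theta)-J^{\tpi_k,\star}(\pi_\theta)\bigr) \le 2L_{\beta,\eta}\,\subopt^{\tpi_k}_{\lambda}(\pi_\theta)$, closes the first bound; specializing to $\theta_{k+1}=\theta_{k,T_k}\in\Theta_{\cT_k}$ gives the second claim exactly as in the paper. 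One correction, though: you justify the last step via the inequality $V^{\tpi_k,\star}_\lambda(\pi_\theta)\ge J^{\tpi_k,\star}(\pi_\theta)$, but this runs in the wrong direction. Since $\{\pi_\theta\}_{\theta\in\Theta}\subseteq\policies$, minimizing the value over the \emph{full} policy class yields the smaller value, $V^{\tpi_k,\star}_\lambda(\pi_\theta)\le J^{\tpi_k,\star}(\pi_\theta)$, and it is precisely this direction that gives $J^{\tpi_k}(\theta;\pi_\theta)-J^{\tpi_k,\star}(\pi_\theta)\le V^{\tpi_k}_\lambda(\pi_\theta;\pi_\theta)-V^{\tpi_k,\star}_\lambda(\pi_\theta)=\subopt^{\tpi_k}_{\lambda}(\pi_\theta)$ (this is also what the remark following Assumption~\ref{ass:pl} actually asserts); taken literally, your stated inequality would bound the parametrized gap \emph{below} by the suboptimality and the chain would break. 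With that sign fixed, your bookkeeping — the constant-shift identification of $J_k$ with $J^{\tpi_k}_\lambda$ via Lemma~\ref{lem:inner_game_equiv} and the verification that the returned iterate lies in $\Theta_{\cT_k}$ because the inner update ends with an application of $\cT_k$ — is sound and, if anything, slightly more careful than the paper's own two-line proof.
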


\begin{proof}
Assumption~\ref{ass:gradient_compatibility} gives
\[
  \|\nabla_\pi V_k(\pi_\theta,\pi_\theta)\|_{\spann,\rho}^2
  \le C_{\gradcomp}\,\|\nabla_\theta J_k(\theta;\pi_\theta)\|_2^2 + \varepsilon_{\gradcomp}.
\]
By $L_{\beta,\eta}$-smoothness of $\theta\mapsto J_k(\theta;\pi_\theta)$ (Assumption~\ref{ass:smoothness}), we have
\[
  \|\nabla_\theta J_k(\theta;\pi_\theta)\|_2^2
  \le 2L_{\beta,\eta}\subopt^{\tpi_k}_{\lambda}(\pi_\theta)\,,
\]
which allows us to conclude the statement.
\end{proof}

Additionally, we provide some regularity conditions on the reference policy $\piref$ that serves as the initial policy for our algorithm.

\begin{lemma}[Regularity of reference policy.]\label{lem:ref_regularity}
  Let $\piref \in \policies$ be a full-support reference policy. Then it holds for any $\beta > 0$
  \[
    \KL_{\rho}(\pistar_\beta \Vert \piref) \leq \frac{1}{2\beta}, \qquad \subopt_\beta(\piref) \leq \frac{1}{2},\qquad \norm{\log\piref - \log\pistar_\beta}^2_{\spann, \rho} \leq \frac{1}{4\beta^2}\,.
  \]
\end{lemma}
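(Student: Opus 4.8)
The plan is to exploit that the regularized von Neumann winner $\pistar_\beta$ is its own best response, which reduces all three bounds to a comparison against the explicit anchor $\piref$. Since $(\pistar_\beta,\pistar_\beta)$ is a symmetric Nash equilibrium, we have $\subopt_\beta(\pistar_\beta)=0$, and by Lemma~\ref{lem:exploitability_via_value_context} (with $\tpi=\piref$, $\lambda=\beta$) this forces $\pistar_\beta\in\argmin_{\pi'}V_\beta(\pi';\pistar_\beta)$, i.e.\ $\pistar_\beta=\nu^\star_\beta(\pistar_\beta)$. I will use repeatedly that the self-KL vanishes, so $V_\beta(\pi;\pi)=\tfrac12+\beta\KL_\rho(\pi\Vert\piref)$.

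For the first bound I would invoke the minimality of $\pistar_\beta$ against the feasible competitor $\pi'=\piref$:
\[
\tfrac12+\beta\KL_\rho(\pistar_\beta\Vert\piref)=V_\beta(\pistar_\beta;\pistar_\beta)\le V_\beta(\piref;\pistar_\beta)=\cP(\pistar_\beta\succ\piref)\le 1,
\]
where $\KL_\rho(\piref\Vert\piref)=0$; rearranging gives $\KL_\rho(\pistar_\beta\Vert\piref)\le 1/(2\beta)$. For the second bound I would evaluate the exploitability of $\piref$ directly: $V_\beta(\piref;\piref)=\tfrac12$, while $V^\star_\beta(\piref)=\min_{\pi'}\{\cP(\piref\succ\pi')+\beta\KL_\rho(\pi'\Vert\piref)\}\ge 0$ since both summands are nonnegative, so $\subopt_\beta(\piref)=V_\beta(\piref;\piref)-V^\star_\beta(\piref)\le\tfrac12$.

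The third bound is where the real work lies and is the main obstacle. I would start from the first-order (Gibbs) characterization of the best response given by Lemma~\ref{lem:value_strong_convexity_context}: at each context $x$ the minimizer of $V_\beta(\cdot;\mu)$ satisfies $\log\nu^\star_\beta(\mu)(y|x)=\log\piref(y|x)-\tfrac1\beta(\bP_x^\top\mu(x))_y+c(x)$ for a normalizing constant $c(x)$. Evaluating at the fixed point $\mu=\pistar_\beta$ yields $\log\pistar_\beta(y|x)-\log\piref(y|x)=-\tfrac1\beta(\bP_x^\top\pistar_\beta(x))_y+c(x)$. The crucial observation is that the constant-in-$y$ shift $c(x)\bOne$ is annihilated by the span seminorm, so $\|\log\pistar_\beta(x)-\log\piref(x)\|_{\spann}=\tfrac1\beta\|\bP_x^\top\pistar_\beta(x)\|_{\spann}$; since each entry $(\bP_x^\top\pistar_\beta(x))_y=\cP(\pistar_\beta(x)\succ y\mid x)\in[0,1]$, its range is at most $1$ and hence the span seminorm is at most $1/2$, giving $\|\log\pistar_\beta(x)-\log\piref(x)\|_{\spann}\le 1/(2\beta)$ pointwise. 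Squaring and taking $\E_{x\sim\rho}$ then delivers $\|\log\piref-\log\pistar_\beta\|_{\spann,\rho}^2\le 1/(4\beta^2)$. The only subtlety I would verify carefully is that the normalizing shift genuinely drops out of the seminorm and that the best-response identity holds at the fixed point despite possible nonuniqueness off the support of $\rho$; both are handled by arguing contextwise for $\rho$-almost every $x$, exactly as in the log-probability part of Proposition~\ref{prop:approx_pp_convergence_context}.
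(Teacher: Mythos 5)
Your proof is correct, and for two of the three bounds it coincides with the paper's argument in substance: the span-seminorm bound is obtained exactly as in the paper, from the Gibbs/fixed-point identity $\log\pistar_\beta(y|x)-\log\piref(y|x) = -\tfrac{1}{\beta}\,\cP(\pistar_\beta(x)\succ y\mid x)+c^\star(x)$ together with the fact that the normalizer is constant in $y$ and the preference entries lie in $[0,1]$ (your ``half the range'' bound and the paper's $\norm{1/2-\cP(\pistar_\beta(x)\succ\cdot\mid x)}_{\infty}\le 1/2$ are the same estimate); and the suboptimality bound is the paper's nonnegativity argument, since $\cP_\beta(\piref\succ\pi)=\cP(\piref\succ\pi)+\beta\KL_\rho(\pi\Vert\piref)\ge 0$ unpacks to exactly your $V^\star_\beta(\piref)\ge 0$. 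Where you genuinely diverge is the KL bound: the paper computes the divergence \emph{exactly} as $\KL_\rho(\pistar_\beta\Vert\piref)=\E_{x\sim\rho}[c^\star(x)]$ (the preference term vanishes because $\cP(\pistar_\beta\succ\pistar_\beta\mid x)=1/2$) and then bounds the log-normalizer via Jensen's inequality applied at the test point $\piref$, whereas you use the one-line variational comparison $\tfrac12+\beta\KL_\rho(\pistar_\beta\Vert\piref)=V_\beta(\pistar_\beta;\pistar_\beta)\le V_\beta(\piref;\pistar_\beta)=\cP(\pistar_\beta\succ\piref)\le 1$, justified by $\subopt_\beta(\pistar_\beta)=0$ and Lemma~\ref{lem:exploitability_via_value_context}. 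Your route is more elementary (no explicit normalizer, no Jensen) and yields the same constant; the paper's route gives the exact identity $\KL_\rho(\pistar_\beta\Vert\piref)=\E[c^\star(x)]$, which is slightly more informative and could be sharpened by better test points inside the Jensen step. Your closing caveats (nonuniqueness of the best response off $\supp(\rho)$, contextwise argument for $\rho$-a.e.\ $x$) are handled correctly and match the paper's conventions.
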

\begin{proof}
  By optimality conditions, $\pistar_\beta$ can be characterized as a best response against itself, thus, for any $x \sim \supp(\rho)$ and $y \in \cY$ we have
  \[
       \log \pistar_\beta(y|x) = \frac{1}{\beta}\left(1/2 - \cP(\pistar_\beta(x) \succ y | x) \right) + \log \piref(y|x) + c^\star(x)\,,
  \]
  where $c^\star(x)$ is a log-normalization constant, which is equal to
  \[
    c^\star(x) = -\log \E_{y \sim \piref(\cdot|x)}\left[\exp\left(\frac{1}{\beta}\left(1/2 - \cP(\pistar_\beta(x) \succ y | x) \right)\right)\right]\,.
  \]
  In particular, this expression automatically implies the last bound since
  \[
    \norm{\log\piref(\cdot|x) - \log\pistar_\beta(\cdot|x)}_{\spann} \leq \frac{1}{\beta} \norm{1/2 - \cP(\pistar_\beta(x) \succ \cdot | x)}_{\infty} \leq \frac{1}{2\beta}\,.
  \]
  For the expression for the KL divergence, we use the log-ratio expression to write
  \[
    \KL_{\rho}(\pistar_\beta \Vert \piref) = \E_{x \sim \rho}\left[\frac{1}{\beta} \E_{y \sim \pistar_\beta(\cdot| x)}\left[1/2 - \cP(\pistar_\beta(x) \succ y | x) \right] + c^\star(x) \right] = \E_{x \sim \rho}\left[ c^\star(x)\right]\,.
  \]
  We note that by Jensen's inequality, it holds
  \begin{align*}
    \exp(c^\star(x)) &= \frac{1}{\E_{y \sim \piref(\cdot|x)}\left[\exp\left(\frac{1}{\beta}\left(1/2 - \cP(\pistar_\beta(x) \succ y | x) \right)\right)\right]} \\
    &\leq \frac{1}{\exp\left(\frac{1}{\beta}\left( \frac{1}{2} - \cP(\pistar_\beta(x) \succ \piref(x) | x) \right)\right)} \leq \exp\left(\frac{1}{2\beta}\right)\,,
  \end{align*}
  thus $\KL_\rho(\pistar_\beta \Vert \piref) \leq 1/(2\beta)$. Finally, for the suboptimality, we use the definition:
  \[
    \subopt_\beta(\piref) = \max_{\pi} \left\{ \frac{1}{2} - \cP_\beta(\piref \succ \pi) \right\} \,,
  \]
  where
  \[
    \cP_\beta(\piref \succ \pi) = \cP(\piref \succ \pi) - \beta \KL_{\rho}(\piref \Vert \piref) + \beta \KL_{\rho}(\pi \Vert \piref)\,.
  \]
  In particular, $\cP_\beta(\piref \succ \pi) = \cP(\piref \succ \pi) + \beta \KL_{\rho}(\pi \Vert \piref) \geq 0$. Thus, we have $\subopt_\beta(\piref) \leq 1/2$. 
\end{proof}

\begin{lemma}[Regularity of initial policy]\label{lem:init_regularity}
  Let $\piref \in \policies$ be a full-support reference policy and let $\pi_k$ be the policy at outer iteration $k$ of Algorithm~\ref{alg:pp_spg}. Then, for any $\beta>0$ and $\eta>0$, it holds that the mixed anchor policy $\tpi_k$ from Lemma~\ref{lem:inner_game_equiv} satisfies
  \[
     \subopt^{\tpi_k}_{\lambda}(\pi_k) \leq \subopt_{\beta}(\pi_k)\,,
  \]
  where $\lambda = \beta + \beta_{\target}$ with $\beta_{\target} = \beta/\eta$. 
\end{lemma}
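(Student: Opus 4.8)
The plan is to exploit the pointwise KL identity behind Lemma~\ref{lem:inner_game_equiv} and to evaluate everything at the proximal center $\pi_k$ itself, where the extra proximal penalty on the left player disappears. First I would recall that for every context $x$,
\[
\beta \KL(\pi(x) \Vert \piref(x)) + \beta_{\target} \KL(\pi(x) \Vert \pi_k(x)) = \lambda \KL(\pi(x) \Vert \tpi_k(x)) + C_k(x),
\]
with $C_k(x)$ independent of $\pi(x)$, so that after averaging over $\rho$ we get $\lambda \KL_\rho(\pi \Vert \tpi_k) = \beta \KL_\rho(\pi \Vert \piref) + \beta_{\target}\KL_\rho(\pi \Vert \pi_k) - \bar{C}_k$, where $\bar{C}_k \triangleq \E_{x\sim\rho}[C_k(x)]$.

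Next, for an arbitrary competitor $\mu \in \policies$ I would expand the regularized payoff
\[
\cP^{\tpi_k}_\lambda(\pi_k \succ \mu) = \cP(\pi_k \succ \mu) - \lambda \KL_\rho(\pi_k \Vert \tpi_k) + \lambda \KL_\rho(\mu \Vert \tpi_k),
\]
and substitute the identity above into both KL-to-$\tpi_k$ terms. Two simplifications then occur: the additive constants $\bar{C}_k$ cancel between the $-\lambda\KL_\rho(\pi_k \Vert \tpi_k)$ and $+\lambda\KL_\rho(\mu \Vert \tpi_k)$ contributions, and the proximal penalty on the left player vanishes because $\KL_\rho(\pi_k \Vert \pi_k)=0$. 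This yields the clean identity
\[
\cP^{\tpi_k}_\lambda(\pi_k \succ \mu) = \cP_\beta(\pi_k \succ \mu) + \beta_{\target}\, \KL_\rho(\mu \Vert \pi_k).
\]

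Since $\beta_{\target}>0$ and the KL term is nonnegative, we obtain $\cP^{\tpi_k}_\lambda(\pi_k \succ \mu) \ge \cP_\beta(\pi_k \succ \mu)$ for every $\mu$, and taking the infimum over $\mu\in\policies$ preserves the inequality. Subtracting from $\tfrac12$ and using $\subopt^{\tpi}_\lambda(\pi)=\tfrac12 - \min_{\mu\in\policies} \cP^{\tpi}_\lambda(\pi\succ\mu)$ (and likewise for $\subopt_\beta$) then gives the claim. I do not expect a genuine obstacle here; the only delicate point is the bookkeeping of the additive constant and the observation that it is the \emph{opponent's} proximal penalty, not the center player's, that survives. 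Intuitively, anchoring the inner game at $\pi_k$ only strengthens the regularization felt by deviations $\mu$ away from $\pi_k$, which can only shrink the exploitability of $\pi_k$ relative to the original $\beta$-regularized game.
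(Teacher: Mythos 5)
Your proposal is correct and follows essentially the same route as the paper: both rewrite $\cP^{\tpi_k}_{\lambda}(\pi_k \succ \mu)$ via the geometric-mixture identity for $\tpi_k$, observe that the proximal penalty $\beta_{\target}\KL_\rho(\pi_k \Vert \pi_k)$ vanishes while the opponent's penalty $\beta_{\target}\KL_\rho(\mu \Vert \pi_k)\ge 0$ survives, and conclude $\cP^{\tpi_k}_{\lambda}(\pi_k \succ \mu) \ge \cP_\beta(\pi_k \succ \mu)$ uniformly in $\mu$ before taking the worst case. Your additional bookkeeping of the normalization constant $\bar{C}_k$ simply makes explicit the cancellation that the paper absorbs into its direct expansion of $\cP^{\tpi_k}_\lambda$.
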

\begin{proof}
  Using a property of $\tpi_k$ being a geometric mixture, we have
  \begin{align*}
    \cP^{\tpi_k}_{\lambda}(\pi' \succ \pi) &= \cP(\pi' \succ \pi) - \lambda \KL_{\rho}(\pi' \Vert \tpi_k) + \lambda \KL_{\rho}(\pi \Vert \tpi_k) \\
    &= \cP(\pi' \succ \pi) - \beta \KL_{\rho}(\pi' \Vert \piref) + \beta \KL_{\rho}(\pi \Vert \piref) \\
    &\qquad\qquad\qquad - \beta_{\target} \KL_{\rho}(\pi' \Vert \pi_k) + \beta_{\target} \KL_{\rho}(\pi \Vert \pi_k)\,.
  \end{align*}
  Taking $\pi' = \pi_k$, the expression above gives
  \begin{align*}
    \cP^{\tpi_k}_{\lambda}(\pi_k \succ \pi) &= \cP(\pi_k \succ \pi) - \beta \KL_{\rho}(\pi_k \Vert \piref) + \beta \KL_{\rho}(\pi \Vert \piref) \\
    &\qquad\qquad\qquad - \beta_{\target} \KL_{\rho}(\pi_k \Vert \pi_k) + \beta_{\target} \KL_{\rho}(\pi \Vert \pi_k) \\
    &= \cP_\beta(\pi_k \succ \pi) + \beta_{\target} \KL_{\rho}(\pi \Vert \pi_k) \geq \cP_{\beta}(\pi_k \succ \pi)\,.
  \end{align*}
  Thus, we have
  \[
    \subopt^{\tpi_k}_{\lambda}(\pi_k) \leq \subopt_{\beta}(\pi_k)\,.
  \]
\end{proof}

Finally, we establish a uniform bound on the suboptimality and log-probability span-norm of the iterates $\pi_k$ assuming that each inner problem is solved up to a fixed accuracy. This result will be useful to select the number of inner iterations in a consistent manner.
\begin{lemma}[Uniform regularity of PP iterates]\label{lem:pp_iterate_uniform_regularity}
Assume $\beta \le 1$ and fix $K\ge 0$. Assume that for all $k<K$ the $k$-th proximal subproblem is solved up to accuracy
$\varepsilon_{\pp} > 0$:
\[
  \bigl\|\nabla_\pi V_k(\pi_{k+1},\pi_{k+1})\bigr\|_{\spann,\rho}^2
  \le \varepsilon_{\pp}\,.
\]
Then the iterate $\pi_K$ satisfies
\[
  \|\log \pi_K-\log \piref\|_{\spann,\rho}
  \le \frac{\frac12+\sqrt{\varepsilon_{\pp}}}{\beta}\,,
  \qquad
  \KL_\rho(\pi_K\Vert \piref)
  \le \frac{1+2\sqrt{\varepsilon_{\pp}}}{\beta}\,,
\]
and consequently
\[
  \subopt_\beta(\pi_K)
  \le \frac32 + 2\sqrt{\varepsilon_{\pp}}\,,
  \qquad
  \big\|\log \pi_K-\log \pistar_\beta\big\|_{\spann,\rho}^2
  \le \frac{(1+\sqrt{\varepsilon_{\pp}})^2}{\beta^2}\,.
\]
\end{lemma}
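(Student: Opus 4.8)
The plan is to reduce everything to a single scalar recursion on $D_k \triangleq \|\log\pi_k-\log\piref\|_{\spann,\rho}$ and then obtain the remaining three estimates as elementary consequences. The starting point is the explicit formula for the PP residual derived in the proof of Proposition~\ref{prop:approx_pp_convergence_context}: modulo a vector that is constant in $y$ (which the span seminorm annihilates),
\[
  \zeta_{k+1}(x,y) = \cP(\pi_{k+1}(x)\succ y\mid x) + \beta\log\tfrac{\pi_{k+1}(y|x)}{\piref(y|x)} + \beta_{\target}\log\tfrac{\pi_{k+1}(y|x)}{\pi_k(y|x)} + d_{k+1}(x)\,,
\]
where $\zeta_{k+1}=\nabla_\pi V_k(\pi_{k+1},\pi_{k+1})$ and $\|\zeta_{k+1}\|_{\spann,\rho}^2\le\varepsilon_{\pp}$ by hypothesis.

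First I would write $\log\pi_{k+1}-\log\pi_k=(\log\pi_{k+1}-\log\piref)-(\log\pi_k-\log\piref)$ and isolate $(\beta+\beta_{\target})(\log\pi_{k+1}-\log\piref)$. Taking $\|\cdot\|_{\spann,\rho}$ of both sides, the triangle inequality together with the facts that $d_{k+1}(x)\bOne$ has zero span and that $y\mapsto\cP(\pi_{k+1}(x)\succ y\mid x)\in[0,1]$ (hence span seminorm at most $1/2$) gives the recursion
\[
  (\beta+\beta_{\target})\,D_{k+1}\le \beta_{\target}\,D_k+\tfrac12+\sqrt{\varepsilon_{\pp}}\,.
\]
Since $\pi_0=\piref$ forces $D_0=0$, and the affine map $D\mapsto \tfrac{\beta_{\target}}{\beta+\beta_{\target}}D+\tfrac{1/2+\sqrt{\varepsilon_{\pp}}}{\beta+\beta_{\target}}$ is an increasing contraction with fixed point $(1/2+\sqrt{\varepsilon_{\pp}})/\beta$, a one-line induction yields $D_K\le (1/2+\sqrt{\varepsilon_{\pp}})/\beta$, which is the first claimed bound.

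For the KL bound I would invoke the elementary pointwise inequality $\KL(\pi(x)\Vert\piref(x))\le 2\|\log\pi(x)-\log\piref(x)\|_{\spann}$. This holds because the log-ratio $r(x,\cdot)=\log\pi(x)-\log\piref(x)$ satisfies $\sum_y\piref(y|x)e^{r(x,y)}=1$, which forces $\min_y r(x,y)\le 0\le\max_y r(x,y)$; hence $\KL(\pi(x)\Vert\piref(x))=\E_{\pi(x)}[r]\le\max_y r\le\max_y r-\min_y r=2\|r(x)\|_{\spann}$. Taking expectation over $x\sim\rho$ and applying Jensen's inequality gives $\KL_\rho(\pi\Vert\piref)\le 2\|\log\pi-\log\piref\|_{\spann,\rho}$, so with $\pi=\pi_K$ we get $\KL_\rho(\pi_K\Vert\piref)\le (1+2\sqrt{\varepsilon_{\pp}})/\beta$.

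The two consequences are then immediate. For suboptimality, expanding $\cP_\beta$ and dropping the nonnegative term $\min_\mu[\cP(\pi_K\succ\mu)+\beta\KL_\rho(\mu\Vert\piref)]\ge 0$ yields $\subopt_\beta(\pi_K)\le \tfrac12+\beta\KL_\rho(\pi_K\Vert\piref)\le \tfrac32+2\sqrt{\varepsilon_{\pp}}$ (the same computation as in Lemma~\ref{lem:ref_regularity}, now carrying the extra KL term). For the distance to $\pistar_\beta$, the triangle inequality for $\|\cdot\|_{\spann,\rho}$ combined with the first bound and the estimate $\|\log\piref-\log\pistar_\beta\|_{\spann,\rho}\le 1/(2\beta)$ from Lemma~\ref{lem:ref_regularity} gives $\|\log\pi_K-\log\pistar_\beta\|_{\spann,\rho}\le (1+\sqrt{\varepsilon_{\pp}})/\beta$, and squaring produces the last claim. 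The main obstacle is the first part: establishing the one-step contraction requires the exact residual formula and careful bookkeeping of the constant-in-$y$ terms that the span seminorm discards, while the only slightly non-obvious ingredient is the pointwise $\KL\le 2\|\cdot\|_{\spann}$ inequality, whose proof hinges on the normalization constraint forcing the log-ratio to straddle zero.
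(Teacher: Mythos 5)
Your proposal is correct and follows essentially the same route as the paper's proof: the residual identity for $\nabla_\pi V_k(\pi_{k+1},\pi_{k+1})$ gives exactly the paper's recursion, since $(\beta+\beta_{\target})D_{k+1}\le \beta_{\target}D_k+\tfrac12+\sqrt{\varepsilon_{\pp}}$ is the same as $D_{k+1}\le \tfrac{1}{1+\eta}D_k+\tfrac{\eta}{\beta(1+\eta)}\bigl(\tfrac12+\sqrt{\varepsilon_{\pp}}\bigr)$, and your monotone fixed-point argument is equivalent to the paper's unrolling from $D_0=0$. The pointwise inequality $\KL(p\Vert q)\le 2\|\log(p/q)\|_{\spann}$ via the normalization constraint, the drop of the nonnegative $\min_\mu$ term for the suboptimality bound, and the final triangle inequality against $\|\log\piref-\log\pistar_\beta\|_{\spann,\rho}\le \tfrac{1}{2\beta}$ all coincide with the paper's steps.
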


\begin{proof}
Let $\beta_{\target}\triangleq \beta/\eta$ and recall
\[
  V_k(\pi,\mu)=\cP(\mu\succ \pi)+\beta\KL_\rho(\pi\Vert \piref)+\beta_{\target}\KL_\rho(\pi\Vert \pi_k).
\]
Write $\zeta_{k+1}\triangleq \nabla_\pi V_k(\pi_{k+1},\pi_{k+1})$.
A direct differentiation (same as in Lemma~\ref{lem:value_strong_convexity_context}) gives, for $\rho$-a.e.\ $x$ and all $y$,
\[
  \zeta_{k+1}(x,y)
  =
  \cP(\pi_{k+1}(x)\succ y\mid x)
  +\beta\Bigl(1+\log\frac{\pi_{k+1}(y\mid x)}{\piref(y\mid x)}\Bigr)
  +\beta_{\target}\Bigl(1+\log\frac{\pi_{k+1}(y\mid x)}{\pi_k(y\mid x)}\Bigr).
\]
The additive ``$+1$'' terms are constant across $y$ and therefore irrelevant for the span seminorm.
Thus, for each $x$ we may subtract a baseline $c_{k+1}(x)$ and rewrite as
\[
  \cP(\pi_{k+1}(x)\succ y\mid x)
  +\beta\log\frac{\pi_{k+1}(y\mid x)}{\piref(y\mid x)}
  +\beta_{\target}\log\frac{\pi_{k+1}(y\mid x)}{\pi_k(y\mid x)}
  =
  e_{k+1}(x,y),
\]
where $e_{k+1}(x)\triangleq \zeta_{k+1}(x)-c_{k+1}(x)\bOne$ satisfies
$\|e_{k+1}\|_{\spann,\rho}=\|\zeta_{k+1}\|_{\spann,\rho}\le \sqrt{\varepsilon_{\pp}}$.

Rearranging and using $\beta+\beta_{\target}=\beta(1+1/\eta)=\beta(1+\eta)/\eta$, we obtain the (approximate) log equation
\begin{equation}\label{eq:pp_log_equation_uniform}
  \log \pi_{k+1}
  =
  \frac{\eta}{1+\eta}\log \piref
  +\frac{1}{1+\eta}\log \pi_k
  -\frac{\eta}{\beta(1+\eta)}\,\cP(\pi_{k+1}\succ \cdot)
  +\frac{\eta}{\beta(1+\eta)}\,e_{k+1}
  +\text{(const)}.
\end{equation}
Subtract $\log\piref$ and take $\|\cdot\|_{\spann,\rho}$.
Using Minkowski and that $\|e_{k+1}\|_{\spann,\rho}\le \sqrt{\varepsilon_{\pp}}$, we get
\begin{align*}
  \|\log \pi_{k+1}-\log \piref\|_{\spann,\rho}
  &\le
  \frac{1}{1+\eta}\,\|\log \pi_k-\log \piref\|_{\spann,\rho}
  +\frac{\eta}{\beta(1+\eta)}\,
    \|\cP(\pi_{k+1}\succ \cdot)\|_{\spann,\rho}
  \\
  &\qquad +\frac{\eta}{\beta(1+\eta)}\,\sqrt{\varepsilon_{\pp}}.
\end{align*}
For each $x$, the vector $y\mapsto \cP(\pi_{k+1}(x)\succ y\mid x)$ takes values in $[0,1]$, hence
$\|\cP(\pi_{k+1}(x)\succ \cdot\mid x)\|_{\spann}\le \tfrac12$, and therefore
$\|\cP(\pi_{k+1}\succ \cdot)\|_{\spann,\rho}\le \tfrac12$.
Thus, defining $D_k\triangleq \|\log \pi_k-\log \piref\|_{\spann,\rho}$,
\[
  D_{k+1}
  \le
  \frac{1}{1+\eta}D_k
  +\frac{\eta}{\beta(1+\eta)}\Bigl(\frac12+\sqrt{\varepsilon_{\pp}}\Bigr).
\]
Since $D_0=\|\log\pi_0-\log\piref\|_{\spann,\rho}=0$, unrolling the recursion yields for all $K$:
\[
  D_K
  \le
  \frac{1}{\beta}\Bigl(\frac12+\sqrt{\varepsilon_{\pp}}\Bigr).
\]
This proves the first bound in the lemma.

\textbf{Bounding $\KL_\rho(\pi_K\Vert \piref)$ from span of log-ratios.}
Fix $x$ and abbreviate $p=\pi_K(\cdot\mid x)$, $q=\piref(\cdot\mid x)$, and $r=\log(p/q)$.
Because $\sum_y q_y e^{r_y}=1$, we have
$1\le e^{\max_y r_y}$ and $1\ge e^{\min_y r_y}$, so $\max r\ge 0\ge \min r$ and therefore
\[
  \max_y r_y \le \max r-\min r = 2\|r\|_{\spann}.
\]
Then
\[
  \KL(p\Vert q)=\sum_y p_y r_y \le \max_y r_y \le 2\|r\|_{\spann}.
\]
Taking expectation over $x\sim\rho$ and using $\E\|r(x)\|_{\spann}\le \|r\|_{\spann,\rho}$ gives
\[
  \KL_\rho(\pi_K\Vert \piref)
  \le 2\|\log\pi_K-\log\piref\|_{\spann,\rho}
  \le \frac{1+2\sqrt{\varepsilon_{\pp}}}{\beta}.
\]

\textbf{Suboptimality bound.}
By definition,
\begin{align*}
  \subopt_\beta(\pi_K)
  &=
  \max_\mu \Bigl(\tfrac12-\cP(\pi_K\succ\mu)
  +\beta\KL_\rho(\pi_K\Vert\piref)-\beta\KL_\rho(\mu\Vert\piref)\Bigr)
  \\
  &\le \tfrac12+\beta\KL_\rho(\pi_K\Vert\piref).
\end{align*}
Plugging the KL bound yields
\[
  \subopt_\beta(\pi_K)\le \tfrac12+\bigl(1+2\sqrt{\varepsilon_{\pp}}\bigr)
    = \tfrac32+2\sqrt{\varepsilon_{\pp}}\,.
  \]

\textbf{Span bound to $\pistar_\beta$.}
By the triangle inequality,
\[
  \|\log \pi_K-\log \pistar_\beta\|_{\spann,\rho}
  \le
  \|\log \pi_K-\log \piref\|_{\spann,\rho}
  +\|\log \piref-\log \pistar_\beta\|_{\spann,\rho}.
\]
By Lemma~\ref{lem:ref_regularity}, $\|\log \piref-\log \pistar_\beta\|_{\spann,\rho}\le \frac{1}{2\beta}$.
Therefore
\[
  \|\log \pi_K-\log \pistar_\beta\|_{\spann,\rho}
  \le
  \frac{\frac12+\sqrt{\varepsilon_{\pp}}}{\beta}+\frac{1}{2\beta}
  =\frac{1+\sqrt{\varepsilon_{\pp}}}{\beta},
\]
and squaring gives the last statement of the lemma.
\end{proof}

\subsubsection{Deterministic PP--SPG}\label{app:pp_spg_deterministic}

For a warm-up, assume the inner-loop gradients are exact: $g_{k,t}=\nabla J_k(\theta_{k,t};\pi_{k,t})$.

\begin{corollary}[Deterministic PP--SPG]\label{cor:pp_spg_deterministic}
Assume $\beta \leq 1$ and let $\varepsilon_{\rin} > 0$. Assume that parameterization and PP learning rate $\eta$ satisfy Assumptions~\ref{ass:gradient_compatibility} and \ref{ass:parametrization_regularity}. Then, the deterministic version of Algorithm~\ref{alg:pp_spg} using 
\[
  T_k \geq \frac{4L_{\beta,\eta}}{\pl_{\beta,\eta}} \log\left(\frac{3/2 + 2\sqrt{\varepsilon_{\pp}}}{\varepsilon_{\rin}}\right)
\]
iterations for each inner loop, where $\varepsilon_{\pp} \triangleq 2C_{\gradcomp}L_{\beta,\eta}\,\left(\varepsilon_{\rin} + \frac{3\epspl}{2\pl_{\beta,\eta}}\right) + \varepsilon_{\gradcomp}$, with a learning rate $\gamma_{k,t}\equiv \gamma = 1/(2L_{\beta,\eta})$, for any $k \geq 0$ it holds
\[
        \KL_{\rho}(\pistar_\beta \Vert \pi_{k}) \leq (1+\eta/2)^{-k} \cdot \frac{1}{2\beta} + \frac{2 \varepsilon_{\pp}}{\beta^2}\,,
    \]
    and
    \[
        \subopt_{\beta}(\pi_{k}) \leq (1+\eta/2)^{-k} \cdot \left(\frac{5}{2\beta^2} + \frac{1}{\eta}\right)  + \left(\frac{5}{2\beta^2} + \frac{1}{\eta}\right)\frac{4\varepsilon_{\pp}}{\beta}\,,
    \]
    and, moreover,
    \[
        \big\|\log \pi_{k} - \log \pistar_\beta\big\|_{\spann,\rho}^2 \leq \frac{1}{4\beta^2}(1+\eta)^{-k}  + \frac{1}{\beta^3}  \cdot (1+\eta/2)^{-k}
        + \frac{4 \varepsilon_{\pp}}{\beta^4}\,.
    \]

\end{corollary}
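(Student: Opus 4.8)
The plan is to treat each outer iteration as an approximate proximal-point update and to chain the inner SPG convergence guarantee (Proposition~\ref{prop:spg_selfplay_deterministic}) with the outer approximate-PP convergence (Proposition~\ref{prop:approx_pp_convergence_context}), using Lemma~\ref{lem:inner_residual_vs_subopt} as the bridge between the inner suboptimality $\subopt^{\tpi_k}_\lambda(\pi_{k+1})$ and the PP residual $\|\nabla_\pi V_k(\pi_{k+1},\pi_{k+1})\|_{\spann,\rho}^2$. By Lemma~\ref{lem:inner_game_equiv}, the $k$-th inner game is a $\lambda$-regularized preference game with $\lambda=\beta(1+1/\eta)$ and anchor $\tpi_k$, and under Assumption~\ref{ass:parametrization_regularity} the regularity constants $(G,L_{\beta,\eta},\pl_{\beta,\eta},\epspl)$ are uniform in $k$, so Proposition~\ref{prop:spg_selfplay_deterministic} applies with the same $\gamma=1/(2L_{\beta,\eta})$ at every outer step. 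The heart of the argument is to show, by induction on $k$, that each subproblem is solved to PP residual at most $\varepsilon_{\pp}$, where $\varepsilon_{\pp}=2C_{\gradcomp}L_{\beta,\eta}(\varepsilon_{\rin}+\tfrac{3\epspl}{2\pl_{\beta,\eta}})+\varepsilon_{\gradcomp}$ as in the statement.

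For the induction, suppose all subproblems $k'<k$ have been solved to residual $\varepsilon_{\pp}$. Then Lemma~\ref{lem:pp_iterate_uniform_regularity} (applied with this $K=k$) yields the uniform bound $\subopt_\beta(\pi_k)\le \tfrac32+2\sqrt{\varepsilon_{\pp}}$, and Lemma~\ref{lem:init_regularity} transfers it to the inner anchor, giving $\subopt^{\tpi_k}_\lambda(\pi_k)\le \subopt_\beta(\pi_k)\le \tfrac32+2\sqrt{\varepsilon_{\pp}}$; the base case $k=0$ follows from $\subopt_\beta(\piref)\le \tfrac12$ in Lemma~\ref{lem:ref_regularity}. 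Running $T_k$ inner SPG steps from $\pi_{k,0}=\pi_k$, Proposition~\ref{prop:spg_selfplay_deterministic} gives a contraction factor $(1-\gamma\pl_{\beta,\eta}/2)^{T_k}=(1-\pl_{\beta,\eta}/(4L_{\beta,\eta}))^{T_k}\le \exp(-T_k\pl_{\beta,\eta}/(4L_{\beta,\eta}))$ on the initial suboptimality, plus the irreducible floor $\tfrac{3\epspl}{2\pl_{\beta,\eta}}$. The stated choice $T_k\ge \tfrac{4L_{\beta,\eta}}{\pl_{\beta,\eta}}\log\!\big(\tfrac{3/2+2\sqrt{\varepsilon_{\pp}}}{\varepsilon_{\rin}}\big)$ is precisely what is needed to push the transient below $\varepsilon_{\rin}$, so that $\subopt^{\tpi_k}_\lambda(\pi_{k+1})\le \varepsilon_{\rin}+\tfrac{3\epspl}{2\pl_{\beta,\eta}}$. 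Lemma~\ref{lem:inner_residual_vs_subopt} then converts this into a residual bound of exactly $\varepsilon_{\pp}$, closing the induction.

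With every subproblem solved to residual $\varepsilon_{\pp}$, I would invoke Proposition~\ref{prop:approx_pp_convergence_context} with its approximation parameter set to $\varepsilon=\varepsilon_{\pp}$, specialized to the initialization $\pi_0=\piref$. Substituting the reference-policy regularity from Lemma~\ref{lem:ref_regularity}, namely $\KL_\rho(\pistar_\beta\Vert\piref)\le \tfrac{1}{2\beta}$ and $\|\log\piref-\log\pistar_\beta\|_{\spann,\rho}^2\le \tfrac{1}{4\beta^2}$, into the three displays of Proposition~\ref{prop:approx_pp_convergence_context} yields the KL, suboptimality, and log-probability span bounds up to the explicit prefactors. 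The remaining work is purely arithmetic simplification under $\beta\le 1$: using $\tfrac1\beta\le\tfrac1{\beta^2}$ to collapse the suboptimality prefactor into $\tfrac{5}{2\beta^2}+\tfrac1\eta$ and its additive term into $(\tfrac{5}{2\beta^2}+\tfrac1\eta)\tfrac{4\varepsilon_{\pp}}{\beta}$, and using $2(1+\beta^2)\le 4$ to replace the span additive term by $\tfrac{4\varepsilon_{\pp}}{\beta^4}$.

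The main obstacle is the self-referential structure of the argument: the target residual $\varepsilon_{\pp}$ appears simultaneously inside the schedule $T_k$, inside the uniform initial-suboptimality bound of Lemma~\ref{lem:pp_iterate_uniform_regularity} used to size $T_k$, and inside the final convergence bounds. The induction must therefore be set up so that the accuracy one \emph{assumes} for the previous iterates is identical to the accuracy one \emph{proves} for the current iterate; verifying that the definition of $\varepsilon_{\pp}$ is a genuine fixed point of this loop (rather than implicitly requiring a strictly smaller $\varepsilon_{\pp}$ on the right-hand side) is the delicate point. A secondary but necessary check is that Assumption~\ref{ass:parametrization_regularity} indeed supplies $k$-independent constants despite the anchor $\tpi_k$ drifting with $k$; in the deterministic case this is clean because there is no probabilistic failure to union-bound over the outer loop, but the dependence of $L_{\tpi_k,\lambda}$ on $\tpi_k$ must still be controlled uniformly.
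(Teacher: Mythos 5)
Your proposal reproduces the paper's proof essentially step for step: the same induction maintaining the PP residual bound $\|\nabla_\pi V_k(\pi_{k+1},\pi_{k+1})\|_{\spann,\rho}^2\le\varepsilon_{\pp}$, with Lemma~\ref{lem:ref_regularity} for the base case, Lemmas~\ref{lem:pp_iterate_uniform_regularity} and \ref{lem:init_regularity} to bound $\subopt^{\tpi_k}_{\lambda}(\pi_k)\le \tfrac32+2\sqrt{\varepsilon_{\pp}}$, Proposition~\ref{prop:spg_selfplay_deterministic} for the inner contraction under the stated $T_k$, Lemma~\ref{lem:inner_residual_vs_subopt} to close the induction, and Proposition~\ref{prop:approx_pp_convergence_context} with $\varepsilon=\varepsilon_{\pp}$ plus the $\beta\le 1$ arithmetic for the three final displays. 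The fixed-point worry you flag is handled exactly as you anticipate: since $T_k$ is sized using $\varepsilon_{\pp}$ itself and Lemma~\ref{lem:inner_residual_vs_subopt} returns a residual equal to $2C_{\gradcomp}L_{\beta,\eta}(\varepsilon_{\rin}+\tfrac{3\epspl}{2\pl_{\beta,\eta}})+\varepsilon_{\gradcomp}=\varepsilon_{\pp}$, the induction closes with the same constant, so your argument is correct and matches the paper's.
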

\begin{proof}
We want to show that for all $k \geq 0$, the following bounds hold:
\[
  \bigl\|\nabla_\pi V_k(\pi_{k+1},\pi_{k+1})\bigr\|_{\spann,\rho}^2
  \le
  \varepsilon_{\pp}
  \triangleq
  2C_{\gradcomp}L_{\beta,\eta}\,\left(\varepsilon_{\rin} + \frac{3\epspl}{2\pl_{\beta,\eta}}\right) + \varepsilon_{\gradcomp}\,.
\]
Assume now $k \geq 0$ and that the statements hold for $k' < k$. We will show they hold for $k$. By Lemma~\ref{lem:ref_regularity} for $k=0$ and by the induction hypothesis and Lemma~\ref{lem:pp_iterate_uniform_regularity} for $k > 0$, we have
\[
  \subopt_{\beta}(\pi_k) \leq \frac{3}{2} + 2\sqrt{\varepsilon_{\pp}}\,.
\]
Applying Proposition~\ref{prop:spg_selfplay_deterministic}, Lemma~\ref{lem:init_regularity}, and a bound $1+x \leq \rme^{x}$ to the inner loop at each $k$ gives
\begin{align*}
  \subopt^{\tpi_k}_{\lambda}(\pi_{k+1}) &\leq
  \Bigl(1-\frac{\pl_{\beta,\eta}}{4L_{\beta,\eta}}\Bigr)^{T_k} \subopt^{\tpi_k}_{\lambda}(\pi_{k}) + \frac{3\epspl}{2\pl_{\beta,\eta}} \\
  &\leq \Bigl(1-\frac{\pl_{\beta,\eta}}{4L_{\beta,\eta}}\Bigr)^{T_k} \subopt_{\beta}(\pi_{k}) + \frac{3\epspl}{2\pl_{\beta,\eta}} \\
  &\leq \exp\left(-\frac{\pl_{\beta,\eta}}{4L_{\beta,\eta}} T_k\right) \left( \frac{3}{2} + 2\sqrt{\varepsilon_{\pp}} \right) + \frac{3\epspl}{2\pl_{\beta,\eta}} \leq 
  \varepsilon_{\rin} + \frac{3\epspl}{2\pl_{\beta,\eta}}\,.
\end{align*}
By Lemma~\ref{lem:inner_residual_vs_subopt},
\[
  \bigl\|\nabla_\pi V_k(\pi_{k+1},\pi_{k+1})\bigr\|_{\spann,\rho}^2
  \le
  2C_{\gradcomp}L_{\beta,\eta}\,\left(\varepsilon_{\rin} + \frac{3\epspl}{2\pl_{\beta,\eta}}\right) + \varepsilon_{\gradcomp} = \varepsilon_{\pp}\,.
\]
Thus Proposition~\ref{prop:approx_pp_convergence_context} applies to the outer PP loop with
$\varepsilon=\varepsilon_{\pp}$, yielding the following corollary.
\end{proof}

\subsubsection{Stochastic PP--SPG}\label{app:pp_spg_stochastic}

We now analyze the stochastic version of Algorithm~\ref{alg:pp_spg}, where the
inner-loop gradients are estimated by mini-batches and satisfy
Assumption~\ref{ass:gradient_noise}. The proof mirrors the deterministic case,
but we additionally track the confidence parameter and apply a union bound over
outer iterations.

\begin{corollary}[Stochastic PP--SPG]\label{cor:pp_spg_stochastic}
Assume $\beta\le 1$ and let $\delta\in(0,1)$. Assume
Assumptions~\ref{ass:gradient_compatibility} and \ref{ass:parametrization_regularity}.
Fix a target inner accuracy $\varepsilon_{\rin}>0$ and a target number of outer steps $K \in \N$. Define the per-outer failure probability as
\[
  \delta_k \triangleq \frac{\delta}{K},\qquad k\in\{0,\dots,K-1\}\,.
\]
and a target PP-residual accuracy as
\[
  \varepsilon_{\pp} \triangleq 2C_{\gradcomp}L_{\beta,\eta}\left(\varepsilon_{\rin}+\frac{6\log(K\rme/\delta)}{\pl_{\beta,\eta}} \cdot \left(\epspl +  15/7 \cdot \varepsilon_{\grad}^2\right)\right) +\varepsilon_{\gradcomp}\,.
\]
Define $\kappa_{\beta,\eta} = \frac{L_{\beta,\eta}}{\pl_{\beta,\eta}} \geq 1$ and sequences $(\gamma_t)_{t\geq 0}$ and $(B_t)_{t \geq 0}$ as follows
  \[   
      \gamma_t = \frac{4t + 32\kappa_{\beta,\eta} - 2}{\pl_{\beta,\eta}(t + 8\kappa_{\beta,\eta})^2}  = \Theta\left( \frac{1}{\pl_{\beta,\eta} t} \right)\,, \qquad B_t =  \left\lceil \frac{t + 8 \kappa_{\beta,\eta}}{\pl_{\beta,\eta}} \right\rceil = \Theta\left( \frac{t}{\pl_{\beta,\eta}} \right)\,.
  \]
Then, Algorithm~\ref{alg:pp_spg} with $\gamma_{k,t} \equiv \gamma_t$ and $B_{k,t} \equiv B_t$, and 
\[
  T_k = \tcO\left( \frac{v^2_{\beta,\eta}}{\varepsilon_{\rin}} + \frac{\kappa_{\beta,\eta}}{\sqrt{\varepsilon_{\rin}}} + \sqrt{ \frac{\kappa_{\beta,\eta} (\sigma^2_{\beta,\eta} + v^2_{\beta,\eta})}{\varepsilon_{\rin}}}\right)
\]

outputs $\{\pi_k\}_{k \in [0,\ldots,K]}$ such that with probability at least $1-\delta$ for all $k=0,\dots,K$ the following holds:
\begin{align*}
  \KL_{\rho}(\pistar_\beta \Vert \pi_{k})
  &\le (1+\eta/2)^{-k} \cdot\frac{1}{2\beta}
  + \frac{2 \varepsilon_{\pp}}{\beta^2},\\
  \subopt_{\beta}(\pi_{k})
  &\le (1+\eta/2)^{-k} \cdot \left(\frac{5}{2\beta^2} + \frac{1}{\eta}\right)
  + \left(\frac{5}{2\beta^2} + \frac{1}{\eta}\right)\frac{4\varepsilon_{\pp}}{\beta},\\
  \big\|\log \pi_{k} - \log \pistar_\beta\big\|_{\spann,\rho}^2
  &\le \frac{1}{4\beta^2}(1+\eta)^{-k}
  + \frac{1}{\beta^3}(1+\eta/2)^{-k}
  + \frac{4 \varepsilon_{\pp}}{\beta^4}\,.
\end{align*}
\end{corollary}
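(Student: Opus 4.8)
The plan is to mirror the deterministic argument of Corollary~\ref{cor:pp_spg_deterministic}, replacing the deterministic inner rate (Proposition~\ref{prop:spg_selfplay_deterministic}) with the high-probability stochastic rate (Proposition~\ref{prop:spg_selfplay_stochastic}), and then gluing the per-outer-step guarantees together with a union bound. First I would define the good events. Let $\mathcal{H}_k$ be the $\sigma$-algebra generated by $\pi_0,\dots,\pi_k$, so that the anchor $\tpi_k$ is $\mathcal{H}_k$-measurable, and let $\mathcal{E}_k$ be the event that the inner SPG loop at outer step $k$ returns $\pi_{k+1}$ with $\subopt^{\tpi_k}_\lambda(\pi_{k+1}) \le \varepsilon_{\rin} + \tfrac{6\log(K\rme/\delta)}{\pl_{\beta,\eta}}(\epspl + \tfrac{15}{7}\varepsilon_{\grad}^2)$. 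Conditioning on $\mathcal{H}_k$, Proposition~\ref{prop:spg_selfplay_stochastic} applied to the inner $\lambda$-regularized game anchored at $\tpi_k$ gives $\P[\mathcal{E}_k \mid \mathcal{H}_k] \ge 1-\delta_k$, since Assumption~\ref{ass:parametrization_regularity} supplies constants $G, L_{\beta,\eta}, \pl_{\beta,\eta}, \sigma^2_{\beta,\eta}, v^2_{\beta,\eta}, \epspl, \varepsilon_{\grad}$ uniform in $k$ and the compatibility condition $\lambda\,\pl_{\beta,\eta} \ge G^2$ is exactly the hypothesis of that proposition. A union bound over the $K$ outer steps yields $\P[\mathcal{E}] \ge 1-\sum_k \delta_k = 1-\delta$ for $\mathcal{E} \triangleq \bigcap_{k=0}^{K-1}\mathcal{E}_k$.

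Next I would run an induction on $k$ restricted to the event $\mathcal{E}$, with induction hypothesis that the PP residual bound $\|\nabla_\pi V_{k'}(\pi_{k'+1},\pi_{k'+1})\|_{\spann,\rho}^2 \le \varepsilon_{\pp}$ holds for all $k' < k$. Using this hypothesis together with Lemma~\ref{lem:pp_iterate_uniform_regularity} (and Lemma~\ref{lem:ref_regularity} for the base case $k=0$) gives the uniform envelope $\subopt_\beta(\pi_k) \le \tfrac32 + 2\sqrt{\varepsilon_{\pp}}$, and Lemma~\ref{lem:init_regularity} transfers this to the anchored game, so $\subopt^{\tpi_k}_\lambda(\pi_k) \le \subopt_\beta(\pi_k) \le \tfrac32 + 2\sqrt{\varepsilon_{\pp}}$ controls the initial suboptimality fed into the inner loop at step $k$.

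I would then verify the iteration budget. Expanding the right-hand side of Proposition~\ref{prop:spg_selfplay_stochastic} with $\log(\rme/\delta)$ replaced by $\log(\rme/\delta_k)=\log(K\rme/\delta)$, the two $\Theta(1/T_k^2)$ terms and the single $\Theta(1/T_k)$ term must each be forced below $\varepsilon_{\rin}/3$. Solving the three threshold inequalities, and using $G^2/(\lambda\,\pl_{\beta,\eta}) \le 1$ (from the compatibility condition) to simplify the $1/T_k$-term into a $v^2_{\beta,\eta}/\varepsilon_{\rin}$ requirement, reproduces up to polylogarithmic factors in $K/\delta$ the stated budget $T_k = \tcO\!\big(v^2_{\beta,\eta}/\varepsilon_{\rin} + \kappa_{\beta,\eta}/\sqrt{\varepsilon_{\rin}} + \sqrt{\kappa_{\beta,\eta}(\sigma^2_{\beta,\eta}+v^2_{\beta,\eta})/\varepsilon_{\rin}}\big)$. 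The residual (nondecaying) term equals $\tfrac{6\log(K\rme/\delta)}{\pl_{\beta,\eta}}(\epspl + \tfrac{15}{7}\varepsilon_{\grad}^2)$, which is precisely the additive floor in the definition of $\mathcal{E}_k$; hence on $\mathcal{E}_k$ the inner accuracy holds, and Lemma~\ref{lem:inner_residual_vs_subopt} converts it into $\|\nabla_\pi V_k(\pi_{k+1},\pi_{k+1})\|_{\spann,\rho}^2 \le \varepsilon_{\pp}$ with the stated $\varepsilon_{\pp}$, closing the induction.

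Finally, on $\mathcal{E}$ the PP residual bound holds for every $k=0,\dots,K-1$, so Proposition~\ref{prop:approx_pp_convergence_context} applies to the outer loop with accuracy parameter $\varepsilon=\varepsilon_{\pp}$ and initial data $\KL_\rho(\pistar_\beta\Vert\piref)\le 1/(2\beta)$, $\|\log\piref-\log\pistar_\beta\|_{\spann,\rho}^2\le 1/(4\beta^2)$ from Lemma~\ref{lem:ref_regularity}; substituting these and simplifying constants under $\beta\le 1$ gives the three displayed inequalities simultaneously for all $k$, on an event of probability at least $1-\delta$. The main obstacle is the circular dependence between the union bound and the induction: the conditional high-probability inner guarantee at step $k$ presupposes a bounded initial suboptimality $\subopt^{\tpi_k}_\lambda(\pi_k)$, which itself depends on all earlier PP residuals being controlled. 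I would resolve this by conditioning on the full history $\mathcal{H}_k$ at each step, so the inner stochastic rate is applied with an initial condition that is deterministic given $\mathcal{H}_k$, and by invoking the uniform constants of Assumption~\ref{ass:parametrization_regularity} to guarantee that neither the failure budget $\delta_k$ nor the iteration length $T_k$ degrades as the random anchor sequence $(\tpi_k)_{k\ge 0}$ evolves.
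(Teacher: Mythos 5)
Your proposal is correct and follows essentially the same route as the paper's proof: condition on the history so that Proposition~\ref{prop:spg_selfplay_stochastic} applies to the inner $\lambda$-regularized game with the uniform constants of Assumption~\ref{ass:parametrization_regularity} and confidence $\delta_k=\delta/K$, control the initial inner suboptimality inductively via Lemmas~\ref{lem:ref_regularity}, \ref{lem:pp_iterate_uniform_regularity}, and~\ref{lem:init_regularity}, convert inner accuracy to the PP residual with Lemma~\ref{lem:inner_residual_vs_subopt}, take a union bound, and invoke Proposition~\ref{prop:approx_pp_convergence_context}. Your explicit treatment of the interplay between the induction and the union bound (conditioning on $\mathcal{H}_k$ so the inner rate is applied with a deterministic-given-history initial condition) is in fact slightly more careful than the paper's informal phrasing of the same point.
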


\begin{proof}
We prove that the PP residual condition holds uniformly for all outer steps with probability at least $1-\delta$,
and then apply Proposition~\ref{prop:approx_pp_convergence_context} exactly as in the deterministic case.

Fix an outer iteration $k\in\{0,\dots,K-1\}$. Condition on the history up to the beginning of the
$k$-th inner loop (so $\pi_k$ and hence $\tpi_k$ are fixed).
By Assumption~\ref{ass:parametrization_regularity}, the inner objective $J_k$ satisfies the same
regularity constants $(L_{\beta,\eta},\pl_{\beta,\eta},\epspl)$ and the stochastic gradients satisfy
the same noise constants $(\varepsilon_{\grad},\sigma^2_{\beta,\eta},v^2_{\beta,\eta})$.
Therefore Proposition~\ref{prop:spg_selfplay_stochastic} is applicable to the inner loop.

Moreover, as in the deterministic proof, for $k=0$ Lemma~\ref{lem:ref_regularity} controls the initial
outer iterate, and for $k>0$ the induction hypothesis together with Lemma~\ref{lem:pp_iterate_uniform_regularity}
gives a uniform bound on $\subopt_\beta(\pi_k)$ and hence via Lemma~\ref{lem:init_regularity} a uniform bound on
$\subopt^{\tpi_k}_{\lambda}(\pi_k)$ required to instantiate the SPG bound.
Concretely, on the event that the residual bounds hold for all steps $k'<k$, we have
\[
  \subopt^{\tpi_k}_{\lambda}(\pi_k)
  \le \subopt_{\beta}(\pi_k)
  \le \frac{3}{2}+2\sqrt{\varepsilon_{\pp}}.
\]

Now apply Proposition~\ref{prop:spg_selfplay_stochastic} to the inner run at step $k$ with confidence $\delta_k=\delta/K$ after $T_k$ steps and required learning rate and batch size sequences $(\gamma_t)$ and $(B_t)$. It implies that with probability at least $1-\delta_k$,
\begin{align*}
    \subopt^{\tpi_k}_\lambda(\pi_{k+1}) &\leq \frac{64 \cdot \kappa^2_{\beta,\eta} \log(K\rme/\delta)}{(T_k + 8\kappa_{\beta,\eta} - 1)^2} \left( \frac{3}{2} + 2\sqrt{\varepsilon_{\pp}} \right) \\
    &\qquad+ \frac{ 24 \cdot  \kappa_{\beta,\eta} \log(K\rme/\delta) \cdot \log(1+T_{k}/(2\kappa_{\beta,\eta}))}{(T_k + 8 \kappa_{\beta,\eta} - 1)^2}\left( \frac{675 \cdot \sigma^2_{\beta,\eta}}{49} + 2 v^2_{\beta,\eta} \right) \\
    &\qquad + \frac{6G^2\cdot v^2_{\beta,\eta} \log(K\rme/\delta)}{\lambda  \pl_{\beta,\eta} \cdot (T_k + 8 \kappa_{\beta,\eta} - 1)} + \frac{6\log(K\rme/\delta)}{\pl_{\beta,\eta}} \cdot \left(\epspl +  15/7 \cdot \varepsilon_{\grad}^2\right) \,,
\end{align*}
where $\lambda = \beta(1 + 1/\eta)$.
To guarantee that all terms with dependence on $T_k$ are at most $\varepsilon_{\rin}$, it suffices to choose $T_k$ such that
\begin{align}
    \label{eq:stochastic_pp_spg_cond1}
    &\frac{64 \cdot \kappa^2_{\beta,\eta} \log(K\rme/\delta)}{(T_k + 8\kappa_{\beta,\eta} - 1)^2} \left( \frac{3}{2} + 2\sqrt{\varepsilon_{\pp}} \right) \leq \frac{\varepsilon_{\rin}}{3}\,,\\
    \label{eq:stochastic_pp_spg_cond2}
    &\frac{ 24 \cdot  \kappa_{\beta,\eta} \log(K\rme/\delta) \cdot \log(1+T_{k}/(2\kappa_{\beta,\eta}))}{(T_k + 8 \kappa_{\beta,\eta} - 1)^2}\left( \frac{675 \cdot \sigma^2_{\beta,\eta}}{49} + 2 v^2_{\beta,\eta} \right) \leq \frac{\varepsilon_{\rin}}{3}\,,\\
    \label{eq:stochastic_pp_spg_cond3}
    &\frac{6G^2\cdot v^2_{\beta,\eta} \log(K\rme/\delta)}{\lambda  \pl_{\beta,\eta} \cdot (T_k + 8 \kappa_{\beta,\eta} - 1)} \leq \varepsilon_{\rin}/3\,.
\end{align}
Our choice $\beta(1+1/\eta) \cdot \pl_{\beta,\eta} = G^2$ simplifies the last condition \eqref{eq:stochastic_pp_spg_cond3} to
\[
  T_k \geq \frac{18 \cdot v^2_{\beta,\eta} \log(K\rme/\delta)}{\varepsilon_{\rin}} - 8 \kappa_{\beta,\eta} + 1\,,
\]
which asymptotically scales as $\cO\left(v^2_{\beta,\eta} \log(K/\delta)/\varepsilon_{\rin}\right)$. For the first two conditions \eqref{eq:stochastic_pp_spg_cond1} and \eqref{eq:stochastic_pp_spg_cond2}, it suffices to choose
\begin{align*}
  T_k \geq \max&\Bigg\{
    \sqrt{\frac{192 \cdot \kappa^2_{\beta,\eta} \log(K\rme/\delta) \left( \frac{3}{2} + 2\sqrt{\varepsilon_{\pp}} \right)}{\varepsilon_{\rin}}}, \\
    &\quad \sqrt{\frac{72 \cdot  \kappa_{\beta,\eta} \log(K\rme/\delta) \left( \frac{675 \cdot \sigma^2_{\beta,\eta}}{49} + 2 v^2_{\beta,\eta} \right) \log(1+T_{k}/(2\kappa_{\beta,\eta}))}{\varepsilon_{\rin}}}
  \Bigg\}\,.
\end{align*}
In particular, the first term scales as $\cO\left(\kappa_{\beta,\eta} \sqrt{\frac{\log(K/\delta)}{\varepsilon_{\rin}}}\right)$ and the second term scales as $\tilde{\cO}\left(\sqrt{ \frac{\kappa_{\beta,\eta} (\sigma^2_{\beta,\eta} + v^2_{\beta,\eta}) \log(K/\delta)}{\varepsilon_{\rin}}}\right)$.

Under this choice, we have shown that with probability at least $1-\delta_k$,
\[
  \subopt^{\tpi_k}_\lambda(\pi_{k+1}) \leq \varepsilon_{\rin} + \frac{6\log(K\rme/\delta)}{\pl_{\beta,\eta}} \cdot \left(\epspl +  15/7 \cdot \varepsilon_{\grad}^2\right).
\]

On the same event, Lemma~\ref{lem:inner_residual_vs_subopt} gives
\[
  \bigl\|\nabla_\pi V_k(\pi_{k+1},\pi_{k+1})\bigr\|_{\spann,\rho}^2
  \le
  2C_{\gradcomp}L_{\beta,\eta}\left(\varepsilon_{\rin}+ \frac{6\log(K\rme/\delta)}{\pl_{\beta,\eta}} \cdot \left(\epspl +  15/7 \cdot \varepsilon_{\grad}^2\right)\right)
  +\varepsilon_{\gradcomp}
  = \varepsilon_{\pp}.
\]
Define the event
\[
  \cE
  \triangleq
  \bigcap_{k=0}^{K-1}\cE_k\,, \qquad \cE_k \triangleq
  \left\{
    \bigl\|\nabla_\pi V_k(\pi_{k+1},\pi_{k+1})\bigr\|_{\spann,\rho}^2 \le \varepsilon_{\pp}
  \right\} \,.
\]
By the per-step success probability $\P(\cE_k)\ge 1-\delta_k$ and a union bound, $\P(\cE) \geq 1-\delta$. On $\cE$, the residual condition \eqref{eq:pp_residual_definition} holds uniformly for all outer iterations with $\varepsilon=\varepsilon_{\pp}$. Thus Proposition~\ref{prop:approx_pp_convergence_context} applies, giving the three displayed bounds.
Finally, since $\pi_0=\piref$ we use Lemma~\ref{lem:ref_regularity} to bound
initial quantities as in the deterministic case.
\end{proof}

\subsection{Assumption Verification}\label{app:pp_spg_verification}

First, we prove a technical lemma that will be useful to verify all assumptions connected to a non-degeneracy of reference policies.

\begin{lemma}\label{lem:log_min_prob_vs_span}
  Let $\pi,\mu \in \policies$ be two full-support policies and define $\pi_{\min}(x) = \min_{y \in \cY} \pi(y|x)$ and $\mu_{\min}(x) = \min_{y \in \cY} \mu(y|x)$. Then, the following bound holds
  \begin{align*}
    \forall x\in \cX: \max_{y \in \cY}\left|\log\frac{\pi(y|x)}{\mu(y|x)}\right| &\leq 2 \norm{\log \pi(x) - \log \mu(x)}_{\spann}\,, \\
    \E_{x \sim \rho}\left[ \max_{y \in \cY}\log^2\left(\frac{\pi(y|x)}{\mu(y|x)}\right) \right] &\leq 4 \norm{\log \pi - \log \mu}_{\spann,\rho}^2\,.
  \end{align*}
  Additionally, the following bound holds
  \[
    \E_{x \sim \rho}\left[ \log^2\left(\frac{1}{\pi_{\min}(x)}\right) \right] \leq 2\E_{x \sim \rho}\left[ \log^2\left(\frac{1}{\mu_{\min}(x)}\right) \right]+ 8 \norm{\log \pi - \log \mu}_{\spann,\rho}^2\,.
  \]
\end{lemma}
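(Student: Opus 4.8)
The plan is to reduce all three bounds to a single pointwise fact: the log-ratio $r_x(y) \triangleq \log(\pi(y|x)/\mu(y|x))$ \emph{straddles zero}, i.e.\ $\max_y r_x(y) \ge 0 \ge \min_y r_x(y)$. This holds because both $\pi(\cdot|x)$ and $\mu(\cdot|x)$ are probability vectors: since $\sum_y \mu(y|x)\,e^{r_x(y)} = \sum_y \pi(y|x) = 1 = \sum_y \mu(y|x)$, the quantity $e^{r_x(y)} - 1$ has zero mean under $\mu(\cdot|x)$, so it cannot be strictly positive (nor strictly negative) for all $y$; hence $r_x$ takes both a nonnegative and a nonpositive value.

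First I would establish the pointwise (first) inequality. Recalling that the span seminorm satisfies $\norm{v}_{\spann} = \tfrac12\bigl(\max_y v_y - \min_y v_y\bigr)$, the straddling property gives $\max_y r_x(y) \le \max_y r_x(y) - \min_y r_x(y) = 2\norm{r_x}_{\spann}$ and likewise $-\min_y r_x(y) \le 2\norm{r_x}_{\spann}$. Since $\max_y |r_x(y)| = \max\{\max_y r_x(y),\, -\min_y r_x(y)\}$, this yields $\max_y |r_x(y)| \le 2\norm{\log \pi(x) - \log \mu(x)}_{\spann}$, which is the first claim. The second claim then follows immediately by squaring and taking expectation over $x \sim \rho$, using the definition $\norm{\cdot}_{\spann,\rho}^2 = \E_{x\sim\rho}[\norm{\cdot}_{\spann}^2]$.

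For the third inequality I would write $\log(1/\pi_{\min}(x)) = \max_y \log(1/\pi(y|x))$ and decompose $\log(1/\pi(y|x)) = \log(1/\mu(y|x)) - r_x(y)$. Subadditivity of $\max$ then gives $\log(1/\pi_{\min}(x)) \le \log(1/\mu_{\min}(x)) + \max_y(-r_x(y)) \le \log(1/\mu_{\min}(x)) + 2\norm{r_x}_{\spann}$, where the last step reuses the first claim via $\max_y(-r_x(y)) = -\min_y r_x(y) \le \max_y|r_x(y)|$. Both summands are nonnegative (each $\log(1/\cdot)$ term is at least $\log Y > 0$, since the minimal mass is at most $1/Y$), so squaring with $(a+b)^2 \le 2a^2 + 2b^2$ and taking expectation over $\rho$ delivers the constants $2$ and $8$ in the statement.

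The only genuinely non-routine ingredient is the straddling-zero observation, which is precisely where the probability-simplex structure (equal normalization of $\pi$ and $\mu$) enters; once that is in place, all three bounds are elementary manipulations of the span seminorm, $\max/\min$ subadditivity, and the inequality $(a+b)^2 \le 2a^2 + 2b^2$.
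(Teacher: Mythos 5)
Your proof is correct and follows essentially the same route as the paper's: the span-equals-half-range identity combined with the observation that the log-ratio of two probability vectors must straddle zero, then squaring with $(a+b)^2 \le 2a^2+2b^2$ for the third bound. If anything, you are slightly more complete than the paper, which asserts the straddling property without justification, whereas you derive it from the equal normalization $\sum_y \mu(y|x)e^{r_x(y)} = 1 = \sum_y \mu(y|x)$; your max-subadditivity step in the third part is just the paper's $\ell_\infty$ triangle inequality in disguise, since $\|\log\pi(x)\|_\infty = \log(1/\pi_{\min}(x))$.
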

\begin{proof}
  First, we use the relation between the span seminorm and a range:
  \[
    \norm{\log \pi(x) - \log \mu(x)}_{\spann} = \frac{1}{2}\left(\max_{y \in \cY} \log \frac{\pi(y|x)}{\mu(y|x)} - \min_{y \in \cY} \log \frac{\pi(y|x)}{\mu(y|x)}\right)\,.
  \]
  We notice that for any pair of policies it holds $\max_{y \in \cY} \log \frac{\pi(y|x)}{\mu(y|x)} \geq 0$ and $\min_{y \in \cY} \log \frac{\pi(y|x)}{\mu(y|x)} \leq 0$, thus we have
  \[
    \max\left\{ \max_{y \in \cY} \log \frac{\pi(y|x)}{\mu(y|x)}, - \min_{y \in \cY} \log \frac{\pi(y|x)}{\mu(y|x)} \right\} \leq 2\norm{\log \pi(x) - \log \mu(x)}_{\spann}\,.
  \]
  Finally, we note that the left-hand side is equal to $\max_{y \in \cY} |\log \frac{\pi(y|x)}{\mu(y|x)}|$. Taking a square and expectation over $x \sim \rho$ concludes the first part of the proof.

  The second part of the proof follows from a bound 
  \[
    \norm{\log \pi(x)}_\infty \leq \norm{\log \pi(x) - \log \mu(x)}_{\infty} + \norm{\log \mu(x)}_\infty \leq 2 \norm{\log \pi(x) - \log \mu(x)}_{\spann} + \norm{\log \mu(x)}_\infty\,,
  \] 
  which follows from the triangle inequality and the first part of the lemma, squaring both sides, taking expectation over $x \sim \rho$, and using a bound $(a+b)^2 \leq 2a^2 + 2b^2$.
\end{proof}
In particular, this lemma allows us to show that if a policy $\pi_k$ generated by the PP--SPG algorithm satisfies $\norm{\log \pi_k - \log \pistar_\beta}_{\spann,\rho}^2 \leq C$ for some constant $C>0$, then the mixed anchor $\tpi_k$ for the next step satisfies Assumption~\ref{ass:reference_policy}. Next, we prove the result that connects the inexact solution to the inner problem to PP residuals needed to establish the convergence of the outer algorithm.

\subsection{Verification for Softmax Parametrization}

In this section, we show that context-free softmax parametrization satisfies Assumptions~\ref{ass:gradient_compatibility} and \ref{ass:parametrization_regularity}.

\begin{lemma}\label{lem:context_free_minprob_control}
  Consider a context-free setting and assume that $\norm{\nabla V_{k}(\pi_{k+1}, \pi_{k+1})}_{\spann}^2  \leq \varepsilon_{\pp}$ for all $k < K$. Then for the next step $K$ it holds
  \[
    \log(1/\tpi_{K,\min}) \leq \log(1/\piref_{\min}) +  \frac{1+2\sqrt{\varepsilon_{\pp}}}{(1+\eta)\beta}\,,
  \]
  or, equivalently,
  \[
    \frac{1}{\tpi_{K,\min}} \leq \frac{1}{\piref_{\min}} \cdot \exp\left( \frac{1+2\sqrt{\varepsilon_{\pp}}}{(1+\eta)\beta}\right)\,.
  \]
\end{lemma}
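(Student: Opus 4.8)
The plan is to write the anchor $\tpi_K$ explicitly as the geometric mixture from Lemma~\ref{lem:inner_game_equiv}, namely $\tpi_K(y)\propto \piref(y)^{\eta/(1+\eta)}\pi_K(y)^{1/(1+\eta)}$, and to isolate the contribution of $\piref$ from the small deviation of $\pi_K$ away from $\piref$. Writing $Z\triangleq\sum_{y\in\cY}\piref(y)^{\eta/(1+\eta)}\pi_K(y)^{1/(1+\eta)}$ for the normalizing constant and substituting $\log\pi_K(y)=\log\piref(y)+\Delta(y)$ with $\Delta(y)\triangleq\log(\pi_K(y)/\piref(y))$, the exponents $\eta/(1+\eta)$ and $1/(1+\eta)$ sum to one and collapse, yielding the clean identity
\[
  -\log\tpi_K(y) = -\log\piref(y) - \frac{1}{1+\eta}\Delta(y) + \log Z\,.
\]

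Next I would dispose of the partition function: by Hölder's inequality with conjugate exponents $\tfrac{1+\eta}{\eta}$ and $1+\eta$ (equivalently, by weighted AM--GM with weights $\tfrac{\eta}{1+\eta},\tfrac{1}{1+\eta}$), one gets $Z\le\bigl(\sum_y\piref(y)\bigr)^{\eta/(1+\eta)}\bigl(\sum_y\pi_K(y)\bigr)^{1/(1+\eta)}=1$, so $\log Z\le 0$ and can be dropped in an upper bound. Evaluating the identity at a minimizing action $y^\star\in\argmin_{y}\tpi_K(y)$ and bounding $-\log\piref(y^\star)\le\max_y(-\log\piref(y))=\log(1/\piref_{\min})$ together with $-\tfrac{1}{1+\eta}\Delta(y^\star)\le\tfrac{1}{1+\eta}\max_y|\Delta(y)|$ gives
\[
  \log(1/\tpi_{K,\min}) \le \log(1/\piref_{\min}) + \frac{1}{1+\eta}\max_{y\in\cY}\Bigl|\log\tfrac{\pi_K(y)}{\piref(y)}\Bigr|\,.
\]

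It then remains to control the range of the log-ratio. By the first bound of Lemma~\ref{lem:log_min_prob_vs_span} (its context-free instance), $\max_y|\Delta(y)|\le 2\,\norm{\log\pi_K-\log\piref}_{\spann}$. The hypothesis $\norm{\nabla V_k(\pi_{k+1},\pi_{k+1})}_{\spann}^2\le\varepsilon_{\pp}$ for all $k<K$ is exactly the assumption of Lemma~\ref{lem:pp_iterate_uniform_regularity}, whose span bound yields $\norm{\log\pi_K-\log\piref}_{\spann}\le(\tfrac12+\sqrt{\varepsilon_{\pp}})/\beta$ in the context-free setting, where $\norm{\cdot}_{\spann,\rho}=\norm{\cdot}_{\spann}$. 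Combining, $\tfrac{1}{1+\eta}\max_y|\Delta(y)|\le\tfrac{2}{1+\eta}\cdot\tfrac{1/2+\sqrt{\varepsilon_{\pp}}}{\beta}=\tfrac{1+2\sqrt{\varepsilon_{\pp}}}{(1+\eta)\beta}$, which is precisely the claimed additive-in-log inequality; exponentiating both sides produces the equivalent multiplicative form.

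The computation is essentially mechanical once the identity is in place; the only mild subtlety, and the step I would be most careful about, is the reduction to the two cited lemmas. Concretely, I would verify that the span-seminorm bound of Lemma~\ref{lem:pp_iterate_uniform_regularity} is available here without invoking the $\beta\le 1$ restriction used elsewhere in that lemma (it is, since that bound follows purely by unrolling the recursion $D_{k+1}\le\tfrac{1}{1+\eta}D_k+\tfrac{\eta}{\beta(1+\eta)}(\tfrac12+\sqrt{\varepsilon_{\pp}})$ from $D_0=0$), and that its $\norm{\cdot}_{\spann,\rho}$ conclusion correctly specializes to the context-free $\norm{\cdot}_{\spann}$ appearing in the present statement.
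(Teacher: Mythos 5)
Your proof is correct and follows essentially the same route as the paper's: both arguments reduce the claim to the intermediate inequality $\log(1/\tpi_{K,\min}) \le \log(1/\piref_{\min}) + \tfrac{2}{1+\eta}\norm{\log \pi_K - \log \piref}_{\spann}$ using the geometric-mixture structure of $\tpi_K$ and the log-ratio bound of Lemma~\ref{lem:log_min_prob_vs_span}, and then conclude with the span bound of Lemma~\ref{lem:pp_iterate_uniform_regularity}. The only cosmetic difference is in the middle step --- you dispose of the normalizing constant $Z$ explicitly via H\"older's inequality and evaluate at the minimizing action, whereas the paper applies Lemma~\ref{lem:log_min_prob_vs_span} to the pair $(\piref, \tpi_K)$ together with a reverse triangle inequality on sup-norms, letting the span seminorm's invariance to additive constants absorb $\log Z$ automatically --- and your careful side remark is accurate: the span bound in Lemma~\ref{lem:pp_iterate_uniform_regularity} follows purely from unrolling the recursion from $D_0=0$ and does not use the $\beta \le 1$ hypothesis, and in the context-free setting $\norm{\cdot}_{\spann,\rho}$ indeed coincides with $\norm{\cdot}_{\spann}$.
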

\begin{proof}
  We start from the first inequality of Lemma~\ref{lem:log_min_prob_vs_span} with $\pi=\piref$ and $\mu = \tpi_k$
  \[
    \norm{\log \piref - \log \tpi_k}_\infty \leq 2 \norm{\log \piref - \log \tpi_k}_{\spann}\,.
  \]
  For the left-hand side, we apply a reverse triangle inequality and a fact $\norm{\log \tpi_k}_\infty = \log (1/\tpi_{k,\min})$. For the right-hand side, we use a definition of a geometric mixture. Overall, we have
  \[
    \log(1/\tpi_{k,\min}) \leq \log(1/\piref_{\min}) + \frac{2}{1+\eta} \norm{\log \pi_k - \log \piref}_{\spann}\,.
  \]
  Next, we use Lemma~\ref{lem:pp_iterate_uniform_regularity} to bound the second term. Thus, we have
  \[
    \log(1/\tpi_{k,\min}) \leq \log(1/\piref_{\min}) + \frac{1+2\sqrt{\varepsilon_{\pp}}}{(1+\eta)\beta}\,.
  \]
  Exponentiating both sides concludes the proof.
\end{proof}

Also, we will need the following technical lemma.
\begin{lemma}\label{lem:Hx_lower_bound}
    Let $x \in \R^d$, then for any $\pi \in \simplex_d$
    \[
        \norm{ H(\pi) x }_{\spann} \geq \pi_{\min} \cdot \norm{x}_{\spann}\,,
    \]
    where $\pi_{\min} = \min_{i \in [d]} \pi(i)$ and $H(\pi) = \diag(\pi) - \pi \pi^\top$.
\end{lemma}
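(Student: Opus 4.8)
The plan is to reduce everything to the explicit coordinatewise form of $H(\pi)x$ together with the elementary range representation of the span seminorm. First I would record the structural fact $H(\pi)\bOne = \pi - \pi(\pi^\top\bOne) = 0$, so that $H(\pi)$ annihilates the constant direction; this makes both sides of the claimed inequality invariant under $x \mapsto x + c\bOne$, matching the invariance of $\norm{\cdot}_{\spann}$, and shows no extra normalization of $x$ is needed. Writing $\bar x \triangleq \pi^\top x$ for the $\pi$-weighted mean, a one-line computation gives the key identity $(H(\pi)x)_i = \pi_i x_i - \pi_i\bar x = \pi_i(x_i - \bar x)$.

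Next I would use the standard representation $\norm{v}_{\spann} = \tfrac12\bigl(\max_i v_i - \min_i v_i\bigr)$, valid since the optimal shift in $\inf_{c}\norm{v+c\bOne}_\infty$ centers the range. Set $y \triangleq H(\pi)x$, pick $i^\star \in \argmax_i x_i$ and $j^\star \in \argmin_i x_i$, and abbreviate $M = x_{i^\star}$, $m = x_{j^\star}$. Because $\bar x$ is a convex combination of the $x_i$, we have $m \le \bar x \le M$, hence $M - \bar x \ge 0$ and $m - \bar x \le 0$.

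Then I would lower-bound the range of $y$ by evaluating at these two coordinates. For the maximum, $\max_i y_i \ge y_{i^\star} = \pi_{i^\star}(M - \bar x) \ge \pi_{\min}(M - \bar x)$, using $\pi_{i^\star} \ge \pi_{\min}$ and $M - \bar x \ge 0$. For the minimum, $\min_i y_i \le y_{j^\star} = \pi_{j^\star}(m - \bar x) \le \pi_{\min}(m - \bar x)$; here the inequality flips because $m - \bar x \le 0$, so $(\pi_{j^\star} - \pi_{\min})(m - \bar x) \le 0$. Subtracting, $\max_i y_i - \min_i y_i \ge \pi_{\min}\bigl((M - \bar x) - (m - \bar x)\bigr) = \pi_{\min}(M - m)$, and dividing by two yields $\norm{H(\pi)x}_{\spann} \ge \pi_{\min}\norm{x}_{\spann}$.

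The only place requiring care — the main obstacle, modest as it is — is the sign bookkeeping in the bound on $\min_i y_i$: since $m - \bar x$ is nonpositive, replacing the coefficient $\pi_{j^\star}$ by the smaller $\pi_{\min}$ produces an \emph{upper} bound rather than a lower bound, so the direction of each monotonicity step must be tracked against the sign of $x_i - \bar x$. Everything else is routine, and in particular the argument never needs $\langle x,\bOne\rangle = 0$ thanks to the constant-invariance noted at the outset.
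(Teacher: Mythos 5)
Your proof is correct and follows essentially the same route as the paper's: both rest on the identity $H(\pi)x = \pi \odot (x - \bar{x}\bOne)$ with $\bar{x} = \pi^\top x$, the half-range representation of $\norm{\cdot}_{\spann}$, and the evaluation of the range at the coordinates achieving $\max_i x_i$ and $\min_i x_i$, with the same sign bookkeeping when replacing $\pi_{i^\star}, \pi_{j^\star}$ by $\pi_{\min}$. Your observations that the argument is invariant under $x \mapsto x + c\bOne$ and never needs $\langle x, \bOne\rangle = 0$ (nor, in fact, full support of $\pi$, which the paper assumes without loss of generality) are a minor but accurate refinement of the paper's presentation.
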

\begin{proof}
    Without loss of generality, we assume that $\pi(i) > 0$ for any $i\in [d]$, otherwise the statement trivially holds.

    First, we notice that $\norm{x}_{\spann} = 1/2 \cdot \left(\max_{i} x_i - \min_i x_i\right)$ and define $i_{\max} = \argmax_{i} x_i$ and $i_{\min} = \argmin_{i} x_i$.
    
    Then, we notice that $H(\pi) x = \pi \odot (x - \mu \bOne)$, where $\mu = \pi^\top x$ is a mean of $x$ under $\pi$. Since $\pi$ is a positive measure, we have $x_{i_{\max}} \geq \mu$ and $x_{i_{\min}} \leq \mu$, therefore
    \begin{align*}
        \max_{i} \pi(i) (x_i - \mu) &\geq \pi(i_{\max}) \cdot \underbrace{(x_{i_{\max}} - \mu)}_{\geq 0} \geq \pi_{\min} \cdot (x_{i_{\max}} - \mu)\,\\
        \min_{i} \pi(i) (x_i - \mu) &\leq \pi(i_{\min}) \underbrace{(x_{i_{\min}} - \mu)}_{\leq 0} \leq \pi_{\min} \cdot (x_{i_{\min}} - \mu)\,,
    \end{align*}
    thus
    \begin{align*}
        \norm{ H(\pi) x }_{\spann} &= (1/2) \cdot \left( \max_{i} \pi(i) (x_i - \mu) - \min_{i} \pi(i) (x_i - \mu)\right) \\
        &\geq (1/2) \pi_{\min} (x_{i_{\max}} - \mu - x_{i_{\min}} + \mu) = \pi_{\min} \norm{x}_{\spann}.
    \end{align*}
\end{proof}

\begin{corollary}[Convergence guarantees for Softmax parametrization]\label{cor:final_rates_softmax}
  Let $\theta \mapsto \pi_\theta$ be a context-free softmax parametrization and $(\cT_k)_{k \geq 0}$ a sequence of improvement operators $\cT_k \equiv \cT^{\piref}_{\tau_{k,0}}$ for $\tau_{k,0}$ equal to $\tau_0$ with $\tpi=\tpi_k$, as defined in Lemma~\ref{lem:improvement_projection_parameters} with $\nu = \piref$ and $\tpi = \tpi_k$. Assume $\beta \leq 1$. Fix $\varepsilon_{\rin} \in (0, (\tau_0 \piref_{\min})^2/(2 L_{\beta,\eta}))$ as a desired accuracy for approximate solving of the inner problem, where $\tau_0$ and $L_{\beta,\eta}$ are defined below.
  
  Then the following statements hold.
  \begin{itemize}[leftmargin=1em]
    \item[(i)] Assumption~\ref{ass:gradient_compatibility} holds with $C_{\gradcomp} = 1/(\tau_0 \cdot \piref_{\min})^2$ and $\varepsilon_{\gradcomp} = 0$, where 
    \[
      \tau_0 \triangleq \min\left\{ \exp\left( - \frac{3+\eta}{\beta(1+\eta)}\right) , (1 + 1/\piref_{\min})^{-1} \right\}\,;
    \]
    \item[(ii)] There exists a truncation threshold $M_k > 0$ such that Assumption~\ref{ass:parametrization_regularity} holds with 
    \begin{align*}
      G &= 1,\quad L_{\beta,\eta} = \frac{5}{2}\left(1 + \beta (1+1/\eta) \log(1/\piref_{\min}) +  3/\eta\right) + \beta(1+1/\eta)(4 + \log |\cY|)\,, \\
      \pl_{\beta,\eta} &= \beta (1 + 1/\eta) \exp\left(- \frac{2}{\beta (1+1/\eta)}\right) \times \\
      &\qquad \times \exp\left( \min\left\{ - \frac{6}{\beta(1+\eta)}, 2\log\left( \frac{\piref_{\min}}{1+\piref_{\min}}\right)\right\}\right) \cdot (\piref_{\min})^2\,,\\ 
      \epspl &= 0\,,\ \qquad\qquad \varepsilon_{\grad} = \sqrt{\varepsilon_{\rin}}, \qquad\qquad \sigma^2_{\beta,\eta} = D_{\beta,\eta}^2,\qquad\qquad v^2_{\beta,\eta} = 6D_{\beta,\eta}^2\,,
    \end{align*}
    where 
    \[
      D_{\beta,\eta}^2 = 2\left(1 + 4\beta (1+1/\eta) \log\left( \frac{4\sqrt{2} \cdot \beta (1+1/\eta)}{\piref_{\min}} \right) + 12/\eta + 2\beta(1+1/\eta) \log(1/\varepsilon_{\rin})\right)^2\,,
    \]
    and $\eta$ is chosen as solution to $\beta(1+1/\eta) \cdot \pl_{\beta,\eta} = 1$.
    \item[(iii)] Algorithm~\ref{alg:pp_spg} outputs $\cO(\varepsilon_{\rin})$-optimal regularized policy after $K = \tcO(1)$ outer iterations, $T_k = \tcO(1/\varepsilon_{\rin})$ inner iterations, using in total $\tcO(1/\varepsilon_{\rin}^2)$ samples, ignoring constants depending on $\piref_{\min}, \pistar_{\beta,\min}, \beta,\eta$ and logarithmic terms in $\varepsilon_{\rin}$ and the confidence level $\delta$.
  \end{itemize}
  
\end{corollary}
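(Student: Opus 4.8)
The plan is to verify the two structural assumptions (i)--(ii) with the stated uniform constants and then feed them into the stochastic PP--SPG bound of Corollary~\ref{cor:pp_spg_stochastic} to read off the end-to-end rates in (iii). For part (i), I would exploit the explicit form of the softmax Jacobian. Writing $H(\pi_\theta)=\diag(\pi_\theta)-\pi_\theta\pi_\theta^\top$, the chain rule gives $\nabla_\theta J_k(\theta;\pi_\theta)=H(\pi_\theta)\,\nabla_\pi V_k(\pi_\theta,\pi_\theta)$ (the additive constant $+1$ appearing in the policy-space gradient is annihilated by $H$, since $H\bOne=0$). Because $H$ is symmetric and $\norm{\cdot}_{\spann}\le\norm{\cdot}_2$, Lemma~\ref{lem:Hx_lower_bound} yields
\[
  \norm{\nabla_\theta J_k(\theta;\pi_\theta)}_2
  \;\ge\; \norm{H(\pi_\theta)\,\nabla_\pi V_k(\pi_\theta,\pi_\theta)}_{\spann}
  \;\ge\; \pi_{\theta,\min}\,\norm{\nabla_\pi V_k(\pi_\theta,\pi_\theta)}_{\spann}.
\]
Since $\theta\in\Theta_{\cT_k}$ with $\cT_k=\cT^{\piref}_{\tau_{k,0}}$, Lemma~\ref{lem:improvement_projection_parameters} forces $\pi_\theta(y)\ge\tau_{k,0}\,\piref(y)\ge\tau_0\,\piref_{\min}$, where $\tau_0$ is the stated uniform lower bound on $\tau_{k,0}$ obtained by inserting the uniform span control of Lemma~\ref{lem:pp_iterate_uniform_regularity} (valid in the regime $\sqrt{\varepsilon_{\pp}}\le1$) into the definition of $\tau_{k,0}$. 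Squaring and rearranging gives Assumption~\ref{ass:gradient_compatibility} with $C_{\gradcomp}=1/(\tau_0\piref_{\min})^2$ and $\varepsilon_{\gradcomp}=0$.

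For part (ii), I would apply the single-game verification of Proposition~\ref{prop:softmax_verification} at each outer step $k$ with anchor $\tpi=\tpi_k$ and reference $\nu=\piref$, and then make every constant independent of $k$. The only $k$-dependence enters through $\log(1/\tpi_{k,\min})$ (in the smoothness constant of Lemma~\ref{lem:softmax_smoothness} and in the clipping/noise constant of Lemma~\ref{lem:noise_properties_softmax}) and through $\norm{\log\tpi_k-\log\piref}_{\spann}$ (in the PL constant of Corollary~\ref{cor:uniform_pl_softmax}). Lemma~\ref{lem:context_free_minprob_control} bounds the former by $\log(1/\piref_{\min})+\tfrac{1+2\sqrt{\varepsilon_{\pp}}}{(1+\eta)\beta}$, while the geometric-mixture identity for $\tpi_k$ together with Lemma~\ref{lem:pp_iterate_uniform_regularity} bounds the latter; with $\lambda=\beta(1+1/\eta)$ and the simplification $\lambda/((1+\eta)\beta)=1/\eta$ these reproduce exactly the stated $L_{\beta,\eta}$ (the $3/\eta$ term), $\pl_{\beta,\eta}$, and $D_{\beta,\eta}$ (the $12/\eta$ term). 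Choosing the clipped pairwise estimator ($k=\infty$ branch) at bias level $\varepsilon_{\grad}=\sqrt{\varepsilon_{\rin}}$ discharges Assumption~\ref{ass:gradient_noise} with $\sigma^2_{\beta,\eta}=D_{\beta,\eta}^2$, $v^2_{\beta,\eta}=6D_{\beta,\eta}^2$, and $\epspl=0$, while Assumption~\ref{ass:improvement} is immediate from Lemma~\ref{lem:improvement_projection_parameters}. Finally, I would fix $\eta$ as a root of the PL-compatibility identity $\beta(1+1/\eta)\,\pl_{\beta,\eta}=1$, using that $\beta(1+1/\eta)\pl_{\beta,\eta}\to\infty$ as $\eta\to0^+$, so a root exists and yields $\beta(1+1/\eta)\pl_{\beta,\eta}\ge G^2=1$.

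With (i)--(ii) in hand, part (iii) follows by instantiating Corollary~\ref{cor:pp_spg_stochastic}. Since $\epspl=0$ and $\varepsilon_{\gradcomp}=0$ for this softmax family and $\varepsilon_{\grad}^2=\varepsilon_{\rin}$, the residual floor evaluates to $\varepsilon_{\pp}=2C_{\gradcomp}L_{\beta,\eta}\bigl(1+\tfrac{90}{7}\log(K\rme/\delta)/\pl_{\beta,\eta}\bigr)\varepsilon_{\rin}=\tcO(\varepsilon_{\rin})$, and the smallness condition $\varepsilon_{\rin}<(\tau_0\piref_{\min})^2/(2L_{\beta,\eta})$ controls the leading factor so that $\varepsilon_{\pp}$ stays in the $\sqrt{\varepsilon_{\pp}}\le1$ regime used above. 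The outer contraction $(1+\eta/2)^{-k}$ in Proposition~\ref{prop:approx_pp_convergence_context} then drives the decaying terms below $\varepsilon_{\rin}$ after $K=\tcO(\log(1/\varepsilon_{\rin}))=\tcO(1)$ steps, giving $\subopt_\beta(\pi_K)=\cO(\varepsilon_{\rin})$. Because $\varepsilon_{\grad}=\sqrt{\varepsilon_{\rin}}$ makes the clipping threshold $M_k=\Theta(\log(1/\varepsilon_{\rin}))$ and hence $\sigma^2_{\beta,\eta},v^2_{\beta,\eta}=\tcO(\log^2(1/\varepsilon_{\rin}))$, the inner length from Corollary~\ref{cor:pp_spg_stochastic} is dominated by its first term, $T_k=\tcO(v^2_{\beta,\eta}/\varepsilon_{\rin})=\tcO(1/\varepsilon_{\rin})$; summing the growing batches $B_t=\Theta(t/\pl_{\beta,\eta})$ gives $\sum_{t<T_k}B_t=\tcO(T_k^2)=\tcO(1/\varepsilon_{\rin}^2)$ per outer step, and multiplying by $K=\tcO(1)$ yields the claimed total $\tcO(1/\varepsilon_{\rin}^2)$.

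The main obstacle is the \emph{circular dependence} between the uniform regularity constants and the regularity of the iterates: $\tau_0$, $L_{\beta,\eta}$, $\pl_{\beta,\eta}$, and $D_{\beta,\eta}$ all depend on $\tpi_k$, which depends on $\pi_k$, whose span-norm control in turn needs the PP residual bounds that themselves rely on those very constants. I would break this loop exactly as in the deterministic and stochastic PP--SPG corollaries: run an induction over $k$ in which the a priori span bound of Lemma~\ref{lem:pp_iterate_uniform_regularity} (available as soon as $\varepsilon_{\pp}$ is below an absolute constant) pins down all constants uniformly, after which Proposition~\ref{prop:spg_selfplay_stochastic} together with Lemma~\ref{lem:inner_residual_vs_subopt} re-establishes the residual bound at step $k$, making the induction---and hence the working hypothesis $\sqrt{\varepsilon_{\pp}}\le1$---self-consistent. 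The delicate part is the constant bookkeeping that verifies this self-consistency actually closes under the single scalar smallness condition on $\varepsilon_{\rin}$, rather than being silently assumed.
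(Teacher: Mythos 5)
Your proposal is correct and follows essentially the same route as the paper's proof: the same softmax-Jacobian lower bound via Lemma~\ref{lem:Hx_lower_bound} for (i), the same uniform-constant verification for (ii) via Lemma~\ref{lem:context_free_minprob_control} and the geometric-mixture span bound from Lemma~\ref{lem:pp_iterate_uniform_regularity} (including the induction over $k$ that breaks the circularity between the constants and the residual bound, closed by the smallness condition on $\varepsilon_{\rin}$), and the same instantiation of Corollary~\ref{cor:pp_spg_stochastic} for (iii). One minor point: existence of a root of $\beta(1+1/\eta)\,\pl_{\beta,\eta}=1$ also requires the $\eta\to\infty$ limit of the left-hand side to fall strictly below $1$ --- which is precisely where the paper invokes $\beta\le 1$ --- whereas you cite only the $\eta\to 0^+$ blow-up.
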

\begin{proof}
  To prove the statements, we also need to prove that for all $k$ it holds $\norm{ \nabla V_{k}(\pi_{k+1}, \pi_{k+1})}_{\spann,\rho}^2 \leq 1$.
  We prove this statement by induction over steps $k$. We assume that for all $k' < k$ it holds $\norm{ \nabla V_{k'}(\pi_{k'+1}, \pi_{k'+1})}_{\spann,\rho}^2 \leq 1$ for all $k' < k$. This assumption allows us to use Lemma~\ref{lem:context_free_minprob_control} to control the minimum probability of the mixed anchor policy $\tpi_k$.

  Next, we notice that under our improvement operator choice, for any $k$ and any $\theta \in \Theta_{\cT_k}$ it holds $\pi_{\theta,\min} \geq \tau_{k,0} \cdot \piref_{\min}$, where $\tau_{k,0}$ is defined in Lemma~\ref{lem:improvement_projection_parameters} with $\nu = \piref$, $\tpi = \tpi_k$, and $\lambda = \beta(1+1/\eta)$. In particular, we have the following expression for $\tau_{k,0}$:
  \[
    \tau_{k,0} = \min\Bigl\{
        \exp\bigl(-\tfrac{1}{\lambda} - 2 \norm{\log \tpi_k - \log \piref}_{\spann}\bigr),\;
        (1 + 1/\piref_{\min})^{-1}
      \Bigr\}\,.
  \]
  To bound this constant away from zero uniformly over $k$, we need to show that $\norm{\log \tpi_k - \log \piref}_{\spann}$ is uniformly bounded over $k$. For that, we start from a definition of geometric mixture and then apply Lemma~\ref{lem:pp_iterate_uniform_regularity}:
  \[
    \norm{\log \tpi_k - \log \piref}_{\spann} = \frac{1}{1+\eta} \norm{\log \pi_k - \log \piref}_{\spann} \leq \frac{1/2 + \sqrt{\varepsilon_{\pp}}}{\beta(1+\eta)} \leq \frac{3}{2\beta(1+\eta)}\,.
  \]
  Thus, we have 
  \begin{align*}
    \tau_{k,0} &\geq \min\left\{ \exp\left( - \frac{3 + \eta}{\beta(1+\eta)} \right), (1 + 1/\piref_{\min})^{-1} \right\}  \triangleq \tau_0 > 0\,,
  \end{align*}
  uniformly over $k$.
  
  Next, we can prove the first statement of the lemma. Notice that $\nabla J^{\tpi_k}(\theta; \pi) = H(\pi_\theta) \cdot\nabla V_k(\pi_\theta;\pi)$, where $H(\pi_\theta) = \diag(\pi_\theta) - \pi_\theta \pi_\theta^\top$ is the parametrization Jacobian matrix. Thus, we can apply Lemma~\ref{lem:Hx_lower_bound} to obtain for any $\theta \in \Theta_{\cT_k}$
  \[
    \norm{\nabla J^{\tpi_k}(\theta; \pi)}_{2} \geq \norm{\nabla J^{\tpi_k}(\theta; \pi)}_{\spann} \geq \pi_{\theta,\min} \cdot \norm{\nabla V_k(\pi_\theta;\pi)}_{\spann} \geq \tau_0 \cdot \piref_{\min} \cdot \norm{\nabla V_k(\pi_\theta;\pi)}_{\spann}\,,
  \]
  thus concluding the proof of the first statement with $C_{\gradcomp} = 1/(\tau_0 \cdot \piref_{\min})^2$ and $\varepsilon_{\gradcomp} = 0$.

  The second statement follows from Proposition~\ref{prop:softmax_verification}, a bound on $\log(1/\tpi_{k,\min})$ which follows from Lemma~\ref{lem:context_free_minprob_control}, and taking $\varepsilon_{\grad} = \sqrt{\varepsilon_{\rin}}$. In fact, a constant $G=1$ does not depend on $k$; the bound on a smoothness coefficient follows using $\lambda = \beta(1+1/\eta)$:
  \begin{align*}
    L_{\tpi,\lambda} &= \frac{5}{2}\bigl(1 + \lambda \log(1/\tpi_{\min})\bigr) + \lambda(4 + \log |\cY|) \\
    &\leq \frac{5}{2}\left(1 + \beta (1+1/\eta) \log(1/\piref_{\min}) +  \frac{1+2\sqrt{\varepsilon_{\pp}}}{\eta}\right) + \beta(1+1/\eta)(4 + \log |\cY|) \\
    &\leq  \frac{5}{2}\left(1 + \beta (1+1/\eta) \log(1/\piref_{\min}) +  3/\eta \right) + \beta(1+1/\eta)(4 + \log |\cY|) \triangleq L_{\beta,\eta}\,.
  \end{align*}
  After that bound, we can prove the induction step to show that $\norm{ \nabla V_{k}(\pi_{k+1}, \pi_{k+1})}_{\spann,\rho}^2 \leq 1$ for all $k$. In fact, we have
  \begin{align*}
    \norm{ \nabla V_{k}(\pi_{k+1}, \pi_{k+1})}_{\spann,\rho}^2 &\leq 2 C_{\gradcomp} L_{\beta,\eta} \cdot \varepsilon_{\rin} \leq 2 \cdot \frac{1}{(\tau_0 \cdot \piref_{\min})^2} \cdot L_{\beta,\eta} \cdot \varepsilon_{\rin} = \varepsilon_{\pp} \leq 1\,,
  \end{align*}  
  where the last inequality holds for our choice of $\varepsilon_{\rin}$ small enough.

  Next, for bound on $\pl_{\tpi,\lambda}$ we recall that under our choice of $\nu =\piref$
  \[
    \pl_{\tpi,\lambda} = \lambda\rme^{-2/\lambda} \cdot c_{\piref}^2\,, 
\] 
    for $c_{\piref} \triangleq \exp\left( \min\left\{ - 2\norm{\log \tpi_k - \log \piref}_{\spann}, \log( \piref_{\min}/(1+\piref_{\min}))\right\}\right) \cdot \piref_{\min}$, where we repeat our calculations for $\norm{\log \tpi_k - \log \piref}_{\spann}$ to obtain
  \[
    c_{\piref} \geq \exp\left( \min\left\{ - \frac{3}{\beta(1+\eta)}, \log( \piref_{\min}/(1+\piref_{\min}))\right\}\right) \cdot \piref_{\min} > 0\,,
  \]
  and $\pl_{\tpi,\lambda} \geq \pl_{\beta,\eta} > 0$ uniformly over $k$. Next, the bound on $\sigma^2_{\tpi,\lambda}$ and $v^2_{\tpi,\lambda}$ again follows automatically from a bound on $\tpi_{k,\min}$, which depends on the following quantity
  \begin{align*}
    D_{\tpi,\lambda}^2(\sqrt{\varepsilon_{\rin}}) &=  2\left(1 + 4\lambda \log\left( \frac{2\sqrt{2} \cdot \lambda (1 + \tpi_{\min})}{\tpi_{\min}} \right) + 2\lambda \log(1/\varepsilon_{\rin})\right)^2  \\
    &\leq 2\bigg(1 + 4\beta (1+\eta^{-1}) \log\bigg( \frac{4\sqrt{2} \cdot \beta (1+1/\eta)}{\piref_{\min}} \bigg) + \frac{12}{\eta} + 2\beta(1+\eta^{-1}) \log\bigg(\frac{1}{\varepsilon_{\rin}}\bigg)\bigg)^2, 
  \end{align*}
  implying that the indicated bound on $\sigma^2_{\tpi,\lambda}$ and $v^2_{\tpi,\lambda}$ hold uniformly over $k$. To conclude the second statement of the lemma, we need to choose $\eta > 0$ such that $\beta(1+1/\eta) \cdot \pl_{\beta,\eta} = 1$. In fact, such $\eta$ exists since the function $\eta \mapsto \beta(1+1/\eta) \cdot \pl_{\beta,\eta}$ is continuous and decreasing on $\R_{>0}$, with limits $\lim_{\eta \to 0} \beta(1+1/\eta) \cdot \pl_{\beta,\eta} = +\infty$ and 
  \[
    \lim_{\eta \to +\infty} \beta(1+1/\eta) \cdot \pl_{\beta,\eta} = \beta \exp(-2/\beta) \cdot \exp\left( \min\left\{ -6/\beta, 2\log\left( \frac{\piref_{\min}}{1+\piref_{\min}}\right)\right\}\right) \cdot (\piref_{\min})^2 < 1
  \]
  under the choice $\beta \leq 1$.

  Finally, the third statement follows from Corollary~\ref{cor:pp_spg_stochastic} noticing that our constants depend on $\varepsilon_{\rin}$ only logarithmically and polynomially on $\piref_{\min}, \beta,\eta$.
\end{proof}

\subsection{Verification for Compatible Fisher Non-Degenerate Parametrization}

In this section, we show that along the iterations of Algorithm~\ref{alg:pp_spg}, Assumptions~\ref{ass:reference_policy}--\ref{ass:compatible} hold uniformly over $k$. In particular, Assumptions~\ref{ass:bounded_score}, \ref{ass:fisher_matrix}, and~\ref{ass:compatible} do not depend on the target policy, thus they cannot be disturbed by iterates. 

\begin{lemma}\label{lem:v_pi_k_control}
  Assume that $\norm{\nabla V_{k}(\pi_{k+1}, \pi_{k+1})}_{\spann,\rho}^2  \leq \varepsilon_{\pp}$ for all $k < K$. Then Assumption~\ref{ass:reference_policy}-(i) holds for the next step $K$ with 
  \[
    V_{\tpi_K} \leq 2 \E_{x \sim \rho}\left[ \log^2(1/\piref_{\min}(x)) \right] + \frac{2(1+2\sqrt{\varepsilon_{\pp}})^2}{\beta^2 (1 + \eta)^2}\,.
  \]
\end{lemma}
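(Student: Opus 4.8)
The plan is to reduce the log-moment of the mixed anchor $\tpi_K$ to the corresponding moment of $\piref$ plus a span-seminorm correction, and then control that correction using the already-established uniform regularity of the proximal-point iterates. First I would invoke Lemma~\ref{lem:log_min_prob_vs_span} with the choice $\pi=\tpi_K$ and $\mu=\piref$, which directly yields
\[
  \E_{x\sim\rho}\!\left[\log^2\!\Bigl(\tfrac{1}{\tpi_{K,\min}(x)}\Bigr)\right]
  \le 2\,\E_{x\sim\rho}\!\left[\log^2\!\Bigl(\tfrac{1}{\piref_{\min}(x)}\Bigr)\right]
  + 8\,\|\log\tpi_K-\log\piref\|_{\spann,\rho}^2 .
\]
This isolates all of the dependence on the current iterate inside the single span term, reducing the claim to a bound on $\|\log\tpi_K-\log\piref\|_{\spann,\rho}^2$.

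Next I would exploit the geometric-mixture structure of $\tpi_K$ from Lemma~\ref{lem:inner_game_equiv}. Writing $\log\tpi_K(y|x)=\tfrac{\eta}{1+\eta}\log\piref(y|x)+\tfrac{1}{1+\eta}\log\pi_K(y|x)-\log Z_K(x)$ for a normalizer $Z_K(x)$ that is constant in $y$, and using that the span seminorm annihilates any $y$-constant shift, I obtain the contextwise identity
\[
  \|\log\tpi_K(x)-\log\piref(x)\|_{\spann}
  = \tfrac{1}{1+\eta}\,\|\log\pi_K(x)-\log\piref(x)\|_{\spann},
\]
hence $\|\log\tpi_K-\log\piref\|_{\spann,\rho}^2=(1+\eta)^{-2}\|\log\pi_K-\log\piref\|_{\spann,\rho}^2$. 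The only subtlety to check is precisely this span-invariance of the additive normalizer, which is immediate from the definition of $\|\cdot\|_{\spann}$.

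Finally, the hypothesis $\|\nabla V_k(\pi_{k+1},\pi_{k+1})\|_{\spann,\rho}^2\le\varepsilon_{\pp}$ for all $k<K$ lets me apply Lemma~\ref{lem:pp_iterate_uniform_regularity}, which gives $\|\log\pi_K-\log\piref\|_{\spann,\rho}\le(\tfrac12+\sqrt{\varepsilon_{\pp}})/\beta$. Substituting this into the previous two displays produces $8(1+\eta)^{-2}(\tfrac12+\sqrt{\varepsilon_{\pp}})^2/\beta^2=2(1+2\sqrt{\varepsilon_{\pp}})^2/(\beta^2(1+\eta)^2)$, which matches the claimed coefficient and lets me set $V_{\tpi_K}$ to the stated value, verifying Assumption~\ref{ass:reference_policy}-(i) at step $K$. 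There is no genuine obstacle here: the argument is a short chain of substitutions, so the only point requiring care is bookkeeping of the constants (the factor $8$ from Lemma~\ref{lem:log_min_prob_vs_span}, the squaring of $\tfrac12+\sqrt{\varepsilon_{\pp}}$, and the $(1+\eta)^{-2}$ from the mixture) so that they collapse exactly to the target.
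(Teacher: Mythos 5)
Your proposal is correct and takes essentially the same route as the paper's proof: both apply the second inequality of Lemma~\ref{lem:log_min_prob_vs_span} with $(\pi,\mu)=(\tpi_K,\piref)$, reduce the span term via the geometric-mixture factor $\tfrac{1}{1+\eta}$, invoke Lemma~\ref{lem:pp_iterate_uniform_regularity} to bound $\|\log\pi_K-\log\piref\|_{\spann,\rho}$, and collapse the constants identically. Your only (harmless) refinement is making the normalizer $\log Z_K(x)$ explicit and observing that the mixture step holds with equality, where the paper simply writes the inequality.
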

\begin{proof}
  First, we apply the second inequality of Lemma~\ref{lem:log_min_prob_vs_span} with $\pi = \tpi_k$ and $\mu = \piref$:
  \[
    \E_{x\sim \rho}\left[ \log^2 \frac{1}{\tpi_{k,\min}(x)} \right] \leq 2 \E_{x\sim \rho}\left[ \log^2 \frac{1}{\piref_{\min}(x)} \right] + 8 \norm{\log \tpi_k - \log \piref}_{\spann,\rho}^2\,.
  \]
  For the second term we use the property of geometric mixture policies
  \begin{align*}
    \norm{\log \tpi_k - \log \piref}_{\spann, \rho} &= \norm{1/(1+\eta)\log \pi_k + \eta/(1+\eta) \log \piref - \log \piref}_{\spann, \rho} \\
    &\leq \frac{1}{1+\eta} \norm{\log \pi_k - \log \piref}_{\spann,\rho}\,.
  \end{align*}
  Next, we apply Lemma~\ref{lem:pp_iterate_uniform_regularity} and achieve
  \[
    \norm{\log \tpi_k - \log \piref}_{\spann, \rho} \leq \frac{1}{1+\eta} \cdot \frac{1/2+\sqrt{\varepsilon_{\pp}}}{\beta}  \le  \frac{1+2\sqrt{\varepsilon_{\pp}}}{2\beta(1+\eta)}\,.
  \]
  Taking a square and plugging back into the previous inequality concludes the proof.
\end{proof}
In particular, a simple induction shows that Assumption~\ref{ass:reference_policy}-(i) is satisfied for all $k \in \N$ simultaneously. Thus, we have the following final corollary.

\begin{corollary}[Convergence guarantees for Fisher compatible parameterization]\label{cor:final_rates_fisher}
  Let $\theta \mapsto \pi_\theta$ be a parametrization and $(\cT_k)_{k \geq 0}$ a sequence of improvement operators that satisfies Assumption~\ref{ass:bounded_score}--\ref{ass:compatible} for all $k \in \N$, the reference policy satisfies $V_{\piref} \triangleq \E_{x\sim \rho}[\log^2(1/\piref_{\min}(x))] < \infty$ and let $\beta \leq 1$. 

  Then, fix $\varepsilon_{\rin} > 0$ as a desired accuracy for approximate solving of the inner problem. Then the following statements hold.
  \begin{itemize}[leftmargin=1em]
    \item[(i)] Assumption~\ref{ass:gradient_compatibility} holds with $C_{\gradcomp} = 2 M_g^2/ \pl_F^2$ and $\varepsilon_{\gradcomp} = 2 \varepsilon_{\bias}$;
    \item[(ii)] There exists a truncation threshold $M_k > 0$ such that Assumption~\ref{ass:parametrization_regularity} holds with 
    \begin{align*}
      G &= M_g,\qquad \pl_{\beta,\eta} = \frac{\beta (1 + 1/\eta)\pl_F^2}{2M_g^2},\qquad \epspl = \frac{\varepsilon_{\bias} \cdot \pl_F^2}{M_g^2}\,,\\
      L_{\beta, \eta} &= \left( \tfrac{1}{2} + \tfrac{1}{\eta} \cdot \sqrt{2\beta^2(1+\eta)^2 + 2\beta^2 (1+\eta)^2 V_{\piref} + 2(1 + 2\sqrt{\varepsilon_{\pp}})^2} \right) (M_g^2 + M_h) \\
      &\qquad\qquad\qquad\qquad\qquad\qquad\qquad\qquad+ \beta(1+1/\eta) M_g^2\,, \\
      \varepsilon_{\grad} &= \sqrt{\varepsilon_{\rin}}\,,\qquad  \sigma^2_{\beta,\eta} = D^2_{\beta,\eta}, \qquad v^2_{\beta,\eta} = 6 D^2_{\beta,\eta}\,,\\
      D^2_{\beta,\eta} &\triangleq 2M_g^2\left(1 + \frac{128 M_g^2 [\beta^2(1+\eta)^2 (1 + 4V^2_{\piref}) + 4(1+2\sqrt{\varepsilon_{\pp}})^4]}{\eta^4 \cdot \varepsilon_{\rin}}\right)\,,
    \end{align*}
    where $\varepsilon_{\pp} = \cO(M_g^2 / \pl_F^2 \cdot \varepsilon_{\rin} + \varepsilon_{\bias})$  and a PP learning rate $\eta = \beta \cdot \frac{\pl_F}{\sqrt{2} \cdot M_g^2 - \beta \pl_F}$.
    \item[(iii)] Algorithm~\ref{alg:pp_spg} outputs $\cO(\varepsilon_{\rin} + \varepsilon_{\bias})$-optimal regularized policy after $K = \tcO(1)$ outer iterations, $T_k = \tcO(1/\varepsilon_{\rin}^2)$ inner iterations, using in total $\tcO(1/\varepsilon_{\rin}^4)$ samples, ignoring constants depending on $M_g, M_h, \pl_F, \beta, V_{\piref}$ and logarithmic terms in $\varepsilon_{\rin}, \varepsilon_{\bias}$ and the confidence level $\delta$.
  \end{itemize}
\end{corollary}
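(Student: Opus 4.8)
The plan is to mirror the structure of the softmax verification in Corollary~\ref{cor:final_rates_softmax}, the essential difference being that for the Fisher-compatible family the anchor-dependent quantity we must control uniformly is the log-moment $V_{\tpi_k} = \E_{x\sim\rho}[\log^2(1/\tpi_{k,\min}(x))]$ rather than a pointwise minimum probability. The argument proceeds by induction over outer iterations $k$, with inductive hypothesis that $\norm{\nabla V_{k'}(\pi_{k'+1},\pi_{k'+1})}_{\spann,\rho}^2 \le \varepsilon_{\pp}$ for all $k'<k$, where $\varepsilon_{\pp}$ is the residual level appearing in the statement. The base case $k=0$ is handled by Lemma~\ref{lem:ref_regularity}, which controls the initial iterate $\pi_0=\piref$.

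For part (i), I would simply invoke Lemma~\ref{lem:fisher_gradient_comparison}: since Assumptions~\ref{ass:bounded_score}, \ref{ass:fisher_matrix}, and~\ref{ass:compatible} are properties of the parametrization alone and do not reference the anchor, they hold for every inner game, and the identity $V_k = V^{\tpi_k}_\lambda + \mathrm{const}$ from Lemma~\ref{lem:inner_game_equiv} transfers the span-norm gradient bound verbatim, yielding $C_{\gradcomp}=2M_g^2/\pl_F^2$ and $\varepsilon_{\gradcomp}=2\varepsilon_{\bias}$, independent of $k$. For part (ii), under the inductive hypothesis Lemma~\ref{lem:v_pi_k_control} furnishes the uniform bound $V_{\tpi_k}\le 2V_{\piref}+2(1+2\sqrt{\varepsilon_{\pp}})^2/(\beta^2(1+\eta)^2)$, so Assumption~\ref{ass:reference_policy}-(i) holds for $\tpi_k$ with a $k$-independent constant. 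Feeding this into Proposition~\ref{prop:fisher_verification} with $\lambda=\beta(1+1/\eta)$ and anchor $\tpi_k$ produces the stated $G=M_g$, $\pl_{\beta,\eta}$, $\epspl$, and noise proxies $\sigma^2_{\beta,\eta},v^2_{\beta,\eta}$; the expression for $L_{\beta,\eta}$ follows by substituting the $V_{\tpi_k}$ bound into $\lambda\sqrt{2+V_{\tpi_k}}$ and simplifying, and the noise proxy comes from the $k=2$ clipped estimator at bias level $\varepsilon_{\grad}=\sqrt{\varepsilon_{\rin}}$. The PL-compatibility requirement $\beta(1+1/\eta)\pl_{\beta,\eta}\ge G^2$ reduces, after substituting $\pl_{\beta,\eta}=\beta(1+1/\eta)\pl_F^2/(2M_g^2)$, to $\beta(1+1/\eta)\pl_F\ge\sqrt{2}M_g^2$; solving at equality gives exactly $\eta=\beta\pl_F/(\sqrt{2}M_g^2-\beta\pl_F)$, which is positive under $\beta\le 1$. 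To close the induction I would apply Proposition~\ref{prop:spg_selfplay_stochastic} to the inner loop (now legitimate, since all constants are uniform) and combine it via Lemma~\ref{lem:inner_residual_vs_subopt} to recover $\norm{\nabla V_k(\pi_{k+1},\pi_{k+1})}_{\spann,\rho}^2\le\varepsilon_{\pp}$ at step $k$.

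Part (iii) is then a direct instantiation of Corollary~\ref{cor:pp_spg_stochastic}: the verified uniform constants let one read off $K=\tcO(1)$ outer iterations and $T_k=\tcO(v^2_{\beta,\eta}/\varepsilon_{\rin})=\tcO(1/\varepsilon_{\rin}^2)$ inner iterations, since $v^2_{\beta,\eta}=\Theta(1/\varepsilon_{\rin})$ by the form of $D^2_{\beta,\eta}$; summing $B_t=\Theta(t/\pl_{\beta,\eta})$ then gives $\sum_{t<T_k}B_t=\tcO(T_k^2/\pl_{\beta,\eta})=\tcO(1/\varepsilon_{\rin}^4)$ total samples, while the residual floor $\propto\varepsilon_{\pp}/\beta$ together with $\epspl\propto\varepsilon_{\bias}$ yields the $\cO(\varepsilon_{\rin}+\varepsilon_{\bias})$ suboptimality guarantee.

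The main obstacle is the self-referential definition of $\varepsilon_{\pp}$: the inner-game constants $L_{\beta,\eta},\sigma^2_{\beta,\eta},v^2_{\beta,\eta}$ that determine the achievable residual all depend on $V_{\tpi_k}$, which in turn depends on $\sqrt{\varepsilon_{\pp}}$ through Lemma~\ref{lem:v_pi_k_control}. I must verify that the value $\varepsilon_{\pp}=\cO(M_g^2\varepsilon_{\rin}/\pl_F^2+\varepsilon_{\bias})$ stated in the corollary is a consistent fixed point of this dependence, i.e., that the residual actually produced at step $k$ does not exceed the $\varepsilon_{\pp}$ assumed in the hypothesis. Because $\varepsilon_{\pp}$ enters the constants only through the bounded quantity $(1+2\sqrt{\varepsilon_{\pp}})$, this can be resolved by first establishing $\varepsilon_{\pp}\le 1$ (exactly as in the softmax proof) and then checking that the implied constants keep the achieved residual below this threshold once $\varepsilon_{\rin}$ is small enough; making this fixed-point reasoning airtight, rather than circular, is the delicate technical point of the proof.
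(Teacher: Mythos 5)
Your proposal is correct and follows essentially the same route as the paper's proof: part (i) via Lemma~\ref{lem:fisher_gradient_comparison}, part (ii) via Lemma~\ref{lem:v_pi_k_control} fed into Proposition~\ref{prop:fisher_verification} with $\lambda=\beta(1+1/\eta)$ and $\varepsilon_{\grad}=\sqrt{\varepsilon_{\rin}}$, with the same choice of $\eta$ enforcing the PL-compatibility condition $\beta(1+1/\eta)\,\pl_{\beta,\eta}=G^2=M_g^2$ (your reduction is in fact cleaner than the paper's ``$=1$''), and part (iii) by instantiating Corollary~\ref{cor:pp_spg_stochastic} with $v^2_{\beta,\eta}=\Theta(1/\varepsilon_{\rin})$. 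The fixed-point subtlety you flag---the constants depending on $\varepsilon_{\pp}$ only through the bounded factor $1+2\sqrt{\varepsilon_{\pp}}$---is exactly what the paper resolves implicitly via the induction over outer steps already carried out inside the proof of Corollary~\ref{cor:pp_spg_stochastic} together with the remark following Lemma~\ref{lem:v_pi_k_control}, so your explicit treatment matches the intended argument.
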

\begin{proof}
  The statement (i) automatically follows from Lemma~\ref{lem:fisher_gradient_comparison}. The statement (ii) follows from Lemma~\ref{lem:v_pi_k_control} combined with Proposition~\ref{prop:fisher_verification} applied for each $k \in \N$ as well as a choice of $\varepsilon_{\grad} = \sqrt{\varepsilon_{\rin}}$ and manipulations to simplify the expression for $D^2_{\beta,\eta}$:
  \begin{align*}
    D^2_{\tpi_k, \lambda}(\sqrt{\varepsilon_{\rin}}) &= M_g^2 \left( 1 + \frac{8 \lambda^2 M_g (V_{\tpi_k} + 1)}{\sqrt{\varepsilon_{\rin}}} \right)^2 \leq 
    2M_g^2\left( 1 + \frac{(8 \lambda^2 M_g (V_{\tpi_k} + 1))^2}{\varepsilon_{\rin}} \right) \\
    &\leq 2M_g^2\left(1 + 64 \beta^4 (1+\eta)^4\frac{M_g^2 (1 + 2 V_{\piref} + 2(1+2\sqrt{\varepsilon_{\pp}})^2/(\beta^2 (1+\eta)^2) )^2}{\varepsilon_{\rin} \cdot \eta^4}\right) \\
    &\leq 2M_g^2\left(1 + \frac{128 M_g^2 (\beta^2(1+\eta)^2 (1 + 4V^2_{\piref}) + 4(1+2\sqrt{\varepsilon_{\pp}})^4)}{\eta^4 \cdot \varepsilon_{\rin}}\right) \triangleq D^2_{\beta,\eta}\,,
  \end{align*}
  where we used Lemma~\ref{lem:v_pi_k_control} and an inequality $(a+b)^2 \leq 2a^2 + 2b^2$. Finally, we choose $\eta = \beta \cdot \frac{\pl_F}{\sqrt{2} \cdot M_g^2 - \beta \pl_F}$ to guarantee that $\beta(1+1/\eta) \cdot \pl_{\beta,\eta} = 1$.

  For the statement (iii), we apply Corollary~\ref{cor:pp_spg_stochastic} with the constants from (ii). In particular, we choose $\varepsilon_{\pp} = \cO(M_g^2 / \pl_F^2 (\varepsilon_{\rin} + \varepsilon_{\bias}))$ to guarantee that the final policy is $\cO(\varepsilon_{\rin} + \varepsilon_{\bias})$-VNW. The number of outer iterations $K$ follows from Corollary~\ref{cor:pp_spg_stochastic} and scales as $\tcO(1)$. The number of inner iterations $T_k$ follows from Corollary~\ref{cor:pp_spg_stochastic} and scales as $\tcO(1/\varepsilon_{\rin}^2)$, where additional $1/\varepsilon_{\rin}$ factor comes from a bound on $v^2_{\beta,\eta} \asymp 1/\varepsilon_{\rin}$. Finally, the total number of samples follows from  $\sum_{k=1}^K \sum_{t=1}^{T_k} B_{k,t} = \tcO(1/\varepsilon_{\rin}^4)$.
\end{proof}

\section{Detailed Experiment Description}\label{app:experiments}

\subsection{Rock-Paper-Scissors}\label{app:rps_experiment}

\begin{figure}
    \centering

    \includegraphics[width=0.95\linewidth]{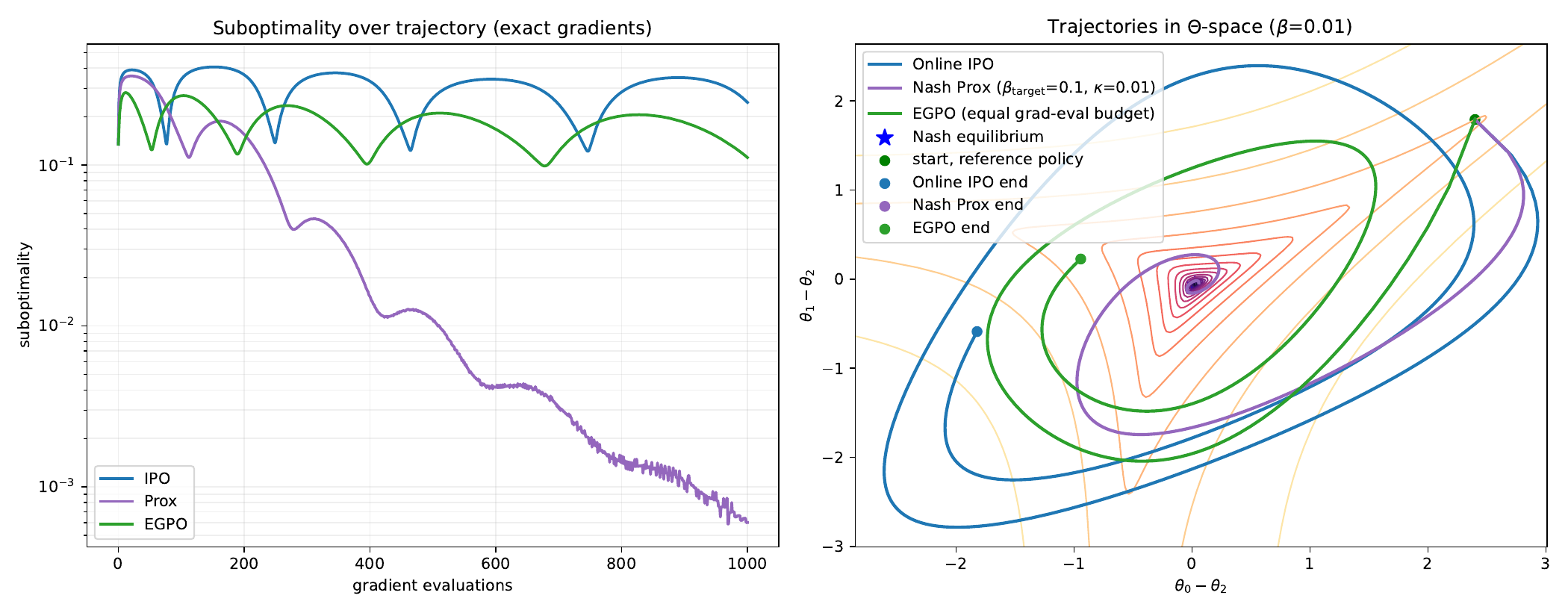}
    \includegraphics[width=0.95\linewidth]{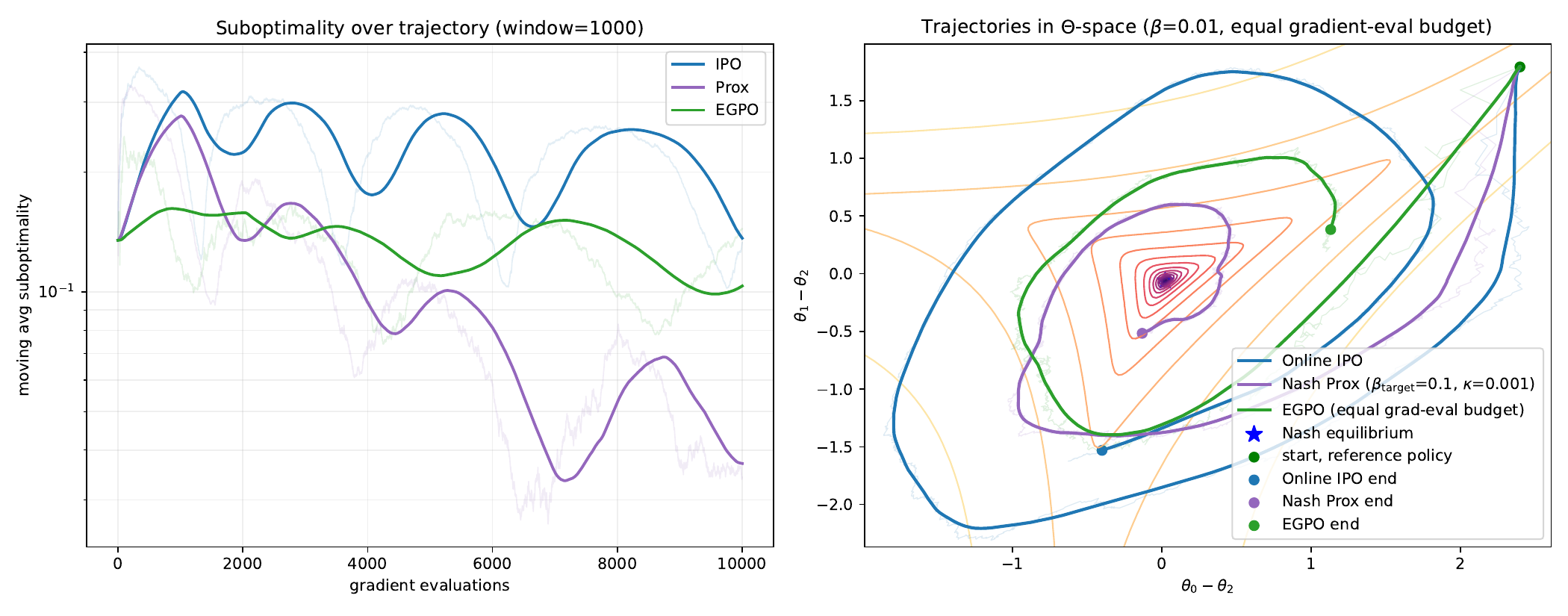}
    \caption{\textit{Optimization trajectories for the Rock--Paper--Scissors example.} \textbf{Top row}: exact-gradient updates. \textbf{Bottom row}: stochastic-gradient (SGD) updates. \textbf{Left}: suboptimality versus gradient evaluations; for SGD (bottom) we plot an exponential moving average (EMA) to smooth single-trajectory noise. \textbf{Right}: trajectories in a 2D projection of the 3D parameter space ($\Theta$-space).}
    \label{fig:rps}
\end{figure}

In this section, we provide an additional experiment on a very simple 3-dimensional game of Rock-Paper-Scissors. It is a context-free game defined by the following matrix
\[
    \bP = \begin{bmatrix}
        0.5 & 1 & 0 \\
        0 & 0.5 & 1 \\
        1 & 0 & 0.5
    \end{bmatrix}\,,
\]
a reference policy $\piref = (11/18, 1/3, 1/18)$, and $\beta =0.01$. With this environment, we implemented in JAX~\citep{jax2018github} an exact and stochastic versions of Online IPO~\citep{calandriello2024human}, EGPO~\citep{zhou2025extragradient}, and $\algo$, using a learning rate $\alpha_t = t^{-1/2}$ for an exact version and $\alpha_t = 0.2 \times t^{-1/2}$ for a stochastic one, where $t$ is an iteration number, additionally multiplied the IPO-style losses by an effective regularization to guarantee the same scaling of updates. The value of $\kappa$ is normalized to make $10$ total soft updates of the target policy at the training. The results are presented on Figure~\ref{fig:rps}.

Overall, we observe that the trajectories of $\algo$ and EGPO are both present more stabilized behavior compared to Online IPO, but they are stabilized differently: two-step stabilization of EGPO turns out to be an effective measure, especially in the beginning of trajectory, but later the soft-anchoring of $\algo$ starts behaving better empirically. Additionally, we observe two facts about the geometry of the problem: (1) suboptimality landscape exhibits non-convex behavior, although it admits a gradient dominance geometry, and (2) all the methods exhibits non-monotonic behavior in suboptimality. In fact, (2) exactly prevents us to provide a \emph{last-iterate} convergence theory for Online IPO for small values of $\beta$.

\subsection{Matrix Games}

\paragraph{Experiment setup.} In our experiments, we fixed $r=2, Y=100, \beta=0.01$ and a reference policy to be a uniform distribution $\piref(y|x) = 1/Y$. The matrices $U$ and $V$ are generated as random Gaussian matrices. To parameterize the space of policies, we use a 3-layer MLP with a ReLU action function and 128 hidden units, taking a flattened matrix $\Theta_x \in \R^{2 \times 2}$ as input and outputting logits over possible actions. We use the Adam optimizer \citep{kingma2015adam}, and for all the baselines we perform a grid search over the learning rate using the grid $\{ 3\times 10^{-3}, 10^{-3}, 3 \times 10^{-4}, 10^{-4}, 3 \times 10^{-5} \}$. For each batch, we sample 128 random games. For Nash Prox, we utilize $\beta_{\target} = 10 \times \beta$ for all experiments. All experiments implemented using JAX~\citep{jax2018github}, runtime for one configuration running in parallel for all 25 seeds is less than 5 minutes for all the methods.

\paragraph{Soft vs. hard updates.} As a first additional experiment, we compare the influence of soft and hard updates for the target network. We fixed the learning rate to $3 \times 10^{-4}$ and varied the update period before the update in hard and soft senses; see Figures~\ref{fig:sched_a},\ref{fig:sched_b},\ref{fig:sched_c} for the results. Overall, these results show the benefits of soft updates over hard ones in terms of suboptimality.

\paragraph{Adaptive choice of $\kappa$.} Next, we verify the adaptive schedule $\kappa_t = 1/(0.3\cdot t + 1)$, which changes over the optimization procedure; the results are presented in Figure~\ref{fig:sched_d}. In particular, we observe that the adaptive schedule performs on par with the best choice of standard soft updates; however, it is less sensitive to the choice of optimal value of the multiplier than the choice of optimal value of $\kappa$ for usual soft updates.

\begin{figure}[t]
    \centering

    \begin{subfigure}{0.49\linewidth}
        \centering
        \includegraphics[width=\linewidth]{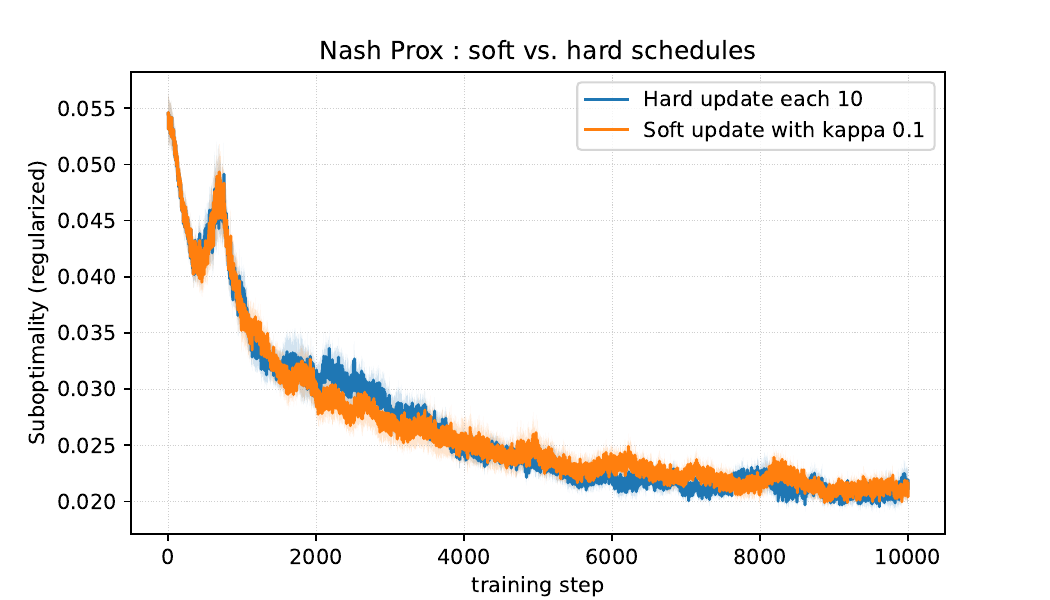}
        \caption{Hard updates vs. soft updates, period 10;}
        \label{fig:sched_a}
    \end{subfigure}
    \hfill
    \begin{subfigure}{0.49\linewidth}
        \centering
        \includegraphics[width=\linewidth]{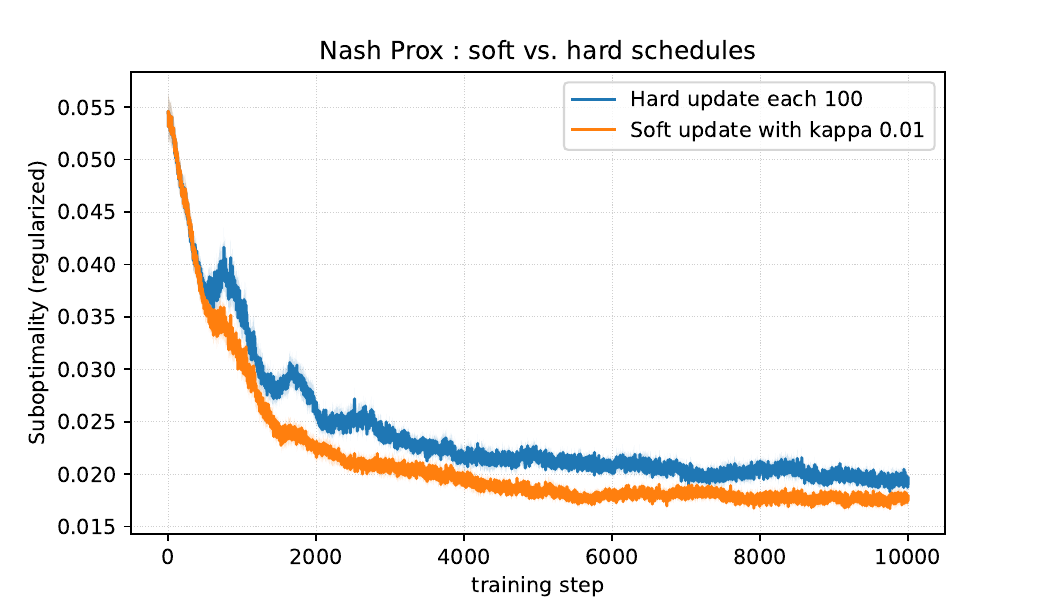}
        \caption{Hard updates vs. soft updates, period 100;}
        \label{fig:sched_b}
    \end{subfigure}

    \vspace{0.6em}

    \begin{subfigure}{0.49\linewidth}
        \centering
        \includegraphics[width=\linewidth]{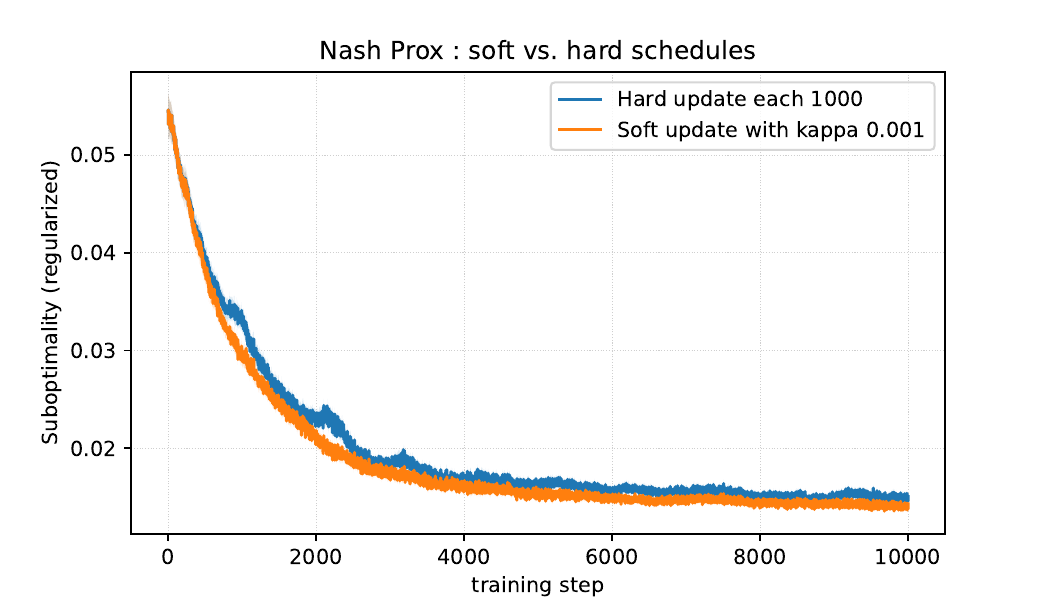}
        \caption{Hard updates vs. soft updates, period 1000;}
        \label{fig:sched_c}
    \end{subfigure}
    \hfill
    \begin{subfigure}{0.49\linewidth}
        \centering
        \includegraphics[width=\linewidth]{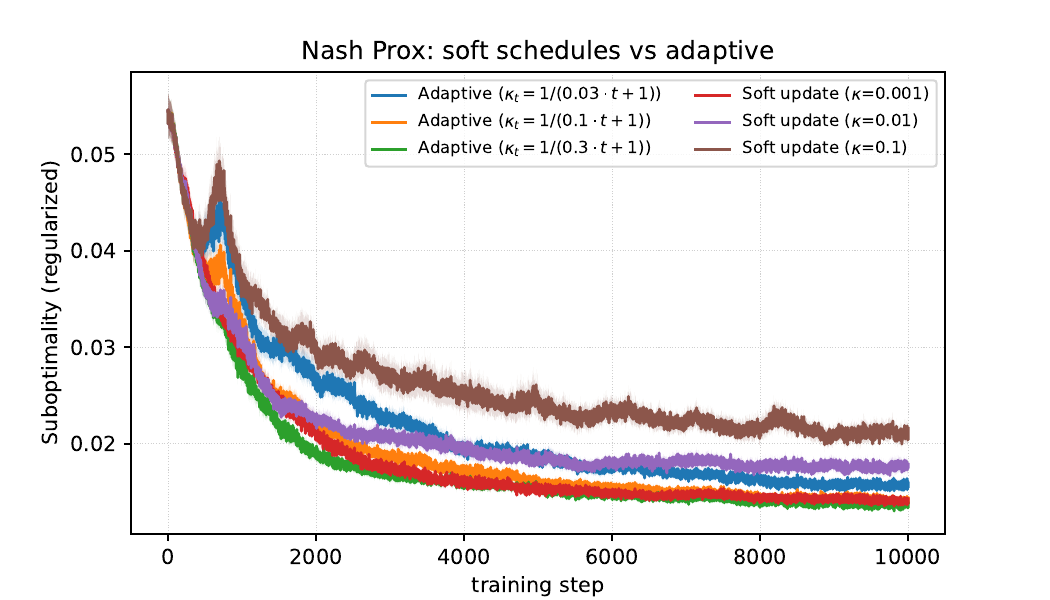}
        \caption{Soft updates vs. adaptive schedule.}
        \label{fig:sched_d}
    \end{subfigure}

    \caption{Comparison between different target policy update schedules for $\algo$. 
    Suboptimality is averaged over 25 random seeds; shaded regions indicate one standard deviation.}
    \label{fig:schedules}
\end{figure}

\subsection{LLM Alignment}

In this section, we provide implementation and details as well as details on hyperparameter selection.

\subsubsection{Loss implementation}
We use a library TRL \citep{vonwerra2022trl} as a base for our implementation, and we use vLLM~\citep{kwon2023efficient} for efficient generation. Following the discussion in Section~\ref{sec:implementation}, we use the following loss function to achieve the correct gradients using automatic differentiation in PyTorch \citep{pytorch}
\begin{align*}
    \widetilde{\cL}_{\algo}(\theta;  \theta^{\mathrm{target}}) &\triangleq  \frac{1}{B} \sum_{i=1}^B  \left(  \ell_{\theta}(x_i,y_i) -\ell_{\theta}(x_i,y'_i) - \frac{\cP(y_i \succ y'_i | x_i)-1/2}{\beta + \beta_{\target}} \right)^2
    \\
    \ell_\theta(x,y) &\triangleq \frac{\beta}{\beta + \beta_{\target}} \log \left(\frac{\pi_\theta(y|x)}{\piref(y|x)}\right) + \frac{\beta_{\target}}{\beta + \beta_{\target}} \log\left( \frac{\pi_\theta(y|x)}{\pi_{\theta^{\target}}(y|x)} \right)\,,
\end{align*}
where $\{x_i\}_{i \in [B]}$ are samples from the prompt dataset, $\{ (y_i, y_i') \}_{i\in[B]}\}$ are generated from a policy $\pi_{\theta_t}$ using a temperature sampling with a temperature 1, and $\mathtt{SG}$ is a stop-gradient operations.
$\log \pi_\theta(y|x)$ is computed in an auto-regressive manner as $\log \pi_\theta(y|x) = \sum_{i=1}^{|y|} \log \pi_\theta(y_i | x, y_{<i}) = \sum_{i=1}^{|y|} \mathtt{logit}_\theta(y_i | x, y_{<i}) - \mathtt{LogSumExp}(\mathtt{logit}_\theta(\cdot | x, y_{<i}))$, where $ \mathtt{logit}_\theta$ are raw logits outputted by the model with parameters $\theta$. 

\subsubsection{Experiment description}

We start our experiments from the Google Gemma-3-4B\footnote{Published under Gemma license.} \citep{team2025gemma} pretrained checkpoint.

\paragraph{Supervised fine-tuning (SFT).}
For SFT, we use the \texttt{RLHFlow/RLHFlow-SFT-Dataset-ver2} dataset \citep{rlhflow_sft}. This dataset, structured as conversations, is processed using a chat template following the Gemma3 format (\texttt{<bos><start\_of\_turn>role\textbackslash ncontent<end\_of\_turn>\textbackslash n...}), where the template maps the \texttt{assistant} role to \texttt{model}. System messages are dropped from the input, and training is performed on the train split. The dataset samples are tokenized, with a maximum sequence length of $8,\!192$ tokens. We use sample packing to efficiently train on long sequences and pad sequences to the maximum length. Following standard SFT practice, the loss is computed only on the model's output tokens (the assistant's turns), not on the input prompts.

The model was fully fine-tuned (no LoRA PEFT adapter was used). Training was conducted for $2$ epochs. Optimization was performed using a fused version of AdamW \citep{loshchilov2018decoupled} 32-bit optimizer with a learning rate of $1.5 \times 10^{-5}$. A cosine learning rate schedule was applied with a warmup ratio of $0.05$  of the total training steps. We used a micro batch size of $1$ sequence per device and accumulated gradients over $16$ steps, resulting in an effective batch size of $16$ sequences per device. On the 8 A100 GPUs, we thus had an effective batch size of $128$. Gradient clipping was applied with a maximum norm of $1.0$, and no weight decay was used. 

For improved memory efficiency and speed, we enabled gradient checkpointing and leveraged Flash Attention \citep{dao2023flashattention2}. Training utilized BFloat16 (BF16) and TF32 precision where supported.

\paragraph{Nash Learning from Human Feedback}
All subsequent NLHF experiments started from the SFT checkpoint described above. This SFT model also served as the initial policy and the reference policy ($\piref$). During this phase, we used LoRA adapters~\cite{hu2022lora} with rank $r=16$ and $\alpha=32$ for all methods.

\textit{Datasets.} For generating responses during NLHF training and for final evaluation, we used a subset of prompts from the \texttt{RLHFlow/prompt-collection-v0.1} dataset \citep{rlhflow_prompt}. We first filtered this dataset to have only prompt conversations of length less than $512$ tokens, and used 5\% of prompts as a separate validation set. Overall, we have $\approx 80,\!200$ train prompts and $\approx 4,\!220$ validation problems. For further details on the original data mixtures within \texttt{RLHFlow/prompt-collection-v0.1} and their licenses, we refer to \citep{dong2024rlhf}.

\textit{Preference model.}
The pairwise preference model, used to provide comparison signals, was a Gemma2-2B model~\citep{riviere2024gemma}. This model was trained on the \texttt{RLHFlow/pair\_preference\_model\_dataset} dataset \citep{rlhflow_preference}, with its training methodology detailed in \citep{liu2025rrm}.
We employed a separate, more capable judge for the final evaluation of model performance: a Gemma3-27-IT model prompted to decide which completion better follows the instructions. On this stage, we perform two episodes of judgment per prompt in two different orders: (\texttt{prompt}, \texttt{completion A}, \texttt{completion B}) and (\texttt{prompt}, \texttt{completion B}, \texttt{completion A}). If results were inconsistent (e.g., in both cases the judge selected different completions depending on their order), we consider such judgments as inconsistent and do not include them in the win-rate computation. On average, around $40\%$ of the comparisons were inconsistent.

The primary evaluation metric was side-by-side pairwise win rate, and for hyperparameter selection in each algorithm, we used win rate against the SFT reference policy as a proxy metric. Confidence intervals were computed as $99.9\%$-confidence intervals using the first term of the empirical Bernstein inequality for Bernoulli random variables: for an estimate $\hat{p}$, we compute the intervals as $\sqrt{2 \hat{p}(1-\hat{p}) \cdot \log(2/10^{-3}) \cdot 1/N}$, where $N$ is a number of \emph{consistent} judgments.

\begin{table}[tbp]
\centering
\caption{Hyperparameter settings for the evaluated algorithms. The hyperparameters with the best performance in comparisons against the SFT checkpoint are presented.}
\label{tab:hyperparameters}
\begin{tabular}{@{}l S[table-format=1e-1] S[table-format=1e-1] l@{}} 
\toprule
Algorithm        & {Learning Rate} & {$\beta$} & Algorithm-Specific Parameters \\
\midrule
Online DPO       & 3e-5  & 1e-2  & N/A \\
Online IPO       & 1e-4  & 1e-2 & N/A \\
$\algo$          & 3e-5  & 1e-3 &  $\kappa_t=1/(0.1\times t + 1)$, $\beta_{\target}=10\times \beta$ \\
\midrule[\heavyrulewidth] 
\bottomrule
\end{tabular}
\end{table}

\textit{Training Configuration and Hyperparameter Tuning.}
For all NLHF experiments, we used the AdamW optimizer \citep{loshchilov2018decoupled}. The learning rate schedule featured a $0.1$ warmup period (as a fraction of total training steps) followed by a linear decay.
The effective global batch size was $32$ prompts, following the practice of small batch sizes as indicated by \cite{schulman2025lora}. We used per-device micro-batch size of $8$ prompts, 2 GPUs A100, and $2$ gradient accumulation steps ($8 \text{ prompts/GPU} \times 2 \text{ GPUs} \times 2 \text{ grad\_accum\_steps}$). Training was conducted for $1$ epoch over the $N_{\mathrm{train}}=80,200$ prompts, corresponding to approximately $2505$ update steps. We employed gradient clipping with a 
 maximum norm of $1.0$.

For each algorithm, we perform a grid search over its key hyperparameters: learning rate over $\mathrm{lr} \in \{ 3 \times 10^{-4}, 10^{4}, 3 \times 10^{-5} \}$, regularization parameter $\beta \in \{ 10^{-3}, 10^{-2}, 10^{-1} \}$, soft update schedule of form $\kappa_t = 1/(ct + 1)$ for $c \in \{0.3, 0.1, 0.03\}$. For $\algo$, we use always $\beta_{\target}=10 \times \beta$.

For each algorithm, we selected the best-performing hyperparameter configurations based on the win rate against the SFT reference policy, evaluated using the Gemma3-27B-IT judge. These selected checkpoints were then compared side-by-side in the final evaluation using the same judge model. The final reported results (see Table~\ref{tab:hyperparameters}) represent the performance of the best configuration found through this process.

To manage memory and improve throughput during all NLHF training phases, we utilized BFloat16 (BF16) mixed-precision, gradient checkpointing, and PyTorch DDP~\citep{li2020pytorch}.

 \textit{Computational Resources and Runtimes.} All NLHF experiments were conducted on 2 NVIDIA A100 (80GB) GPUs, with one additional GPU used to host a pairwise reward model. A full training cycle for Nash Prox and baselines requires approximately 11 hours per algorithm.

\textit{Generation Parameters.}
During response generation, both for collecting experiences within NLHF algorithms (e.g., generating samples per prompt) and for final evaluation on the test set, we used temperature sampling with a temperature of $\tau=1.0$. The maximum generation length was capped at 256 tokens.

\paragraph{Limitations.} We notice that using the target network increases the memory footprint of the model, which is a limitation of our method. However, this increase is marginal since we do not need to backpropagate through the target model, and in the LoRA setting this memory footprint is negligible. Additionally, it does not significantly increase the running time of the algorithm and is straightforward to implement.

\end{document}